\newtheorem{thm}{Theorem}[section]
\newtheorem{lem}{Lemma}[section]
\newtheorem{cor}{Corollary}[section]
\newtheorem{prop}{Proposition}[section]
\newtheorem{asmp}{Assumption}[section]
\newtheorem{defn}{Definition}[section]
\newtheorem{rem}{Remark}[section]
\def\seps{\boldsymbol{\eps}}
\def\floor#1{\lfloor #1 \rfloor}
\def\1{\bm{1}}
\def\eps{{\epsilon}}
\def\rr{{\textnormal{r}}}
\DeclareMathAlphabet{\mathsfit}{\encodingdefault}{\sfdefault}{m}{sl}
\SetMathAlphabet{\mathsfit}{bold}{\encodingdefault}{\sfdefault}{bx}{n}
\def\sI{\boldsymbol{I}}
\def\sG{\boldsymbol{G}}
\def\sD{\boldsymbol{D}}
\def\mthH{\mathrm{H}}
\def\su{\boldsymbol{u}}
\def\sU{\boldsymbol{U}}
\def\sv{\boldsymbol{v}}
\def\su{\boldsymbol{u}}
\def\sG{\boldsymbol{G}}
\def\AM{{\mathcal A}}
\def\BM{{\mathcal B}}
\def\DM{{\mathcal D}}
\def\FM{{\mathcal F}}
\def\HM{{\mathcal H}}
\def\LM{{\mathcal L}}
\def\NM{{\mathcal N}}
\def\OM{{\mathcal O}}
\def\PM{{\mathcal P}}
\def\SM{{\mathcal S}}
\def\TM{{\mathcal T}}
\def\UM{{\mathcal U}}
\def\ZM{{\mathcal Z}}
\def\RB{{\mathbb R}}
\def\EB{{\mathbb E}}
\def\PB{{\mathbb P}}
\def\varepsi{\mbox{\boldmath$\varepsilon$\unboldmath}}
\def\De{\mbox{\boldmath$\Delta$\unboldmath}}
\def\argmax{\mathop{\rm argmax}}
\def\tr{\mathrm{tr}}
\def\diag{\mathrm{diag}}
\newcommand{\softmax}{\mathrm{softmax}}
\newcommand{\Var}{\mathrm{Var}}
\def\GammaSym{{\boldsymbol \Gamma}}
\def\A{{\boldsymbol A}}
\def\B{{\boldsymbol B}}
\def\bb{{\boldsymbol b}}
\def\C{{\boldsymbol C}}
\def\D{{\boldsymbol D}}
\def\e{{\boldsymbol e}}
\def\h{{\boldsymbol h}}
\def\G{{\boldsymbol G}}
\def\H{{\boldsymbol H}}
\def\I{{\boldsymbol I}}
\def\M{{\boldsymbol M}}
\def\N{{\boldsymbol N}}
\def\PP{{\boldsymbol P}}
\def\Q{{\boldsymbol Q}}
\def\R{{\boldsymbol R}}
\def\rr{{\boldsymbol r}}
\def\U{{\boldsymbol U}}
\def\V{{\boldsymbol V}}
\def\v{{\boldsymbol v}}
\def\W{{\boldsymbol W}}
\def\X{{\boldsymbol X}}
\def\x{{\boldsymbol x}}
\def\Y{{\boldsymbol Y}}
\def\y{{\boldsymbol y}}
\def\Z{{\boldsymbol Z}}
\def\0{{\boldsymbol 0}}
\def\1{{\boldsymbol 1}}
\def\DDelta{{\mathbf \Delta}}
\def\BPi{{\mathbf \Pi}}
\def\BDelta{{\bar{\mathbf{\Delta}}}}
\def\Bphi{{\boldsymbol \phi}}
\def\eeta{{\widetilde{\eta}}}
\def\tim{{\widetilde{m}}}
\def\gap{{\mathrm{gap}}}
\def\TOM{{\widetilde{\OM}}}
\def\TQ{{\widetilde{\Q}}}
\def\TA{{\widetilde{\A}}}
\def\TZ{{\widetilde{\Z}}}
\def\TTM{{\widetilde{\TM}}}
\def\TDe{{\widetilde{\De}}}
\def\TDe{{\widetilde{\De}}}
\def\Tphi{{\widetilde{\Bphi}}}
\def\Tpi{{\widetilde{\pi}}}
\def\VQ{\mathrm{Var}_\Q}
\def\TVQ{\widetilde{\mathrm{Var}}_\Q}
\def\VV{\mathrm{Var}_\V}
\def\Tr{{\lfloor Tr \rfloor}}
\def\BC{{\mathsf{C}([0, 1], \RB^d)}}
\def\BD{{\mathsf{D}([0, 1], \RB^d)}}
\def\BDD{{\mathsf{D}([0, 1], \RB^D)}}
\def\BCD{{\mathsf{C}([0, 1], \RB^D)}}
\def\BDone{{\mathsf{D}([0, 1], \RB)}}
\def\BDM{\DM([0, 1], \RB^d)}
\def\Bpsi{{\boldsymbol \psi}}
\newcommand{\ssum}[3]{\sum\limits_{{#1}={#2}}^{#3}}
\begin{document}
%


\onecolumn

\runningauthor{Xiang Li, Wenhao Yang, Jiadong Liang, Zhihua Zhang, Michael I. Jordan}

\twocolumn[

\aistatstitle{
A Statistical Analysis of Polyak-Ruppert Averaged Q-learning}

\aistatsauthor{ Xiang Li \And Wenhao Yang \And  Jiadong Liang }

\aistatsaddress{ \url{lx10077@pku.edu.cn}\\ Peking University \And  \url{yangwenhaosms@pku.edu.cn} \\ Peking University \And
\url{jdliang@pku.edu.cn} \\ Peking University 
}

\aistatsauthor{ Zhihua Zhang \And Michael I. Jordan }

\aistatsaddress{
\url{zhzhang@math.pku.edu.cn}\\ Peking University  \And
\url{jordan@cs.berkeley.edu} \\UC Berkeley
}
]

\begin{abstract}%
We study Q-learning with Polyak-Ruppert averaging in a discounted Markov decision process in synchronous and tabular settings. 
Under a Lipschitz condition, we establish a functional central limit theorem for the averaged iteration $\bar{\boldsymbol{Q}}_T$ and show that its standardized partial-sum process converges weakly to a rescaled Brownian motion.
The functional central limit theorem implies a fully online inference method for reinforcement learning.
 Furthermore, we show that $\bar{\boldsymbol{Q}}_T$ is the regular asymptotically linear (RAL) estimator for the optimal Q-value function $\boldsymbol{Q}^*$ that has the most efficient influence function. 
We present a nonasymptotic analysis for the $\ell_{\infty}$ error, $\mathbb{E}\|\bar{\boldsymbol{Q}}_T-\boldsymbol{Q}^*\|_{\infty}$, showing that it matches the instance-dependent lower bound for polynomial step sizes.
Similar results are provided for entropy-regularized Q-learning without the Lipschitz condition.
\end{abstract}

\section{INTRODUCTION}
Q-learning~\citep{watkins1989learning}, as a model-free approach seeking the optimal Q-function of a Markov decision process (MDP), is perhaps the most widely deployed algorithm in reinforcement learning (RL)~\citep{sutton2018reinforcement}.
Unlike policy evaluation where the underlying structure is linear in nature and the goal is essentially to solve a linear system, Q-learning is nonlinear, nonsmooth and nonstationary.
Theoretical analysis for Q-learning ranges from asymptotic convergence~\citep{singh1993convergence,tsitsiklis1994asynchronous,borkar2000ode,szepesvari1998asymptotic} to nonasymptotic rates~\citep{even2003learning,beck2012error,chen2020finite,li2021q,li2020sample}.
Variants of Q-learning~\citep{lattimore2014near,sidford2018near,sidford2018variance,wainwright2019variance} have been proposed that achieve the minimax lower bound of sample complexity established in~\citep{azar2013minimax}. 

On the other hand, Q-learning can be viewed through the lens of stochastic approximation (SA)~\citep{konda2002actor}, a general iterative framework for solving root-finding problems~\citep{robbins1951stochastic}.
It is a particular instance of SA that targets the Bellman fixed-point equation, $\TM Q^* = Q^*$, where $\TM$ is the population Bellman operator (see Eq.~\eqref{eq:T} for the definition).

The last-iterate behavior of Q-learning has been analyzed thoroughly within the nonlinear SA framework. In particular, on the asymptotic side, the ODE approach~\citep{kushner2003stochastic,abounadi2002stochastic,borkar2009stochastic,gadat2018stochastic,borkar2021ode} establishes a functional central limit theorem (functional CLT), showing that the interpolated process that connects rescaled last iterates converges weakly to the solution of a specific SDE.
From the nonasymptotic side, specific nonlinear SA convergence analyses have been tailored for Q-learning, capturing its nonasymptotic convergence rate~\citep{chen2020finite,chen2021lyapunov,qu2020finite}.

An important gap in this literature is the behavior of Q-learning under averaging, specifically Polyak-Ruppert averaging~\citep{polyak1992acceleration}.  Polyak-Ruppert averaging provides a general tool for stabilizing and accelerating SA algorithms.  It is known to accelerate policy evaluation~\citep{mou2020linear,mou2020optimal} and exhibits superior empirical performance in various RL problems~\citep{lillicrap2016continuous,anschel2017averaged}.
However, a theoretical understanding of Q-learning with Polyak-Ruppert averaging is not yet available.

In this paper, we analyze averaged Q-learning in the setting of a discounted infinite-horizon MDP and in the synchronous setting where a generative model produces independent samples for all state-action pairs in every iteration~\citep{kearns2002sparse}.
We provide both asymptotic and nonasymptotic analyses.
On the asymptotic side, we establish an functional CLT for averaged Q-learning, showing that the partial-sum process, $\Bphi_T(r) := \frac{1}{\sqrt{T}}  \sum_{t=1}^{\Tr} (\Q_t - \Q^*)$, converges weakly to a rescaled Brownian motion, namely $\VQ^{1/2}\B_D(r)$, where $r \in [0, 1]$ is the fraction of data used, $\lfloor \cdot \rfloor$ is the floor function, $\VQ$ (see Eq.~\eqref{eq:opt-variance}) is the asymptotic variance, and $\B_D(\cdot)$ is a standard $D$-dimensional Brownian motion on $[0, 1]$. 
Such a functional result for partial-sum processes has not been presented previously in the RL literature.
This allows us to construct an asymptotically pivotal statistic using information from the whole function $\Bphi_T(\cdot)$ (see Proposition~\ref{prop:pivotal}).
This obviates the need to estimate the asymptotic variance in providing asymptotically valid confidence intervals for $\Q^*$, which is required by~\citep{chen2020statistical,zhu2021online,hao2021bootstrapping,shi2020statistical,khamaru2022instance}.
It opens a door to online statistical inference for RL.

As a complementary result, we establish a semiparametric efficiency lower bound for any regular asymptotically linear (RAL) estimator (see Definition~\ref{def:RAL} for details) of the optimal Q-value function $\Q^*$. 
Given the $r$-th fraction of data, we further show that $\Bphi_T(r)$ is the most efficient RAL estimator with the smallest asymptotic variance, confirming its optimality in the asymptotic regime.

On the nonasymptotic side, we provide the first finite-sample error analysis of $\EB\|\bar{\Q}_T-\Q^*\|_{\infty}$ in the $\ell_{\infty}$-norm for both linearly rescaled and polynomial step sizes.
The error is dominated by $\OM(
\sqrt{\|\diag(\VQ)\|_{\infty}}\sqrt{\frac{\ln |\SM \times \AM|}{T}} )$ for polynomial step sizes given a sufficiently large $T$, which matches the instance-dependent lower bound established by~\citep{khamaru2021instance}.
This, together with the worst-case bound $\|\diag(\VQ)\|_{\infty} = \OM((1-\gamma)^{-3})$, implies that averaged Q-learning already achieves the optimal minimax sample complexity $\TOM\left( \frac{|\SM \times \AM|}{(1-\gamma)^3\varepsilon^2} \right)$ established by~\citep{azar2013minimax}.
Those lower bounds have only been shown to hold for a complicated variance-reduced version of Q-learning in this setting~\citep{wainwright2019variance,khamaru2021instance}.

From a technical perspective, we carefully decompose the partial sum process, $\Bphi_T(r)$, into several processes, each of which either has a nice structure (e.g., a sum of i.i.d.\ variables) or vanishes in the $\ell_{\infty}$-norm with probability one.
In this way, the nonasymptotic analysis reduces to careful examination of these diminishing rates.
To underpin the functional CLT, we develop a new lemma that shows that a certain residual error converges to zero in probability (see Lemma~\ref{lem:ignore-error}).
Generalizing an existing result from~\citet{lee2021fast,li2021statistical}, this technical lemma may be of independent interest.
Finally, while both our asymptotic and nonasymptotic analyses rely on a Lipschitz condition, stated in Assumption~\ref{asmp:gap},
we find that averaged Q-learning regularized by entropy achieves a similar functional CLT and instance-dependent bound without the Lipschitz assumption.


\paragraph{Paper organization.}
The remainder of this paper is organized as follows. 
In Section~\ref{sec:notation}, we introduce our notation and preliminaries on RL.
We present the formal functional CLT in Section~\ref{sec:asym-ave-Q} and the semiparametric efficiency lower bound in Section~\ref{sec:infolb}.
In Section~\ref{sec:nonasymptotic}, we show the nonasymptotic convergence bound and contrast it with previous work. 
We summarize our results and discuss future research directions in Section~\ref{sec:conclusion}.
We provide additional discussion of related work, and all proof details, in the appendix.

\section{PRELIMINARIES}
\label{sec:notation}


\paragraph{Discounted infinite-horizon MDPs.}
An infinite-horizon MDP is represented by a tuple $\mathcal{M}=(\SM, \AM, \gamma, P, R, r)$.
Here $\SM$ is the state space, $\AM$ is the action space, and $\gamma \in (0, 1)$ is the discount factor.
For simplicity, we define $D = |\SM \times \AM|=SA$.
We use $P\colon \SM \times \AM \to \Delta(\SM)$ to represent the probability transition kernel with $P(s'|s, a)$ the probability of transiting to $s'$ from a given state-action pair $(s, a)\in \SM \times \AM$.
Let $R\colon \SM \times \AM \to [0, \infty)$ stand for the random reward, i.e., $R(s, a)$ is the immediate reward collected in state $s \in \SM$ when action $a \in \AM$ is taken.
Unlike previous works~\citep{wainwright2019stochastic,li2021q} which assume the immediate reward $R$ is deterministic, we consider a general setting where $R$ itself is a random function with $r = \EB R$ the expected reward.
A policy $\pi$ maps each $s \in \SM$ to a probability over $\AM$.
In a $\gamma$-discounted MDP, a common objective is to maximize the expected long-term reward. 
For a given policy $\pi\colon \SM \to \Delta(\AM)$, the expected long-term reward is measured by the Q-function $Q^{\pi}$ defined as follows
\begin{gather*}
	Q^{\pi}(s, a) = \EB_{\pi} \left[ \sum_{t=0}^\infty \gamma^t r(s_t, a_t) \bigg| s_0=s, a_0=a\right],
\end{gather*}
and its companion value function is defined via $V^{\pi}(s) =\sum_{a \in \AM} \pi(a|s) Q^{\pi}(s, a)$.
Here $\EB_{\pi}(\cdot) $ is taken with respect to the randomness of the trajectory of the MDP induced by the policy $\pi$.
The optimal value function $V^*$ and optimal Q-function $Q^*$ are defined as $
V^*(s) = \max_{\pi} V^{\pi}(s)
\ \text{and} \
Q^*(s, a) = \max_{\pi} Q^{\pi}(s, a)$.
For simplicity, we employ the vectors
$\V^{\pi}, \V^* \in \RB^S$ and $ \Q^\pi, \Q^*, \Q_t, \bar{\Q}_t \in \RB^D$ to denote evaluations of the functions $V^\pi, V^*, Q^\pi, Q^*, Q_t, \bar{Q}_t$.

A generative model is assumed~\citep[cf.][]{kearns1999finite,sidford2018near,li2021q}.
In iteration $t$, we collect independent samples of rewards $r_t(s, a)$ and the next state $s_t(s, a) \sim P(\cdot |s, a)$ for every state-action pair $(s, a)\in \SM \times \AM$.
We summarize the observations into the reward vector $\rr_t = (r_t(s, a))_{(s, a)} \in \RB^{D}$ and the empirical transition matrix $\PP_t = (\e_{s_t(s, a)})_{(s, a)} \in \RB^{D\times S}$ with each row a one-hot vector.
We introduce the transition matrix $\PP \in \RB^{D \times S}$ to represent the probability transition kernel $P$, whose $(s, a)$-th row $\PP_{s, a}$ is a probability vector representing $P(\cdot|s, a)$.
The square probability transition matrix $\PP^{\pi} \in \RB^{D \times D}$ (resp. $\PP_{\pi} \in \RB^{S \times S}$) induced by the deterministic policy $\pi$ over the state-action pairs (resp. states) is
\begin{equation}
	\label{eq:P-matrix}
	\PP^\pi := \PP \boldsymbol{\Pi}^\pi
	\quad  \text{and}  \quad
	\PP_\pi := \boldsymbol{\Pi}^\pi\PP,
\end{equation}
where $\boldsymbol{\Pi}^\pi \in \RB^{S \times D}$ is a projection matrix associated with a given policy $\pi$:
\begin{equation}
	\label{eq:project-pi}
	\boldsymbol{\Pi}^{\pi}=
	\diag\{\pi(\cdot|1)^{\top} , \cdots, \pi(\cdot|S)^{\top}   \},
	%
	\end{equation}
	where $\pi(\cdot|s) \in \RB^A$ is the policy vector at state $s$.
	
	\paragraph{Q-learning.}
The synchronous Q-learning algorithm maintains a Q-function vector, $\Q_t \in \RB^D$, for all $t \ge 0$ and updates its entries via the following update rule:
\begin{equation}
	\label{eq:Q-update}
		\Q_t = (1-\eta_t)\Q_{t-1} + \eta_t(\rr_t + \widehat{\TM}_t \Q_{t-1}),
\end{equation}
where $\eta_t \in (0, 1]$ is the step size in the $t$-th iteration and $\widehat{\TM}_t: \RB^D \to \RB^D$ is the empirical Bellman operator constructed by samples collected in the $t$-th iteration:
\begin{equation}
	\label{eq:em-T}
	(\widehat{\TM}_t\Q)(s, a) = r_t(s, a) + \gamma \max_{a' \in \AM} \Q(s_t, a'),
\end{equation} 
with $r_t(s, a) \sim R(s, a)$ and $s_t = s_t(s, a) \sim P(\cdot|s, a)$
for each state-action pair $(s, a)\in \SM \times \AM$.
In matrix form, $\widehat{\TM}_t \Q_{t-1} = \PP_t \V_{t-1}$ where $\V_{t-1}(s) = \max_{a } \Q_{t-1}(s ,a)$ is the greedy value.
Clearly, $\widehat{\TM}_t$ is an unbiased estimate of the Bellman operator $\TM: \RB^D \to \RB^D$ given by
\begin{equation}
	\label{eq:T}
	(\TM\Q)(s, a) = r(s, a) + \gamma \EB_{s' \sim P(\cdot|s, a)}\max_{a' \in \AM} \Q(s', a').
\end{equation}
The optimal $\Q^*$ is the unique fixed point of the Bellman operator, $\TM\Q^* = \Q^*$.
Let $\pi_t$ be the greedy policy w.r.t.\ $Q_t$; i.e., $\pi_t(s) \in \arg\max_{a \in \AM} Q_t(s, a)$ for $s \in \SM$ and $\pi^*$ the optimal policy.

\paragraph{Averaged Q-learning.}
\citet{ruppert1988efficient} and \citet{polyak1992acceleration} showed that averaging the iterates generated by a stochastic approximation (SA) algorithm has favorable asymptotic statistical properties. There is a line of work which has adapted Polyak-Ruppert averaging to the problem of policy evaluation in RL~\citep{bhandari2018finite,khamaru2021temporal,mou2020linear}.
Q-learning is different than policy evaluation due to the nonstationarity  (i.e., $\pi_t$ changes over time) and the nonlinearity of $\TM$.
The averaged Q-learning iterate has the form 
\[
\bar{Q}_T = \frac{1}{T} \sum_{t=1}^T Q_t
\]
with $\{ Q_t \}_{t \ge 0}$ updated as in Eq.~\eqref{eq:Q-update} and $T$ is the number of iterates.
When we conduct inference, we use the average estimate $\bar{Q}_T$ rather than the last iterative value $Q_T$ given an iteration budget $T$.
The application of Polyak-Ruppert averaging in deep RL has been shown empirically to have benefits in terms of error reduction and stability~\citep{lillicrap2016continuous,anschel2017averaged}.

\paragraph{Bellman noise.}
Let $\Z_t \in \RB^D$ be the Bellman noise at the $t$-th iteration, whose $(s, a)$-th entry is
\begin{equation}
\label{eq:Z}
Z_t(s, a) = \widehat{\TM}_t(Q^*)(s, a) -\TM(Q^*)(s, a).
\end{equation}
In matrix form, the Bellman noise at iteration $t$ can be equivalently presented as $ \Z_t = (\rr_t-\rr) + \gamma (\PP_t-\PP) \V^* $.
The Bellman noise $\Z_t$ reflects the noise present in the empirical Bellman operator~\eqref{eq:em-T} using samples collected at iteration $t$ as an estimate of the population Bellman operator~\eqref{eq:T}.

In our synchronous setting, $\rr_t$ and $\PP_t$ are independent of each other and the past history.
Therefore, $\{\Z_t\}$ is an i.i.d.\ random vector sequence with coordinates that are mean zero and mutually independent.
When it is clear from the context, we drop the dependence on $t$ and use $\Z$ to denote an independent copy of $\Z_t$.
We refer to $\Z$ as the Bellman noise (vector).
Finally, an important quantity in our analysis is the covariance matrix of $\Z$:
\begin{equation}
\label{eq:V-bellman}
\Var(\Z) = \EB_{r_t, s_t} \Z \Z^\top \in \RB^{D \times D},
\end{equation}
where the expectation $\EB_{r_t, s_t}(\cdot)$ is taken over the randomness of rewards $r_t$ and states $s_t$.
Clearly, $\Var(\Z)$ is a diagonal matrix with the $(s, a)$-th diagonal entry given by $\EB Z_t^2(s, a)$.

\vspace{-0.1in}



\section{FUNCTIONAL CENTRAL LIMIT THEOREM FOR PARTIAL-SUM AVERAGED Q-LEARNING}
\label{sec:asym-ave-Q}

Our main result is a functional central limit theorem for the partial-sum process of averaged Q-learning.
To that end, we make three assumptions.
The first is that all random rewards have uniformly bounded fourth moments (Assumption~\ref{asmp:reward}).
Though typical in the SA literature~\citep{borkar2009stochastic}, it is weaker than the uniform boundedness assumption which is often used for nonasymptotic analysis in RL.
It is required for a technical reason (that we should ensure a residual error vanishes uniformly in probability, a result which is one of our technical contributions).

The second is a Lipschitz condition (Assumption~\ref{asmp:gap}) over a specific optimal policy $\pi^* \in \Pi^*$, where $\Pi^*$ collects all optimal policies.
The condition is true when $|\Pi^*| = 1$ (See Lemma~\ref{lem:gap} for the reason).
Similar assumptions have been adapted for asymptotic analysis for general nonlinear SA~\citep{mokkadem2006convergence}, and nonasymptotic analysis for both variance reduced Q-learning~\citep{khamaru2021instance} and policy iteration~\citep{puterman1979convergence}.
The condition implies that when $\Q_t \approx \Q^*$ the asymptotic behavior of averaged Q-learning is captured by a linear system up to a high-order approximation error.
As a result, we can explicitly formulate the asymptotic variance matrix.
The approach of approximating a nonlinear SA by a specific linear SA and analyzing the approximation errors is also standard in the SA literature~\citep{polyak1992acceleration,mokkadem2006convergence,lee2021fast,li2021statistical}.

The last assumption (Assumption~\ref{asmp:lr}) requires that the step size decays at a sufficiently slow rate; this is necessary in order to establish asymptotic normality~\citep{polyak1992acceleration,su2018uncertainty,chen2020statistical,li2021statistical}.
A typical example satisfying Assumption~\ref{asmp:lr} is the polynomial step size, $\eta_t = t^{-\alpha}$ with $\alpha \in (0.5, 1)$.

\begin{asmp}
	\label{asmp:reward}
	We assume $ \EB |R(s, a)|^4 < \infty$ for all $(s, a) \in \SM \times \AM$.
\end{asmp}


\begin{asmp}
	\label{asmp:gap}
	There exists $\pi^* \in \Pi^*$ such that for any $Q$-function estimator $\Q \in \RB^D$, $	\|(\PP^{\pi_{\Q}}-\PP^{\pi^*})(\Q - \Q^*)\|_{\infty} \le L \|\Q - \Q^*\|_{\infty}^2$ where $\pi_{Q}(s) := \arg\max_{a \in \AM} Q(s, a)$ is the greedy policy w.r.t. $Q$.
\end{asmp}

	\begin{asmp}
	\label{asmp:lr}
	Assume
	\emph{(i)} $0 \le \sup_t \eta_t \le 1, \eta_t \downarrow 0$ and $t \eta_t \uparrow \infty$; \emph{(ii)} $\frac{\eta_{t-1} - \eta_{t}}{\eta_{t-1}} =o(\eta_{t-1})$; \emph{(iii)} $\frac{1}{\sqrt{T}} \sum_{t=0}^T \eta_{t} \to 0$ for all $t \ge 1$; \emph{(iv)} $\frac{\sum_{t=0}^T \eta_t}{ T\eta_T} \le C$ for all $T \ge 1$.
\end{asmp}

We now present the functional CLT for averaged Q-learning under the same conditions.
Define the standardized partial-sum processes associated with $\{ \Q_t \}_{t \ge 0}$ as follows:
\begin{equation}
\label{eq:partial}
\Bphi_T(r) := \frac{1}{\sqrt{T}}  \sum_{t=1}^{\Tr} (\Q_t - \Q^*),
\end{equation}
where $r \in [0, 1]$ is the fraction of the data used to compute the partial-sum process and $\floor{\cdot}$ returns the largest integer smaller than or equal to the input number.

\begin{thm}
\label{thm:fclt}
Under Assumptions~\ref{asmp:reward},~\ref{asmp:gap} and~\ref{asmp:lr}, we have 
\begin{equation}
\label{eq:weak-conver}
\Bphi_T(\cdot) \overset{w}{\to}  \VQ^{1/2} \B_D(\cdot),
\end{equation}
where $\VQ \in \RB^{D \times D}$ is the asymptotic variance
	\begin{equation}
	\label{eq:opt-variance}
	\VQ  = 
	(\I-\gamma \PP^{\pi^*})^{-1}\Var(\Z)(\I-\gamma \PP^{\pi^*})^{-\top} 
\end{equation}
and $\B_D(\cdot) \in \RB^D$ is a standard Brownian motion on $[0, 1]$. 
\end{thm}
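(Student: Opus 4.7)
The plan is to linearize the Q-learning recursion around $\Q^*$, reduce to a linear stochastic approximation driven by the i.i.d.\ Bellman noise $\{\Z_t\}$, and then invoke a Donsker-type invariance principle. First, setting $\DDelta_t := \Q_t - \Q^*$ and using $\widehat{\TM}_t \Q^* - \TM \Q^* = \Z_t$, the exact error recursion becomes
\begin{equation*}
\DDelta_t = (1 - \eta_t)\DDelta_{t-1} + \eta_t \gamma \PP_t(\V_{t-1} - \V^*) + \eta_t \Z_t.
\end{equation*}
Splitting $\V_{t-1} - \V^* = \BPi^{\pi^*} \DDelta_{t-1} + (\BPi^{\pi_{t-1}} - \BPi^{\pi^*})\Q_{t-1}$ and writing $\PP_t = \PP + (\PP_t - \PP)$ yields the linearized form
\begin{equation*}
\DDelta_t = \bigl(\I - \eta_t (\I - \gamma \PP^{\pi^*})\bigr)\DDelta_{t-1} + \eta_t \Z_t + \eta_t \e_t,
\end{equation*}
where $\e_t$ collects three higher-order pieces: a deterministic piece of order $\|\DDelta_{t-1}\|_\infty^2$ controlled by Assumption~\ref{asmp:gap}, and two martingale-difference pieces of the form $(\PP_t - \PP)(\cdot)\DDelta_{t-1}$ of linear size in $\|\DDelta_{t-1}\|_\infty$.

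Second, I would unroll this linearized recursion through an Abel summation identity, in the style of Polyak and Juditsky, to obtain
\begin{equation*}
\Bphi_T(r) = (\I - \gamma \PP^{\pi^*})^{-1} \cdot \frac{1}{\sqrt{T}} \sum_{t=1}^{\Tr} \Z_t + R_T(r),
\end{equation*}
where $R_T(r)$ gathers (i) telescoping boundary terms involving $\DDelta_0$ and $\DDelta_{\Tr}$, weighted by reciprocal step sizes (which vanish thanks to Assumption~\ref{asmp:lr}), and (ii) weighted partial sums of the residual $\e_t$. For the main term, Donsker's invariance principle applied to the i.i.d.\ $\{\Z_t\}$ with finite second moments (guaranteed by Assumption~\ref{asmp:reward}) yields $T^{-1/2}\sum_{t=1}^{\Tr}\Z_t \overset{w}{\to} \Var(\Z)^{1/2} \B_D(r)$ in the Skorokhod space $\BD$; multiplying by the deterministic matrix $(\I - \gamma \PP^{\pi^*})^{-1}$ via the continuous mapping theorem produces a centered Gaussian process whose covariance at time $r$ is exactly $r\,\VQ$, matching $\VQ^{1/2}\B_D(r)$ in distribution.

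Third, and this is the main obstacle, I need to show $\sup_{r \in [0, 1]} \|R_T(r)\|_\infty \to 0$ in probability, so that Slutsky's lemma in the Skorokhod topology combines the two pieces. Pointwise in $r$, this negligibility follows from standard Polyak-Ruppert arguments, but upgrading to uniform (in $r$) control requires a functional extension of Lemma~\ref{lem:ignore-error}. The recipe is: (a) invoke almost-sure $\ell_\infty$ convergence $\|\DDelta_t\|_\infty \to 0$ (a classical Q-learning result valid under the step-size conditions), sharpened to a quantitative rate such as $\|\DDelta_t\|_\infty = \widetilde{O}(\sqrt{\eta_t})$ so that the deterministic $O(\|\DDelta_{t-1}\|_\infty^2)$ pieces sum to $o(\sqrt{T})$; (b) apply Doob's maximal inequality to the martingale-difference pieces to bound their running suprema, which is where the fourth-moment hypothesis in Assumption~\ref{asmp:reward} becomes essential; and (c) verify that the boundary-term coefficients are bounded using Assumption~\ref{asmp:lr}(iv). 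Putting the uniform negligibility of $R_T(\cdot)$ together with the weak limit of the leading term delivers the claimed functional CLT.
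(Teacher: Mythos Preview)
Your linearization and identification of the leading term $(\I-\gamma\PP^{\pi^*})^{-1}\cdot\frac{1}{\sqrt{T}}\sum_{t\le\Tr}\Z_t$ match the paper exactly, and Donsker for this i.i.d.\ piece is correct. The difficulty, which you rightly flag as ``the main obstacle,'' is the uniform-in-$r$ negligibility of $R_T$, and here your recipe has a genuine gap.

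In the Abel-summation route, the moving boundary piece is $\G^{-1}\DDelta_{\Tr}/(\sqrt{T}\eta_{\Tr})$, so you need $\sup_{0\le k\le T}\|\DDelta_k\|/(\sqrt{T}\eta_k)=o_\PB(1)$. Step (a)---a pointwise rate $\|\DDelta_t\|_\infty=\widetilde O(\sqrt{\eta_t})$---does not deliver this: a union bound over $k$ diverges, and $k\mapsto\DDelta_k/\eta_k$ is not a martingale, so (b) Doob does not apply to it either. Lemma~\ref{lem:ignore-error} already \emph{is} the uniform (sup-over-$k$) statement you want; it needs no ``functional extension.'' But it is proved for a \emph{linear} autoregression $\y_{t+1}=(\I-\eta_t\G)\y_t+\eta_t\seps_t$ driven by a martingale difference with bounded fourth moments. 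Your $\DDelta_t$ does not fit: its driver $\Z_t+\e_t$ contains the predictable non-martingale piece $(\PP^{\pi_{t-1}}-\PP^{\pi^*})\Q_{t-1}$ (exactly the term Assumption~\ref{asmp:gap} bounds by $L\|\DDelta_{t-1}\|_\infty^2$), so the lemma cannot be invoked as-is on the boundary term.

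The paper sidesteps this by \emph{not} using Abel summation. It fully unrolls the linearized recursion and reindexes to produce coefficients $\A_j^n=\eta_j\sum_{t=j}^n\prod_{i=j+1}^t(\I-\eta_i\G)$, then splits $\Bphi_T(r)$ into six pieces $\Bpsi_0,\ldots,\Bpsi_5$. The whole point of this decomposition is to separate the only term that truly needs a sup-over-$r$ argument, $\Bpsi_4(r)=\frac{1}{\sqrt{T}}\sum_{j\le\Tr}(\A_j^{\Tr}-\A_j^T)[\Z_j+\gamma\Z_j']$, from the nonlinear residual $\Bpsi_5(r)=\frac{\gamma}{\sqrt{T}}\sum_{j\le\Tr}\A_j^{\Tr}\Z_j''$. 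For $\Bpsi_4$ the driver is a pure martingale difference, and an algebraic identity reduces $\sup_r\|\Bpsi_4(r)\|$ to precisely $\sup_{k\le T}\|\y_{k+1}\|/(\sqrt{T}\eta_{k+1})$ for the linear process of Lemma~\ref{lem:ignore-error}; this is where the fourth-moment hypothesis of Assumption~\ref{asmp:reward} actually enters (via an $L^4$-consistency bound inside that lemma, not via Doob). For $\Bpsi_5$ no sup is needed at all: $\EB\sup_r\|\Bpsi_5(r)\|\le \gamma LC_0\cdot\frac{1}{\sqrt{T}}\sum_j\EB\|\DDelta_{j-1}\|_\infty^2=o(1)$ by a direct rate computation. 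The remaining $\Bpsi_2,\Bpsi_3$ \emph{are} martingales in $\Tr$ and do fall to Doob. In short, the paper's decomposition is engineered so that the hard sup lands on a process to which Lemma~\ref{lem:ignore-error} applies verbatim; your Abel-summation boundary term mixes the nonlinear part back in and would require additional work to disentangle before that lemma can be used.
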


The conventional CLT asserts that $\Bphi_T(1) = \sqrt{T}(\bar{\Q}_T-\Q^*)$ converges in distribution to a rescaled Gaussian random variable $ \VQ^{1/2} \B_D(1) $ as $T \to \infty$ (see Appendix~\ref{sec:clt} for more details).  The functional CLT in Theorem~\ref{thm:fclt} extends this convergence to the whole function $\Bphi_T=\{ \Bphi_T(r) \}_{r \in [0, 1]}$ in the sense that any finite-dimensional projections of $\Bphi_T$ converge in distribution.
That is, for any given integer $n \ge 1$ and any $0 \le t_1 < \cdots < t_n \le 1$, as $T \to \infty$, $(\Bphi_T(t_1), \cdots, \Bphi_T(t_n))
\overset{d}{\to} \VQ^{1/2} (\B_D(t_1),\cdots, \B_D(t_n))$.
The convergence $\overset{w}{\to}$ in~\eqref{eq:weak-conver} also corresponds to the weak convergence of measures in the $D$-dimensional Skorokhod spaces $\BDD$ (see Appendix~\ref{sec:weak-con} for a short introduction).
Here $\BDD = \{ \text{right continuous with left limits} \ \omega(r) \in \RB^D, r \in [0, 1] \}$.
Eq.~\eqref{eq:weak-conver} is equivalent to the convergence of finite-dimensional projections.

Theorem~\ref{thm:fclt} can be viewed as a generalization of Donsker's theorem~\citep{donsker1951invariance} to Q-learning iterates.
Donsker's theorem shows the partial-sum process of a sequence of independent and identically distributed (i.i.d.) random variables weakly converges to a standard Brownian motion, while subsequent works extend this functional result to weakly dependent stationary sequences~\citep{dudley2014uniform}.
Since in our case $\pi_t$ and $\V_t$ might depend on history data arbitrarily, $\{\Q_t\}_{t \ge 0}$ is neither i.i.d. nor stationary.
To prove the functional CLT, we use a particular error decomposition and partial-sum decomposition. We give a proof sketch in Section~\ref{sec:proof}.

\begin{figure}[t!]
	\centering
	\includegraphics[width=\columnwidth]{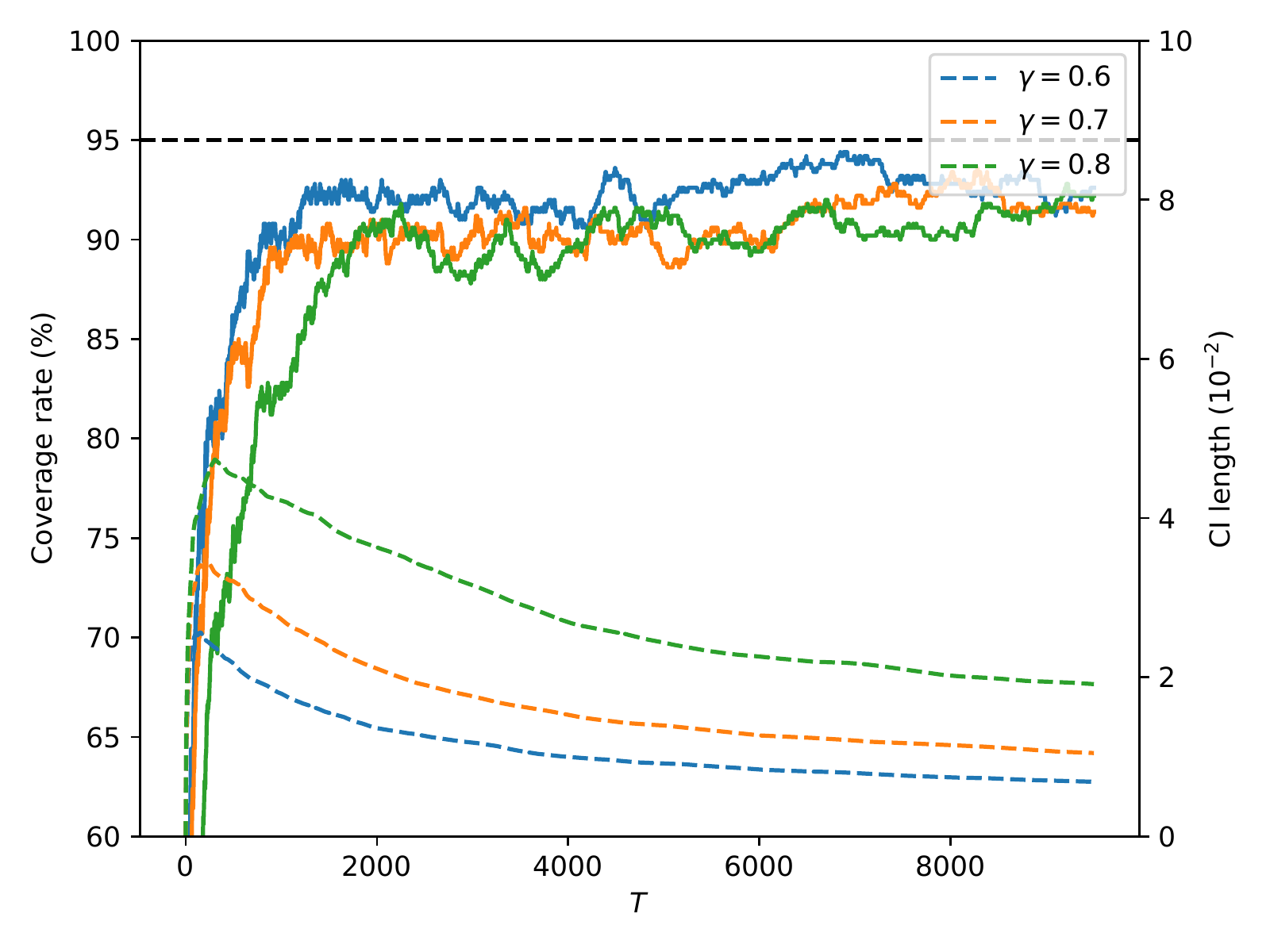}
		\vspace{-0.3in}
	\caption{Empirical coverage rates (left) and CI lengths (right) of $\bar{Q}_T(s_0, a_0)$ against the number of iterations $T$ on a specific $(s_0, a_0)$.
	Both are obtained by averaging over 500 independent Q-learning trajectories.
	Black dashed line denotes the nominal coverage rate of 95\%.
	  }
	\label{fig:coverage}
		\vspace{-0.2in}
\end{figure}

\vspace{-0.1in}
\paragraph{Comparison with previous (functional) CLTs.}

Most CLT results consider linear SA which is non-applicable here (see~\cite{mou2020linear,mou2020optimal} and references therein).
The original result for Polyak-Ruppert averaging  \citep{polyak1992acceleration,moulines2011non,durmus2022finite} also doesn't apply in our case because it assumes a locally strongly convex Lyapunov function---which is not known to exist for Q-learning.
\cite{konda2002actor} shows $\frac{\Q_T-\Q^*}{\sqrt{\eta_T}} \overset{d}{\to}\NM(\0, \Var)$ with $\Var = \frac{\lim_{T}}{\eta_T} \EB(\Q_T-\Q^*)(\Q_T-\Q^*)^\top$ when we assume the limit involved exists.
\cite{mokkadem2006convergence} shows $\Bphi_T(1) \overset{d}{\to}  \NM(\0, \VQ)$ under a similar Lipschitz condition Assumption~\ref{asmp:gap}.

To date, formal functional CLT results for SA are mainly based on the ODE approach~\citep{abounadi2002stochastic,borkar2009stochastic,gadat2018stochastic,borkar2021ode}.
These works focus on the asymptotic behavior of the interpolated process connecting properly rescaled last iterates.
An example interpolated process $\widetilde{\Bphi}_T(\cdot)$ satisfies $\widetilde{\Bphi}_T(0) = \frac{\Q_T-\Q^*}{\sqrt{\eta_T}}$ and $\widetilde{\Bphi}_T(t_k^T) = \frac{\Q_{T+k}-\Q^*}{\sqrt{\eta_{T+k}}}$ for a specific sequence $\{ t_k^T \}_{k \ge 0}$ depending on the step size and satisfying $t_0^T=0$ and $\lim_{k} t_k^T = \infty$.
This functional CLT result implies $\widetilde{\Bphi}_T(\cdot)$ converges weakly to the solution of a specific SDE.
Theorem~\ref{thm:fclt} is different because it is concerned with the partial-sum process $\Bphi_T(\cdot)$ and explicitly formulates the asymptotic variance $\VQ$.
Recent work studying statistical inference via SGD variants also provides functional CLTs for a similar partial-sum process~\citep{lee2021fast,li2021statistical}, given the loss function is smooth and strongly convex.
However, those results don't apply here since Q-learning doesn't meet the underlying assumptions.
Our functional CLT for the partial-sum process of Q-learning is novel.


\subsection{Online Statistical Inference}

The functional CLT opens a path towards statistical inference in RL.
While traditional approaches estimate asymptotic variances in RL by batch-mean estimators~\citep{chen2020statistical,zhu2021online} or bootstrapping~\citep{hao2021bootstrapping}, by contrast, the functional CLT allows us to construct an asymptotically pivotal statistic using the whole function $\Bphi_T$.
The inference method, known as random scaling, was originally designed for strongly convex optimization~\citep{lee2021fast,li2021statistical}.

\begin{prop}
	\label{prop:pivotal}
	The continuous mapping theorem together with Theorem~\ref{thm:fclt} yields that with probability approaching one, $\int_0^1\Bphi_T(r)\Bphi_T(r)^\top dr$ is invertible and 
	\begin{align}
			\label{eq:pivotal}
			\begin{split}
					&\Bphi_T(1)^\top \left(   \int_0^1 \bar{\Bphi}_T(r) \bar{\Bphi}_T(r)^\top \mathrm{d} r \right)^{-1} \Bphi_T(1) \\
				&\overset{d}{\to} \B_D(1)^\top\left(   \int_0^1 \bar{\B}_D(r) \bar{\B}_D(r)^\top \mathrm{d} r \right)^{-1}\B_D(1),
			\end{split}
	\end{align}
	where $\bar{\Bphi}_T(r) := \Bphi_T(r) - r \cdot \Bphi_T(1)$ and $\bar{\B}_D(r) := \B_D(r) - r \cdot \B_D(1)$ for simplicity.
\end{prop}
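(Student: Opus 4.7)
The plan is to apply the continuous mapping theorem to the functional weak convergence established in Theorem~\ref{thm:fclt}, relying on an algebraic cancellation that eliminates $\VQ$ and leaves a pivotal quantity.

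First, I would record that $\VQ$ defined in~\eqref{eq:opt-variance} is strictly positive definite, since $(\I-\gamma \PP^{\pi^*})^{-1}$ exists ($\PP^{\pi^*}$ is row-stochastic and $\gamma < 1$) and $\Var(\Z)$ is a diagonal matrix with strictly positive entries; in particular $\VQ^{1/2}$ is invertible. Next I would introduce the functional $h\colon \BDD \to \RB$ defined on paths $\omega$ with $\int_0^1 \bar{\omega}(r)\bar{\omega}(r)^\top \,dr$ positive definite by
\[
h(\omega) := \omega(1)^\top \left(\int_0^1 \bar{\omega}(r) \bar{\omega}(r)^\top \,dr\right)^{-1} \omega(1), \quad \bar{\omega}(r) := \omega(r) - r\omega(1).
\]
Because the limit process $\VQ^{1/2}\B_D$ has almost surely continuous paths, the Skorokhod $J_1$ topology agrees with the uniform topology on the support of the limit law, so it suffices to check that $h$ is continuous in the sup-norm at every such $\omega$. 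The evaluation $\omega \mapsto \omega(1)$ is sup-norm continuous at continuous paths, the quadratic integral $\omega \mapsto \int_0^1 \bar{\omega}(r)\bar{\omega}(r)^\top \,dr$ is sup-norm continuous, and matrix inversion is continuous on the open cone of positive-definite matrices, so $h$ is continuous at every $\omega \in \BCD$ with positive-definite integral.

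Next I would verify that the limit lies in this continuity set with probability one. Writing $\Sigma := \int_0^1 \bar{\B}_D(r) \bar{\B}_D(r)^\top \,dr$, the integral corresponding to $\VQ^{1/2}\B_D$ equals $\VQ^{1/2}\Sigma \VQ^{1/2}$, which since $\VQ^{1/2}$ is invertible is positive definite if and only if $\Sigma$ is. To prove $\Sigma \succ 0$ almost surely, observe that $\Sigma$ is the Gram matrix in $L^2([0,1])$ of the $D$ coordinate Brownian bridges $\bar{B}_1, \ldots, \bar{B}_D$, and is therefore singular only when those $D$ processes are $L^2$-linearly dependent. Picking any $D$ distinct times $0 < r_1 < \cdots < r_D < 1$, the $D\times D$ matrix $(\bar{B}_i(r_j))_{i,j}$ has i.i.d.\ Gaussian rows with a positive-definite Brownian-bridge covariance, its joint law is absolutely continuous on $\RB^{D\times D}$, and it is therefore almost surely nonsingular; any nontrivial linear dependence among the $\bar{B}_i$ would contradict this. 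Hence $\Sigma$ is almost surely positive definite, and the portmanteau theorem then delivers the ``with probability approaching one'' claim about $\int_0^1 \bar{\Bphi}_T(r)\bar{\Bphi}_T(r)^\top \,dr$.

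Finally, the continuous mapping theorem yields $h(\Bphi_T) \overset{d}{\to} h(\VQ^{1/2}\B_D)$, and the algebraic identity
\[
h(\VQ^{1/2}\B_D) = \B_D(1)^\top \VQ^{1/2}\bigl[\VQ^{1/2}\Sigma \VQ^{1/2}\bigr]^{-1}\VQ^{1/2}\B_D(1) = \B_D(1)^\top \Sigma^{-1} \B_D(1)
\]
produces the right-hand side of~\eqref{eq:pivotal}; the $\VQ^{1/2}$ factors cancel precisely because the test statistic has the quadratic-form structure that is invariant under invertible linear transformations of $\B_D$, which is exactly why random scaling is attractive as an inference procedure. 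The main obstacle is the almost-sure positive definiteness of $\Sigma$ (and hence the well-definedness of $h$ at the limit); once that is in hand, the proof reduces to a routine application of the continuous mapping theorem together with the cancellation above.
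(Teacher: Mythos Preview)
Your proposal is correct and follows the same high-level route as the paper: apply the continuous mapping theorem to the functional CLT of Theorem~\ref{thm:fclt} via a quadratic-form functional that is continuous on the set where the integral is invertible. You actually supply more than the paper's own proof does. The paper defines the functional $g(w)=w(1)^\top\bigl(\int_0^1 w(r)w(r)^\top dr\bigr)^{-1}w(1)$ (without the centering $\bar w$), checks its Skorokhod continuity by factoring it as $f^\top h^{-1}f$, and simply asserts that $\PB(\B_D\notin\mathrm{dom}(g))=0$; it neither proves the almost-sure invertibility nor spells out the $\VQ^{1/2}$ cancellation. Your version correctly works with the centered process $\bar\omega$, gives a clean argument (via nonsingularity of the Gaussian matrix of bridge evaluations at $D$ distinct times) that $\Sigma=\int_0^1\bar\B_D(r)\bar\B_D(r)^\top dr\succ0$ almost surely, and makes the invariance $h(\VQ^{1/2}\B_D)=h(\B_D)$ explicit. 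One caveat: your claim that $\Var(\Z)$ has strictly positive diagonal is not implied by Assumption~\ref{asmp:reward} (a state-action pair with deterministic reward and deterministic transition yields a zero entry), so you are tacitly adding a mild nondegeneracy hypothesis; the paper needs it too for the proposition to be meaningful but does not state it.
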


The left-hand side of~\eqref{eq:pivotal} is a pivotal quantity involving samples and the unobservable parameter of interest $\Q^*$.
The pivotal quantity can be constructed in a fully online fashion and thus is computationally efficient.\footnote{See Algorithm 1 in~\citep{lee2021fast} or Algorithm 2 in~\citep{li2021statistical} for the online procedure.}
The right-hand side of~\eqref{eq:pivotal} is a known distribution whose quantiles can be computed via simulation~\citep{kiefer2000simple,abadir2002simple}.
In this way, we don't need a consistent estimator for the asymptotic variance in order to provide asymptotically valid confidence intervals for $\Q^*$, as are required by previous work~\citep{hao2021bootstrapping,shi2020statistical,khamaru2022instance}.
As an illustration, Figure~\ref{fig:coverage} shows the empirical coverage rates and confidence interval (CI) lengths on a random MDP with three values of $\gamma$.
As $T$ increases, the empirical coverage rates increase rapidly, approaching $95\%$, and the CI lengths decay. 
More details are placed in Appendix~\ref{appen:exp}.


\subsection{Proof Sketch}
\label{sec:proof}
In the part, we provide a proof sketch of Theorem~\ref{thm:fclt} to highlight our technical contributions.
A full proof of Theorem~\ref{thm:fclt} is provided in Appendix~\ref{proof:fclt}.

\paragraph{Step 1: Error decomposition.}
Let $\DDelta_t  = \Q_t-\Q^*$.
Recall that the Q-learning update rule is~\eqref{eq:Q-update}.
It follows that
\begin{align*}
	\DDelta_{t}
	&= (1-\eta_t) \DDelta_{t-1}+ \eta_t\left[(\rr_t-\rr)+\gamma( \PP_t \V_{t-1}-\PP \V^*)\right] \\
	&= (1-\eta_t) \DDelta_{t-1}+ \eta_t\left[ \Z_t+\gamma\PP_t(  \V_{t-1}- \V^*)\right],
\end{align*}
where $\Z_t =  (\rr_t-\rr) + \gamma( \PP_t - \PP) \V^*$ is the Bellman noise.
Notice that $\PP_t(  \V_{t-1}- \V^*)=  (\PP_t-\PP)( \V_{t-1}- \V^*) + \PP( \V_{t-1}- \V^*)$.
Using $\PP \V_{t-1} = \PP^{\pi_{t-1}} \Q_{t-1}$ and $\PP \V^* = \PP^{\pi^*} \Q^*$, we further have $\PP( \V_{t-1}- \V^*)= \PP^{\pi_{t-1}} \Q_{t-1}- \PP^{\pi^*} \Q^* = (\PP^{\pi_{t-1}}-\PP^{\pi^*})\Q_{t-1} + \PP^{\pi^*}\De_{t-1}$.
Putting the pieces together,
\[
\DDelta_{t} =  \A_t \De_{t-1} + \eta_t \left[  \Z_t + \gamma \Z_t' + \gamma \Z_t'' \right],
\]
where $\A_t = \I-\eta_t \sG, \sG = \sI - \gamma \PP^{\pi^*}, \Z_t'=(\PP_t-\PP)( \V_{t-1}- \V^*)$, and $\Z_t'' = (\PP^{\pi_{t-1}}-\PP^{\pi^*})\Q_{t-1}$.
Recursing the last equality gives
\begin{equation}
\label{eq:delta_iter}
	\DDelta_t
	= \prod_{j=1}^t \A_{j} \DDelta_{0}+  \sum_{j=1}^t \prod_{i=j+1}^t \A_{i} \eta_j \left( \Z_j + \gamma \Z_t' + \gamma \Z_t''\right).
\end{equation}
In addition, using the general step size in Assumption~\ref{asmp:lr}, we can show $\frac{1}{\sqrt{T}} \sum_{t=1}^T \EB\|\DDelta_t\|_{\infty}^2 \to 0$ (in Theorem~\ref{thm:general-Linfty-pw2}).

\paragraph{Step 2: Partial-sum decomposition.}
For simplicity,  for any $T \ge j \ge 0$ we denote 
\begin{equation}
\label{eq:AjT_main}
\A_{j}^T =\eta_j \sum_{t=j}^T \prod_{i=j+1}^t \A_i.
\end{equation}
Setting $\Bpsi_0(r) :=\frac{1}{\eta_0\sqrt{T}}  (\A_0^{\Tr}-\eta_0 \I) \DDelta_{0}$ and plugging~\eqref{eq:delta_iter} into $\Bphi_T(r)= \frac{1}{\sqrt{T}} \sum_{t=1}^{\floor{Tr}} \De_t$, yields
\begin{align}
&\Bphi_T(r)
=\Bpsi_0(r)
+\frac{1}{\sqrt{T}}  \sum_{j=1}^{\Tr}\A_j^{\Tr}\left(\Z_j+\gamma \Z_j' + \gamma \Z_j''\right) \nonumber \\
&
	=\Bpsi_0(r) 
	+\frac{1}{\sqrt{T}}  \sum_{j=1}^{\Tr}\G^{-1}    \Z_j 	+ \frac{1}{\sqrt{T}}  \sum_{j=1}^{\Tr}  (\A_j^{T} - \G^{-1})  \Z_j  \nonumber \\
&	
	+ \frac{\gamma}{\sqrt{T}}  \sum_{j=1}^{\Tr} \A_j^{T}  \Z_j'+  \frac{1}{\sqrt{T}}  \sum_{j=1}^{\Tr}(\A_j^{\Tr} - \A_j^T)\left[  \Z_j +  \gamma\Z_j' \right] \nonumber \\
&+   \frac{\gamma}{\sqrt{T}}  \sum_{j=1}^{\Tr} \A_j^{T}  \Z_j''
:= \sum_{i=0}^5  \Bpsi_i(r). \label{eq:error0}
\end{align} 
\paragraph{Step 3: Establish the functional CLT.}
To measure the distance between random functions, we define $\|\Bpsi\|_{\sup} = \sup_{r \in [0, 1]} \|\Bpsi(r)\|_{\infty}$.
The standard martingale functional CLT~\citep{hall2014martingale,jirak2017weak} implies $\Bpsi_1(\cdot) \overset{w}{\to} \VQ^{1/2} \B_D(\cdot)$.
To complete the proof, it suffice to show $\|\Bphi_T-\Bpsi_1\|_{\sup}  = o_{\PB}(1)$ which is implied by $\|\Bpsi_i\|_{\sup}  = o_{\PB}(1)$ for $i=0,2,3,4,5$.

By Lemma 1 in~\citep{polyak1992acceleration}, we know $\sup_{T\ge j\ge0}\|\A_j^T\|_{\infty} \le C_0$ and $\lim_{T \to \infty} \frac{1}{T}\sum_{j=1}^T\|\A_j^T - \G^{-1}\|_{2} = 0$.
Then it is obvious $\|\Bpsi_0\|_{\sup}  = o_{\PB}(1)$.
Noting that $\Z_j, \Z_j'$ are martingale differences, we can show $\EB\|\Bpsi_i\|_{\sup}^2  = o(1)$ for $i=2,3$ by Doob’s
inequality.

By definition of greedy policies $\pi^*$ and $\pi_{t-1}$, we know $\PP^{\pi^*} \Q_{t-1} \le \PP^{\pi_{t-1}} \Q_{t-1}$ and $\PP^{\pi_{t-1}} \Q^* \le \PP^{\pi^*} \Q^*$, which implies $\|\Z_t''\|_{\infty} = \|(\PP^{\pi_{t-1}}-\PP^{\pi^*})\Q_{t-1} \|_{\infty} \le \|(\PP^{\pi_{t-1}}-\PP^{\pi^*})  \De_{t-1}\|_{\infty}  \le L \|\De_{t-1}\|_{\infty}^2$ from Assumption~\ref{asmp:gap}.
Then $\EB  \|\Bpsi_5\|_{\sup} \le   \frac{L C_0}{\sqrt{T}}\sum_{t=1}^T \EB\|\De_t\|_{\infty}^2 \to 0$.

The most challenging step is to show $\|\Bpsi_4\|_{\sup}  = o_{\PB}(1)$.
Notice that $\Bpsi_4$ is a weighted sum of martingale differences, $\Z_j+\gamma \Z_j'$, with the coefficients varying in $r$ such that we can't apply Doob’s inequality.
To deal with this issue, we relate $\Bpsi_4$ to an autoregressive sequence indexed by $k \in [T]$ and analyze the maximum over $k$ directly. 
More specifically, we can show 
\[
\|\Bpsi_4\|_{\sup} \precsim \sup_{k \in [T]}\left\| \frac{1}{\sqrt{T}\eta_{ k+1}} \sum_{j=1}^k \prod_{i=j+1}^{k}\A_i \eta_j (\Z_j+\gamma \Z_j') \right\|.
\]
Previous results~\cite{lee2021fast,li2021statistical} do not apply here, since they require $\sG= \sI -\gamma \PP^{\pi^*}$  to be positive semidefinite, which isn't our case.
 Noticing that all eigenvalues of $\sG$ have nonnegative real parts, we provide a novel analysis of the right-hand side in Lemma~\ref{lem:ignore-error}, showing it is indeed $o_{\PB}(1)$ under Assumption~\ref{asmp:reward}.
 This is one of our technical contributions.

\begin{rem}
If we consider policy evaluation (so that $\pi_t$ remains unchanged and $\Bpsi_{5}$ disappears), $\Bpsi_4$ is still present.
Showing $\|\Bpsi_4\|_{\sup} = o_{\PB}(1)$ is required even for linear SA. 
\end{rem}

\section{INFORMATION-THEORETIC LOWER BOUND}
\label{sec:infolb}
The standard CLT implies $\bar{\Q}_T$ is a $\sqrt{T}$-consistent estimate for $\Q^*$.
It is of theoretical interest to investigate whether or not $\bar{\Q}_T$ is asymptotically efficient.
In parametric statistics~\citep{lehmann2006theory}, the Cramer-Rao lower bound assesses the hardness of estimating a target parameter $\beta(\theta)$ in a parametric model $\PM_\theta$ indexed by parameter $\theta$.
Any unbiased estimator whose variance achieves the Cramer-Rao lower bound is viewed as optimal and efficient.
The concept of Cramer-Rao lower bounds can be extended to possibly biased but asymptotically unbiased estimators and also to nonparametric statistical models where the dimension of the parameter $\theta$ is infinity~\citep{van2000asymptotic,tsiatis2006semiparametric}.

\paragraph{The semiparametric model.}
In our case, the transition kernel $\{P(\cdot|s, a)\}_{s, a}$ is specified by $D$ parametric distributions on $\DM$, while the random reward $\{ R(s, a) \}_{s, a}$ is fully nonparametric because the $R(s, a)$ are not assumed to come from finite-dimensional models.
Hence, to derive an extended Cramer-Rao lower bound for $Q^*$ estimation, we need to enter the world of semiparametric statistics.
In particular, our MDP model $\mathcal{M}=(\SM, \AM, \gamma, P, R, r)$ has parameter $\theta = (P, R)$.
Our parameter of interest is $\beta(\theta) = \Q^*$.
At iteration $t$, we observe the random rewards and empirical transitions for each $(s, a)$ and concatenate them into $\rr_t \in \RB^D$ and $\PP_t \in \RB^{D \times S}$.
The distribution of $\PP_t$ is determined by its expectation $\PP = \EB \PP_t$, which belongs to 
\begin{align}
	\label{eq:PM}
\begin{split}
	\PM_P:&=\left\{\PP\in\RB^{D\times S}:P(s'|s,a)\ge0, \forall  (s,a,s') \right. \\
	&\text{and} \left. \sum_{s' \in \SM} P(s'|s, a) = 1, \forall  (s,a) \right\},
\end{split}
\end{align}
while $R$ is nonparametric and belongs to 
\[
\PM_R = \left\{ \{ R(s, a)\}_{s, a}: \EB R(s, a) = r(s, a), \forall (s, a) \right\}.
\]
According to the generative model, the $\rr_t$ and $\PP_t$ are mutually independent and also independent of the historical data.
Let $\DM = \{(\rr_t, \PP_t)\}_{t \in [T]}$ contain the $T$ samples generated as described above.

\paragraph{Semiparametric efficiency lower bound.}
\citet{tsiatis2006semiparametric} has argued that regular asymptotically linear (RAL) estimators provide a good tradeoff between expressivity and tractability.
In RL, RAL estimators are widely considered in off-policy evaluation problems~\citep{kallus2020double}.


\begin{defn}[Regular estimator]
    \label{def:regular}
    Denote the distribution of $\rr_t$ and $\PP_t$ by $\LM(\rr)$ and $\LM(\PP)$.\footnote{Given a probability space $(\Omega, P, \FM)$, $\LM(X)$ is the law of the random variable $X$ in this probability space. Since $\rr_t$ are i.i.d., they share the same distribution $\LM(\rr)$ and similarly for $\LM(\PP)$.}
    For any given $T$, let $\LM_{T}(\rr)$ and $\LM_{T}(\PP)$ be the perturbed distributions of $\LM(\rr)$ and $\LM(\PP)$ which are consistent in the sense that they converge\footnote{$\LM_T(\rr)$  and $\LM_T(\PP)$ are differentiable in quadratic mean at $\LM(\rr)$ and $\LM(\PP)$. See Chapter 25.3 in \cite{van2000asymptotic}.} to $\LM(\rr)$ and $\LM(\PP)$ when $T$ goes infinity. 
    Let $\widehat{\Q}_T$ be any estimator of $\Q^*$ computed from $\DM$.
    Let $\Q^*_{T}$ be the true optimal Q-value function when rewards and transition probabilities are generated i.i.d.\ from $\LM_{T}(\rr)$ and $\LM_{T}(\PP)$. 
    We say $\widehat{\Q}_T$ is a \emph{regular estimator} of $\Q^*$ if $\sqrt{T}(\widehat{\Q}_T-\Q^*_T)$ weakly converges to a random variable that depends only on $\LM(\rr)$ and $\LM(\PP)$, when samples are distributed according to the probability measure $(\LM_T(\rr), \LM_T(\PP))$.
\end{defn}

\begin{rem}
	Informally speaking, an estimator is regular if its limiting distribution is unaffected by local changes in the data-generating process.	
	The assumption of regularity excludes super-efficient estimators, whose asymptotic variance can be smaller than the Cramer-Rao lower bound for some parameter values, but which perform poorly in the neighborhood of points of super-efficiency.
	We refer interested readers to Section 3.1 in \citep{tsiatis2006semiparametric} for a detailed exposition.
\end{rem}

\begin{defn}[Regular asymptotically linear]
	\label{def:RAL}
	Let $\widehat{\Q}_T \in \RB^D$ be a measurable random function of $\DM = \{(\rr_t, \PP_t)\}_{t \in [T]}$.
	We say that $\widehat{\Q}_T$ is \emph{regular asymptotically linear} (RAL) for $\Q^*$ if it is regular and asymptotically linear with a measurable random function $\Bphi(\rr_t, \PP_t) \in \RB^D$ such that
	\[
	\sqrt{T}(\widehat{\Q}_T - \Q^*) = \frac{1}{\sqrt{T}} \sum_{t=1}^T \Bphi(\rr_t, \PP_t) + o_{\PB}(1).
	\]
	Here $\Bphi(\cdot, \cdot) $ is referred to as an \emph{influence function}, and it satisfies $\EB \Bphi(\rr_t, \PP_t) = \0$ and $\EB \Bphi(\rr_t, \PP_t)\Bphi(\rr_t, \PP_t)^\top$.
\end{defn}

\begin{thm}
	\label{thm:infolb}
	Given the dataset $\DM = \{(\rr_t, \PP_t)\}_{t \in [T]}$, for any RAL estimator $\widehat{\Q}_T$ of $\Q^*$ computed from $\DM = \{(\rr_t, \PP_t)\}_{t \in [T]}$, its variance satisfies
    \begin{align*}
    \lim_{T \to \infty} T
    \EB(\widehat{\Q}_T-\Q^*) (\widehat{\Q}_T-\Q^*)^\top
    \succeq  \VQ, 
    \end{align*}
	where $\A \succeq \B$ means $\A-\B$ is positive semidefinite and $\VQ$ is given in~\eqref{eq:opt-variance}.
\end{thm}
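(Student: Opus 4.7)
The plan is to invoke the semiparametric convolution theorem of H\'ajek. The approach proceeds in three stages: (i) express $\Q^*$ as a pathwise differentiable functional of $(\LM(\rr),\LM(\PP))$; (ii) identify the efficient influence function (EIF) as the Riesz representer of the pathwise derivative in the tangent space of the model; (iii) apply the convolution theorem to conclude that the asymptotic covariance of any RAL estimator is bounded below by the EIF covariance, and verify that this covariance equals $\VQ$.

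For stage (i), Assumption~\ref{asmp:gap} guarantees that $\pi^*$ is locally constant under small perturbations of $(\rr,\PP)$, so in a neighborhood of the truth the Bellman fixed-point equation collapses to the linear system $\Q^*=\rr+\gamma\PP^{\pi^*}\Q^*$, i.e.\ $\Q^*=\sG^{-1}\rr$ with $\sG=\I-\gamma\PP^{\pi^*}$ a smooth matrix-valued map of $\PP$. Differentiating along any regular parametric submodel with score $\ell_\theta(\rr_t,\PP_t)$, and using $\partial_\theta\PP^{\pi^*}=(\partial_\theta\PP)\BPi^{\pi^*}$ together with $\BPi^{\pi^*}\Q^*=\V^*$, I obtain
\[
\partial_\theta\Q^*=\sG^{-1}\big(\partial_\theta\rr+\gamma(\partial_\theta\PP)\V^*\big)=\EB\!\left[\sG^{-1}\Z_t\cdot\ell_\theta(\rr_t,\PP_t)^\top\right],
\]
which identifies the candidate EIF as $\Bphi^*(\rr_t,\PP_t)=\sG^{-1}\Z_t$, with $\Z_t$ defined in~\eqref{eq:Z}. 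A direct computation exploiting the independence of $\rr_t$ and $\PP_t$ then gives $\EB[\Bphi^*\Bphi^{*\top}]=\sG^{-1}\Var(\Z)\sG^{-\top}=\VQ$.

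For stage (ii), I must check that $\Bphi^*$ genuinely lies in the tangent space of the semiparametric model: the reward component is nonparametric, so every coordinate of $\sG^{-1}(\rr_t-\rr)$ is admissible; for the transition component, the tangent space at each $P(\cdot\mid s,a)$ consists of mean-zero functions on $\SM$, and the $(s,a)$-th entry of $(\PP_t-\PP)\V^*$ is exactly $\V^*(s_t(s,a))-\EB_{s'\sim P(\cdot\mid s,a)}\V^*(s')$, a mean-zero function of $s_t(s,a)$. Hence $\Bphi^*$ is indeed the EIF. For stage (iii), by the semiparametric convolution theorem \citep[Thm.~25.20]{van2000asymptotic} any RAL influence function $\Bphi$ admits the $L^2$-orthogonal decomposition $\Bphi=\Bphi^*+\boldsymbol{\xi}$ with $\boldsymbol{\xi}$ orthogonal to the tangent space, so $\EB[\Bphi\Bphi^\top]\succeq\EB[\Bphi^*\Bphi^{*\top}]=\VQ$; combining with Definition~\ref{def:RAL} and the standard CLT yields $\lim_T T\,\EB(\widehat{\Q}_T-\Q^*)(\widehat{\Q}_T-\Q^*)^\top=\EB[\Bphi\Bphi^\top]\succeq\VQ$. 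The main obstacle I anticipate is the verification in stage (ii) that $\Bphi^*$ lies in the tangent space under the multinomial simplex constraint, together with the formal justification in stage (i) that $\pi^*$ is locally constant via Assumption~\ref{asmp:gap}; both are delicate bookkeeping steps but neither is deep.
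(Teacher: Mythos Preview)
Your proposal is correct and reaches the same bound, but the route differs from the paper's. The paper splits the problem along the variational independence of $\PP$ and $R$: for the parametric transition part it computes the \emph{constrained} Cram\'er--Rao bound directly, writing down the Fisher information of the multinomial model under the simplex constraint $\h(\PP)=\0$, the Jacobian $\partial\Q^*/\partial\PP$ (your stage (i) computation), and then carrying out the block-diagonal matrix algebra to obtain $(\I-\gamma\PP^{\pi^*})^{-1}\Var(\gamma\PP_t\V^*)(\I-\gamma\PP^{\pi^*})^{-\top}$; for the nonparametric reward part it argues separately via the efficient influence function for a mean functional. The two pieces are then summed. Your approach instead handles both components at once by exhibiting the candidate EIF $\sG^{-1}\Z_t$, checking it lies in the product tangent space, and invoking the convolution theorem. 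What your approach buys is economy: you bypass the explicit constrained-Fisher computation entirely, since verifying that $(\PP_t-\PP)\V^*$ has mean-zero coordinates under each $P(\cdot\mid s,a)$ is exactly the statement that it sits in the multinomial tangent space. What the paper's approach buys is self-containedness: it never appeals to the convolution theorem or the tangent-space characterization of efficiency, only to the parametric Cram\'er--Rao bound supremized over submodels, which some readers may find more concrete. Both arguments use Assumption~\ref{asmp:gap} in the same way, to freeze $\pi^*$ when differentiating the Bellman fixed point.
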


By Definition~\ref{def:RAL}, any influence function determines an asymptotic linear estimator for $\Q^*$.
The semiparametric efficiency bound in Theorem~\ref{thm:infolb} gives us a concrete target in the construction of the influence function.
If we can find an influence function that achieves the bound,  we know that it is the most efficient among all RAL estimators.
Fortunately, Theorem~\ref{thm:RAL} implies that $\bar{\Q}_T$ is the most efficient estimator among all RAL estimators with the efficient influence function $(\I - \gamma \PP^{\pi^*})^{-1} \Z_t$.
It also implies that for any fixed $r \in [0, 1]$, $\Bphi_T(r) = \sqrt{r} \cdot \sqrt{\floor{Tr}}(\bar{\Q}_{\floor{Tr}} -\Q^*)$ has the optimal asymptotic variance (scaled by a factor $\sqrt{r}$).
Proofs are provided in Appendix~\ref{proof:infolb}.

\begin{thm}
	\label{thm:RAL}
	Under Assumptions~\ref{asmp:reward},~\ref{asmp:gap} and~\ref{asmp:lr}, the averaged Q-learning iterate $\bar{\Q}_T$ is a RAL estimator for $\Q^*$.
	In particular, we have the following decomposition
	\[
	\sqrt{T}\left( \bar{\Q}_T  - \Q^* \right) = \frac{1}{\sqrt{T}} \sum_{t=1}^T (\I - \gamma \PP^{\pi^*})^{-1} \Z_t + o_{\PB}(1),
	\]
	where $\Z_t = (\rr_t-\rr) + \gamma (\PP_t - \PP) \V^*$ is the Bellman noise at iteration $t$. 
\end{thm}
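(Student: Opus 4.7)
The plan is to derive the RAL representation by specializing the error decomposition established in the proof sketch of Theorem~\ref{thm:fclt} to the endpoint $r=1$. Setting $r=1$ in equation~\eqref{eq:error0} gives
\[
\sqrt{T}(\bar{\Q}_T - \Q^*) = \Bphi_T(1) = \sum_{i=0}^{5} \Bpsi_i(1),
\]
where the leading term $\Bpsi_1(1) = \frac{1}{\sqrt{T}}\sum_{t=1}^T \sG^{-1}\Z_t$ with $\sG = \I - \gamma \PP^{\pi^*}$ is exactly the claimed influence-function expansion $\frac{1}{\sqrt{T}}\sum_{t=1}^T (\I - \gamma \PP^{\pi^*})^{-1}\Z_t$.

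First I would invoke the uniform estimates $\|\Bpsi_i\|_{\sup} = o_{\PB}(1)$ for $i \in \{0,2,3,4,5\}$ already established in Step 3 of the proof sketch of Theorem~\ref{thm:fclt} (Doob's inequality for the martingale-type terms $\Bpsi_2,\Bpsi_3$; Lemma~\ref{lem:ignore-error} for the delicate $\Bpsi_4$; Assumption~\ref{asmp:gap} together with $\frac{1}{\sqrt{T}}\sum_{t=1}^T \EB\|\DDelta_t\|_{\infty}^2 \to 0$ for the nonlinear residual $\Bpsi_5$; and the classical Lemma~1 of \citet{polyak1992acceleration} for the initial-condition term $\Bpsi_0$). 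Since $\|\Bpsi_i(1)\|_{\infty} \le \|\Bpsi_i\|_{\sup}$, these terms collapse into an $o_{\PB}(1)$ remainder in the $\ell_\infty$-norm, yielding
\[
\sqrt{T}(\bar{\Q}_T - \Q^*) = \frac{1}{\sqrt{T}}\sum_{t=1}^T (\I - \gamma \PP^{\pi^*})^{-1}\Z_t + o_{\PB}(1),
\]
which is precisely the asymptotic linear expansion part of Definition~\ref{def:RAL}.

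Next I would verify the moment properties of the candidate influence function $\Bphi(\rr_t, \PP_t) := (\I - \gamma \PP^{\pi^*})^{-1} \Z_t$. Because $\Z_t = (\rr_t - \rr) + \gamma(\PP_t - \PP)\V^*$ with $\EB \rr_t = \rr$ and $\EB \PP_t = \PP$, we have $\EB \Bphi(\rr_t, \PP_t) = \0$. Moreover,
\[
\EB \Bphi(\rr_t, \PP_t)\Bphi(\rr_t, \PP_t)^\top = (\I - \gamma \PP^{\pi^*})^{-1}\Var(\Z)(\I - \gamma \PP^{\pi^*})^{-\top} = \VQ,
\]
which is finite under the uniformly bounded fourth-moment condition of Assumption~\ref{asmp:reward}.

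The main obstacle is verifying the regularity part of Definition~\ref{def:regular}: under quadratic-mean-differentiable local perturbations $(\LM_T(\rr), \LM_T(\PP)) \to (\LM(\rr), \LM(\PP))$, the law of $\sqrt{T}(\bar{\Q}_T - \Q^*_T)$ must converge to a limit that does not depend on the perturbation direction. The natural route is Le Cam's third lemma: quadratic mean differentiability supplies contiguity of the perturbed laws to the base law; an implicit-function argument applied to the fixed-point equation $\TM\Q^* = \Q^*$, using Assumption~\ref{asmp:gap} to ensure that a single greedy policy $\pi^*$ prevails under sufficiently small perturbations, gives pathwise differentiability of the map $(P, R) \mapsto \Q^*$ with derivative directly expressible through $\sG^{-1}$; combining this with the asymptotic linear expansion above shows that the local log-likelihood ratio and $\sqrt{T}(\bar{\Q}_T - \Q^*)$ are jointly asymptotically normal under the base law, so re-centering at $\Q^*_T$ absorbs the induced mean shift and leaves the limit law $\NM(\0, \VQ)$ unchanged. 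Controlling the nonsmoothness of the $\max$ operator inside the Bellman equation within this contiguity argument is the most delicate technical step; Assumption~\ref{asmp:gap} is precisely what makes it tractable, since it forces the perturbation of $\pi^*$ to be quadratic in the perturbation of $\Q^*$ and hence negligible relative to the $1/\sqrt{T}$-scale local shifts.
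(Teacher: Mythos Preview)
Your asymptotic-linearity argument is essentially the paper's: both obtain $\sqrt{T}(\bar{\Q}_T-\Q^*)=\sqrt{T}\TM_1+o_{\PB}(1)$ from the same decomposition, though the paper routes it through the sandwich $\BDelta_T^2\le\BDelta_T\le\BDelta_T^1$ and the $\TM_i$ terms of the nonasymptotic analysis (Appendix~\ref{proof:non-linear-error}) rather than the $\Bpsi_i$ of the FCLT section. (Note that $\Bpsi_4(1)=0$ identically, so you need not invoke Lemma~\ref{lem:ignore-error} here.)

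For regularity the two arguments diverge. You propose Le Cam's third lemma: establish contiguity from quadratic-mean differentiability, obtain pathwise differentiability of $(P,R)\mapsto\Q^*$ via an implicit-function argument around $\TM\Q^*=\Q^*$ (using Assumption~\ref{asmp:gap} to freeze $\pi^*$), and show the limit law survives re-centering at $\Q_T^*$. The paper instead invokes Theorem~2.2 of \citet{newey1990semiparametric}, which reduces regularity of an asymptotically linear estimator to the moment identity
\[
\EB\,\Bphi(\rr_t,\PP_t)\,S_\gamma(\gamma_0)^\top=\frac{\partial \Q^*}{\partial\gamma}\bigg|_{\gamma=\gamma_0}
\]
for every parametric submodel, and then verifies this identity by direct computation: the $\PP$-block via Lemma~\ref{lem:Q_diff} (the implicit-function derivative you also need) together with an explicit score $S_{\PP}$ for the multinomial transition model, and the $R$-block via one-dimensional parametric submodels as in Lemma~\ref{lem:r_eif}. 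Both routes are valid and share the same differentiation of $\Q^*$; the paper's is more concrete and avoids setting up the contiguity machinery, while your Le Cam route is more conceptual but would still require essentially the same score and derivative computations to close.
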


\section{INSTANCE-DEPENDENT NONASYMPTOTIC CONVERGENCE}
\label{sec:nonasymptotic}

In the section, we explore the nonasymptotic behavior of averaged Q-learning, i.e., we study the dependence of $\EB \|\bar{\Q}_T-\Q^*\|_{\infty}$ on finite $T$ and $(1-\gamma)^{-1}$.

\begin{thm}
	\label{thm:con-linear}
	Let Assumptions~\ref{asmp:gap} hold and $0 \le R(s, a) \le 1$ for all $(s, a) \in \SM \times \AM$.\footnote{To simplify the  parameter dependence, we assume rewards are uniformly bounded as in previous work~\citep{wainwright2019variance,khamaru2021instance,li2021q} .
Note that, thanks to the error decomposition in~\eqref{eq:error0}, it is possible to provide a nonasymptotic analysis assuming rewards have finite second moments.
The consequence is that the dependence on $d$ and $\delta$ would change from $\log D, \log \frac{1}{\delta}$ to $D$ and $\frac{1}{\delta}$.
} When $D$ is larger than a universal constant, 
	\begin{itemize}
		\item If $\eta_t = t^{-\alpha}$ with $\alpha \in (0.5, 1)$ for $t\ge 1$ and $\eta_0 = 1$, it follows that for all $T \ge 1$, $\EB\|\bar{\Q}_T - \Q^* \|_{\infty}=$
		\begin{align*}
		\label{eq:bound-poly}
		\begin{split}
		   	&\OM \left(
		\sqrt{\|\diag(\VQ)\|_{\infty}}\sqrt{\frac{\ln D}{T}} + 
		\frac{\sqrt{\ln D}}{(1-\gamma)^{3}}\frac{1}{T^{1-\frac{\alpha}{2}}} \right)\\
		&\qquad + \widetilde{\OM}  \left(
		\frac{1}{(1-\gamma)^{3+\frac{2}{1-\alpha}}} \frac{1}{T}
		+ \frac{\gamma L}{(1-\gamma)^{4+\frac{1}{1-\alpha
		}}} \frac{1}{T^\alpha}
		\right).
		\end{split}
		\end{align*}
		\item If $\eta_t = \frac{1}{1+(1-\gamma)t}$, it follows that for all $T \ge 1$,  $\EB\|\bar{\Q}_T - \Q^* \|_{\infty}=$
        \begin{equation*}
        \label{eq:bound-lin}
        \OM \left(
        \sqrt{\frac{\|\Var(\Z)\|_{\infty}}{(1-\gamma)^2}}\sqrt{\frac{\ln D}{T}} \right)    + \TOM\left( \frac{L}{(1-\gamma)^6} \frac{1}{T} 
        \right).
        \end{equation*}
	\end{itemize}
	Here $\TOM(\cdot)$ hides polynomial dependence on $\alpha, L$ and logarithmic factors (i.e., $\ln D$ and $\ln T$).
\end{thm}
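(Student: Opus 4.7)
The plan is to specialize the error decomposition \eqref{eq:error0} to $r=1$. Since $\A_j^{\lfloor T \rfloor} = \A_j^T$, the term $\Bpsi_4(1)$ vanishes identically, leaving
\[
\sqrt{T}(\bar{\Q}_T - \Q^*) = \Bpsi_0(1) + \Bpsi_1(1) + \Bpsi_2(1) + \Bpsi_3(1) + \Bpsi_5(1).
\]
It therefore suffices to control each surviving $\EB\|\Bpsi_i(1)\|_\infty$ and divide by $\sqrt{T}$. The difference from the functional CLT argument is that we need nonasymptotic, step-size-aware control in expectation, rather than probability-one/weak-convergence statements.

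\textbf{The ``clean'' terms.} The leading martingale $\Bpsi_1(1) = T^{-1/2}\sum_j \sG^{-1}\Z_j$ has i.i.d.\ mean-zero summands with per-coordinate variance at most $\|\diag(\VQ)\|_\infty$ and uniform bound of order $(1-\gamma)^{-1}$, so a coordinate-wise Bernstein inequality together with a union bound over the $D$ coordinates yields $\EB\|\Bpsi_1(1)\|_\infty \lesssim \sqrt{\|\diag(\VQ)\|_\infty \log D} + \frac{\log D}{(1-\gamma)\sqrt{T}}$, producing the instance-dependent leading term after division by $\sqrt{T}$. The transient $\Bpsi_0(1)$ is controlled by $\|\A_0^T\|_\infty \le C_0$ (Lemma~1 of~\cite{polyak1992acceleration} applied in the $\ell_\infty$ geometry) combined with $\|\De_0\|_\infty \le (1-\gamma)^{-1}$. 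The mismatch term $\Bpsi_2(1) = T^{-1/2}\sum_j (\A_j^T-\sG^{-1})\Z_j$ is again a sum of independent mean-zero vectors, so a Bernstein-type $\ell_\infty$ bound gives $\EB\|\Bpsi_2(1)\|_\infty \lesssim \sqrt{(\log D)\cdot \frac{1}{T}\sum_j\|\A_j^T-\sG^{-1}\|_\infty^2}$, and the rate at which $\A_j^T \to \sG^{-1}$ (derived from the step-size schedule via a quantitative version of Polyak's lemma) produces the $T^{-(1-\alpha/2)}$ correction for polynomial $\eta_t$ and a faster, logarithmic-type rate for the rescaled $\eta_t$.

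\textbf{Cross terms and the main obstacle.} Both $\Bpsi_3(1) = \gamma T^{-1/2}\sum_j \A_j^T \Z_j'$ and $\Bpsi_5(1) = \gamma T^{-1/2}\sum_j \A_j^T \Z_j''$ reduce to quantities controlled by the accumulated squared error $\sum_j \EB\|\De_{j-1}\|_\infty^2$. For $\Bpsi_3(1)$, since $\{\A_j^T \Z_j'\}$ is a martingale difference with $\|\V_{j-1}-\V^*\|_\infty \le \|\De_{j-1}\|_\infty$, Freedman's inequality and a union bound give $\EB\|\Bpsi_3(1)\|_\infty \lesssim C_0 \sqrt{(\log D)\, T^{-1}\sum_j \EB\|\De_{j-1}\|_\infty^2}$. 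For $\Bpsi_5(1)$, Assumption~\ref{asmp:gap} yields $\|\Z_j''\|_\infty \le L\|\De_{j-1}\|_\infty^2$, hence $\EB\|\Bpsi_5(1)\|_\infty \le \gamma L C_0 T^{-1/2}\sum_j \EB\|\De_{j-1}\|_\infty^2$. The principal obstacle is therefore a sharp nonasymptotic bound on $\EB\|\De_t\|_\infty^2$ with the correct step-size dependence. For $\eta_t = t^{-\alpha}$, one iterates the stochastic recursion $\De_t = \A_t \De_{t-1} + \eta_t(\Z_t + \gamma\Z_t' + \gamma\Z_t'')$, applies an $\ell_\infty$ Freedman-type moment bound to the resulting product-form expression, and absorbs the Lipschitz quadratic $\gamma L\|\De_{t-1}\|_\infty^2$ through a bootstrap over $\OM(\log\frac{1}{1-\gamma})$ epochs; this delivers $\EB\|\De_t\|_\infty^2 = \widetilde{\OM}((1-\gamma)^{-3}/(t\eta_t))$, and summing against $\eta_t = t^{-\alpha}$ produces the $T^{-\alpha}$ and $T^{-1}$ correction terms with the stated $(1-\gamma)^{-(3+2/(1-\alpha))}$ and $(1-\gamma)^{-(4+1/(1-\alpha))}$ prefactors. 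For the rescaled schedule $\eta_t = 1/(1+(1-\gamma)t)$, the contraction $1-\eta_t(1-\gamma)$ telescopes exactly as $\tau_{t-1}/\tau_t$ with $\tau_t := 1+(1-\gamma)t$, so the moment recursion closes in one step and yields the sharper $\|\Var(\Z)\|_\infty/(1-\gamma)^2$ factor along with the single correction $L/((1-\gamma)^6 T)$. Assembling all pieces and absorbing $\log D$ and $\log T$ factors into $\widetilde{\OM}$ gives the two stated bounds.
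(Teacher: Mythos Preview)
Your overall architecture matches the paper's: specialize the decomposition \eqref{eq:error0} to $r=1$ so that $\Bpsi_4$ drops out, then bound each remaining $\EB\|\Bpsi_i(1)\|_\infty$ via $\ell_\infty$ martingale concentration (the paper packages this as Lemma~\ref{lem:freedman}, a variance-aware Freedman bound with a union over coordinates). The treatments you sketch for $\Bpsi_0,\Bpsi_1,\Bpsi_3,\Bpsi_5$ are essentially what the paper does.

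There are two places where your plan diverges from the paper, one a methodological difference and one a genuine error.

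\textbf{The bound on $\EB\|\De_t\|_\infty^2$.} You propose to iterate $\De_t = \A_t\De_{t-1} + \eta_t(\Z_t+\gamma\Z_t'+\gamma\Z_t'')$ directly and close the loop on the quadratic term $\|\Z_t''\|_\infty \le L\|\De_{t-1}\|_\infty^2$ by a multi-epoch bootstrap. The paper instead invokes Wainwright's cone-contractive sandwich (Theorem~1 of \cite{wainwright2019stochastic}, restated as Lemma~\ref{lem:infty}): one has $\|\De_t\|_\infty \le a_t+b_t+\|\N_t\|_\infty$ for deterministic scalars $a_t,b_t$ and an autoregressive martingale $\N_t$, \emph{without} using Assumption~\ref{asmp:gap} at all. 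This decouples the moment bound from the Lipschitz condition and delivers $\EB\|\De_t\|_\infty^2 = \OM\bigl(\frac{\log D}{(1-\gamma)^4}\, t^{-\alpha}\bigr)$ directly (Theorem~\ref{thm:Linfty-pw2}); summing gives $T^{-1}\sum_t\EB\|\De_t\|_\infty^2 = \OM\bigl(\frac{\log D}{(1-\gamma)^{3+1/(1-\alpha)}}T^{-1} + \frac{\log D}{(1-\gamma)^4}T^{-\alpha}\bigr)$. Your bootstrap might be made to work, but the circularity needs care, and the rate you state, $(1-\gamma)^{-3}/(t\eta_t) = (1-\gamma)^{-3}t^{-(1-\alpha)}$, is both different from and slower than what the paper obtains, so it would not reproduce the claimed exponents after feeding into $\Bpsi_3$ and $\Bpsi_5$.

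\textbf{The linearly rescaled step size and $\Bpsi_2$.} You assert that for $\eta_t = 1/(1+(1-\gamma)t)$ the convergence $\A_j^T\to\sG^{-1}$ is ``a faster, logarithmic-type rate.'' This is backwards. For this schedule (which does \emph{not} satisfy Assumption~\ref{asmp:lr}), $T^{-1}\sum_j\|\A_j^T-\sG^{-1}\|_\infty^2$ is only \emph{bounded} by $\OM((1-\gamma)^{-2})$ and does not vanish (Lemma~\ref{lem:G-poly}). Consequently $\Bpsi_2$ is \emph{not} lower order: it contributes a term of size $\sqrt{\|\Var(\Z)\|_\infty(1-\gamma)^{-2}\log D/T}$, and this---not any telescoping of the moment recursion---is exactly why the theorem's leading constant for the linear schedule is $\|\Var(\Z)\|_\infty/(1-\gamma)^2$ rather than the sharper $\|\diag(\VQ)\|_\infty$ that the polynomial schedule attains.
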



\paragraph{Instance-dependent behavior.}
For the polynomial step size, Theorem~\ref{thm:con-linear} shows that the instance-dependent term $\OM(
\sqrt{\|\diag(\VQ)\|_{\infty}}\sqrt{\frac{\ln D}{T}} )$ dominates the $\ell_{\infty}$ error, which matches the instance-dependent lower bound established by~\citet{khamaru2021instance} given a sufficiently large $T$.
To the best of our knowledge, this is the first finite-sample analysis of averaged Q-learning in the $\ell_{\infty}$-norm showing  instance-dependent optimality.
However, for the linearly rescaled step size, we see that $\OM \left(
\sqrt{\frac{\|\Var(\Z)\|_{\infty}}{(1-\gamma)^2}}\sqrt{\frac{\ln D}{T}} \right)$ is the dominant factor, which is larger because we have
\begin{align*}
	\label{eq:VQ-VZ}
	\|\diag(\VQ)\|_{\infty} 
	&\overset{(a)}{\le}	\|(\I-\gamma \PP^{\pi^*})^{-1}\|_{\infty}^2 \|\Var(\Z)\|_{\infty} \\
	&\overset{(b)}{\le} \frac{1}{(1-\gamma)^2} \|\Var(\Z)\|_{\infty},
\end{align*}
where $(a)$ uses $\|\diag(\A \V \A^\top)\|_{\infty} \le  \|\V\|_{\infty} \|\A\|_{\infty}^2$ for any diagonal matrix $\V$ (see Lemma~\ref{lem:V}) and $(b)$ uses $\|(\I-\gamma \PP^{\pi^*})^{-1}\|_{\infty} \le (1-\gamma)^{-1}$.
Hence, the linearly rescaled step size doesn't match the instance-dependent lower bound.
It might be true because the linearly rescaled step size doesn't satisfy Assumption~\ref{asmp:lr}, implying that~\eqref{eq:asym} does not necessarily hold for it.

\paragraph{Comparison with variance-reduced Q-learning.}
Under the same assumptions, \citet{khamaru2021instance} analyzed a variance-reduced variant of Q-learning that also achieves instance-dependent optimality with the following guarantee: 
\begin{align*}
\lefteqn{\EB\|\widehat{\Q}_T - \Q^* \|_{\infty}=} \\
& \OM \left(
\sqrt{\|\diag(\VQ)\|_{\infty}}\sqrt{\frac{\ln D}{T}} \right)    + \TOM\left( \frac{1}{(1-\gamma)^2} \frac{1}{T} 
\right),
\end{align*}
which has a better nonleading term than averaged Q-learning.
This might somewhat explain the finding of~\citet{khamaru2021temporal} that averaging can be sub-optimal in the nonasymptotic regime with limited samples.  
However, the dominant terms are equal, implying that averaging is still powerful and efficient in the asymptotic regime.
Instance-dependent convergence with a variance structure in the dominant term has also been found for other settings; please see Appendix~\ref{sec:related}.


\paragraph{Worst-case behavior.}
The instance-dependent bound provides more information about the convergence rate.
Previous works~\citep{azar2013minimax,li2020breaking} imply the worst-case bound $\|\diag(\VQ)\|_{\infty} = \OM((1-\gamma)^{-3})$.
Such a dependence on $(1-\gamma)^{-1}$ is tight, because~\citet{khamaru2021instance} constructs 
a family of MDPs parameterized by $\lambda \ge 0$ where $\|\diag(\VQ)\|_{\infty} = \Theta((1-\gamma)^{-3+\lambda})$.
When plugging in the worst-case bound, we find that for polynomial step sizes and for sufficiently small $\varepsilon$, averaged Q-learning already achieves the optimal minimax sample complexity $\TOM\left( \frac{D}{(1-\gamma)^3\varepsilon^2} \right)$ established by~\citet{azar2013minimax}.
\citet{wainwright2019variance} uses a variance-reduced variant of Q-learning to achieve the optimality, but the algorithm requires an additional collection of i.i.d.\ samples at each outer loop to obtain an Monte Carlo approximation of the population Bellman operator~\eqref{eq:T}.
Our results show that a simple average is sufficient to guarantee optimality.
Moreover, the computation of $\bar{\Q}_T$ is fully online with no additional samples needed.




\begin{figure}[t!]
	\centering
	\includegraphics[width=\columnwidth]{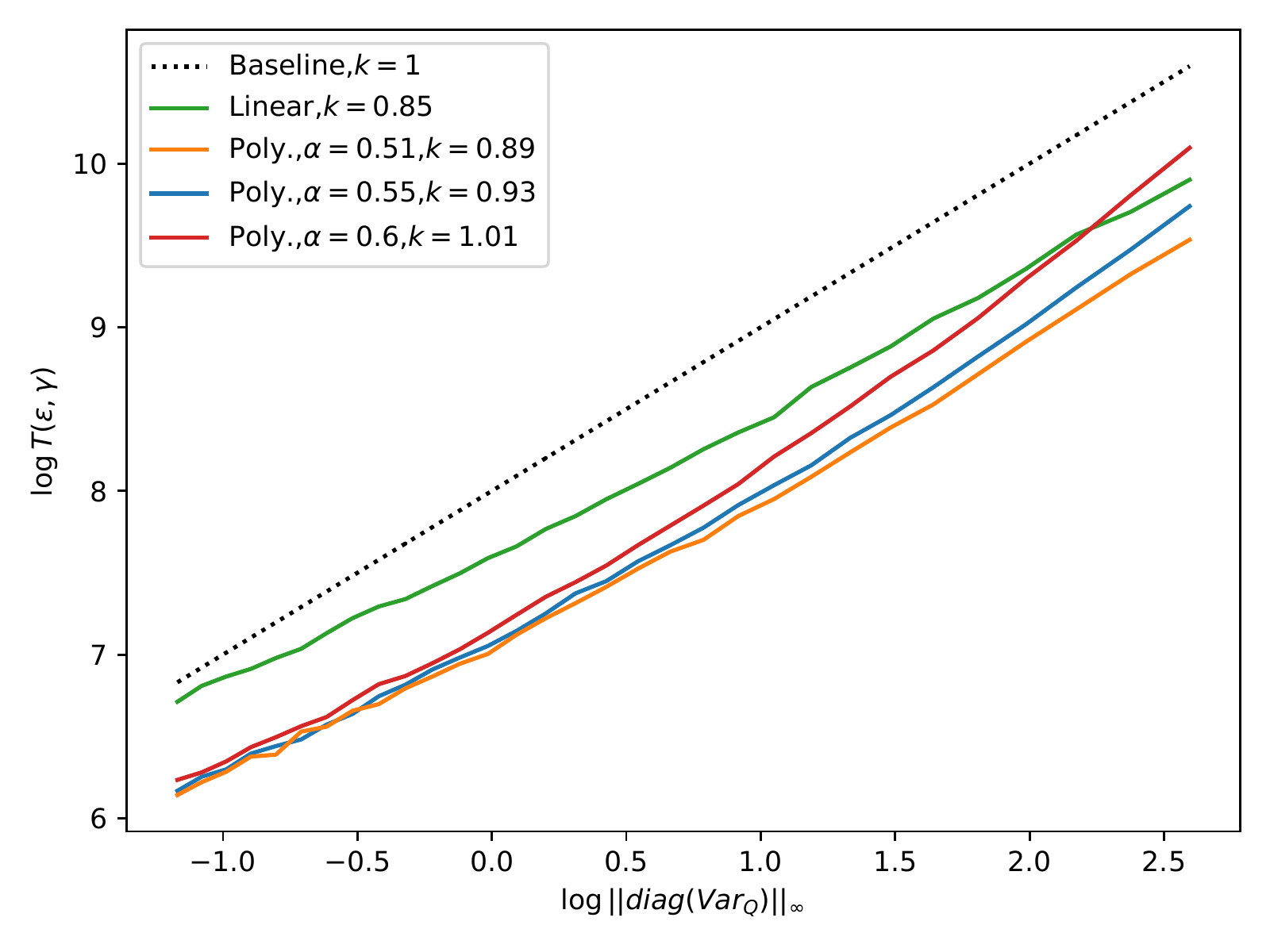}
	\vspace{-0.3in}
	\caption{Log-log plots of the sample complexity $T(\varepsilon, \gamma)$ versus the asymptotic variance $\|\diag(\VQ)\|_\infty$.
	  }
	\label{fig:exp}
\end{figure}

\paragraph{Confirming the theoretical predictions.}
We provide numerical experiments to illustrate instance-adaptivity as well as the worst-case behavior delineated in Theorem~\ref{thm:con-linear}.
We focus on  the sample complexity $T(\varepsilon, \gamma)= \inf\{ T: \EB\|\bar{\Q}_T -\Q^*\|_{\infty} \le \varepsilon \}$ for $\varepsilon = 10^{-4}$.
We conduct $10^3$ independent trials in a random MDP to compute $T(\varepsilon, \gamma)$ for different values of $\gamma \in \Gamma$ and two step sizes.
We plot the least-squares fits, $\{ (\log\|\diag(\VQ)\|_\infty, \log T(\varepsilon, \gamma)) \}_{\gamma \in \Gamma}$, and provide the slopes $k$ of these lines in the legend.
Further details are provided in Appendix~\ref{appen:exp}.
At a high level, we see that averaged Q-learning produces sample complexity that is well predicted by our theory---all the slopes are no larger than the theoretical limit $k$ predicted by our theory.

\paragraph{Proof Sketch.}
The proof idea of Theorem~\ref{thm:con-linear} is based on that of Theorem~\ref{thm:fclt}.
Notice that $ \bar{\Q}_T - \Q^* = \frac{1}{T}\sum_{t=1}^T \De_t = \frac{1}{\sqrt{T}} \Bphi_T(1)$.
From~\eqref{eq:error0}, we know that\footnote{Since $r=1$, $\Bpsi_{4}$ doesn't appear in the decomposition.}
\[
\EB\|\bar{\Q}_T - \Q^*\|_{\infty} = \EB\left\|\frac{\Bphi_T(1)}{\sqrt{T}} \right\|_{\infty}
\le \frac{\sum_{i \neq 4} \EB\left\| \Bpsi_i(1)\right\|_{\infty}}{\sqrt{T}}  .
\]
Bounding the term of $i=0$ is easy since it's deterministic.
Because $\Bpsi_i(1) (i=1,2,3)$ is a weighted sum of martingale differences, we use the variance-aware multi-dimensional Freedman’s inequality (in Lemma~\ref{lem:freedman}) to analyze its expectation under $\ell_{\infty}$-norm.
The instance-dependent dominant term comes from the variance term for $\EB \| \Bpsi_1(1)\|_{\infty}$.
Analyzing the variance of $\Bpsi_2(1)$ is quite challenging since it relies on $\frac{1}{T}\sum_{j=1}^T \|\A_j^T - \sG^{-1}\|_{\infty}^2$ with $\A_j^T$ defined in~\eqref{eq:AjT_main}.
We then bound that quantity in terms of $\alpha, 1-\gamma$ and $T$ in Lemma~\ref{lem:G-poly}.
Finally, due to $\|\Bpsi_5(1)\|_{\infty} \le L \|\De_t\|_{\infty}^2$, bounding $\EB\|\Bpsi_5(1)\|_{\infty}$ is reduced to bound $\EB\|\De_t\|_{\infty}^2$ for all $t \ge 0$, which can be given by a similar argument from~\citet{wainwright2019stochastic}.
Putting all pieces together completes the proof; the detailed proof is in Appendix~\ref{proof:con-linear}.

\section{RELAXATION OF THE LIPSCHITZ CONDITION}
Both our asymptotic and nonasymptotic analysis rely on the Lipschitz condition in Assumption~\ref{asmp:gap}.
That condition is essentially equivalent to assuming a unique optimal policy.  It turns out that, once regularized by entropy, the (regularized) optimal policy is naturally unique.
In the following, we show that entropy-regularized Q-learning enjoys a similar functional CLT and instance-dependent bounds without Assumption~\ref{asmp:gap}.

Entropy-regularized Q-learning uses the following matrix-form update rule,
\begin{equation}
	\label{eq:r-Q-update}
\TQ_t = (1-\eta_t) \TQ_{t-1} + \eta_t \widetilde{\TM}_t \TQ_{t-1},
\end{equation}
where
\begin{equation}
	\label{eq:em-rT}
	\widetilde{\TM}_t(\Q)(s, a) = \rr_t(s, a) + \gamma (\PP_t\LM_{\lambda} \Q)(s_t)
\end{equation} 
is a soft version of the empirical Bellman operator $\widehat{\TM}$.
The nonlinear operator $\LM_{\lambda}(\cdot): \RB^D \to \RB^S$ is a soft version of a hard max, with regularization coefficient $\lambda$.  It is defined by
\begin{equation*}
	\label{eq:L}
	(\LM_{\lambda}\Q)(s) :=  \max_{\pi \in \Pi} \EB_{a \sim \pi(\cdot|s)}\left[Q(s, a)  - \lambda \log \pi(a|s) \right].
\end{equation*}
Let $\Q_{\lambda}^*$ denote the unique fixed point of the regularized Bellman equation $\Q_\lambda^* = \rr + \gamma \PP  \LM_\lambda \Q_\lambda^*$ and let $\pi_\lambda^*$ be the unique optimal policy.

\begin{thm}
	\label{thm:fclt-entropy}
	Define $\{\widetilde{\Q}_t\}_{ t \ge0}$ in~\eqref{eq:r-Q-update}.
	The corresponding partial-sum process is $\Tphi_T(r) := \frac{1}{\sqrt{T}}  \sum_{t=1}^{\Tr} (\TQ_t - \Q_{\lambda}^*)$.
		Under Assumptions~\ref{asmp:reward} and~\ref{asmp:lr}, 
	\begin{equation*}
	\Tphi_T(\cdot) \overset{w}{\to} \TVQ^{1/2} \B_D(\cdot),
	\end{equation*}
where $\TVQ$ is the asymptotic matrix defined by
	\begin{equation*}
\TVQ  := 
	(\I-\gamma \PP^{\pi_{\lambda}^*})^{-1}\Var(\TZ)(\I-\gamma \PP^{\pi_{\lambda}^*})^{-\top}.
\end{equation*}
with $\TZ \overset{d.}{=} \TZ_t =  (\rr_t-\rr) + \gamma( \PP_t - \PP) \LM_\lambda \Q_\lambda^*$ the regularized Bellman noise.
\end{thm}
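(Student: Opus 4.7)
My plan is to mirror the proof of Theorem~\ref{thm:fclt} step for step, replacing the role of Assumption~\ref{asmp:gap} with the intrinsic smoothness of the log-sum-exp operator $\LM_\lambda$. The key observation is that $\LM_\lambda$ is everywhere differentiable with $\nabla \LM_\lambda(\Q)(s, \cdot) = \pi_\lambda(\cdot\mid s; \Q)$, the soft-max policy, and has Hessian with operator norm bounded by $1/\lambda$. In particular, $\nabla \LM_\lambda(\Q_\lambda^*) = \boldsymbol{\Pi}^{\pi_\lambda^*}$, and $\LM_\lambda$ is a $\gamma$-contraction of $\widetilde{\TM}$ in $\ell_\infty$. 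These facts replace and indeed strengthen the role that Assumption~\ref{asmp:gap} played.

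First I would redo the error decomposition (Step 1 of Section~\ref{sec:proof}) on $\widetilde{\DDelta}_t := \widetilde{\Q}_t - \Q_\lambda^*$. From the update rule~\eqref{eq:r-Q-update} and $\Q_\lambda^* = \rr + \gamma \PP \LM_\lambda \Q_\lambda^*$, I obtain
\[
\widetilde{\DDelta}_t = (1-\eta_t)\widetilde{\DDelta}_{t-1} + \eta_t\bigl[\widetilde{\Z}_t + \gamma(\PP_t - \PP)(\LM_\lambda \widetilde{\Q}_{t-1} - \LM_\lambda \Q_\lambda^*) + \gamma \PP(\LM_\lambda \widetilde{\Q}_{t-1} - \LM_\lambda \Q_\lambda^*)\bigr].
\]
A second-order Taylor expansion of $\LM_\lambda$ around $\Q_\lambda^*$ gives $\LM_\lambda \widetilde{\Q}_{t-1} - \LM_\lambda \Q_\lambda^* = \boldsymbol{\Pi}^{\pi_\lambda^*} \widetilde{\DDelta}_{t-1} + \R_t$ with $\|\R_t\|_\infty \le \frac{1}{2\lambda}\|\widetilde{\DDelta}_{t-1}\|_\infty^2$. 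Substituting yields the same recursion as~\eqref{eq:delta_iter},
\[
\widetilde{\DDelta}_t = \widetilde{\A}_t \widetilde{\DDelta}_{t-1} + \eta_t\bigl[\widetilde{\Z}_t + \gamma \widetilde{\Z}_t' + \gamma \widetilde{\Z}_t''\bigr],
\]
where $\widetilde{\A}_t = \I - \eta_t(\I - \gamma \PP^{\pi_\lambda^*})$, $\widetilde{\Z}_t' = (\PP_t - \PP)(\LM_\lambda \widetilde{\Q}_{t-1} - \LM_\lambda \Q_\lambda^*)$, and $\widetilde{\Z}_t'' = \PP \R_t$ satisfies $\|\widetilde{\Z}_t''\|_\infty \le \frac{1}{2\lambda}\|\widetilde{\DDelta}_{t-1}\|_\infty^2$ unconditionally.

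Next I would carry out the partial-sum decomposition in exactly the form of~\eqref{eq:error0}, obtaining $\widetilde{\Bphi}_T(r) = \sum_{i=0}^5 \widetilde{\Bpsi}_i(r)$, and then verify the limits term by term. The martingale FCLT applied to $\widetilde{\Bpsi}_1$ gives weak convergence to $\widetilde{\VQ}^{1/2}\B_D(\cdot)$ since $\{\widetilde{\Z}_t\}$ is i.i.d.\ with covariance $\Var(\widetilde{\Z})$ and the scaling matrix $\G^{-1} = (\I - \gamma \PP^{\pi_\lambda^*})^{-1}$ produces $\widetilde{\VQ}$. The arguments that $\widetilde{\Bpsi}_0,\widetilde{\Bpsi}_2, \widetilde{\Bpsi}_3 = o_\PB(1)$ transfer verbatim from Section~\ref{sec:proof} via Doob's inequality and Assumption~\ref{asmp:lr}. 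The bound $\|\widetilde{\Bpsi}_4\|_{\sup} = o_\PB(1)$ follows from Lemma~\ref{lem:ignore-error}, which only requires that the eigenvalues of $\I - \gamma \PP^{\pi_\lambda^*}$ have nonnegative real parts; this is still true since $\PP^{\pi_\lambda^*}$ is a row-stochastic matrix. Finally, $\|\widetilde{\Bpsi}_5\|_{\sup} \lesssim \frac{1}{\sqrt{T}\lambda}\sum_{t=1}^T \|\widetilde{\DDelta}_{t-1}\|_\infty^2$ by our automatic quadratic bound on $\widetilde{\Z}_t''$, so the entire analysis boils down to showing $\frac{1}{\sqrt{T}}\sum_{t=1}^T \EB\|\widetilde{\DDelta}_t\|_\infty^2 = o(1)$.

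The main obstacle, and the one place where the argument genuinely departs from Theorem~\ref{thm:fclt}, is establishing this last $\ell_\infty$ moment bound without any gap/Lipschitz assumption. The plan is to exploit the global $\gamma$-contractivity of $\widetilde{\TM}$ in $\ell_\infty$ (a standard property of log-sum-exp smoothing) together with Assumption~\ref{asmp:reward} and the step-size conditions, mimicking the proof of Theorem~\ref{thm:general-Linfty-pw2} but with the unconditional contraction replacing the nonexpansive-plus-linearization argument used in the hard-max case. Because $\LM_\lambda$ is globally smooth and strictly contractive, this step is in fact easier than its counterpart in the hard-max setting; no Assumption~\ref{asmp:gap} is needed. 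Once this bound is in hand, combining with the FCLT for $\widetilde{\Bpsi}_1$ completes the proof.
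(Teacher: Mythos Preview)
Your proposal is correct and follows the paper's own argument almost line for line: the paper likewise rewrites the entropy-regularized recursion in the form $\widetilde{\DDelta}_t = \widetilde{\A}_t\widetilde{\DDelta}_{t-1} + \eta_t[\widetilde{\Z}_t + \gamma\widetilde{\Z}_t' + \gamma\widetilde{\Z}_t'']$ with $\widetilde{\A}_t = \I - \eta_t(\I-\gamma\PP^{\pi_\lambda^*})$, establishes the quadratic bound $\|\widetilde{\Z}_t''\|_\infty \le \tfrac{1}{\lambda}\|\widetilde{\DDelta}_{t-1}\|_\infty^2$ (so that Assumption~\ref{asmp:gap} holds automatically with $L=1/\lambda$), and then invokes the entire machinery of Section~\ref{sec:proof} verbatim. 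The only cosmetic difference is that the paper obtains the quadratic bound via a sandwich argument (optimality of the soft-max policy plus the Lipschitz property of $\mathrm{softmax}$) rather than your second-order Taylor expansion of $\LM_\lambda$; these are two phrasings of the same smoothness fact.

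One small correction to your narrative: the moment bound $\tfrac{1}{\sqrt{T}}\sum_t \EB\|\widetilde{\DDelta}_t\|_\infty^2 \to 0$ is \emph{not} the main obstacle and does not ``genuinely depart'' from the hard-max case. Theorem~\ref{thm:general-Linfty-pw2} for hard-max Q-learning already uses only the global $\gamma$-contractivity of the Bellman operator (via Theorem~\ref{thm:chen}) and never invokes Assumption~\ref{asmp:gap}; the regularized operator $\Q\mapsto \rr+\gamma\PP\LM_\lambda\Q$ is likewise a $\gamma$-contraction in $\ell_\infty$, so the identical argument applies with no modification (this is the paper's Theorem~\ref{thm:general-entropy}). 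The place where Assumption~\ref{asmp:gap} actually entered in Theorem~\ref{thm:fclt} was solely in bounding $\Bpsi_5$, and that is precisely the step you have already replaced with the $1/\lambda$-smoothness of $\LM_\lambda$.
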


\begin{thm}
\label{thm:nonlinear-entropy}
Under Assumptions~\ref{asmp:reward} and~\ref{asmp:lr}, when the two step sizes are considered, $\EB\|\frac{1}{T}\sum_{t=1}^T \TQ_t - \Q_{\lambda}^*\|_{\infty}$ has similar bounds as in Theorem~\ref{thm:con-linear} except that we replace $\VQ, L$ with $\TVQ$ and $\frac{1}{\lambda}$.
\end{thm}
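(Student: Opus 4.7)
The plan is to adapt the proof of Theorem~\ref{thm:con-linear} essentially verbatim, replacing the hard-max structure with the smooth soft-max structure of $\LM_\lambda$. The critical observation is that entropy regularization replaces the nonsmooth Assumption~\ref{asmp:gap} with a genuine smoothness bound coming from the Hessian of log-sum-exp. Concretely, writing $\pi_\lambda^*(a|s) \propto \exp(Q_\lambda^*(s,a)/\lambda)$ and $\BPi^{\pi_\lambda^*}$ for the associated $S\times D$ projection matrix, I would first prove the Taylor-type bound
\begin{equation*}
\bigl\|\LM_\lambda \TQ - \LM_\lambda \Q_\lambda^* - \BPi^{\pi_\lambda^*}(\TQ - \Q_\lambda^*)\bigr\|_\infty \le \tfrac{1}{2\lambda}\bigl\|\TQ - \Q_\lambda^*\bigr\|_\infty^2.
\end{equation*}
This follows from the fact that each coordinate of $\LM_\lambda$ is $\lambda\log\sum_a e^{\cdot/\lambda}$ whose Hessian is $\frac{1}{\lambda}(\diag(\pi)-\pi\pi^\top)$, and the quadratic form $u^\top(\diag(\pi)-\pi\pi^\top)u = \mathrm{Var}_\pi(u) \le \|u\|_\infty^2$ by Popoviciu's inequality, yielding $\ell_\infty$-smoothness with constant $1/\lambda$.

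With this smoothness estimate in hand, I would repeat the error decomposition of Section~\ref{sec:proof}. Setting $\TDe_t = \TQ_t - \Q_\lambda^*$ and $\TZ_t = (\rr_t-\rr) + \gamma(\PP_t-\PP)\LM_\lambda \Q_\lambda^*$, the update~\eqref{eq:r-Q-update} becomes
\begin{equation*}
\TDe_t = \TA_t \TDe_{t-1} + \eta_t\bigl[\TZ_t + \gamma \TZ_t' + \gamma \PP \widetilde{\Z}_t'''\bigr],
\end{equation*}
where $\TA_t = \I - \eta_t(\I-\gamma\PP^{\pi_\lambda^*})$, $\TZ_t' = (\PP_t-\PP)(\LM_\lambda \TQ_{t-1}-\LM_\lambda \Q_\lambda^*)$, and $\widetilde{\Z}_t'''$ is the Taylor remainder with $\|\widetilde{\Z}_t'''\|_\infty \le \tfrac{1}{2\lambda}\|\TDe_{t-1}\|_\infty^2$. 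The point is that $\PP\widetilde{\Z}_t'''$ plays exactly the role of $\Z_t''$ in~\eqref{eq:delta_iter}, so the partial-sum decomposition~\eqref{eq:error0} goes through with $L$ replaced by $1/(2\lambda)$ and $(\pi^*,\Q^*,\V^*,\Z_t,\VQ)$ replaced by $(\pi_\lambda^*,\Q_\lambda^*,\LM_\lambda\Q_\lambda^*,\TZ_t,\TVQ)$.

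The five $\Bpsi_i$ bounds in Section~\ref{sec:proof}, together with the Freedman-inequality and $\A_j^T$-analysis invoked in the nonasymptotic proof of Theorem~\ref{thm:con-linear}, depend only on structural facts about $\sG_\lambda = \I-\gamma\PP^{\pi_\lambda^*}$ (whose eigenvalues again have real parts in $[1-\gamma,1]$ since $\PP^{\pi_\lambda^*}$ is a stochastic matrix) and on the martingale-difference property of $\TZ_t,\TZ_t'$, which hold verbatim in the synchronous generative model. Hence Lemmas~\ref{lem:freedman},~\ref{lem:G-poly}, and~\ref{lem:ignore-error} apply unchanged, producing the claimed $\ell_\infty$ bounds with $\VQ,L$ replaced by $\TVQ,1/\lambda$.

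The main obstacle is the nonasymptotic control of $\EB\|\TDe_t\|_\infty^2$, which in the original proof was obtained from an auxiliary coarse bound of the Wainwright type (\citealt{wainwright2019stochastic}) relying on $\gamma$-contraction of $\widehat{\TM}_t$ in $\ell_\infty$. I would re-establish this for $\widetilde{\TM}_t$: the key fact is that $\LM_\lambda$ is $1$-Lipschitz in $\ell_\infty$ (it is a softmax hence averaging operator), so $\widetilde{\TM}$ is a $\gamma$-contraction and $\widetilde{\TM}_t$ enjoys the same bounded-multiplicative-noise structure as $\widehat{\TM}_t$, since $\TZ_t$ and $\TZ_t'$ still have coordinates bounded (up to logarithmic factors) by the effective reward range $1/(1-\gamma)$. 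The only subtlety is tracking the constant $1/\lambda$ that appears whenever Assumption~\ref{asmp:gap} was previously invoked; routing the Taylor bound through each appearance yields a nonasymptotic bound of the same form as Theorem~\ref{thm:con-linear}, completing the proof.
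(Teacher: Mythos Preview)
Your proposal is correct and follows essentially the same route as the paper: the same error decomposition with $\TA_t=\I-\eta_t(\I-\gamma\PP^{\pi_\lambda^*})$, the same five-term analysis, and the same appeal to the Wainwright-type coarse bound for $\EB\|\TDe_t\|_\infty^2$. The one minor difference is how the key quadratic estimate $\|\TZ_t''\|_\infty\le\tfrac{1}{\lambda}\|\TDe_{t-1}\|_\infty^2$ is obtained: you derive it via a second-order Taylor expansion of the log-sum-exp (Hessian $\tfrac{1}{\lambda}(\diag(\pi)-\pi\pi^\top)$ plus the variance bound), whereas the paper obtains it by a sandwich argument (using that $\tilde\pi_{t-1}$ and $\pi_\lambda^*$ are maximizers for $\LM_\lambda^\pi$ at $\TQ_{t-1}$ and $\Q_\lambda^*$) together with Lipschitz continuity of the softmax map; both arguments land on the same constant $L=1/\lambda$ and feed into the identical downstream analysis.
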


We note that the two theorems in this section can be proved via an almost identical argument as Theorem~\ref{thm:fclt} and~\ref{thm:con-linear}, since Assumption~\ref{asmp:gap} is naturally satisfied with $L = \frac{1}{\lambda}$ for entropy-regularized Q-learning (see Appendix~\ref{proof:entropy}).
Actually, our proof is applicable to a class of nonlinear SAs.\footnote{More specifically, our method can analyze $\Q_{t} = (1-\eta_t)\Q_{t-1} + \eta_t (\rr_t + \gamma \PP_t \LM \Q_{t-1})$ where $\LM$ is a smooth nonlinear non-expansive operator.}
Second, due to the bias introduced by entropy, the instance-dependent factor changes from $\VQ$ to $\TVQ$ and $\frac{1}{T}\sum_{t=1}^T \TQ_t$ converges to $\Q_{\lambda}^*$ instead of $\Q^*$ in expectation.
Finally, note that these results provide a new argument for the benefits of entropy regularization; it smooths the Bellman operator and weakens the assumptions required for asymptotic analysis.
It is supplementary to previous efforts that shows entropy regularization aids exploration~\citep{fox2016taming}, encourages robust optimal policies~\citep{eysenbach2021maximum}, induces a smoother landscape~\citep{ahmed2019understanding}, and hastens the convergence of RL algorithms~\citep{cen2022fast}.

\section{DISCUSSION}
\label{sec:conclusion}
We have studied the asymptotic and nonasymptotic convergence of averaged Q-learning, establishing its statistical efficiency.
We first established a functional central limit theorem, showing that the standardized partial-sum process converges weakly to a rescaled Brownian motion, a result which can serve as an underpinning for the development of statistical inference methods for RL.
We then established a semiparametric efficiency lower bound for $\Q^*$ estimation,  showing that the averaged iterate $\bar{\Q}_T$ is the most efficient RAL estimator in the sense of having the smallest asymptotic variance.
Finally, we presented the first finite-sample error analysis of $\EB\|\bar{\Q}_T-\Q^*\|_{\infty}$ in the $\ell_{\infty}$-norm for both linearly rescaled and polynomial step sizes.
We showed that averaged Q-learning achieves the same instance-dependent optimality and worst-case optimality as previous variance-reduced algorithms~\citep{khamaru2021instance,wainwright2019variance} under a Lipschitz condition.


Some open problems remain.
On the one hand, with the Lipschitz condition, it's unclear whether averaged Q-learning with linearly rescaled step sizes can match the instance-dependent lower bound.
Additionally, we suspect that the dependence on $(1-\gamma)^{-1}$ of the nonleading terms in Theorem~\ref{thm:con-linear} is loose and speculate it can be improved by finer analysis.
On the other hand, without the Lipschitz condition, it is not clear whether averaged Q-learning still achieves the optimal instance-dependent bound.
Finally, previous analysis~\citep{kozuno2022kl} shows the last-iterate entropy-regularized Q-learning is minimax optimal.
It is also unknown whether the averaged iterates of entropy-regularized Q-learning achieve the optimal instance-dependent bound.

\section*{Acknowledgement}
The authors would like to express their gratitude to Prof. Csaba Szepesvári for his valuable suggestion regarding the relaxation of the Lipschitz condition through entropy regularization.
Xiang Li and Zhihua Zhang have been supported  by the National Key Research and Development Project of China (No. 2022YFA1004002) and the National Natural Science Foundation of China (No. 12271011).

\bibliography{bib/Qlearning,bib/refer,bib/stat}

\appendix
\begin{appendix}
	\onecolumn

\section{RELATED WORK}
\label{sec:related}
Due to the rapidly growing literature on Q-learning, we review only the theoretical results that are most relevant to our work.
Interested readers can check references therein for more information.

\paragraph{Asymptotic normality in RL.}
Establishing asymptotic normality of an estimator permits statistical inference and the quantification of uncertainty. Existing work on statistical inference for Q-learning has focused mainly on the off-policy evaluation (OPE) problem, where one aims to estimate the value function of a given policy using pre-collected data.  In this setting, a parametric Cramer–Rao lower bound has been established by~\citet{jiang2016doubly}, and asymptotic efficiency has been established for certain estimators using linear approximation~\citep{uehara2020minimax,hao2021bootstrapping,yin2020asymptotically,mou2020linear} or bootstrapping~\citep{hao2021bootstrapping}.  Further inferential work includesthe asymptotic analysis of multi-stage algorithms~\citep{luckett2019estimating, shi2020statistical},
asymptotic behavior of robust estimators~\citep{yang2021towards}, and work by~\citet{kallus2020double} on a semiparametric doubly robust estimator.

In contradistinction to existing work, we establish a functional central limit theorem that captures the weak convergence of the whole trajectory rather than its endpoint.
 Such functional results have not been presented previously in the RL literature.
Furthermore, we supplement these upper bounds with a semiparametric efficiency lower bound which additionally considers the randomness of rewards.
We also show that averaged Q-learning is the most efficient RAL estimator vis-a-vis this lower bound.


\paragraph{Sample complexity for Q-learning.}
For the goal of obtaining an $\varepsilon$-accurate estimate of the optimal Q-function in a $\gamma$-discounted MDP in the presence of a generative model, model-based Q-value-iteration has been shown to achieve optimal minimax sample complexity $\TOM\left( \frac{D}{\varepsilon^2(1-\gamma)^3}\right)$~\citep{azar2013minimax,agarwal2020model,li2020breaking}.
In the model-free context,~\citet{wainwright2019stochastic} showed empirically that classical Q-learning suffers from at least worst-case fourth-order scaling in $(1-\gamma)^{-1}$ in sample complexity.
A complexity bound of $\TOM\left(\frac{D}{\varepsilon^2(1-\gamma)^5}\right)$ has been provided~\citep{wainwright2019stochastic,chen2020finite}; this is far from the optimal though better than previous efforts~\citep{even2003learning,beck2012error}.
~\citet{li2021q} gave a sophisticated analysis showing the complexity of Q-learning is $\TOM\left( \frac{D}{\varepsilon^2(1-\gamma)^4}\right)$ and provided a matching lower bound to confirm its sharpness.
~\citet{wainwright2019variance,khamaru2021instance} introduced a variance-reduced variant of Q-learning~\citep{gower2020variance} that achieves the optimal sample complexity and instance complexity.
Our results show that a simple average over all history $Q_t$ is sufficient to guarantee the same optimality.
The averaged method is fully online without requiring additional samples and storage space.

\paragraph{Instance-dependent  convergence in RL.}
Recent years have witnessed new instance-specific bounds, where an instance-dependent functional of a variance structure appears as the dominant term on stochastic errors.
Unlike global minimax bounds which are worst-case in nature, instance-specific bounds help identify the difficulty of estimation case by case.
Such bounds have been established for policy evaluation in the tabular setting~\citep{pananjady2020instance,khamaru2021temporal,li2020breaking} or with linear function approximation~\citep{li2021accelerated} and for optimal value function estimation~\citep{yin2021towards}.
The most related work to ours is by~\citet{khamaru2021instance}, who show that a variance-reduced variant of Q-learning achieves the instance-dependent optimality after identifying an instance-dependent lower bound for $Q^*$ estimation.
By contrast, our result shows that a simple average is sufficient to yield optimality.

\paragraph{Nonlinear stochastic approximation.}
Q-learning has also been studied through the lens of nonlinear stochastic approximation.
From this general point of view, asymptotic convergence has been provided~\citep{tsitsiklis1994asynchronous,borkar2000ode}.
On the nonasymptotic side, Q-learning is studied either in the synchronous setting~\citep{shah2018q,wainwright2019stochastic,chen2020finite} or the asynchronous setting where only one sample from current state-action pair is available at a time~\citep{qu2020finite,li2020sample,chen2021lyapunov}.
The sample complexities obtained therein are far from optimal.
Others consider Q-learning with linear function approximation in the $\ell_2$-norm~\citep{melo2008analysis,chen2019finite}.
Asymptotic convergence of averaged Q-learning has been studied by~\citet{lee2019target,lee2019unified} via the ODE (ordinary differential equation) approach.
Our results are complementary to these results, including asymptotic statistical properties and finite-sample analysis in the $\ell_{\infty}$-norm.
Though peculiar to averaged Q-learning, we believe our analysis can be extended to nonlinear SA problems.

\section{CENTRAL LIMIT THEOREM FOR AVERAGED Q-LEARNING}
\label{sec:clt}
For completeness, we present a CLT for the averaged Q-learning sequence $\bar{\Q}_T := \frac{1}{T}\sum_{t=1}^T \Q_t$ in this part.
This result can be derived not only from our Theorem~\ref{thm:fclt} but also from CLT for non-linear SA, e.g.,~\citep{mokkadem2006convergence}.

\begin{thm}[Asymptotic normality for $Q^*$]
	\label{thm:clt}
	Under Assumptions~\ref{asmp:reward},~\ref{asmp:gap} and~\ref{asmp:lr}, we have
	\[
	\sqrt{T} (\bar{\Q}_T - \Q^*)  \overset{d}{\to} \NM(\0, \VQ),
	\]
	where the asymptotic variance is given by
	\begin{equation}
		\VQ  = 
		(\I-\gamma \PP^{\pi^*})^{-1}\Var(\Z)(\I-\gamma \PP^{\pi^*})^{-\top} \in \RB^{D \times D}.
	\end{equation}
	Here $\Var(\Z)$ is the covariance matrix of the Bellman noise $\Z$ defined in~\eqref{eq:V-bellman}.
\end{thm}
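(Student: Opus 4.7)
The plan is to derive this CLT as an immediate corollary of the functional CLT in Theorem~\ref{thm:fclt}. Observe that evaluating the partial-sum process at $r=1$ gives
\[
\Bphi_T(1) \;=\; \frac{1}{\sqrt{T}}\sum_{t=1}^T (\Q_t - \Q^*) \;=\; \sqrt{T}\,(\bar{\Q}_T - \Q^*),
\]
so the target statement is exactly the one-dimensional marginal of the functional CLT at $r=1$.

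The first step is to invoke Theorem~\ref{thm:fclt}, which yields $\Bphi_T(\cdot) \overset{w}{\to} \VQ^{1/2}\B_D(\cdot)$ as random elements of the Skorokhod space $\BDD$. The second step is to apply the continuous mapping theorem to the evaluation functional $\mathrm{ev}_1\colon \omega \mapsto \omega(1)$. This functional is measurable on $\BDD$ and is continuous at every path $\omega$ that is continuous at $r=1$; since the limiting process $\VQ^{1/2}\B_D(\cdot)$ has almost surely continuous sample paths, the set of discontinuities of $\mathrm{ev}_1$ has measure zero under the limiting law. Hence $\mathrm{ev}_1(\Bphi_T) = \Bphi_T(1)$ converges in distribution to $\mathrm{ev}_1(\VQ^{1/2}\B_D) = \VQ^{1/2}\B_D(1)$.

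The third step is purely algebraic: $\B_D(1) \sim \NM(\0,\I_D)$ by definition of a standard $D$-dimensional Brownian motion on $[0,1]$, so the linear transform $\VQ^{1/2}\B_D(1)$ is Gaussian with mean $\0$ and covariance $\VQ^{1/2}\I_D(\VQ^{1/2})^\top = \VQ$ (using that $\VQ$ is symmetric positive semidefinite, whence $\VQ^{1/2}$ may be taken symmetric). Combining the three steps yields $\sqrt{T}(\bar{\Q}_T-\Q^*)\overset{d}{\to}\NM(\0,\VQ)$.

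There is essentially no serious obstacle, since the bulk of the work is already discharged by Theorem~\ref{thm:fclt}; the only mild technical care needed is in verifying that the evaluation map is continuous at the limit law, which is handled by the continuity of Brownian paths. As an alternative route, one could bypass the functional statement altogether and work directly from the error decomposition in Step~2 of the proof sketch of Theorem~\ref{thm:fclt}: taking $r=1$ gives $\sqrt{T}(\bar{\Q}_T-\Q^*) = \sum_{i=0}^5 \Bpsi_i(1)$, then apply the (non-functional) martingale CLT to $\Bpsi_1(1)=\frac{1}{\sqrt T}\sum_{j=1}^T \G^{-1}\Z_j$, whose summands are i.i.d.\ mean-zero with covariance $\G^{-1}\Var(\Z)\G^{-\top}=\VQ$, and use Slutsky's theorem together with the bounds $\EB\|\Bpsi_i(1)\|_\infty = o(1)$ for $i\in\{0,2,3,5\}$ established in the sketch, noting that $\Bpsi_4$ vanishes at $r=1$ because $\A_j^{\lfloor T\cdot 1\rfloor}=\A_j^T$. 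Either route produces the stated asymptotic variance in closed form.
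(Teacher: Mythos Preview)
Your proposal is correct and takes essentially the same approach as the paper: derive the CLT from Theorem~\ref{thm:fclt} via the continuous mapping theorem applied to the evaluation functional $f(w)=w(1)$. The only minor difference is that the paper establishes $f$ is in fact $1$-Lipschitz on all of $(\BDD,d_0)$ (since every $\lambda\in\Lambda$ fixes the endpoint $\lambda(1)=1$, so $\|w_1(1)-w_2(1)\|_\infty \le d_0(w_1,w_2)$), whereas you invoke the almost-sure continuity of the Brownian limit to handle the discontinuity set; both arguments are valid, and your alternative direct route through the decomposition and Slutsky is also sound.
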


\paragraph{Asymptotic variance.}
Theorem~\ref{thm:clt} implies that the average of the sequence $(\Q_t)$ has an asymptotic normal distribution with $\VQ$ the asymptotic variance.
$\VQ$ includes $\Var(\Z)$, the covariance matrix of Bellman noise $\Z$, multiplied with a pre-factor $(\I - \gamma \PP^{\pi^*})^{-1}$.
By a von Neumann expansion, $(\I - \gamma \PP^{\pi^*})^{-1}$ is equivalent to $\sum_{t=0}^{\infty} (\gamma \PP^{\pi^*})^t$.
As argued by~\citet{khamaru2021instance}, the sum of the powers of $\gamma \PP^{\pi^*}$ accounts for the compounded effect of an initial perturbation when following the MDP induced by $\pi^*$.
The Bellman noise $\Z$ reflects the noise present in the empirical Bellman operator~\eqref{eq:em-T} as an estimate of the population Bellman operator~\eqref{eq:T}.
Note that this implies $\|(\I - \gamma \PP^{\pi^*})^{-1}\| \le \sum_{t=0}^{\infty} \gamma^t \|( \PP^{\pi^*})^t\|_{\infty} = (1-\gamma)^{-1}$.
$\|\diag(\VQ)\|_{\infty}$ coincides with the instance-dependent functional proposed by~\citet{khamaru2021instance} that controls the difficulty of estimating $Q^*$ in the $\ell_{\infty}$-norm.

\paragraph{Asymptotic normality for $V^*$ estimation.}
If the optimal policy is unique, we can obtain a similar result for the optimal value function $V^*$, making use of the asymptotic normality of $\bar{\Q}_T$. We define an estimator $\bar{\V}_T \in \RB^S$ greedily from $\bar{\Q}_T \in \RB^{D}$: the $s$-th entry of $\bar{\V}_T$ is $\bar{V}_T(s) \in \arg\max_{a \in \AM} \bar{Q}_T(s, a)$.
As a corollary of Theorem~\ref{thm:clt}, $\bar{\V}_T$ enjoys a similar asymptotic normality with the asymptotic variance defined by $\VV$.
One can check that 
\begin{equation}
	\VV  =\boldsymbol{\Pi}^{\pi^*} \VQ (\boldsymbol{\Pi}^{\pi^*})^\top,
\end{equation}
where $\boldsymbol{\Pi}^{\pi^*} \in \{0, 1\}^{S \times D}$ is the projection matrix associated with the deterministic optimal policy $\pi^*$ (see~\eqref{eq:project-pi}).
Hence, $\VV$ is formed by selecting entries from $\VQ$.
In particular, $\VV(s, s') = \VQ((s, \pi^*(s)), (s', \pi^*(s')))$ for any $s, s' \in \SM$.
The proof is deferred to Appendix~\ref{proof:V-asym}.

\begin{lem}
	\label{lem:gap}
	If $\pi^*$ is unique, then we have a positive optimality gap $\gap := \min_{s} \min_{a \neq \pi^*(s)}| V^*(s)-Q^{*}(s, a)| > 0$ where $\pi^*(s)$ is the unique action satisfying $V^*(s) = Q^*(s, a^*(s))$.
	For any $Q$-function estimator $\Q \in \RB^D$, it follows that $\{ \pi_{\Q} \neq \pi^* \} \subseteq \{   \|\Q-\Q^*\|_{\infty} \ge \frac{\gap}{2} \}$ and 
	\begin{equation}
		\label{eq:Lip}
		\|(\PP^{\pi_{\Q}}-\PP^{\pi^*})(\Q - \Q^*)\|_{\infty} \le L \|\Q - \Q^*\|_{\infty}^2
		\ \text{with} \
		L = \frac{4}{\gap},
	\end{equation}
	where $\pi_{Q}$ is the greedy policy with respective to $Q$ defined by $\pi_{Q}(s) := \arg\max_{a \in \AM} Q(s, a)$.
	If $\arg\max_{a \in \AM} Q(s, a)$ has more than one element, we break the tie by randomness. 
\end{lem}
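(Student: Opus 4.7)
The plan is to establish the three assertions of the lemma in the order they appear: positivity of the optimality gap, the event containment, and the Lipschitz-type inequality.

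For the first, I will observe that uniqueness of $\pi^*$ forces the arg-maximizer of $Q^*(s, \cdot)$ to be a singleton at every state; otherwise a tie would produce a second optimal deterministic policy, contradicting uniqueness. Hence $V^*(s) - Q^*(s, a) > 0$ for every state $s$ and every $a \neq \pi^*(s)$, and since $\SM \times \AM$ is finite, minimizing a strictly positive finite collection gives $\gap > 0$.

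For the containment, I will take any $Q$ with $\pi_{\Q}(s) \neq \pi^*(s)$ at some $s$, set $a := \pi_{\Q}(s)$, and combine two elementary facts. Greediness of $\pi_{\Q}$ gives $Q(s, a) - Q(s, \pi^*(s)) \ge 0$, while the definition of $\gap$ gives $Q^*(s, \pi^*(s)) - Q^*(s, a) \ge \gap$. Adding and subtracting $Q(s, \pi^*(s))$ and $Q(s, a)$ then yields
\[
\gap \le \bigl[Q^*(s, \pi^*(s)) - Q(s, \pi^*(s))\bigr] + \bigl[Q(s, \pi^*(s)) - Q(s, a)\bigr] + \bigl[Q(s, a) - Q^*(s, a)\bigr],
\]
and discarding the nonpositive middle bracket forces one of the outer brackets to be at least $\gap/2$. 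This gives $\|Q - Q^*\|_\infty \ge \gap/2$ on the event $\{\pi_{\Q} \neq \pi^*\}$, which is the desired containment.

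For the Lipschitz bound~\eqref{eq:Lip}, I will split on whether $\pi_{\Q} = \pi^*$. When they agree, the left-hand side vanishes and the bound is trivial. Otherwise the containment just proved yields $\|Q - Q^*\|_\infty \ge \gap/2$, equivalently $\|Q - Q^*\|_\infty \le (2/\gap)\|Q - Q^*\|_\infty^2$. Since $\PP^{\pi}$ is row-stochastic for every $\pi$, both $\PP^{\pi_{\Q}}$ and $\PP^{\pi^*}$ have $\ell_\infty$ operator norm $1$, so the triangle inequality gives $\|(\PP^{\pi_{\Q}} - \PP^{\pi^*})(Q - Q^*)\|_\infty \le 2\|Q - Q^*\|_\infty$. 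Chaining the two inequalities delivers~\eqref{eq:Lip} with $L = 4/\gap$.

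The main technical point is really the three-term decomposition in the second step: it is what converts a disagreement of greedy actions into a uniform lower bound on the Q-error and thereby unlocks the quadratic control. Once that is in hand, the remaining inequalities are routine bookkeeping using the finiteness of $\SM \times \AM$ and the stochasticity of $\PP^{\pi}$.
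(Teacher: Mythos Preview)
Your proof is correct and follows essentially the same route as the paper. The only cosmetic difference is that the paper proves the event containment by contrapositive (showing $\|\Q-\Q^*\|_\infty < \gap/2$ forces $\pi_{\Q}=\pi^*$ via a chain of strict inequalities), whereas you prove it directly via the three-term telescoping decomposition; the underlying inequalities are identical, and the final step combining the containment with the operator-norm bound $\|\PP^{\pi_{\Q}}-\PP^{\pi^*}\|_\infty \le 2$ to obtain $L=4/\gap$ matches the paper exactly.
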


\begin{cor}[Asymptotic normality for $V^*$]
	\label{cor:V}
	Let $\bar{\V}_T \in \RB^{S}$ be the greedy value function computed from $\bar{\Q}_T \in \RB^{D}$, i.e., $\bar{V}_T(s) \in \arg\max_{a \in \AM} \bar{Q}_T(s, a)$.
	Under Assumptions~\ref{asmp:reward} and~\ref{asmp:lr}, if we assume the optimal policy $\pi^*$ is unique, then
	\[
	\sqrt{T} (\bar{\V}_T - \V^*)  \overset{d}{\to} \NM(\0, \VV),
	\]
	where the asymptotic variance is
	\begin{equation}
		\label{eq:opt-variance-V}
		\VV  = 
		(\I-\gamma \PP_{\pi^*})^{-1}\Var(\boldsymbol{\Pi}^{\pi^*}\Z)(\I-\gamma \PP_{\pi^*})^{-\top}  \in \RB^{S \times S},
	\end{equation}
	and $\Var(\boldsymbol{\Pi}^{\pi^*}\Z)$ is the covariance matrix of the projected Bellman noise $\boldsymbol{\Pi}^{\pi^*}\Z$.
\end{cor}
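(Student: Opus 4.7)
\textbf{Proof Plan for Corollary~\ref{cor:V}.} The plan is to reduce the asymptotic behavior of $\bar{\V}_T$ to that of $\bar{\Q}_T$, using the fact that uniqueness of $\pi^*$ forces the greedy selection from $\bar{\Q}_T$ to agree with $\pi^*$ once $\bar{\Q}_T$ is close enough to $\Q^*$. First, I would apply Theorem~\ref{thm:clt} to conclude $\sqrt{T}(\bar{\Q}_T-\Q^*) \overset{d}{\to} \NM(\0, \VQ)$, which in particular implies $\bar{\Q}_T \to \Q^*$ in probability. Combined with Lemma~\ref{lem:gap}, which guarantees a strictly positive optimality gap when $\pi^*$ is unique and the inclusion $\{\pi_{\bar{\Q}_T} \neq \pi^*\} \subseteq \{\|\bar{\Q}_T - \Q^*\|_\infty \geq \gap/2\}$, this yields $\PB(\pi_{\bar{\Q}_T} = \pi^*) \to 1$.

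On the event $\{\pi_{\bar{\Q}_T} = \pi^*\}$, the greedy value function satisfies $\bar{V}_T(s) = \bar{Q}_T(s, \pi^*(s))$ for all $s$, which in matrix form reads $\bar{\V}_T = \boldsymbol{\Pi}^{\pi^*} \bar{\Q}_T$. Similarly, since $\pi^*$ is the unique optimal action at each state, $\V^* = \boldsymbol{\Pi}^{\pi^*}\Q^*$. Therefore with probability approaching one,
\[
\sqrt{T}(\bar{\V}_T - \V^*) = \boldsymbol{\Pi}^{\pi^*} \sqrt{T}(\bar{\Q}_T - \Q^*).
\]
An application of Slutsky's theorem together with the continuous mapping theorem (applied to the linear map $\vx \mapsto \boldsymbol{\Pi}^{\pi^*}\vx$) yields $\sqrt{T}(\bar{\V}_T - \V^*) \overset{d}{\to} \NM(\0,\, \boldsymbol{\Pi}^{\pi^*}\VQ(\boldsymbol{\Pi}^{\pi^*})^\top)$.

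It remains to verify that $\boldsymbol{\Pi}^{\pi^*}\VQ(\boldsymbol{\Pi}^{\pi^*})^\top$ equals the expression~\eqref{eq:opt-variance-V}. The key algebraic identity is the commutation relation
\[
\boldsymbol{\Pi}^{\pi^*}(\I - \gamma \PP^{\pi^*})^{-1} = (\I - \gamma \PP_{\pi^*})^{-1} \boldsymbol{\Pi}^{\pi^*},
\]
which I would derive from the defining identities $\PP^{\pi^*} = \PP\boldsymbol{\Pi}^{\pi^*}$ and $\PP_{\pi^*} = \boldsymbol{\Pi}^{\pi^*}\PP$ in~\eqref{eq:P-matrix}: by a von Neumann expansion and a simple induction, $\boldsymbol{\Pi}^{\pi^*}(\PP^{\pi^*})^t = \boldsymbol{\Pi}^{\pi^*}(\PP\boldsymbol{\Pi}^{\pi^*})^t = (\boldsymbol{\Pi}^{\pi^*}\PP)^t\boldsymbol{\Pi}^{\pi^*} = \PP_{\pi^*}^t\boldsymbol{\Pi}^{\pi^*}$. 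Plugging this into $\VQ = (\I-\gamma\PP^{\pi^*})^{-1}\Var(\Z)(\I-\gamma\PP^{\pi^*})^{-\top}$ and sandwiching with $\boldsymbol{\Pi}^{\pi^*}$ produces exactly~\eqref{eq:opt-variance-V} with $\Var(\boldsymbol{\Pi}^{\pi^*}\Z) = \boldsymbol{\Pi}^{\pi^*}\Var(\Z)(\boldsymbol{\Pi}^{\pi^*})^\top$.

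I do not expect a substantial obstacle here: all three steps are either direct consequences of results already proved in the paper (Theorem~\ref{thm:clt}, Lemma~\ref{lem:gap}) or elementary linear algebra. The only subtlety is the handling of the low-probability event on which $\pi_{\bar{\Q}_T}\neq \pi^*$, and this is addressed cleanly by Slutsky's theorem since the difference $\bar{\V}_T - \boldsymbol{\Pi}^{\pi^*}\bar{\Q}_T$ vanishes with probability approaching one, so its contribution to the weak limit is negligible.
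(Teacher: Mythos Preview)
Your proposal is correct and structurally matches the paper's proof: both decompose $\bar{\V}_T-\V^*$ through the projection $\boldsymbol{\Pi}^{\pi^*}$, apply Theorem~\ref{thm:clt} to the linear piece $\boldsymbol{\Pi}^{\pi^*}(\bar{\Q}_T-\Q^*)$, and verify the variance identity $\VV=\boldsymbol{\Pi}^{\pi^*}\VQ(\boldsymbol{\Pi}^{\pi^*})^\top$ via the commutation $\boldsymbol{\Pi}^{\pi^*}(\I-\gamma\PP^{\pi^*})^{-1}=(\I-\gamma\PP_{\pi^*})^{-1}\boldsymbol{\Pi}^{\pi^*}$. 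The one genuine difference is the remainder term $\sqrt{T}(\bar{\V}_T-\boldsymbol{\Pi}^{\pi^*}\bar{\Q}_T)$: you dispose of it by noting it is \emph{identically zero} on the event $\{\pi_{\bar{\Q}_T}=\pi^*\}$, whose probability tends to one by consistency plus Lemma~\ref{lem:gap}, so it is $o_\PB(1)$ regardless of the $\sqrt{T}$ scaling. The paper instead bounds its $\ell_\infty$ expectation explicitly, using $\PB(\bar{\pi}_T\neq\pi^*)\le\frac{4}{\gap^2}\EB\|\bar{\Q}_T-\Q^*\|_\infty^2$, Jensen's inequality, and the separate convergence result Theorem~\ref{thm:general-Linfty-pw2} to show $\frac{1}{\sqrt{T}}\sum_{t}\EB\|\Q_t-\Q^*\|_\infty^2\to 0$. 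Your argument is cleaner and avoids that extra convergence lemma; the paper's argument is more quantitative but overkill for the weak-limit statement alone.
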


\paragraph{Insights on sample efficiency.}
The asymptotic results shed light on the sample efficiency of averaged Q-learning.
Under ideal conditions, we have
\begin{equation}
	\label{eq:asym}
	\sqrt{T} \EB \| \bar{\Q}_T - \Q^*\|_{\infty} \to \EB\|\ZM\|_{\infty} \approx \sqrt{\ln D} \sqrt{\|\diag(\VQ)\|_{\infty}}
	\ \text{where} \  \ZM \sim \NM(\0, \VQ).
\end{equation}
In this case, roughly speaking, to obtain an $\varepsilon$-accurate estimator of the optimal Q-value function $\Q^*$ (i.e., $ \EB \| \bar{\Q}_T - \Q^*\|_{\infty} \le \varepsilon$), we require approximately $T = \OM \left(\frac{\ln D }{\varepsilon^2}\|\diag(\VQ)\|_{\infty}\right)$ iterations or equivalently $DT = \OM \left(\frac{D\ln D }{\varepsilon^2}\|\diag(\VQ)\|_{\infty}\right)$ samples.
This explains why~\citet{khamaru2021instance} regarded $\|\diag(\VQ)\|_{\infty}$ as the difficulty indicator because it affects the sample complexity directly.

\subsection{Proof of Theorem~\ref{thm:clt}}
\label{proof:clt}
\begin{proof}[Proof of Theorem~\ref{thm:clt}]
	One can prove Theorem~\ref{thm:clt} by applying continuous mapping theorem to Theorem~\ref{thm:fclt} with the functional $f: \BDD \to \RB^D, f(w) = w(1)$.
	Once we can prove $f$ is a continuous functional in $(\BDD, d_0)$, an application of~\eqref{eq:BC-con} would conclude the proof.
	Recalling the metric~\eqref{eq:d0} defined on $\BDD$, we have for any $w_1, w_2 \in \BDD$,
	\begin{align*}
		\|f(w_1)-f(w_2)\|_{\infty}
		&=\| w_1(1) - w_2(1)\|_{\infty} \le\inf_{\lambda \in \Lambda} \sup_{t \in [ 0,1]}\|w_1(\lambda(t)) - w_2(t)\|_{\infty}\\
		&\le \inf_{\lambda \in \Lambda} \left\{
		\sup_{0\le s < t \le 1}\left|\ln\frac{\lambda(t)-\lambda(s)}{t-s}\right|
		+ \sup_{t \in [ 0,1]}\|w_1(\lambda(t)) - w_2(t)\|_{\infty}
		\right\}
		= d_0(w_1, w_2).
	\end{align*}
	We even show that $f$ is $1$-Lipschitz continuous in $(\BDD, d_0)$ and thus complete the proof.
\end{proof}

\subsection{Proof of Corollary~\ref{cor:V}}
\begin{proof}[Proof of Corollary~\ref{cor:V}]
	\label{proof:V-asym}
	We first prove 
	\begin{equation}
		\label{eq:var-equal}
		\VV  =\boldsymbol{\Pi}^{\pi^*} \VQ (\boldsymbol{\Pi}^{\pi^*})^\top.
	\end{equation}
	Recall the definition
	\begin{gather*}
		\VQ  = 
		(\I-\gamma \PP^{\pi^*})^{-1}\Var(\Z)(\I-\gamma \PP^{\pi^*})^{-\top} \in \RB^{D \times D}\\
		\VV  = 
		(\I-\gamma \PP_{\pi^*})^{-1}\Var(\boldsymbol{\Pi}^{\pi^*}\Z)(\I-\gamma \PP_{\pi^*})^{-\top}  \in \RB^{S \times S}.
	\end{gather*}
	For one thing, we have $\Var(\boldsymbol{\Pi}^{\pi^*}\Z)= \boldsymbol{\Pi}^{\pi^*}\Var(\Z)(\boldsymbol{\Pi}^{\pi^*})^\top$.
	For another thing, we have $\boldsymbol{\Pi}^{\pi^*}(\I-\gamma \PP^{\pi^*})^{-1} = (\I-\gamma \PP_{\pi^*})^{-1}\boldsymbol{\Pi}^{\pi^*}$.
	This is because 
	\[
	(\I-\gamma \PP_{\pi^*})\boldsymbol{\Pi}^{\pi^*}  = \boldsymbol{\Pi}^{\pi^*}- \gamma \boldsymbol{\Pi}^{\pi^*} \PP \boldsymbol{\Pi}^{\pi^*}= \boldsymbol{\Pi}^{\pi^*}(\I-\gamma \PP^{\pi^*}).
	\]
	Putting these together,~\eqref{eq:var-equal} follows from direct verification.
	
	We then prove the asymptotic normality of $\bar{\V}_T$.
	Let $\bar{\pi}_t$ is the greedy policy with respect to $\bar{\Q}_t$, i.e., $\bar{\pi}_t(s) \in \argmax_{a \in \AM} \bar{Q}_t(s, a)$.
	From the definition of our estimator,
	\[
	\bar{\V}_T = \boldsymbol{\Pi}^{\bar{\pi}_T}\bar{\Q}_T
	\quad \text{and} \quad  
	\V^*  =  \boldsymbol{\Pi}^{\pi^*} \Q^*
	\]
	which implies
	\[
	\bar{\V}_T - \V^* 
	=\left(  \boldsymbol{\Pi}^{\bar{\pi}_T}\bar{\Q}_T- \boldsymbol{\Pi}^{\pi^*}\bar{\Q}_T\right)
	+\left(  \boldsymbol{\Pi}^{\pi^*}\bar{\Q}_T- \boldsymbol{\Pi}^{\pi^*} \Q^*\right).
	\]
	On the other hand, it is easy to see that
	\[
	\sqrt{T}\left(  \boldsymbol{\Pi}^{\pi^*}\bar{\Q}_T- \boldsymbol{\Pi}^{\pi^*} \Q^*\right)
	\overset{d.}{\to} \NM(\0, \boldsymbol{\Pi}^{\pi^*} \VQ (\boldsymbol{\Pi}^{\pi^*})^\top)
	= \NM(\0, \VV).
	\]
	If we can prove
	\begin{equation}
		\label{eq:V-asym-op1}
		\sqrt{T}\left(  \boldsymbol{\Pi}^{\bar{\pi}_T}\bar{\Q}_T- \boldsymbol{\Pi}^{\pi^*}\bar{\Q}_T\right) = o_\PB(1),
	\end{equation}
	then the conclusion follows from Slutsky's theorem.
	We have that
	\begin{align*}
		\sqrt{T}\EB \|  \boldsymbol{\Pi}^{\bar{\pi}_T}\bar{\Q}_T- \boldsymbol{\Pi}^{\pi^*}\bar{\Q}_T\|_{\infty}
		& \le \sqrt{T}\EB \|  \boldsymbol{\Pi}^{\bar{\pi}_T}- \boldsymbol{\Pi}^{\pi^*}\|_{\infty}\|\bar{\Q}_T\|_{\infty} \\
		&\overset{(a)}{\le} \frac{\sqrt{T}}{1-\gamma }\EB \|  \boldsymbol{\Pi}^{\bar{\pi}_T}- \boldsymbol{\Pi}^{\pi^*}\|_{\infty}\\
		&\overset{(b)}{=} \frac{2\sqrt{T}}{1-\gamma }\PB \left( \bar{\pi}_{T} \neq \pi^* \right) \\
		&\overset{(c)}{\le} \frac{2\sqrt{T}}{1-\gamma }\PB \left( \|\bar{\Q}_T-\Q^*\|_{\infty} \ge \frac{\gap}{2} \right)\\
		&\le\frac{2\sqrt{T}}{1-\gamma } \frac{4}{\gap^2} \EB\|\bar{\Q}_T-\Q^*\|_{\infty}^2 \\
		&\overset{(d)}{\le}\frac{1}{1-\gamma } \frac{8}{\gap^2} \frac{1}{\sqrt{T}}\sum_{t=1}^T\EB\|{\Q}_t-\Q^*\|_{\infty}^2,
	\end{align*}
	where $(a)$ uses $\|\bar{\Q}_T\|_{\infty} \le (1-\gamma)^{-1}$, $(b)$ uses the fact that both $\bar{\pi}_T$ and $\pi^*$ are deterministic policies and thus $\|  \boldsymbol{\Pi}^{\bar{\pi}_T}- \boldsymbol{\Pi}^{\pi^*}\|_{\infty} = 2 \cdot1_{ \{ \bar{\pi}_T \neq \pi^* \} }$, $(c)$ uses the fact $\{ \bar{\pi}_{t} \neq \pi^* \} \subseteq \{   \|\bar{\Q}_t-\Q^*\|_{\infty} \ge \frac{\gap}{2} \}$ which we derived in Lemma~\ref{lem:gap}, and finally $(d)$ follows from Jensen's inequality.
	
	From Theorem~\ref{thm:general-Linfty-pw2}, we know $\frac{1}{\sqrt{T}}\sum_{t=1}^T\EB\|{\Q}_t-\Q^*\|_{\infty}^2 \to 0$ as $T \to \infty$.
	Therefore, we have that $\sqrt{T}\EB \|  \boldsymbol{\Pi}^{\bar{\pi}_T}\bar{\Q}_T- \boldsymbol{\Pi}^{\pi^*}\bar{\Q}_T\|_{\infty} = o(1)$ which implies~\eqref{eq:V-asym-op1} is true.
\end{proof}

\subsection{Proof of Lemma~\ref{lem:gap}}
\label{proof:gap}
\begin{proof}[Proof of Lemma~\ref{lem:gap}]
	Recall that $\gap = \min_{s} \min_{a \neq \pi^*(s)}| Q^*(s, \pi^*(s))-Q^{*}(s, a)|$.
	If $\gap = 0$, by definition, there must exist some $s_0 \in \SM$ and $a_0 \in \AM$ such that $V^*(s_0) = Q^*(s_0, a_0)$ and $a_0 \neq \pi^*(s_0)$, which is contradictory with the uniqueness of $\pi^*$.
	Hence, a unique $\pi^*$ implies a positive $\gap$.
	
	For any $\Q$ satisfying $\|\Q-\Q^*\|_{\infty} < \frac{\gap}{2}$, we must have $\|\Q(s, \cdot)-\Q^*(s, \cdot)\|_{\infty} < \frac{\mathrm{gap}}{2}$ for any $s \in \SM$.
	In this case, it must be true that $\pi_{Q}(s)= \pi^*(s)$ for all $s \in \SM$.
	Otherwise, there exists some $s \in \SM$ such that $\pi_{Q}(s) \neq \pi^*(s)$.
	We then have
	\begin{gather*}
		Q(s, \pi_{Q}(s)) <Q^*(s, \pi_{Q}(s)) + \frac{\gap}{2} 
		\overset{(a)}{\le}  Q^*(s, \pi^*(s)) - \frac{\gap}{2} 
		< Q(s, \pi^*(s)),
	\end{gather*}
	where $(a)$ follows from the definition of the optimality gap.
	The result $ Q(s, \pi_{Q}(s))  < Q(s, \pi^*(s))$ contradicts with the fact that $\pi_{Q}(s)$ is the greedy policy with respect to $Q$ at state $s$, which implies $Q(s, \pi^*(s)) \le  Q(s, \pi_{Q}(s))$.
	This implies that the event $\{ \pi_{\Q} \neq \pi^* \} \subseteq \{   \|\Q-\Q^*\|_{\infty} \ge \frac{\gap}{2} \}$ and thus $1_{  \{ \pi_{Q} \neq \pi^* \}} \le 1_{ \{   \|\Q-\Q^*\|_{\infty} \ge \frac{\gap}{2} \} }$.
	Hence,
	\begin{align*}
		\|(\PP^{\pi_{\Q}}-\PP^{\pi^*})(\Q - \Q^*)\|_{\infty}
		&\le \|\PP^{\pi_{\Q}}-\PP^{\pi^*}\|_{\infty} \|\Q - \Q^*\|_{\infty}\\
		&\le \|\PP\|_{\infty} \| \BPi^{\pi_\Q} -\BPi^{\pi^*}\|_{\infty} \|\Q - \Q^*\|_{\infty}\\
		&= 1 \cdot 2\cdot 1_{  \{ \pi_{\Q} \neq \pi^* \}} \cdot  \|\Q - \Q^*\|_{\infty}\\
		&\le 2\cdot1_{ \{   \|\Q-\Q^*\|_{\infty} \ge \frac{\gap}{2} \} } \|\Q - \Q^*\|_{\infty} \\
		&\le \frac{4}{\gap}\|\Q - \Q^*\|_{\infty}^2,
	\end{align*}
	where the last line uses $ 1_{ \{   \|\Q-\Q^*\|_{\infty} \ge \frac{\gap}{2} \} } \le \frac{2}{\gap}\|\Q - \Q^*\|_{\infty}$.
\end{proof}

\subsection{Proof of Proposition~\ref{prop:pivotal}}
\begin{proof}[Proof of Proposition~\ref{prop:pivotal}]
	Let $g : \BDD \to \RB$ be a functional defined as
	\[
	g(w) = 
	w(1)^\top \left(   \int_0^1 w(r) w(r)^\top dr \right)^{-1} w(1)
	\ \text{for any} \ w \in \BDD. 
	\]
	Here the domain of $g$ is
	\[
	\mathrm{dom}(g) = \left\{ w \in \BDD, \int_0^1 w(r) w(r)^\top dr  \ \text{is invertible}  \right\}.
	\]
	Once we prove $g$ is continuous in $(\mathrm{dom}(g), d_0)$, the continuous mapping theorem together with Theorem~\ref{thm:fclt} would complete the proof for Proposition~\ref{prop:pivotal}.

	In Appendix~\ref{proof:clt}, we have shown $f: \BDD \to \RB^D, f(w) = w(1)$ is $1$-Lipschitz continuous in $(\BDD, d_0)$.
	Let $h: \BDD \to \RB^{D \times D}$ be defined by $h(w) = \int_0^1 w(r) w(r)^\top dr$.
	Hence, once we prove $h$ is continuous in $(\BDD, d_0)$, it follows that $g = f^\top h^{-1} f$ is also continuous in $(\mathrm{dom}(g), d_0)$.
	To that end, we only show each entry of $h$ is continuous in $w$.
	This is true because of each entry of $h$ is in form of integration which is a continuous functional on the Skorohod space $\BDone$.
	
	Finally, by Theorem~\ref{thm:fclt} and definition of weak convergence, we know that as $T$ goes to infinity,
	\[
	\PB\left(  \Bphi_T \notin \mathrm{dom}(g) \right)
	\to \PB\left(  \B_D \notin \mathrm{dom}(g) \right) = 0.
	\]
	Hence, with probability approaching to one, $\int_0^1\Bphi_T(r)\Bphi_T(r)^\top dr$ is invertible and thus $g(\Bphi_T)$ is well defined.
\end{proof}

\section{PROOF OF THEOREM~\ref{thm:fclt}}
\label{proof:fclt}
\subsection{Preliminaries and High-level Idea}
In this section, we provide a self-contained proof of our functional central limit theorem (FCLT).
Let $\DDelta_t  = \Q_t-\Q^*$ be the error vector at iteration $t$.
The application of Polyak-Ruppert average~\citep{polyak1992acceleration} gives an estimator for $\Q^*$: $\bar{\Q}_T =  \frac{1}{T}  \sum_{t=1}^T\Q_t$.
Then its partial sum of the first $r$-fraction  $(r \in [0, 1])$ is $\frac{1}{T}  \sum_{t=1}^{\Tr}\Q_t$.
The associated standardized partial-sum process is defined by
\[
\Bphi_T(r) = \frac{1}{\sqrt{T}}  \sum_{t=1}^{\Tr} \DDelta_t= \frac{1}{\sqrt{T}}  \sum_{t=1}^{\Tr} (\Q_t - \Q^*).
\]
Here $\Bphi_T(\cdot)$ should be viewed as a $D$-dimensional random function.
For simplicity, we also use $\Bphi_T=\{\Bphi_T(r)\}_{r \in [0, 1]}$ to denote the whole function.

\subsubsection{Weak convergence of measures in Polish spaces}
\label{sec:weak-con}
We will introduce some basic knowledge of weak convergence in metric spaces.
See Chapter VI in~\citep{jacod2003skorokhod} for a detailed introduction.

A Polish space is a topological space that is separable, complete, and metrizable.
Let $\BD = \{ \text{càdlàg function} \ \omega(r) \in \RB^d,r \in [0, 1]  \}$ collect all $d$-dimensional functions which are right continuous with left limits.
Define $\BDM$ as the $\sigma$-field generated by all maps $X \mapsto X(r)$ for $r \in [0, 1]$.
The $J_1$ Skorokhod topology equips $\BD$ with a metric $d_0$ such that $(\BD, d_0)$ is a Polish space and $\BDM$ is its Borel $\sigma$-field (the $\sigma$-field generated by all open subsets). 
In particular, for any $w_1, w_2 \in \BD$,
\begin{equation}
\label{eq:d0}
d_0(w_1, w_2) = \inf_{\lambda \in \Lambda} \left\{
\sup_{0\le s < t \le 1}\left|\ln\frac{\lambda(t)-\lambda(s)}{t-s}\right|
+ \sup_{t \in [ 0,1]}\|w_1(\lambda(t)) - w_2(t)\|_{\infty}
\right\},
\end{equation}
where $\Lambda$ denotes the class of strictly increasing continuous mappings $\lambda: [0,1] \to [0, 1]$ with $\lambda(0) =0$ and $\lambda(1) = 1$.

An important subset of $\BD$ is $\BC = \{ \text{continuous} \ \omega(r) \in \RB^d,r \in [0, 1]  \}$, which collects all $d$-dimensional continuous functions defined on $[0, 1]$.
The uniform topology  equips $\BC$ with the uniform norm
\begin{equation}
\label{eq:norm}
\|\omega\|_{\sup} := \sup_{r \in [0, 1]} \|\omega(r)\|_{\infty}.
\end{equation}
The resulting $(\BC, \|\cdot\|_{\sup})$ is a Polish space.
Additionally, we have $d_0(w_1, w_2) \le \|w_1-w_2\|_{\sup}$ for any $w_1, w_2 \in \BD$.
The $J_1$ Skorokhod topology is weaker than the uniform topology.
However, if $X \in \BD$ is a continuous function, a sequence $\{X_t\}_{t \ge 0}  \subseteq \BD$ converges to $X$ for the Skorokhod topology if and only if it converges to $X$ under the uniform norm $\|\cdot\|_{\sup}$.
Hence, the Skorokhod topology relativized to $\BC$ coincides with the uniform topology there.

Any random element $X_t \in \BD$ introduces a probability measure on $\BD$ denoted by $\LM(X_t)$ such that $(\BD, \BDM, \LM(X_t))$ becomes a probability space.
We say a sequence of random elements $\{X_t\}_{t \ge 0}  \subseteq \BD$ weakly converges to $X$, if for any bounded continuous function $f: \BD \to \RB$, we have 
\begin{equation}
\label{eq:BC-con}
\EB f(X_T) \to \EB f(X)
\ \text{as} \ T \ \text{goes to infinity}.
\end{equation}
The condition is equivalent to that any finite-dimensional projections of $\Bphi_T$ converge in distribution.
We denote the weak convergence by $X_T \overset{w}{\to} X$.

\begin{thm}[Slutsky's theorem on Polish spaces]
	\label{thm:slutsky}
	Suppose $\SM$ is a Polish space with metric $d$ and $\{(X_t, Y_t)\}_{t \ge 0}$ are random elements of $\SM \times \SM$.
	Suppose $X_T \overset{w}{\to} X$ and $d(X_T, Y_T) \overset{w}{\to} 0$, then $Y_T \overset{w}{\to} X$.
\end{thm}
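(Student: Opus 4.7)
The plan is to verify weak convergence $Y_T \overset{w}{\to} X$ through the Portmanteau-type characterization that, on any separable metric space, weak convergence of probability measures is equivalent to convergence of integrals against every bounded Lipschitz test function. Thus it suffices to fix an arbitrary bounded Lipschitz function $f : \mathcal{S} \to \mathbb{R}$ with $\|f\|_\infty \le M$ and Lipschitz constant $L$, and to show $\mathbb{E} f(Y_T) \to \mathbb{E} f(X)$ as $T \to \infty$.

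By the triangle inequality, $|\mathbb{E} f(Y_T) - \mathbb{E} f(X)| \le |\mathbb{E} f(Y_T) - \mathbb{E} f(X_T)| + |\mathbb{E} f(X_T) - \mathbb{E} f(X)|$. The second term tends to zero directly by the hypothesis $X_T \overset{w}{\to} X$ applied to the bounded continuous function $f$. For the first term, the Lipschitz property of $f$ yields the pointwise bound $|f(Y_T) - f(X_T)| \le \min\bigl(2M,\; L\,d(X_T, Y_T)\bigr)$, and splitting the expectation according to whether $d(X_T, Y_T) \le \delta$ gives, for every $\delta > 0$,
\[
|\mathbb{E} f(Y_T) - \mathbb{E} f(X_T)| \;\le\; L\delta \;+\; 2M \cdot \mathbb{P}\bigl(d(X_T, Y_T) > \delta\bigr).
\]
The key observation is that $d(X_T, Y_T)$ is a real-valued random variable converging weakly to the constant $0$; because the limit is deterministic, this is equivalent to convergence in probability to $0$, so $\mathbb{P}(d(X_T, Y_T) > \delta) \to 0$ for every fixed $\delta > 0$. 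Sending $T \to \infty$ and then $\delta \downarrow 0$ closes the estimate and yields $\mathbb{E} f(Y_T) \to \mathbb{E} f(X)$.

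The only delicate point is the Portmanteau reduction itself: one must know that bounded Lipschitz integrals determine weak convergence on a Polish space, rather than merely bounded continuous ones. This follows from separability together with a standard mollification using distance functions $x \mapsto (1 - \eta^{-1} d(x, F))_+$ to approximate indicators of closed sets $F$ uniformly by bounded Lipschitz functions; since $\mathcal{S}$ is Polish it is separable and metrizable by the given $d$, so this standard argument applies verbatim. An alternative packaging would be to first prove joint weak convergence $(X_T, Y_T) \overset{w}{\to} (X, X)$ in $\mathcal{S} \times \mathcal{S}$ by the same Lipschitz estimate applied on the product space, and then invoke the continuous mapping theorem with the projection $(x, y) \mapsto y$; but the direct route above avoids introducing the product space and keeps the argument to a single display.
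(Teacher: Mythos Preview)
The paper does not supply its own proof of this theorem; it is stated as a standard background result in the preliminaries section and then invoked to derive Proposition~\ref{prop:slutsky}. Your argument is a correct and standard proof of Slutsky's theorem on metric spaces: the bounded-Lipschitz Portmanteau characterization reduces the problem to a single test function, the splitting on $\{d(X_T,Y_T)\le\delta\}$ is the usual coupling estimate, and the passage from $d(X_T,Y_T)\overset{w}{\to}0$ to convergence in probability (since the limit is a constant) is exactly what is needed. The closing remark about approximating indicators of closed sets by the Lipschitz functions $x\mapsto(1-\eta^{-1}d(x,F))_+$ correctly justifies why bounded Lipschitz test functions suffice.
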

By Slutsky's theorem in Theorem~\ref{thm:slutsky}, if $\|Y_T\|_{\sup} \overset{w}{\to} 0$ and $X_T \overset{w}{\to} X$, then $X_T+ Y_T \overset{w}{\to} X$.
A sufficient condition to $\|Y_T\|_{\sup} \overset{w}{\to} 0$ is $\EB\|Y_T\|_{\sup} \to 0$ by Markov's inequality.

\begin{prop}
\label{prop:slutsky}
For two random sequences $\{ X_t\}_{t \ge 0}, \{ Y_t \}_{t \ge 0} \subseteq \BD$ satisfying $\EB\|Y_T\|_{\sup} \to 0$ and $X_T \overset{w}{\to} X$, we have $X_T + Y_T\overset{w}{\to} X$.
\end{prop}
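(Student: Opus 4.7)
The plan is to deduce the claim as a direct application of Slutsky's theorem on Polish spaces (Theorem~\ref{thm:slutsky}), with the Skorokhod space $(\BD, d_0)$ playing the role of the ambient Polish space. The key observation is that both hypotheses of Theorem~\ref{thm:slutsky} can be verified by turning the $L^1$-convergence assumption $\EB\|Y_T\|_{\sup}\to 0$ into a convergence in $d_0$-probability between the pair $(X_T, X_T+Y_T)$, and then reading off the conclusion.

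First, I would upgrade the given $L^1$-convergence to convergence in probability of $\|Y_T\|_{\sup}$ to zero: by Markov's inequality, for any $\varepsilon>0$,
\[
\PB(\|Y_T\|_{\sup}>\varepsilon)\le \varepsilon^{-1}\EB\|Y_T\|_{\sup}\longrightarrow 0.
\]
Next, I would invoke the inequality $d_0(w_1,w_2)\le \|w_1-w_2\|_{\sup}$, which is noted in the excerpt immediately after the definition of $d_0$ in~\eqref{eq:d0}; applied pointwise on the probability space to $w_1=X_T+Y_T$ and $w_2=X_T$, this gives $d_0(X_T+Y_T, X_T)\le \|Y_T\|_{\sup}$, hence $d_0(X_T+Y_T, X_T)\to 0$ in probability and therefore also weakly in $\RB_{\ge 0}$.

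Finally, I would apply Theorem~\ref{thm:slutsky} in $(\BD, d_0)$ to the pair $(X_T, X_T+Y_T)$: the first coordinate converges weakly to $X$ by hypothesis, and the $d_0$-distance between the two coordinates vanishes in probability by the previous step, so the second coordinate also converges weakly to $X$, which is exactly $X_T+Y_T\overset{w}{\to}X$. There is no real obstacle here; the only subtlety worth flagging is that one must apply Slutsky's theorem in the Skorokhod metric rather than the strictly finer uniform metric, since $Y_T$ is only controlled in $\|\cdot\|_{\sup}$ and $X_T$ lives in the Skorokhod-Polish space, but this is automatic because $d_0\le\|\cdot\|_{\sup}$ ensures the weaker control suffices.
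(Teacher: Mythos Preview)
Your proposal is correct and follows essentially the same approach as the paper: Markov's inequality converts $\EB\|Y_T\|_{\sup}\to 0$ into $\|Y_T\|_{\sup}\to 0$ in probability, and then Theorem~\ref{thm:slutsky} is applied in $(\BD,d_0)$. You are in fact more explicit than the paper in spelling out the use of the inequality $d_0(X_T+Y_T,X_T)\le\|Y_T\|_{\sup}$ to verify the second hypothesis of Theorem~\ref{thm:slutsky}.
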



\subsubsection{Proof Idea}
\label{proof:main-proof-idea}
In the following, we will show under the three assumptions in the main text, we can establish
\[
\Bphi_T \overset{w}{\to}  \VQ^{1/2} \B_D,
\]
where $\B_D \in \BCD$ is the standard $D$-dimensional Brownian motion on $[0, 1]$.
That is the associated measure of $\Bphi_T$ weakly converges to the measure introduced by $\VQ^{1/2} \B_D$ on $\BDD$.

To proceed the proof, we will use two auxiliary sequences $\{ \DDelta_t^1\}_{t \ge 0}$ and $\{ \DDelta_t^2\}_{t \ge 0}$ defined in Lemma~\ref{lem:sandwitch}.
The proof of Lemma~\ref{lem:sandwitch} can be found in Appendix~\ref{proof:sandwitch}.

\begin{lem}
	\label{lem:sandwitch}
	Denote  $\G = \I - \gamma \PP^{\pi^*}, \A_t = \I - \eta_t\G$ and $\W_t =  (\rr_t-\rr) + \gamma( \PP_t - \PP) \V_{t-1}$ for short.
	The auxiliary sequences $\{ \DDelta_t^1\}_{t \ge 0}$ and $\{ \DDelta_t^2\}_{t \ge 0}$ are defined iteratively: $\DDelta_0^1 = \DDelta_0^2 = \DDelta_0$ and for $t \ge 1$
	\begin{gather}
	\DDelta_t^1 =\A_t \DDelta_{t-1}^1+ \eta_t
	\left[\W_t + \gamma
	(\PP^{\pi_{t-1}}-\PP^{\pi^*})\DDelta_{t-1} \right]
	\label{eq:Delta1}\\
	\DDelta_t^2 = \A_t \DDelta_{t-1}^2+ \eta_t \W_t~\label{eq:Delta2}.
	\end{gather}
	As long as $\sup_t \eta_t \le 1$, it follows that all $t \ge 0$,
	\begin{equation}
	\label{eq:Delta-relation}
	\DDelta_t^2 \le  \DDelta_t \le \DDelta_t^1.
	\end{equation}
\end{lem}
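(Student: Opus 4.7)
The plan is to prove both inequalities by induction on $t$, with the base case $\DDelta_0^2 = \DDelta_0 = \DDelta_0^1$ holding trivially. The first step is to derive a sharp recursion for $\DDelta_t$. Starting from the Q-learning update, I would write
\begin{equation*}
\DDelta_t = (1-\eta_t)\DDelta_{t-1} + \eta_t\bigl[\W_t + \gamma\PP\V_{t-1} - \gamma\PP\V^*\bigr],
\end{equation*}
and then use the identities $\PP\V_{t-1} = \PP^{\pi_{t-1}}\Q_{t-1}$ and $\PP\V^* = \PP^{\pi^*}\Q^*$ together with the decomposition $\PP^{\pi_{t-1}}\Q_{t-1} - \PP^{\pi^*}\Q^* = \PP^{\pi^*}\DDelta_{t-1} + (\PP^{\pi_{t-1}}-\PP^{\pi^*})\Q_{t-1}$ to arrive at
\begin{equation*}
\DDelta_t = \A_t\DDelta_{t-1} + \eta_t\W_t + \eta_t\gamma\bigl(\PP^{\pi_{t-1}}-\PP^{\pi^*}\bigr)\Q_{t-1}.
\end{equation*}
Crucially, $\A_t = (1-\eta_t)\I + \eta_t\gamma\PP^{\pi^*}$ has entrywise nonnegative entries whenever $\eta_t \in [0,1]$, which is exactly where the hypothesis $\sup_t \eta_t \le 1$ will be used.

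The second step is to sandwich the ``policy mismatch'' term $(\PP^{\pi_{t-1}}-\PP^{\pi^*})\Q_{t-1}$ from above and below using two opposite greedy inequalities. Because $\pi_{t-1}$ is greedy with respect to $\Q_{t-1}$, one has $\PP^{\pi_{t-1}}\Q_{t-1} = \PP\V_{t-1} \ge \PP^{\pi^*}\Q_{t-1}$ entrywise, and so $(\PP^{\pi_{t-1}}-\PP^{\pi^*})\Q_{t-1} \ge 0$. Because $\pi^*$ is greedy with respect to $\Q^*$, one has $(\PP^{\pi_{t-1}}-\PP^{\pi^*})\Q^* \le 0$, so substituting $\Q_{t-1} = \Q^* + \DDelta_{t-1}$ yields $(\PP^{\pi_{t-1}}-\PP^{\pi^*})\Q_{t-1} \le (\PP^{\pi_{t-1}}-\PP^{\pi^*})\DDelta_{t-1}$. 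Combining these two deterministic inequalities with the recursion above gives the key two-sided bound
\begin{equation*}
\A_t\DDelta_{t-1} + \eta_t\W_t \;\le\; \DDelta_t \;\le\; \A_t\DDelta_{t-1} + \eta_t\W_t + \eta_t\gamma\bigl(\PP^{\pi_{t-1}}-\PP^{\pi^*}\bigr)\DDelta_{t-1}.
\end{equation*}

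The third and final step closes the induction using the monotonicity of $\A_t$. Assuming $\DDelta_{t-1}^2 \le \DDelta_{t-1} \le \DDelta_{t-1}^1$ entrywise, entrywise nonnegativity of $\A_t$ gives $\A_t\DDelta_{t-1}^2 \le \A_t\DDelta_{t-1} \le \A_t\DDelta_{t-1}^1$. Plugging these into the two-sided bound and comparing with the definitions \eqref{eq:Delta1} and \eqref{eq:Delta2} of $\DDelta_t^1$ and $\DDelta_t^2$ yields $\DDelta_t^2 \le \DDelta_t \le \DDelta_t^1$, completing the induction. I do not anticipate a genuine obstacle here; the only point that might be easy to miss on a first pass is that the two directions of the sandwich rest on two different greedy inequalities applied at two different arguments ($\Q_{t-1}$ for the lower bound, $\Q^*$ for the upper bound), and that the form of $\DDelta_t^1$ is chosen precisely to retain the nonlinear term $(\PP^{\pi_{t-1}}-\PP^{\pi^*})\DDelta_{t-1}$ so that it can later be controlled in the FCLT proof through Assumption~\ref{asmp:gap}.
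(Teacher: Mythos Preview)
Your proposal is correct and follows essentially the same approach as the paper's proof: induction on $t$, the exact recursion $\DDelta_t = \A_t\DDelta_{t-1} + \eta_t\W_t + \eta_t\gamma(\PP^{\pi_{t-1}}-\PP^{\pi^*})\Q_{t-1}$, entrywise nonnegativity of $\A_t$ from $\sup_t\eta_t\le 1$, and the two greedy inequalities $\PP^{\pi_{t-1}}\Q_{t-1}\ge \PP^{\pi^*}\Q_{t-1}$ and $\PP^{\pi_{t-1}}\Q^*\le \PP^{\pi^*}\Q^*$ to sandwich the mismatch term. There is nothing to add.
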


The two sequences form a sandwich bound for $\DDelta_t$, producing $\DDelta_t^2 \le  \DDelta_t \le \DDelta_t^1$ coordinate-wise. 
We similarly define the error vectors of their first $r$-fraction partial sums as
\[
\Bphi_T^1(r) := \frac{1}{\sqrt{T}}  \sum_{t=1}^{\Tr} \DDelta_t^1
\ \text{and} \
\Bphi_T^2(r) := \frac{1}{\sqrt{T}}  \sum_{t=1}^{\Tr} \DDelta_t^2.
\]
Then, it is valid that $\Bphi_T^1, \Bphi_T^2 \in \BDD$ and for any $r \in [0, 1]$,
\begin{equation}
\label{eq:sandwitch-asym-nor}
\Bphi_T^2(r)  \le \Bphi_T(r) \le \Bphi_T^1(r).
\end{equation}
In the following subsections, we will show that under Assumption~\ref{asmp:reward},~\ref{asmp:gap} and~\ref{asmp:lr}, we can find a random function $\ZM \in \BDD$ which satisfies
\begin{equation}
\label{eq:Zr}
\ZM \overset{w}{\to}   \VQ^{1/2} \B_D.
\end{equation}
Furthermore, $\Bphi_T^1$ and $\Bphi_T^2$ weakly converge to $\ZM$ such that 
\begin{equation}\label{eq:sandwich-o1}
\lim\limits_{T \to \infty} \EB \| \Bphi_T^{k} - \ZM\|_{\sup} = 0
\ \text{for} \ k=1,2.
\end{equation}
By the sandwich inequality~\eqref{eq:sandwitch-asym-nor}, we have
\[
\EB  \| \Bphi_T - \ZM\|_{\sup} 
\le \EB \| \Bphi_T^{1}- \ZM\|_{\sup}  
+ \EB \| \Bphi_T^{2} - \ZM\|_{\sup} \to 0
\]
as $T$ goes to infinity.
Proposition~\ref{prop:slutsky} implies $\Bphi_T$ weakly converges to a rescaled Brownian motion $\VQ^{1/2} \B_D$, by which we complete the proof.

\subsection{Functional CLT for \texorpdfstring{$ \Bphi_T^1$}{Phi-T-1}}
\label{proof:Delta1-asym}
We first establish the FLCT of $ \Bphi_T^1(r)= \frac{1}{\sqrt{T}}  \sum_{t=1}^{\Tr} \DDelta_t^1$, i.e.,  $\lim\limits_{T \to \infty} \EB  \| \Bphi_T^1 - \ZM\|_{\sup} = 0$ for some $\LM(\ZM) = \LM(\VQ^{1/2} \B_D)$. 
The FCLT of $\Bphi_T^2(r)=\frac{1}{\sqrt{T}}  \sum_{t=1}^{\Tr} \DDelta_t^2$ can be validated in an almost identical way.
We start by rewriting~\eqref{eq:Delta1} as
\begin{equation}
\label{eq:Delta1-new}
\DDelta_t^1
= \A_t \DDelta_{t-1}^1+ \eta_t\left( \Z_t + \gamma \D_{t-1}^1\right),
\end{equation}
where $\A_t = \I - \eta_t (\I - \gamma \PP^{\pi^*})$, $\Z_t = (\rr_t-\rr) + \gamma( \PP_t - \PP) \V^*$, and 
\begin{equation}
\label{eq:D1}
\D_{t-1}^1 = ( \PP_t - \PP) (\V_{t-1} - \V^*) +( \PP^{\pi_{t-1}} -\PP^{\pi^*}) \DDelta_{t-1}.
\end{equation}
We comment that $\{\Z_t\}_{t \ge 0}$ collects the i.i.d.\ noise inherent in the empirical Bellman operator and $\{\D_{t-1}^1\}_{t \ge 1}$ captures the closeness between the current $Q$-function estimator $\Q_{t-1}$ and the optimal $\Q^*$.
Recurring~\eqref{eq:Delta1-new} gives
\begin{equation*}
\DDelta_t^1
= \prod_{j=1}^t \A_{j} \DDelta_{0}+  \sum_{j=1}^t \prod_{i=j+1}^t \A_{i} \eta_j \left( \Z_j + \gamma\D_{j-1}^1\right).
\end{equation*}
Here we use the convention that $\prod_{i=t+1}^t \A_{i} = \I$ for any $t \ge 0$.
For any $r \in [0, 1]$, summing the last equality over $t=1, \cdots, \Tr$ and scaling it properly, we have
\begin{align}
\label{eq:Delta1-begin-fclt}
\Bphi_T^1(r) &= \frac{1}{\sqrt{T}}  \sum_{t=1}^{\Tr} \DDelta_t^1 \nonumber  =\frac{1}{\sqrt{T}}  \sum_{t=1}^{\Tr} \prod_{j=1}^t \A_{j} \DDelta_{0}
+\frac{1}{\sqrt{T}}  \sum_{t=1}^{\Tr}\sum_{j=1}^t \prod_{i=j+1}^t \A_{i} \eta_j \left(\Z_j+ \gamma\D_{j-1}^1\right) \nonumber \\
&=\frac{1}{\sqrt{T}}  \sum_{t=1}^{\Tr} \prod_{j=1}^t \A_{j} \DDelta_{0}
+\frac{1}{\sqrt{T}}  \sum_{j=1}^{\Tr}\sum_{t=j}^{\Tr} \prod_{i=j+1}^t \A_{i} \eta_j \left( \Z_j+ \gamma\D_{j-1}^1\right) \nonumber \\
&=\frac{1}{\eta_0\sqrt{T}}  (\A_0^{\Tr}-\eta_0 \I) \DDelta_{0}
+\frac{1}{\sqrt{T}}  \sum_{j=1}^{\Tr}\A_j^{\Tr}\left(\Z_j+\gamma \D_{j-1}^1\right),
\end{align}
where the last line uses the following notation:
\begin{equation}
\label{eq:A_jT}
\A_{j}^T =\eta_j \sum_{t=j}^T \prod_{i=j+1}^t \A_i
\ \text{for any} \
T \ge j  \ge 0.
\end{equation}
Define $\G = \I - \gamma \PP^{\pi^*}$ with $\gamma \in [0 ,1)$, then $\A_i = \I - \eta_i \G$.
Typically speaking, $\A_j^T$ approximates $\G$ uniformly well (see Lemma~\ref{lem:G-poly}).
By the observation, we further expand~\eqref{eq:Delta1-begin-fclt} and decompose $\Bphi_T^1(r)$ into six terms $\{ \Bpsi_i \}_{i=0}^5$ which will be analyzed respectively in the following:
\begin{align}
\label{eq:Delta1-decom-fclt}
\Bphi_T^1(r)
&=\frac{1}{\eta_0\sqrt{T}}  (\A_0^{\Tr}-\eta_0 \I) \DDelta_{0}
+\frac{1}{\sqrt{T}}  \sum_{j=1}^{[Tr]}\A_j^{\Tr}\left(\Z_j+\gamma \D_{j-1}^1\right) \nonumber \\
&=\frac{1}{\eta_0\sqrt{T}}   (\A_0^{\Tr}-\eta_0 \I)  \DDelta_{0}
+\frac{1}{\sqrt{T}}  \sum_{j=1}^{\Tr}\G^{-1} \Z_j  + \frac{1}{\sqrt{T}}  \sum_{j=1}^{\Tr}  (\A_j^{\Tr} - \G^{-1})\Z_j\nonumber \\
&\quad \quad 
+  \frac{\gamma}{\sqrt{T}}  \sum_{j=1}^{\Tr}\A_j^{\Tr}( \PP_j - \PP) (\V_{j-1} - \V^*)
+ \frac{\gamma}{\sqrt{T}}  \sum_{j=1}^{\Tr}\A_j^{\Tr}( \PP^{\pi_{j-1}} -\PP^{\pi^*}) \DDelta_{j-1} \nonumber
\\
&=\frac{1}{\eta_0\sqrt{T}}   (\A_0^{\Tr}-\eta_0 \I)  \DDelta_{0}
+\frac{1}{\sqrt{T}}  \sum_{j=1}^{\Tr}\G^{-1}    \Z_j  
+ \frac{1}{\sqrt{T}}  \sum_{j=1}^{\Tr}  (\A_j^{T} - \G^{-1})  \Z_j   \nonumber \\
& \quad \quad
 + \frac{\gamma}{\sqrt{T}}  \sum_{j=1}^{\Tr} \A_j^{T}  ( \PP_j - \PP) (\V_{j-1} - \V^*)  +  \frac{1}{\sqrt{T}}  \sum_{j=1}^{\Tr}(\A_j^{\Tr} - \A_j^T)\left[  \Z_j +  \gamma ( \PP_j - \PP) (\V_{j-1} - \V^*) \right] \nonumber\\
 & \quad \quad 
+ \frac{\gamma}{\sqrt{T}}  \sum_{j=1}^{\Tr}\A_j^{\Tr}( \PP^{\pi_{j-1}} -\PP^{\pi^*}) \DDelta_{j-1} \nonumber
\\
&:= \Bpsi_0(r)  + \Bpsi_1(r) + \Bpsi_2(r) + \Bpsi_3(r) +\Bpsi_4(r) +\Bpsi_5(r).
\end{align} 
Readers should keep in mind that all $\Bpsi_{i}$'s depend on $T$, a dependence which we omit for simplicity.
In the following, we will show~\eqref{eq:Zr} is true by setting $\ZM = \Bpsi_1$.
In order to establish~\eqref{eq:sandwich-o1}, we will show that $\EB\|\Bpsi_i\|_{\sup} = o(1)$ for $i=0, 2,3,4,5$.
In this way, based on~\eqref{eq:Delta1-decom-fclt}, we have
\[
\EB\| \Bphi_T^1-  \Bpsi_1\|_{\sup}
\le \sum_{i=0,2,3,4,5}\EB \| \Bpsi_i\|_{\sup} = o(1)
\ \text{as} \ T \to \infty,
\]
and validate~\eqref{eq:sandwich-o1}.
To that end, we first study the properties of  $\{ \A_{j}^T \}_{0\le j \le T}$ since it appears in many $\Bpsi_{i}$'s.


\subsubsection{ Properties of \texorpdfstring{$\{\boldsymbol{\A}_j^T\}_{0\le j \le T}$}{AjT} }

First, prior work~\citep{polyak1992acceleration} considers a general step size $\{ \eta_t \}_{t \ge 0}$ satisfying Assumption~\ref{asmp:lr} and establishes the following lemma.

\begin{lem}[Lemma 1 in~\citep{polyak1992acceleration}]
	\label{lem:general-step-size}
	For  $\{\eta_t\}_{t \ge 0}$ satisfying Assumption~\ref{asmp:lr}, 
	\begin{itemize}
		\item Uniform boundedness: $\|\A_j^T\|_{\infty} \le C_0$ uniformly for all $T \ge j \ge 0$ for some constant $C_0 \ge 1$; 
		\item Uniform approximation: $\lim_{T \to \infty} \frac{1}{T}\sum_{j=1}^T\|\A_j^T - \G^{-1}\|_{2} = 0$.
	\end{itemize}
\end{lem}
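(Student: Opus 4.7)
Writing $\A_j^T = \eta_j \sum_{t=j}^T \prod_{i=j+1}^t \A_i$, both claims reduce to careful estimates on the finite matrix products $\prod_{i=j+1}^t \A_i = \prod_{i=j+1}^t (\I - \eta_i \G)$. The essential structural fact is that $\A_i = (1-\eta_i)\I + \gamma\eta_i \PP^{\pi^*}$ is a convex combination, so row-stochasticity of $\PP^{\pi^*}$ immediately yields $\|\A_i\|_\infty \le 1 - (1-\gamma)\eta_i$; iterating,
\[
\Big\|\prod_{i=j+1}^t \A_i\Big\|_\infty \;\le\; \prod_{i=j+1}^t \bigl(1 - (1-\gamma)\eta_i\bigr) \;\le\; \exp\!\Big({-}(1-\gamma)\sum_{i=j+1}^t \eta_i\Big).
\]
Spectrally, the eigenvalues of $\G$ have real parts at least $1-\gamma > 0$, so $\G$ is invertible and $\G^{-1}$ is well defined.

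\textbf{Uniform boundedness.} Plugging the exponential estimate into the definition gives
\[
\|\A_j^T\|_\infty \;\le\; \eta_j \sum_{t=j}^T \exp\!\Big({-}(1-\gamma)\sum_{i=j+1}^t \eta_i\Big).
\]
I would compare this sum to a Riemann integral: condition (i) of Assumption~\ref{asmp:lr} forces $\sum_i \eta_i = \infty$, so the exponential weights die, while condition (iv) controls the boundary layer where $t$ is close to $j$. A routine calculation then yields a bound of order $(1-\gamma)^{-1}$ uniformly in $j$ and $T$, which gives the absolute constant $C_0$.

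\textbf{Uniform approximation.} Since $\G$ commutes with every $\A_i$, the telescoping identity $\eta_{t+1}\bigl(\prod_{i=j+1}^t \A_i\bigr)\G = \prod_{i=j+1}^t \A_i - \prod_{i=j+1}^{t+1} \A_i$ holds. Applying summation by parts and multiplying on the right by $\G^{-1}$ produces the decomposition
\[
\A_j^T \;=\; \frac{\eta_j}{\eta_{j+1}}\G^{-1} \;-\; \frac{\eta_j}{\eta_{T+1}}\Big(\prod_{i=j+1}^{T+1}\A_i\Big)\G^{-1} \;+\; \eta_j\sum_{t=j+1}^{T}\Big(\prod_{i=j+1}^{t}\A_i\Big)\Big(\frac{1}{\eta_{t+1}}-\frac{1}{\eta_t}\Big)\G^{-1}.
\]
Condition (ii) converts the consecutive differences into $\big|\tfrac{1}{\eta_{t+1}}-\tfrac{1}{\eta_t}\big| = o(1)$, while condition (iii) together with the exponential decay of the forward product kills the middle boundary term whenever $T-j$ is large. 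Pointwise, this yields $\|\A_j^T - \G^{-1}\|_2 \to 0$ in the regime where both $j$ and $T-j$ diverge.

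\textbf{Main obstacle.} The delicate step is upgrading the pointwise estimate to the Cesàro limit $\frac{1}{T}\sum_{j=1}^T \|\A_j^T - \G^{-1}\|_2 \to 0$, since the pointwise bound degenerates near $j=1$ and $j=T$. The plan is a three-window split: a transient band $[1, T^\beta]$ and a terminal band $[T-T^\beta, T]$ each contribute $O(T^{\beta-1})$ to the average via the uniform bound of the first claim, while on the central bulk band the pointwise vanishing from the summation-by-parts decomposition furnishes $o(1)$ uniformly. Choosing any $\beta \in (0,1)$ and taking $T \to \infty$ produces the claimed vanishing Cesàro average.
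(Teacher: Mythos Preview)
The paper does not prove this lemma; it is quoted verbatim from Polyak--Juditsky (1992). What the paper does supply are non-asymptotic refinements (its Lemmas~\ref{lem:AjT-bound} and~\ref{lem:G-poly}) for the two concrete step-size families, and those proofs share your basic ingredients: the row-sum contraction $\|\A_i\|_\infty \le 1-(1-\gamma)\eta_i$ for boundedness, and a telescoping identity for the approximation. The paper's telescoping is, however, algebraically different from yours:
\[
\A_j^T - \G^{-1} \;=\; \sum_{t=j+1}^T(\eta_j-\eta_t)\prod_{i=j+1}^{t-1}\A_i \;-\; \Big(\prod_{i=j}^T\A_i\Big)\G^{-1},
\]
which places the raw increments $\eta_j-\eta_t$ (directly controlled by condition (ii)) in front of the products and leaves a tail term with no $\eta_j/\eta_{T+1}$ prefactor. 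Your reciprocal-difference form is also valid but incurs that prefactor, which you then have to absorb into the exponential separately.

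Two corrections to your sketch. First, the boundedness constant is not of order $(1-\gamma)^{-1}$ in general: for $\eta_t=t^{-\alpha}$ the paper obtains $C_0 \asymp (1-\gamma)^{-1/(1-\alpha)}$, because over the effective range of the sum (where $(1-\gamma)\sum_{i>j}\eta_i = O(1)$) the ratio $\eta_j/\eta_t$ need not stay bounded as $\gamma\to 1$. It is condition (ii), not (iv), that controls this ratio; likewise condition (ii) and the divergence $\sum\eta_i=\infty$, not (iii), are what kill your middle boundary term via $\ln(\eta_j/\eta_{T+1}) = o\big(\sum_{i=j+1}^{T+1}\eta_i\big)$. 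Second, the fixed-power window $[T^\beta, T-T^\beta]$ can fail under Assumption~\ref{asmp:lr}: you need $a_T\eta_T\to\infty$ with $a_T/T\to 0$, which $t\eta_t\uparrow\infty$ always allows (take $a_T=T/\sqrt{T\eta_T}$), but not necessarily with $a_T=T^\beta$ for a fixed $\beta<1$. With these adjustments your three-window Ces\`aro argument goes through.
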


Lemma~\ref{lem:general-step-size} shows that when the step size $\eta_t$ decreases at a slow rate, $\A_{j}^T$ is uniformly bouned (that is $\sup_{T \ge j \ge 1}\|\A_j^T\|_{\infty} < \infty$) and is a good surrogate of $\G^{-1} := (\I-\gamma \PP^{\pi^*})^{-1}$ in the asymptotic sense: $\lim_{T \to \infty} \frac{1}{T}\sum_{j=1}^T\|\A_j^T - \G^{-1}\|_2 = 0$.\footnote{The original Lemma 1 in~\citep{polyak1992acceleration} uses the $\ell_2$-norm and spectral norm. Due to the equivalence between these norms, we formulate our Lemma~\ref{lem:general-step-size}.}
It is sufficient to derive our asymptotic result. 
However, on purpose of non-asymptotic analysis, we should provide a non-asymptotic counterpart capturing the specific decaying rate in the $\ell_{\infty}$-norm.
Therefore, we consider two specific step sizes, namely (S1) the linear rescaled step size and (S2) polynomial step size.
Define $\eeta_t=(1-\gamma)\eta_t$ as the rescaled step size for simplicity, we have
\begin{enumerate}
	\item[(S1)] linear rescaled step size that uses $\eta_t =  \frac{1}{1+ (1-\gamma)t}$ (equivalently $\eeta_t = \frac{1-\gamma}{1+ (1-\gamma)t}$);
	\item[(S2)] polynomial step size that uses $\eta_t = t^{-\alpha}$ with $\alpha \in (0, 1)$ for $t\ge 1$ and $\eta_0 = 1$.
\end{enumerate}

The first is uniform boundedness whose proof is provided in Appendix~\ref{proof:Ajt-bound}.
\begin{lem}[Uniform boundedness]
	\label{lem:AjT-bound}
	There exists some $c>0$ such that
	\[
	\|\A_{j}^T\|_{\infty} \le C_0 := 
	\left\{ \begin{array}{ll}
		\frac{\ln(1+(1-\gamma)T)}{1-\gamma}& \mathrm{(S1)}\\ \frac{c 2^{\frac{1}{1-\alpha}} }{\sqrt{1-\alpha}} \frac{1}{(1-\gamma)^{\frac{1}{1-\alpha}}} & \mathrm{(S2)}
	\end{array} \right.
\ \text{for any} \ T \ge j \ge 1.
	\]
\end{lem}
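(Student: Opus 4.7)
The plan is to reduce the operator-norm bound on a product of matrices to a scalar calculation about sub-probability sequences, and then carry out step-size–specific integral estimates.

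First I would exploit the structure $\A_i = \I - \eta_i \G = (1-\eta_i)\I + \eta_i \gamma \PP^{\pi^*}$. Since $\PP^{\pi^*}$ is row-stochastic and $\eta_i \in (0,1]$, the matrix $\A_i$ has non-negative entries with constant row sum $1 - \eta_i(1-\gamma) = 1 - \eeta_i$. The class of non-negative matrices with row sums at most a constant $c$ is closed under multiplication, so $\prod_{i=j+1}^t \A_i$ is non-negative with row sums $\prod_{i=j+1}^t (1-\eeta_i)$, and in particular $\|\prod_{i=j+1}^t \A_i\|_\infty \le \prod_{i=j+1}^t (1-\eeta_i)$. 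Substituting into~\eqref{eq:A_jT} and using $1-x \le e^{-x}$ yields
\begin{equation*}
  \|\A_j^T\|_\infty \;\le\; \eta_j \sum_{t=j}^T \prod_{i=j+1}^t (1-\eeta_i) \;\le\; \eta_j \sum_{t=j}^T \exp\Bigl(-\sum_{i=j+1}^t \eeta_i\Bigr),
\end{equation*}
which reduces everything to integrating a deterministic scalar sequence.

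For the linearly rescaled step size (S1), I would lower-bound $\sum_{i=j+1}^t \eeta_i$ by $\int_{j+1}^{t+1} \frac{1-\gamma}{1+(1-\gamma)x}\,dx = \ln \frac{1+(1-\gamma)(t+1)}{1+(1-\gamma)(j+1)}$, turning the exponential into the algebraic ratio $\frac{1+(1-\gamma)(j+1)}{1+(1-\gamma)(t+1)}$. Factoring out $\eta_j$ and bounding the remaining sum over $t$ by $\int_j^T \frac{dx}{1+(1-\gamma)(x+1)} \le \frac{\ln(1+(1-\gamma)T)}{1-\gamma}$ (plus a harmless multiplicative factor $\le 2$ absorbed into the leading constant) delivers the claimed $\ln(1+(1-\gamma)T)/(1-\gamma)$ bound uniformly in $j$.

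For the polynomial step size (S2), the analogous integral bound gives $\sum_{i=j+1}^t \eeta_i \ge \frac{1-\gamma}{1-\alpha}\bigl[(t+1)^{1-\alpha} - (j+1)^{1-\alpha}\bigr]$, so
\begin{equation*}
  \|\A_j^T\|_\infty \;\le\; \eta_j\, e^{c(j+1)^{1-\alpha}} \int_j^\infty e^{-c x^{1-\alpha}}\,dx, \qquad c := \tfrac{1-\gamma}{1-\alpha}.
\end{equation*}
The substitution $u = c x^{1-\alpha}$ recasts the integral as an incomplete Gamma integral, giving an upper bound proportional to $(1-\gamma)^{-1/(1-\alpha)}$ times $e^{c(j+1)^{1-\alpha}}\, \Gamma(1/(1-\alpha);\, c(j+1)^{1-\alpha})$. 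The main obstacle, as I expect, is that this pointwise estimate is sharp only for the worst choice of $j$: the exponential prefactor is large for big $j$, while $\eta_j = j^{-\alpha}$ is large for small $j$, and one must show the product is uniformly controlled. I would handle this by splitting into two regimes, bounding $j^{-\alpha} e^{c(j+1)^{1-\alpha}}$ by a constant on $j \le (1-\gamma)^{-1/(1-\alpha)}$ (where the exponential is $O(1)$) and using the tail estimate for $x^{\alpha/(1-\alpha)}e^{-x}$ at $x = c(j+1)^{1-\alpha}$ in the complementary regime, where $\eta_j$ is small. A careful bookkeeping of the Gamma tail yields the factor $\sqrt{1-\alpha}$ in the denominator (via Stirling applied to $\Gamma(1/(1-\alpha))$) and the $2^{1/(1-\alpha)}$ in the numerator (absorbing constants from splitting $(t+1)^{1-\alpha}$ versus $j^{1-\alpha}$), matching the stated bound.
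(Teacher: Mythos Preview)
Your reduction to the scalar bound $\|\A_j^T\|_\infty \le \eta_j \sum_{t=j}^T \prod_{i=j+1}^t (1-\eeta_i)$ is exactly what the paper does, and the overall strategy for (S2)---integral comparison followed by a Gamma-type change of variables---is the same as well. The differences are in the execution, and they matter for getting the constants as stated.

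For (S1), the statement has \emph{no} free constant $c$ in front of $\frac{\ln(1+(1-\gamma)T)}{1-\gamma}$, so your ``harmless multiplicative factor $\le 2$'' cannot actually be absorbed. The paper avoids this by using the exact telescoping identity $1-\eeta_i = \frac{1+(1-\gamma)(i-1)}{1+(1-\gamma)i}$, so that $\prod_{i=j+1}^t(1-\eeta_i) = \frac{1+(1-\gamma)j}{1+(1-\gamma)t}$ exactly; together with $\eta_j(1+(1-\gamma)j)=1$ this gives $\sum_{t=j}^T \frac{1}{1+(1-\gamma)t} \le \frac{1}{1-\gamma}\ln\frac{1+(1-\gamma)T}{1+(1-\gamma)(j-1)}$ with no slack.

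For (S2), your two-regime split can be made to work but is more delicate than you suggest, and your sketch has a soft spot: at the boundary $j \approx (1-\gamma)^{-1/(1-\alpha)}$ the exponent $c(j+1)^{1-\alpha} \approx \frac{1}{1-\alpha}$, so the exponential is $e^{1/(1-\alpha)}$, not $O(1)$ uniformly in $\alpha$; you would need to track this factor carefully to recover the stated $\alpha$-dependence. The paper sidesteps the regime analysis altogether: after the same change of variables $y = \frac{1-\gamma}{1-\alpha}(t^{1-\alpha}-j^{1-\alpha})$, it applies the elementary inequality $(a+b)^p \le \max\{2^{p-1},1\}(a^p+b^p)$ to the Jacobian factor $\bigl(\tfrac{1-\alpha}{1-\gamma}y + j^{1-\alpha}\bigr)^{\alpha/(1-\alpha)}$, which cleanly separates a $\Gamma\bigl(\tfrac{1}{1-\alpha}\bigr)$ term (yielding the $(1-\gamma)^{-1/(1-\alpha)}/\sqrt{1-\alpha}$ factor via a Stirling-type bound) and a $j^{\alpha}$ term that cancels $\eta_j = j^{-\alpha}$ exactly. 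This handles all $j$ uniformly in one stroke and is where the $2^{1/(1-\alpha)}$ in the statement comes from.
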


The second is the uniform approximation.
The proof is deferred in Appendix~\ref{proof:G-poly}.
We observe that as $T$ grows, $\frac{1}{T}\sum_{j=1}^T\|\A_j^T - \G^{-1} \I\|_{\infty}^2$ vanishes under (S2), but is only guaranteed to be bounded for (S1).
This is not contradictory with Lemma~\ref{lem:general-step-size} since (S1) doesn't satisfy Assumption~\ref{asmp:lr}.

\begin{lem}[Uniform approximation]
	\label{lem:G-poly}
	There exists some constant $c>0$ such that
	\[
	\sqrt{\frac{1}{T}\sum_{j=1}^T\|\A_j^T - \G^{-1}\|_{\infty}^2} \le
	\left\{ \begin{array}{ll}
	\frac{5}{1-\gamma}& \mathrm{(S1)}\\ \frac{c\alpha2^{\frac{1}{1-\alpha}}}{\sqrt{T}} \left[ \frac{1}{(1-\alpha)^\frac{3}{2}} \frac{1}{(1-\gamma)^{1+\frac{1}{1-\alpha}}} 
	+ 
	\frac{1}{(1-\gamma)^2} \sqrt{\sum_{j=1}^T \frac{1}{j^{2(1-\alpha)}}} \right]
	+\frac{1}{(1-\gamma)} \sqrt{\frac{1}{T \eeta_{T}}}. & \mathrm{(S2)}
	\end{array} \right.
	\]
\end{lem}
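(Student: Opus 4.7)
The plan is to establish one exact algebraic decomposition of $\A_j^T - \G^{-1}$ and then bound its $\ell_{\infty}$ norm averaged over $j$ separately in the two step-size regimes. Let $M_t^j := \prod_{i=j+1}^t \A_i$ with $M_j^j = \I$. From $M_t^j = (\I - \eta_t \G) M_{t-1}^j$, a telescoping argument gives $\G \sum_{s=j+1}^T \eta_s M_{s-1}^j = \I - M_T^j$, hence $\G^{-1} = \sum_{s=j}^{T-1} \eta_{s+1} M_s^j + \G^{-1} M_T^j$. Subtracting this from $\A_j^T = \eta_j \sum_{t=j}^T M_t^j$ yields the key identity
\[
\A_j^T - \G^{-1} = (\eta_j \I - \G^{-1})\, M_T^j \;+\; \sum_{t=j}^{T-1}(\eta_j - \eta_{t+1})\, M_t^j.
\]
Since $\A_i = (1-\eta_i)\I + \eta_i \gamma \PP^{\pi^*}$ has nonnegative entries with constant row sum $1-\eeta_i$ (where $\eeta_i := (1-\gamma)\eta_i$), we get $\|\A_i\|_{\infty} = 1-\eeta_i$, hence $\|M_t^j\|_{\infty} \le \prod_{i=j+1}^t (1-\eeta_i)$, while $\|\eta_j \I - \G^{-1}\|_{\infty} \le \eta_j + (1-\gamma)^{-1} \le 2/(1-\gamma)$.

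For the linearly rescaled step size (S1), the identity $1-\eeta_i = \eta_i/\eta_{i-1}$ collapses the product telescopically to $\|M_t^j\|_{\infty} \le \eta_t/\eta_j$, and the step-size difference factorizes exactly as $\eta_j - \eta_{t+1} = (1-\gamma)(t+1-j)\eta_j \eta_{t+1}$. Substituting and using $\eta_t \le ((1-\gamma)t)^{-1}$ for $t \ge 1$, I would bound the body piece by $(1-\gamma)^{-1}\bigl(\ln(T/j)+1\bigr)$ and the boundary piece by $2(1-\gamma)^{-1}\eta_T/\eta_j$. Squaring, summing over $j \in [T]$ using $\sum_{j=1}^T(\ln(T/j)+1)^2 = O(T)$ (via $\int_0^1 (\ln 1/u)^2\, du = 2$) and $\sum_{j=1}^T(\eta_T/\eta_j)^2 = O(T)$, and taking a square root delivers the advertised $5/(1-\gamma)$ constant up to absolute factors.

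For the polynomial step size (S2), I would use the coarser exponential bound $\|M_t^j\|_{\infty} \le \exp\bigl(-(1-\gamma)\sum_{i=j+1}^t i^{-\alpha}\bigr)$, together with $\sum_{i=j+1}^t i^{-\alpha} \ge (t^{1-\alpha}-j^{1-\alpha})/(1-\alpha)$ and the mean-value estimate $\eta_j - \eta_{t+1} \le \min\bigl(\eta_j,\ \alpha j^{-\alpha-1}(t+1-j)\bigr)$. The boundary contribution $(\eta_j\I - \G^{-1})M_T^j$ produces the residual $(1-\gamma)^{-1}\sqrt{1/(T\eeta_T)}$ after squaring and summing $\|M_T^j\|_{\infty}^2$ over $j$ through an integration-by-parts on $\exp(-c(T^{1-\alpha}-j^{1-\alpha}))$. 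For the body piece, I split the sum over $t$ according to whether $t-j \lesssim j$ or $t-j \gtrsim j$ and integrate the exponential against the polynomial weight via the change of variables $u = (1-\gamma)(t^{1-\alpha}-j^{1-\alpha})/(1-\alpha)$; the resulting gamma-function-like factors are controlled by $2^{1/(1-\alpha)}/(1-\alpha)^{3/2}$. Squaring and summing over $j$ assembles the two bracketed terms of the lemma, with the $j^{-\alpha-1}$-style local contributions aggregating into $\sqrt{\sum_{j=1}^T j^{-2(1-\alpha)}}$.

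The main obstacle is the (S2) calculation: extracting the precise $\alpha$-dependent constants $\alpha\, 2^{1/(1-\alpha)}/(1-\alpha)^{3/2}$ requires careful case-splitting on $t-j$ and tight control of gamma-function-type tail integrals that degenerate as $\alpha \uparrow 1$. In contrast, (S1) is clean because $\prod_{i=j+1}^t(1-\eeta_i)$ collapses to $\eta_t/\eta_j$ by telescoping and the step-size difference has a closed-form factorization, so only elementary logarithmic sums appear.
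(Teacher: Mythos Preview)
Your decomposition is exactly the one the paper derives: writing $\A_j^T-\G^{-1}=\M_{T,j}^{(1)}+\M_{T,j}^{(2)}$ with body $\M_{T,j}^{(1)}=\sum_{t=j+1}^T(\eta_j-\eta_t)\prod_{i=j+1}^{t-1}\A_i$ (your sum after a reindex) and boundary $\M_{T,j}^{(2)}=-\G^{-1}\prod_{t=j}^T\A_t$ (your $(\eta_j\I-\G^{-1})M_T^j$, via $-\G^{-1}\A_j=\eta_j\I-\G^{-1}$). So the backbone matches and your plan is correct.

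The bookkeeping diverges in three places, though. For (S1) the paper does \emph{not} exploit the decomposition at all: it applies the crude bound $\|\A_j^T-\G^{-1}\|_\infty^2\le 2(\|\A_j^T\|_\infty^2+\|\G^{-1}\|_\infty^2)$, inserts the logarithmic estimate $\|\A_j^T\|_\infty\le(1-\gamma)^{-1}\ln\frac{1+(1-\gamma)T}{1+(1-\gamma)(j-1)}$ from Lemma~\ref{lem:AjT-bound}, and uses $\int_0^1\ln^2 x\,dx=2$ to obtain the constant $25$ directly; your telescoping route is tighter but longer, and the statement only requires boundedness. For the (S2) body term the paper avoids your mean-value-plus-case-splitting: it writes $\eta_j-\eta_t\le\sum_{k=j}^{t-1}\frac{\alpha}{k}\eta_k\le\frac{\alpha}{(1-\gamma)j}\,\tim_{j,t-1}$ with $\tim_{j,t}:=\sum_{i=j}^t\eeta_i$, which collapses the whole body sum into $\frac{e\alpha}{(1-\gamma)j}\sum_t\tim_{j,t}e^{-\tim_{j,t}}$; a short auxiliary lemma bounding $\sum_t\tim_{j,t}e^{-\tim_{j,t}}$ by a gamma-type integral then produces both the $2^{1/(1-\alpha)}/(1-\alpha)^{3/2}$ constant and the $(j-1)^\alpha/j\sim j^{-(1-\alpha)}$ factor in one stroke, with no split on $t-j$. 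For the (S2) boundary the paper takes the cruder $\prod_{t=j}^T(1-\eeta_t)\le(1-\eeta_T)^{T-j+1}$ and sums a geometric series, rather than your integration-by-parts on $\exp(-c(T^{1-\alpha}-j^{1-\alpha}))$. All three of your routes work, but the paper's $\tim$-trick is worth knowing: it delivers exactly the $\alpha$-dependent constants you flagged as the main obstacle without any case analysis.
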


\subsubsection{Establishing the Functional CLT}
\label{proof:FCLT-Delta1}
\paragraph{Uniform negligibility of $\Bpsi_0$.}
It is clear that $\Bpsi_0$ is a deterministic function.
Using the uniform boundedness of $\A_j^T (T \ge j \ge 0)$ in Lemma~\ref{lem:general-step-size}, we have
\begin{align*}
\|\Bpsi_0\|_{\sup}&=
\sup_{r \in [0, 1]}\|\Bpsi_0(r)\|_{\infty}
= \frac{1}{\eta_0\sqrt{T}} \sup_{r \in [0, 1]} \| (\A_0^{\Tr}-\eta_0 \I) \DDelta_{0}\|_{\infty} \\
&\le \frac{1}{\eta_0\sqrt{T}} \left(\sup_{0 \le t \le T}\| \A_0^{t}\|_{\infty} + \eta_0\right)\|\DDelta_{0}\|_{\infty} \\
&\le \frac{1}{\eta_0\sqrt{T}} \frac{2C_0}{1-\gamma} \to 0 
\ \text{as} \ T \to \infty,
\end{align*}
where we use $\eta_0 \le 1 \le C_0$ and $\|\DDelta_0\|_{\infty} \le \frac{1}{1-\gamma}$.

\paragraph{Partial-sum asymptotic behavior of $\Bpsi_1$.}
Recall that $\Z_j = (\rr_j - \rr)  + \gamma ( \PP_j - \PP) \V^*$ is the noise inherent in the
empirical Bellman operator at iteration $j$.
Since at each iteration the simulator generates rewards $\rr_j$ and produces the empirical transition $\PP_j$ in an i.i.d. fashion, $\TM_1(r)$ is the scaled partial sum of $\Tr$ independent copies of the random vector $\Z_j$ which has zero mean and finite variance denoted by $\Var(\Z_j) =  \Var(\rr_j + \gamma\PP_j \V^*) = \EB \Z_j \Z_j^\top$.
Additionally, it is clear that $\|\Z_j\|_{\infty} \le (1-\gamma)^{-1}$  is uniformly bounded and thus its moments of any order is uniformly bounded.
By Theorem 4.2 in~\citep{hall2014martingale} (or Theorem 2.2 in~\citep{jirak2017weak}), we establish the following FCLT for the partial sums of independent random vectors.

\begin{lem}
	\label{lem:T1}
	For any $r \in [0, 1]$,
	\[
	\Bpsi_1(\cdot)
	= \frac{1}{\sqrt{T}}  \sum_{j=1}^{\lfloor T\cdot\rfloor} \G^{-1}\Z_j
	\overset{w}{\to} \VQ^{1/2}\B_D(\cdot),
	\]
	where $\B_D$ is the $D$-dimensional standard Brownian motion and  the variance matrix $\VQ$ is
	\[
	\VQ  = 
	\G^{-1}\Var(\Z_j)\G^{-\top} = 
	(\I-\gamma \PP^{\pi^*})^{-1}\Var(\Z_j)(\I-\gamma \PP^{\pi^*})^{-\top}.
	\]
\end{lem}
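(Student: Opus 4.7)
The plan is to recognize Lemma~\ref{lem:T1} as an instance of the multivariate Donsker invariance principle applied to the i.i.d.\ sequence $\{\G^{-1}\Z_j\}_{j\ge 1}$. First I would verify the probabilistic structure of the summands. Because the generative model produces, at each iteration $j$, an independent copy of the reward vector $\rr_j$ and the empirical transition $\PP_j$, and because $\Z_j=(\rr_j-\rr)+\gamma(\PP_j-\PP)\V^*$ depends only on these samples, the sequence $\{\Z_j\}_{j\ge 1}$ is i.i.d.\ with $\EB\Z_j=\0$. A linear transformation then gives that $\{\G^{-1}\Z_j\}_{j\ge 1}$ is i.i.d.\ with mean zero and covariance
\begin{equation*}
\EB\bigl[\G^{-1}\Z_j (\G^{-1}\Z_j)^\top\bigr]=\G^{-1}\Var(\Z_j)\G^{-\top}=\VQ .
\end{equation*}
Moreover, since each coordinate of $\Z_j$ is almost surely bounded by $(1-\gamma)^{-1}$ and $\|\G^{-1}\|_\infty\le(1-\gamma)^{-1}$, the random vector $\G^{-1}\Z_j$ has uniformly bounded entries, and in particular all its moments are finite.

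Next I would invoke the multivariate functional CLT. Writing $\Bpsi_1(r)=\frac{1}{\sqrt T}\sum_{j=1}^{\lfloor Tr\rfloor}\G^{-1}\Z_j$, I recognize this as the standardized partial-sum process of an i.i.d.\ sequence. The cited Theorem~4.2 of~\citet{hall2014martingale} (martingale FCLT) or Theorem~2.2 of~\citet{jirak2017weak} applies provided that: (i) the conditional covariances converge to a deterministic limit in probability, and (ii) a Lindeberg-type negligibility condition holds. Condition (i) is immediate: for any $r\in[0,1]$,
\begin{equation*}
\frac{1}{T}\sum_{j=1}^{\lfloor Tr\rfloor}\EB\bigl[\G^{-1}\Z_j (\G^{-1}\Z_j)^\top\bigm|\mathcal{F}_{j-1}\bigr]=\frac{\lfloor Tr\rfloor}{T}\VQ\longrightarrow r\VQ .
\end{equation*}
Condition (ii) follows from the uniform boundedness $\|\G^{-1}\Z_j\|_\infty\le 2(1-\gamma)^{-2}$: the summands scaled by $T^{-1/2}$ vanish almost surely, so the Lindeberg condition holds trivially. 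These two facts give weak convergence of $\Bpsi_1$ in the Skorokhod space $\BDD$ to a continuous Gaussian process with independent increments and covariance $r\VQ$ at time $r$, i.e.\ to $\VQ^{1/2}\B_D(\cdot)$.

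Finally, I would note that tightness in the Skorokhod topology is not an extra obstacle here: for i.i.d.\ bounded summands, tightness of the partial-sum process is a classical consequence of Doob's maximal inequality applied coordinatewise, and the finite-dimensional distributions converge to the corresponding Gaussian ones by the ordinary multivariate CLT together with the independent increments structure. The main "obstacle" is really just bookkeeping---verifying that the cited martingale FCLT applies in the multivariate setting and handling the floor function $\lfloor Tr\rfloor$---but no delicate estimates are needed, since i.i.d.\ boundedness makes every regularity hypothesis trivial. With these ingredients in place the claim follows immediately.
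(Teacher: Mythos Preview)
Your proposal is correct and follows essentially the same approach as the paper: recognize $\{\G^{-1}\Z_j\}$ as an i.i.d.\ mean-zero sequence with covariance $\VQ$, note the moment/boundedness structure, and invoke the martingale FCLT (the paper cites the same Theorem~4.2 in \citet{hall2014martingale} and Theorem~2.2 in \citet{jirak2017weak}). Your explicit verification of the conditional-variance convergence and the Lindeberg condition adds detail that the paper leaves implicit, but the argument is the same.
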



\paragraph{Uniform negligibility of $\Bpsi_2$.}
Recall that $\Bpsi_2(r) = \frac{1}{\sqrt{T}}  \sum_{j=1}^{\Tr}  (\A_j^{T} - \G^{-1})\Z_j$.
If we define $\X_t =\frac{1}{\sqrt{T}}  \sum_{j=1}^{t}  (\A_j^{T} - \G^{-1})\Z_j$, then $\Bpsi_2(r) = \X_{\Tr}$.
Let $\FM_t = \sigma(\{ \rr_j, \PP_j \}_{0 \le j \le t})$ be the $\sigma$-field generated by all randomness before and including iteration $t$.
Then $\{ \X_t, \FM_t \}$ is a martingale since $\EB[\X_t|\FM_{t-1}] = \X_{t-1}$.
As a result $\{ \|\X_t\|_2, \FM_t \}$ is a submartingale since by conditional Jensen's inequality, we have $\EB[\|\X_t\|_2|\FM_{t-1}] \ge \|\EB[\X_t|\FM_{t-1}] \|_2 = \|\X_{t-1}\|_2$.
By Doob’s maximum inequality for submartingales (which we use to derive the following $(*)$ inequality), 
\begin{align*}
	\EB\sup_{r \in [0, 1]} \|\Bpsi_{2}(r)\|_2^2
	&=\EB \sup_{0 \le t \le T} \|\X_t\|_2^2 
	\overset{(*)}{\le} 4 \EB\|\X_T\|_2^2\\
	&= 4 \EB\|\TM_{2}(1)\|_2^2
	=4 \EB  \left\|\frac{1}{\sqrt{T}}  \sum_{j=1}^{T}  (\A_j^{T} - \G^{-1})\Z_j\right\|_2^2\\
	&= \frac{4}{T} \sum_{j=1}^T \EB \|(\A_j^{T} - \G^{-1})\Z_j\|_2^2
	\le \frac{4}{T} \sum_{j=1}^T \|\A_j^{T} - \G^{-1}\|_2^2\EB\|\Z_j\|_2^2\\
	&\le c_1 \cdot \frac{1}{T} \sum_{j=1}^T \|\A_j^{T} - \G^{-1}\|_2.
\end{align*}
Here, we change to the $\ell_2$-norm since it will facilitate the analysis. 
The last inequality follows by using a finite $c_1$ satisfying $\EB\|\Z_j\|_2^2 \sup_{T \ge j \ge 1}\|\A_j^{T} - \G^{-1}\|_2 \le c_1$.
Indeed, we can set $c_1 = (\frac{1}{1-\gamma}+\sup_{T\ge j} \|\A_j^{T}\|_2)\tr(\VQ)$ thanks to Lemma~\ref{lem:general-step-size}.
In addition, Lemma~\ref{lem:general-step-size} implies $ \frac{1}{T} \sum_{j=1}^T \|\A_j^{T} - \G^{-1}\|_2 \to 0$ as $T$ goes to infinity.
As a result, $\EB\|\Bpsi_{2}\|_{\sup}= \EB\sup_{r \in [0, 1]} \|\Bpsi_{2}(r)\|_{\infty} \le \EB\sup_{r \in [0, 1]} \|\Bpsi_{2}(r)\|_2\le \sqrt{\EB\sup_{r \in [0, 1]} \|\Bpsi_{2}(r)\|_2^2}= o(1)$.

\paragraph{Uniform negligibility of $\Bpsi_3$.}
Recall that $\Bpsi_3(r)= \frac{\gamma}{\sqrt{T}}  \sum_{j=1}^{\Tr} \A_j^{T}  ( \PP_j - \PP) (\V_{j-1} - \V^*)$.
By a similar argument in the analysis of $\Bpsi_2$, we have $ \EB\sup_{r \in [0, 1]} \|\Bpsi_{3}(r)\|_2^2 \le 4 \EB\|\Bpsi_{3}(1)\|_2^2$ by Doob’s maximum inequality.
Therefore,
\begin{align*}
	\EB\sup_{r \in [0, 1]} \|\Bpsi_{3}(r)\|_2^2
	&\le 4 \EB\|\Bpsi_{3}(1)\|_2^2 \overset{(a)}{=}\frac{4}{T} \sum_{j=1}^T \EB \|\A_j^{T}  ( \PP_j - \PP) (\V_{j-1} - \V^*)\|_2^2\\
	&\le \frac{4}{T} \sum_{j=1}^T \|\A_j^{T}\|_2^2  \EB\| \PP_j - \PP\|_2^2 \|\V_{j-1} - \V^*\|_2^2\\
	&\overset{(b)}{\le} c_2 \cdot \frac{1}{T} \sum_{j=1}^T \EB\|\V_{j-1} - \V^*\|_2^2,
\end{align*}
where $(a)$ follows since all cross terms have zero mean due to $\EB[( \PP_j - \PP) (\V_{j-1} - \V^*)|\FM_{j-1}] = 0$, and $(b)$ follows by setting $c_2=16D(\sup_{T\ge j}\|\A_j^{T}\|_2)^2$ because of the uniform boundedness of $\|\A_j^{T}\|_{\infty}$ from Lemma~\ref{lem:general-step-size} and $\| \PP_j - \PP\|_2^2 \le D \| \PP_j - \PP\|_{\infty}^2 = 4D$.
By Theorem~\ref{thm:Linfty-pw2}, we know $\frac{1}{T} \sum_{j=1}^T \EB\|\V_{j-1} - \V^*\|_2^2 \to 0$ under the general step size when $T \to \infty$.
As a result, $\EB\|\Bpsi_{3}(r)\|_{\sup}= \EB\sup_{r \in [0, 1]} \|\Bpsi_{3}(r)\|_{\infty} \le \EB\sup_{r \in [0, 1]} \|\Bpsi_{3}(r)\|_2\le \sqrt{\EB\sup_{r \in [0, 1]} \|\Bpsi_{3}(r)\|_2^2}= o(1)$.

\paragraph{Uniform negligibility of $\Bpsi_4$.}
Recall that $\Bpsi_{4}(r) =  \frac{1}{\sqrt{T}}  \sum_{j=1}^{\Tr}(\A_j^{\Tr} - \A_j^T)\varepsi_j$ where $\varepsi_j= \Z_j +  \gamma ( \PP_j - \PP) (\V_{j-1} - \V^*) $.
It is clear that we have $\sup_{j \ge 0}\EB\|\Q_t-\Q^*\|^4 < \infty$ as a result of $\sup_{j \ge 0}\EB \EB \|\Q_j\_{\infty}^4< \infty$ in Lemma~\ref{lem:moment}.
Notice that the coefficient $\A_j^{\Tr} - \A_j^T$ changes as $r$ varies. 
The analysis of $\Bpsi_{4}$ should be more careful and subtle.

\begin{lem}[Moment bounds]
	\label{lem:moment}
	Under Assumption~\ref{asmp:reward}, it follows that
	\[
	\sup_{t \ge 0} \EB \|\Q_t\|_{\infty}^4 < \infty.
	\]
\end{lem}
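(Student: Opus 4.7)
\textbf{Proof plan for Lemma~\ref{lem:moment}.} My approach is to extract a pointwise contraction inequality for $\|\Q_t\|_\infty$, reformulate it as a convex combination with the correct scaling by $(1-\gamma)^{-1}$, apply convexity of $x\mapsto x^4$, and then close the recursion by induction.

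First I will use the Q-learning update $\Q_t=(1-\eta_t)\Q_{t-1}+\eta_t(\rr_t+\gamma\PP_t \V_{t-1})$. Because every row of $\PP_t$ is a one-hot vector, $\|\PP_t\V_{t-1}\|_\infty\le\|\V_{t-1}\|_\infty\le\|\Q_{t-1}\|_\infty$. Combined with $\eta_t\in[0,1]$ (Assumption~\ref{asmp:lr}) and the triangle inequality, this yields the almost sure bound
\[
\|\Q_t\|_\infty \;\le\; \bigl(1-(1-\gamma)\eta_t\bigr)\|\Q_{t-1}\|_\infty + \eta_t\|\rr_t\|_\infty.
\]
Setting $\rho_t:=1-(1-\gamma)\eta_t\in[\gamma,1]$ and rewriting as a convex combination,
\[
\|\Q_t\|_\infty \;\le\; \rho_t\|\Q_{t-1}\|_\infty + (1-\rho_t)\frac{\|\rr_t\|_\infty}{1-\gamma}.
\]

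Next I apply convexity of $x\mapsto x^4$ on $[0,\infty)$ to raise both sides to the fourth power, obtaining
\[
\|\Q_t\|_\infty^4 \;\le\; \rho_t\|\Q_{t-1}\|_\infty^4 + (1-\rho_t)\frac{\|\rr_t\|_\infty^4}{(1-\gamma)^4}.
\]
Taking expectations and using Assumption~\ref{asmp:reward} together with $\|\rr_t\|_\infty^4\le\sum_{(s,a)}r_t(s,a)^4$ gives $\EB\|\rr_t\|_\infty^4\le M$ where $M:=\sum_{(s,a)}\EB R(s,a)^4<\infty$. Writing $a_t:=\EB\|\Q_t\|_\infty^4$, the deterministic recursion $a_t\le\rho_t a_{t-1}+(1-\rho_t)\frac{M}{(1-\gamma)^4}$ follows. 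Setting $C:=\max\!\bigl(a_0,\;M/(1-\gamma)^4\bigr)$, an immediate induction shows $a_t\le C$ for every $t\ge0$, which is the desired uniform bound.

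There is essentially no hard step here: the argument is a standard moment bound for a linear stochastic recursion with a contractive factor $\rho_t\ge\gamma$. The only subtlety is the second step, where one must rescale the noise by $(1-\gamma)^{-1}$ so that Jensen's inequality can be applied to a genuine convex combination; without this rescaling a direct $(x+y)^4$ expansion would leave cross terms requiring moments of $\|\Q_{t-1}\|_\infty$ of lower order, forcing a more delicate induction. With the rescaled form, the induction closes in one line.
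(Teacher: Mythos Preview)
Your argument is correct. The one-step contraction $\|\Q_t\|_\infty\le\rho_t\|\Q_{t-1}\|_\infty+(1-\rho_t)\,\|\rr_t\|_\infty/(1-\gamma)$ with $\rho_t=1-(1-\gamma)\eta_t\in[0,1]$ is valid under Assumption~\ref{asmp:lr}, Jensen's inequality for $x\mapsto x^4$ applies to the convex combination, and the induction on $a_t=\EB\|\Q_t\|_\infty^4$ closes with $C=\max\bigl(a_0,\,M/(1-\gamma)^4\bigr)$.

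Your route is genuinely more direct than the paper's. The paper does not work with $\|\Q_t\|_\infty$ itself but instead invokes the Wainwright sandwich decomposition (Lemma~\ref{lem:infty}), which bounds $\|\DDelta_t\|_\infty$ by $a_t+b_t+\|\N_t\|_\infty$ where $a_t,b_t$ are scalar recursions and $\N_t$ is an autoregressive noise process driven by the Bellman noise $\Z_t$. It then bounds the fourth moment of each of the three pieces separately: $a_t$ is deterministic and decays; $\EB\|\N_t\|_\infty^4$ is handled by Jensen on the recursion $\N_t=(1-\eta_t)\N_{t-1}+\eta_t\Z_t$; and $b_T$ is rewritten as a probability-weighted average $\frac{\gamma}{1-\gamma}\sum_t\eeta_{(t,T)}\|\N_{t-1}\|_\infty$ so that Jensen applies again. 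The paper's detour pays off elsewhere in the manuscript (the same decomposition underlies the nonasymptotic bounds in Theorem~\ref{thm:Linfty-pw2}), but for the bare fourth-moment statement your single-line recursion is cleaner and avoids introducing the auxiliary sequences altogether.
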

\begin{proof}[Proof of Lemma~\ref{lem:moment}]
By Lemma~\ref{lem:infty}, $\|\De_t\|_{\infty} \le a_t + b_t + \|\N_t\|_{\infty}$.
It implies that $\EB\|\De_t\|_{\infty}^4 \le 3^3 \EB\left(a_t^4 + b_t^4 + \|\N_t\|_{\infty}^4 \right)$.
Notice that
	\begin{gather*}
		a_t = (1-\eta_t(1-\gamma))a_{t-1}\\
		b_t = (1-\eta_t(1-\gamma))b_{t-1} + \eta_t \gamma \|\N_{t-1}\|_{\infty}\\
		\N_t = (1-\eta_t) \N_{t-1} + \eta_t \Z_t.
	\end{gather*}
First, it is easy to find that $\sup_{t \ge 0} a_t < \infty$ since it is deterministic and decays exponentially fast. 
Second, we have $\sup_{t \ge 0} \|\N_t\|_{\infty}^4 < \infty$.
This is because we have $\EB \|\N_t\|_{\infty}^4 \le (1-\eta_t) \EB \|\N_{t-1}\|_{\infty}^4 + \eta_t \EB\|\Z_t\|_{\infty}^4$ from Jensen's inequality.
It is easy to show $\sup_{t \ge 0} \|\N_t\|_{\infty}^4 < \sup_{t \ge 0} \EB\|\Z_t\|_{\infty}^4 < \infty$ by this inequality and induction.
Finally, iterating the expression of $b_t$, we have $b_T = \gamma\sum_{t=1}^T \prod_{j=t+1}^T(1-(1-\gamma)\eta_j)\eta_t \|\N_{t-1}\|_{\infty} = \frac{\gamma}{1-\gamma} \sum_{t=1}^T \eeta_{(t, T)} \|\N_{t-1}\|_{\infty}$ with $\eeta_{(t, T)}$ a probability defined on $[T]$ in~\eqref{eq:eeta-tT}.
The last equation implies $b_T$ is a probability weighted sum of $\N_t (t \in [T])$.
Hence, by Jensen's inequality, we know  $\sup_{t \ge 0} \EB b_t^4 < \sup_{t \ge 0} \EB\|\Z_t\|_{\infty}^4 < \infty$.
\end{proof}

Recall $\FM_t = \sigma(\{ \rr_j, \PP_j \}_{0 \le j \le t})$ is the $\sigma$-field generated by all randomness before and including iteration $t$.
$\{ \varepsi_t, \FM_t \}$ is a martingale difference since $\EB[\varepsi_t|\FM_{t-1}] = \0$.
Furthermore, $\varepsi_t$ has finite moments of any order since it is almost surely bounded $\|\varepsi_t\|_{\infty} = \OM((1-\gamma)^{-1})$.
On the other hand, by definition~\eqref{eq:A_jT}, it follows that for any $0 \le k \le T$,
\begin{align*}
\sum_{j=1}^{k}(\A_j^T- \A_j^{k} )\varepsi_j
&= \sum_{j=1}^{k} \sum_{t=k+1}^T \left(\prod_{i=j+1}^t \A_i\right)\eta_j \varepsi_j
= \sum_{t=k+1}^{T} \sum_{j=1}^k \left(\prod_{i=j+1}^t \A_i\right)\eta_j \varepsi_j\\
&=\sum_{t=k+1}^{T}  \left(\prod_{i=k+1}^t \A_i\right)\sum_{j=1}^k  \left(\prod_{i=j+1}^k \A_i\right)
\eta_j \varepsi_j\\
&=\frac{1}{\eta_{k+1}}\A_{k+1}^T\A_{k+1}\sum_{j=1}^k  \left(\prod_{i=j+1}^k \A_i\right)
\eta_j \varepsi_j
\end{align*}
On one hand, $\|\A_{k+1}^T\A_{k+1}\|_{2} \le c_3$ is uniformly bounded with $c_3= (\sup_{T\ge j}\|\A_j^{T}\|_2)(1+\|\G\|_2)$ for any $T \ge k+1$ from Lemma~\ref{lem:general-step-size}.
On the other hand, we define an auxiliary sequence $\{\Y_k \}_{k \ge 1}$ as following: $\Y_1 = \0$ and $\Y_{k+1} = \A_k \Y_k + \eta_k \varepsi_k$ for any $k \ge 1$.
One can check that $\Y_{k+1} = \sum_{j=1}^k  \left(\prod_{i=j+1}^k \A_i\right)
\eta_j \varepsi_j $ where we use the convention $ \prod_{i=k+1}^k\A_i = \I$ for any $k \ge 0$.
These results imply we can apply Lemma~\ref{lem:ignore-error}.
Putting these pieces together, we have that
\begin{align*}
\|\Bpsi_4\|_{\sup} 
&\le \sup_{r \in [0, 1]} \|\Bpsi_{4}(r)\|_2\le  c_3  \sup_{0 \le k \le T}  \left\| \frac{1}{\sqrt{T}\eta_{k+1}}\sum_{j=1}^k  \left(\prod_{i=j+1}^k \A_i\right)
\eta_j \varepsi_j \right\|_2\\
&= c_3 \sup_{0 \le k \le T} \frac{1}{\sqrt{T}} \frac{\|\Y_{k+1}\|_2}{\eta_{k+1}} \overset{(*)}{=} o_{\PB}(1),
\end{align*}
where $(*)$ follows from Lemma~\ref{lem:ignore-error}.

\paragraph{Uniform negligibility of $\Bpsi_5$.}
In the following, we will prove $\|\Bpsi_5\|_{\sup}= o_{\PB}(1)$ by showing $\EB \|\Bpsi_5\|_{\sup} = o(1)$.
It is worth mentioning that $\Bpsi_5$ arises purely due to the non-stationary nature of Q-learning.
If we consider a stationary update process, e.g., policy evaluation~\citep{mou2020linear,mou2020optimal,khamaru2021instance}, $\pi_t$ would remain the same all the time and $\Bpsi_5$ would disappear in the case.
Notice that $\Bpsi_5(r) = \frac{\gamma}{\sqrt{T}}  \sum_{j=1}^{\Tr}\A_j^{\Tr}( \PP^{\pi_{j-1}} -\PP^{\pi^*}) \DDelta_{j-1}$ is a sum of correlated random variables (which are even not mean-zero).
We need a high-order residual condition Assumption~\ref{asmp:gap} to bound $\EB \|\Bpsi_5\|_{\sup} $.
With such a Lipschitz condition, Lemma~\ref{lem:T4} shows $\EB \|\Bpsi_5\|_{\sup} $ is dominated by $\frac{1}{\sqrt{T}}   \sum_{j=1}^T\EB
\left\| \DDelta_{j-1}\right\|_{\infty}^2$, which is $o(1)$ for the general step size as suggested by Theorem~\ref{thm:general-Linfty-pw2}.
The proof of Lemma~\ref{lem:T4} is in Appendix~\ref{proof:T4}.

\begin{lem}
	\label{lem:T4}
	It follows that 
	\[
	\EB\|\Bpsi_5\|_{\sup} = 
	\EB \sup_{r \in [0, 1]}\|\Bpsi_5(r)\|_{\infty} \le \gamma LC_0\cdot \frac{1}{\sqrt{T}}   \sum_{j=1}^T\EB
	\left\| \DDelta_{j-1}\right\|_{\infty}^2.
	\]
\end{lem}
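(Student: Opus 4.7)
The plan is to peel $\Bpsi_5$ apart term-by-term: push the $\ell_\infty$ norm inside the sum, replace each $\|\A_j^{\Tr}\|_\infty$ by the uniform constant $C_0$ from Lemma~\ref{lem:general-step-size} (or Lemma~\ref{lem:AjT-bound}), and then use the Lipschitz-type bound in Assumption~\ref{asmp:gap} to replace the ``policy-switching'' factor by a clean quadratic in $\|\De_{j-1}\|_\infty$. Since the greedy policy $\pi_{j-1}$ is defined by $\pi_{j-1}(s)\in\arg\max_a Q_{j-1}(s,a)$, Assumption~\ref{asmp:gap} with $\Q=\Q_{j-1}$ gives
\[
\|(\PP^{\pi_{j-1}}-\PP^{\pi^*})\DDelta_{j-1}\|_\infty \;\le\; L\,\|\DDelta_{j-1}\|_\infty^2.
\]

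Concretely, for every fixed $r\in[0,1]$ I would write
\[
\|\Bpsi_5(r)\|_\infty
\;\le\; \frac{\gamma}{\sqrt{T}}\sum_{j=1}^{\Tr} \|\A_j^{\Tr}\|_\infty \cdot \|(\PP^{\pi_{j-1}}-\PP^{\pi^*})\DDelta_{j-1}\|_\infty,
\]
and then apply the two bounds above termwise to obtain
\[
\|\Bpsi_5(r)\|_\infty \;\le\; \frac{\gamma L C_0}{\sqrt{T}} \sum_{j=1}^{\Tr} \|\DDelta_{j-1}\|_\infty^2.
\]
Because every summand on the right-hand side is non-negative, this upper bound is non-decreasing in $\Tr$, so taking the supremum over $r\in[0,1]$ just amounts to replacing $\Tr$ by $T$. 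Taking expectations then yields
\[
\EB\|\Bpsi_5\|_{\sup} \;\le\; \gamma LC_0\cdot\frac{1}{\sqrt{T}}\sum_{j=1}^T \EB\|\DDelta_{j-1}\|_\infty^2,
\]
which is exactly the statement of the lemma.

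There is no real obstacle to overcome here: the monotonicity argument handles the $\sup_{r\in[0,1]}$, the uniform $C_0$-bound on $\|\A_j^{\Tr}\|_\infty$ is quoted from Lemma~\ref{lem:general-step-size}, and the quadratic bound on $\|(\PP^{\pi_{j-1}}-\PP^{\pi^*})\DDelta_{j-1}\|_\infty$ is the precise content of Assumption~\ref{asmp:gap}. The point of isolating this lemma is conceptual rather than technical: it reduces control of the non-stationary, data-dependent error $\Bpsi_5$ (which has no martingale structure) to control of the averaged squared $\ell_\infty$ error $\frac{1}{\sqrt{T}}\sum_{j=1}^T \EB\|\DDelta_{j-1}\|_\infty^2$, which is then shown to be $o(1)$ under the general step size via Theorem~\ref{thm:general-Linfty-pw2} invoked earlier in the proof sketch.
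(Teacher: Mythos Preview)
Your proposal is correct and follows essentially the same approach as the paper's proof: push the $\ell_\infty$-norm inside the sum, bound $\|\A_j^{\Tr}\|_\infty$ uniformly by $C_0$ via Lemma~\ref{lem:general-step-size}, invoke the Lipschitz condition (Assumption~\ref{asmp:gap}) to get $\|(\PP^{\pi_{j-1}}-\PP^{\pi^*})\DDelta_{j-1}\|_\infty\le L\|\DDelta_{j-1}\|_\infty^2$, and handle the $\sup_{r}$ by monotonicity of the resulting nonnegative partial sums. The only cosmetic difference is that the paper keeps the $\sup_{r}$ on $\|\A_j^{\Tr}\|_\infty$ after extending the outer sum to $T$, whereas you bound that factor first; the two orderings are equivalent.
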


\paragraph{Putting the pieces together.}
From~\eqref{eq:Delta1-begin-fclt}, $\Bphi_T^1  = \sum_{i=0}^5 \Bpsi_i$.
We have shown $\Bpsi_1 \overset{w}{\to} \VQ^{1/2} \B_D$ in the sense of $(\BDD, d_0)$ and $\|\Bpsi_i\|_{\sup} = o_\PB(1)$ for $i \neq 1$.
Using $\|\Bphi_T^1-\Bpsi_1\|_{\sup} \le \sum_{i \neq 1} \|\Bpsi_i\|_{\sup}$, we know that $\|\Bphi_T^1-\Bpsi_1 \|_{\sup} = o_{\PB}(1)$.
Proposition~\ref{prop:slutsky} implies $\Bphi_T^1 \overset{w}{\to} \VQ^{1/2} \B_D$.
We then establish the FCLT for $\Bphi_T^1(r)$.

\subsection{Functional CLT for \texorpdfstring{$ \Bphi_T^2$}{Phi-T-2}}
\label{proof:fclt-Delta2}
We can repeat the above analysis for $\Bphi_T^2$.
We rewrite~\eqref{eq:Delta2} as
\begin{equation}
\label{eq:Delta2-new}
\DDelta_t^2
= \A_t \DDelta_{t-1}^2+ \eta_t\left( \Z_t + \gamma \D_{t-1}^2\right),
\end{equation}
where $\A_t = \I - \eta_t (\I - \gamma \PP^{\pi^*})$ and $\Z_t = (\rr_t-\rr) + \gamma( \PP_t - \PP) \V^*$ are the same as those defined in~\eqref{eq:Delta1-new} except that $\D_{t-1}^1$ (defined in~\eqref{eq:D1}) is replaced by 
\begin{equation}
\label{eq:D2}
\D_{t-1}^2 = ( \PP_t - \PP) (\V_{t-1} - \V^*).
\end{equation}
Since $\D_{t-1}^2$ is much simpler than $\D_{t-1}^1$, the analysis for $\Bphi_T^2(r)$ should be easier than $\Bphi_T^1(r)$.
Using the notation $\A_j^T$ (see\eqref{eq:A_jT}), we decompose $\Bphi_T^2(r)$ into five terms:
\begin{align}
\label{eq:Delta2-begin-fclt}
\Bphi_T^2(r) &= \frac{1}{\sqrt{T}}  \sum_{t=1}^{\Tr} \DDelta_t^2 
=\frac{1}{\eta_0\sqrt{T}}  (\A_0^{\Tr}-\eta_0 \I) \DDelta_{0}
+\frac{1}{\sqrt{T}}  \sum_{j=1}^{[Tr]}\A_j^{\Tr}\left(\Z_j+\gamma \D_{j-1}^2\right)  \nonumber\\
&=\frac{1}{\eta_0\sqrt{T}}   (\A_0^{\Tr}-\eta_0 \I)  \DDelta_{0}
+\frac{1}{\sqrt{T}}  \sum_{j=1}^{\Tr}\G^{-1}    \Z_j  
+ \frac{1}{\sqrt{T}}  \sum_{j=1}^{\Tr}  (\A_j^{T} - \G^{-1})  \Z_j   \nonumber \\
& \quad \quad
+ \frac{\gamma}{\sqrt{T}}  \sum_{j=1}^{\Tr} \A_j^{T}  ( \PP_j - \PP) (\V_{j-1} - \V^*)  \nonumber \\
& \quad \quad+  \frac{1}{\sqrt{T}}  \sum_{j=1}^{\Tr}(\A_j^{\Tr} - \A_j^T)\left[  \Z_j +  \gamma ( \PP_j - \PP) (\V_{j-1} - \V^*) \right] \nonumber\\
&:= \Bpsi_0(r)  + \Bpsi_1(r) + \Bpsi_2(r) + \Bpsi_3(r) +\Bpsi_4(r).
\end{align}
Here $\{\Bpsi_i\}_{i=0}^4$ are exactly the same as those in~\eqref{eq:Delta1-decom-fclt}.
Our previous analysis provides us a low-hanging fruit result:
$\Bpsi_1 \overset{w}{\to} \VQ^{1/2} \B_D$ in the sense of $(\BDD,d_0)$ and $\|\Bpsi_i\|_{\sup} = o_\PB(1)$ for $i \neq 1$.
Then we know that $\|\Bphi_T^2-\TM_1 \|_{\sup} = o(1)$ and   $\Bphi_T^2\overset{w}{\to} \VQ^{1/2} \B_D$ due to Proposition~\ref{prop:slutsky}.
We thus establish the FCLT for $\Bphi_T^2$.

\subsection{Proofs of Lemmas}
\subsubsection{Proof of Lemma~\ref{lem:sandwitch}}
\label{proof:sandwitch}
\begin{proof}[Proof of Lemma~\ref{lem:sandwitch}]
	We use mathematical induction to prove the statement.
	When $t=0$, the inequality~\eqref{eq:Delta-relation} holds by initialization.
	Assume~\eqref{eq:Delta-relation} holds at $t-1$, i.e., $\DDelta_{t-1}^2 \le  \DDelta_{t-1} \le \DDelta_{t-1}^1.$
	Let us analyze the case of $t$.
	By the Q-learning update rule, it follows that
	\begin{align}
	\label{eq:Delta-iteration}
	\DDelta_t 
	&=(1-\eta_t) \DDelta_{t-1}+ \eta_t\left[(\rr_t-\rr)+\gamma( \PP_t \V_{t-1}-\PP \V^*)\right]  \nonumber \\
	&\overset{(a)}{=}(1-\eta_t) \DDelta_{t-1}+ \eta_t\left[\W_t +\gamma( \PP \V_{t-1}-\PP \V^*)\right] \nonumber \\
	&
	\overset{(b)}{=} (1-\eta_t) \DDelta_{t-1}+ \eta_t\left[\W_t+\gamma(\PP^{\pi_{t-1}} \Q_{t-1}-\PP^{\pi^*} \Q^*)\right] \nonumber \\
	& \overset{(c)}{=} \A_t \DDelta_{t-1}+ \eta_t\left[\W_t +\gamma( \PP^{\pi_{t-1}} -\PP^{\pi^*}) \Q_{t-1}\right],
	\end{align}
	where $(a)$ uses $\W_t =  (\rr_t-\rr) + \gamma( \PP_t - \PP) \V_{t-1}$; $(b)$ uses $\PP \V_{t-1} = \PP^{\pi_{t-1}} \Q_{t-1}$ and $\PP \V^* = \PP^{\pi^*} \Q^*$, and $(c)$ follows by arrangement and the shorthand $\A_t = \I - \eta_t (\I - \gamma \PP^{\pi^*})$.
	Since all the entries of $\A_t =  \I - \eta_t (\I - \gamma \PP^{\pi^*}) $ are non-negative (which results from the assumption $\sup_t \eta_t \le 1$), then $\A_t\DDelta_{t-1}^2 \le  \A_t\DDelta_{t-1} \le \A_t\DDelta_{t-1}^1$.
	
	For one hand, based on~\eqref{eq:Delta-iteration}, we have
	\begin{align*}
	\DDelta_t^2 
	&= \A_t  \DDelta_{t-1}^2+ \eta_t \W_t \le \A_t  \DDelta_{t-1}+ \eta_t\W_t\\
	&\le \A_t \DDelta_{t-1}+ \eta_t\left[ \W_t +\gamma( \PP^{\pi_{t-1}} -\PP^{\pi^*}) \Q_{t-1}\right] = \DDelta_t,
	\end{align*}
	where the last inequality uses $ \PP^{\pi_{t-1}} \Q_{t-1} \ge \PP^{\pi^*}\Q_{t-1}$ which results from the fact $\pi_{t-1}$ is the greedy policy with respect to $\Q_{t-1}$.
	For the other hand, it follows that
	\begin{align*}
	\DDelta_t
	&= \A_t  \DDelta_{t-1}+ \eta_t\left[\W_t+\gamma( \PP^{\pi_{t-1}} -\PP^{\pi^*}) \Q_{t-1}\right]\\
	&\le \A_t  \DDelta_{t-1}^1+ \eta_t\left[\W_t +\gamma( \PP^{\pi_{t-1}} -\PP^{\pi^*}) \Q_{t-1}\right]\\
	&= \A_t  \DDelta_{t-1}^1+ \eta_t\left[ \W_t+\gamma( \PP^{\pi_{t-1}} -\PP^{\pi^*}) \DDelta_{t-1}+\gamma( \PP^{\pi_{t-1}} -\PP^{\pi^*})  \Q^* \right]\\
	&\le\A_t  \DDelta_{t-1}^1+ \eta_t\left[\W_t +\gamma( \PP^{\pi_{t-1}} -\PP^{\pi^*}) \DDelta_{t-1} \right] =\DDelta_t^1,
	\end{align*}
	where the last inequality uses $ \PP^{\pi_{t-1}} \Q^* \le \PP^{\pi^*}\Q^*$ which results from the fact $\pi^*$ is the greedy policy with respect to $\Q^*$.
	Hence, we have proved $\DDelta_{t}^2 \le  \DDelta_{t} \le \DDelta_{t}^1$ holds at iteration $t$.
\end{proof}

\subsubsection{Proof of Lemma~\ref{lem:AjT-bound}}
\label{proof:Ajt-bound}
\begin{proof}[Proof of Lemma~\ref{lem:AjT-bound}]
	By the definition of~\eqref{eq:A_jT}, we have $\|\A_j^T\|_\infty
	\le\eta_j\sum_{t=j}^T\prod_{i=j+1}^t(1-\widetilde{\eta}_i)$.
	Plugging the specific form of $\{ \eta_t \}$, we have for (S1)
	\begin{align}
	\|\A_j^T\|_\infty
	&\le\eta_j\sum_{t=j}^T\prod_{i=j+1}^t\frac{1+(1-\gamma)(i-1)}{1+(1-\gamma)i}=\eta_j\sum_{t=j}^T\frac{1+(1-\gamma)j}{1+(1-\gamma)t} \nonumber \\
	&\le\frac{1}{1-\gamma}\ln\frac{1+(1-\gamma)T}{1+(1-\gamma)(j-1)} \le \frac{\ln(1+(1-\gamma)T)}{1-\gamma}
	\label{eq:S1-bound}
	\end{align}
	and for (S2)
	\begin{align*}
	\|\A_j^T\|_\infty
	&=\eta_j\sum_{t=j}^T\prod_{i=j+1}^t(1-(1-\gamma)i^{-\alpha})
	\le \eta_j\sum_{t=j}^T\exp\left(  - (1-\gamma)\sum_{i=j+1}^t i^{-\alpha} \right)\\
	&\overset{(a)}{\le} e\eta_j\sum_{t=j+1}^{T+1} \exp\left(  - \frac{1-\gamma}{1-\alpha}\left({t}^{1-\alpha} - {j}^{1-\alpha}\right) \right) \\
	&\le e\eta_j  \int_j^{\infty} \exp\left(  - \frac{1-\gamma}{1-\alpha}\left({t}^{1-\alpha} - {j}^{1-\alpha}\right) \right) dt\\
	&\overset{(b)}{\le}\frac{e\eta_j  }{1-\gamma}\int_{0}^{\infty}\left(  \frac{1-\alpha}{1-\gamma} y + j^{1-\alpha} \right)^{\frac{\alpha}{1-\alpha}} \exp\left(  - y \right) dy \\
	&\overset{(c)}{\le}\frac{e\eta_j }{1-\gamma}\max\left\{2^{\frac{\alpha}{1-\alpha}}, 1 \right\}\int_{0}^{\infty}
	\left[
	\left(  \frac{1-\alpha}{1-\gamma} y\right)^{\frac{\alpha}{1-\alpha}} +
	j^{\alpha}
	\right]
	\exp\left(  - y \right) dy \\
	&= \frac{e }{(1-\gamma)j^{\alpha}}\max\left\{2^{\frac{\alpha}{1-\alpha}}, 1 \right\}\left[  \left(  \frac{1-\alpha}{1-\gamma} \right)^{\frac{\alpha}{1-\alpha}} \Gamma\left( \frac{1}{1-\alpha} \right) + j^{\alpha} \right] \\
	&\overset{(d)}{\le}
	e\max\left\{2^{\frac{\alpha}{1-\alpha}}, 1 \right\}\left[   \frac{\sqrt{2\pi e}}{\sqrt{1-\alpha}(1-\gamma)^{\frac{1}{1-\alpha}}} + \frac{1}{1-\gamma} \right] \\
	&\le \frac{c 2^{\frac{1}{1-\alpha}} }{\sqrt{1-\alpha}} \frac{1}{(1-\gamma)^{\frac{1}{1-\alpha}}},
	\end{align*}
	where $(a)$ uses $\sum_{i=j}^t i^{-\alpha} \ge \frac{1}{1-\alpha}((t+1)^{1-\alpha}-j^{1-\alpha})$ and $\exp((1-\gamma)j^{-\alpha}) \le e$, $(b)$ uses the change of variable $y= \frac{1-\gamma}{1-\alpha}(t^{1-\alpha} - j^{1-\alpha})$, $(c)$ uses $(a+b)^p\le\max\{2^{p-1},1\}(a^p+b^p)$ for any $p >0$, and $(d)$ uses $(1-\alpha)^{\frac{\alpha}{1-\alpha}}\Gamma\left(\frac{1}{1-\alpha}\right) \le \frac{\sqrt{2\pi}e^{1/2}}{\sqrt{1-\alpha}}$ from~\eqref{eq:Gamma} and $\max\left\{2^{\frac{\alpha}{1-\alpha}}, 1 \right\} \le 2^{\frac{1}{1-\alpha}}$.
\end{proof}

\subsubsection{Proof of Lemma~\ref{lem:G-poly}}
\label{proof:G-poly}
\begin{proof}[Proof of Lemma~\ref{lem:G-poly}]
	For (S1), we have
	\begin{align*}
	\frac{1}{T}\sum_{j=1}^T\|\A_j^T-\G^{-1}\|_\infty^2
	&\le \frac{2}{T}\sum_{j=1}^T(\|\A_j^T\|_{\infty}^2+ \|\G^{-1}\|_\infty^2)\\
	&\le
	2+\frac{8}{(1-\gamma)^2 }\frac{1}{T}\sum_{j=1}^T\ln^2\frac{1+(1-\gamma)T}{1+(1-\gamma)(j-1)}\\
	&\le 2+\frac{8}{(1-\gamma)^2 }
	\left[\frac{\ln^2(1+(1-\gamma)T)}{T} +
	\frac{1}{T}\sum_{j=1}^{T-1}\ln^2\frac{T}{j} \right] \\
	&\overset{(a)}{\le} 2 + \frac{7}{1-\gamma} + \frac{16}{(1-\gamma)^2 }
	\le \frac{25}{(1-\gamma)^2 },
	\end{align*}
	where $(a)$ uses $\ln^2(1+x)/x \le \frac{7}{8}$ for all $x \ge 0$ and $\int_0^1 \ln^2 x dx = \Gamma(3) = 2 \Gamma(1)= 2$.
	
	For (S2), based on~\eqref{eq:A_jT} and $\G = \eta_j^{-1}(\I-(\I-\eta_j\G))$, we have
	\begin{align}
	\A_j^T-\G^{-1}
	&=(\A_j^T\G-\I)\G^{-1}=\sum_{t=j}^T\left(\prod_{i=j+1}^t(\I-\eta_i\G)-\prod_{i=j}^t(\I-\eta_i\G)\right)\G^{-1} -\G^{-1}  \nonumber \\
	&=\sum_{t=j+1}^T\left(\prod_{i=j+1}^t(\I-\eta_i\G)-\prod_{i=j}^{t-1}(\I-\eta_i\G)\right)\G^{-1}   - \prod_{t=j}^T(\I-\eta_t\G)\G^{-1}   \nonumber \\
	&=\sum_{t=j+1}^T\left(  \eta_j -\eta_t \right) \prod_{i=j+1}^{t-1}(\I-\eta_i\G) - \prod_{t=j}^T(\I-\eta_t\G)\G^{-1} \nonumber  \\
	&:=\M_{T,j}^{(1)}+\M_{T,j}^{(2)}. \label{eq:AG-I-split}
	\end{align}
	On the one hand,
	\begin{align*}
	\left\| \M_{T,j}^{(2)} \right\|_{\infty}
	\le \|\G^{-1} \|_{\infty}\prod_{t=j}^T\| \I-\eta_t\G \|_{\infty}
	\le \frac{\prod_{t=j}^T (1-\eeta_{t})}{1-\gamma} \le
	\frac{(1-\eeta_{T})^{T-j+1}}{1-\gamma}.
	\end{align*}
	On the other hand, 
	\begin{align*}
	\left\|\M_{T,j}^{(1)}\right\|_{\infty}
	&= \left\| \sum_{t=j+1}^T\left( \eta_t - \eta_j \right) \prod_{i=j+1}^{t-1}(\I-\eta_i\G) \right\|_{\infty}  \\
	&\le \sum_{t=j+1}^T | \eta_t - \eta_j | \exp\left(  - \sum_{i=j+1}^{t-1} \eeta_i \right)  \\
	&\le \sum_{t=j+1}^T \sum_{k=j}^{t-1} | \eta_{k+1} - \eta_k | \exp\left(  - \sum_{i=j+1}^{t-1} \eeta_i \right)   \\
	&\overset{(a)}{\le} \sum_{t=j+1}^T \sum_{k=j}^{t-1} \frac{\alpha}{k} \eta_k  \exp\left(  - \sum_{i=j+1}^{t-1} \eeta_i \right)   \\
	&\overset{(b)}{\le} \frac{e \alpha}{(1-\gamma)j}  \sum_{t=j+1}^T  \tim_{j, t-1}  \exp\left(  - \tim_{j, t-1} \right)  
	= \frac{e \alpha}{(1-\gamma)j}  \sum_{t=j}^{T-1}  \tim_{j, t}  \exp\left(  - \tim_{j, t} \right) \\
	&\overset{(c)}{\le} \frac{ec\alpha}{(1-\gamma)j}
	\left[\frac{2^{\frac{1}{1-\alpha}}}{(1-\alpha)^{\frac{3}{2}}} \frac{1}{(1-\gamma)^{\frac{1}{1-\alpha}}} + \frac{ 2^{\frac{1}{1-\alpha}}}{1-\gamma} (j-1)^{\alpha}\right],
	\end{align*}
	where $(a)$ uses the fact that for $\eta_t = t^{-\alpha}$, we have
	\begin{equation*}
	\frac{\eta_t-\eta_{t+1}}{\eta_t}=1-\left(1-\frac{1}{t+1}\right)^\alpha\le1-\exp(-\frac{\alpha}{t})\le\frac{\alpha}{t},
	\end{equation*}
	where we use $\ln(1+x)\ge x/(1+x)$ in the first inequality and $\ln(1+x)\le x$ in the second inequality.
	$(b)$ uses the notation $\tim_{j, t} := \sum_{i=j}^{t} \eeta_i$ and $\exp(\eeta_j) \le \exp(1) = e$.
	$(c)$ uses the following lemma.

	\begin{lem}
		\label{lem:m-integral}
		Let $\tim_{j, t} := \sum_{i=j}^{t} \eeta_i$ and recall $\eeta_i = (1-\gamma)i^{-\alpha}$.
		Then $T \ge j \ge 1$, for some constant $c>1$,
		\begin{equation*}
		\label{eq:m-integral}
		\sum_{t=j}^{T}  \tim_{j, t}  \exp\left(  - \tim_{j, t} \right) \le 
		c \left[\frac{2^{\frac{1}{1-\alpha}}}{(1-\alpha)^{\frac{3}{2}}} \frac{1}{(1-\gamma)^{\frac{1}{1-\alpha}}} + \frac{ 2^{\frac{1}{1-\alpha}}}{1-\gamma} (j-1)^{\alpha}\right].
		\end{equation*}
	\end{lem}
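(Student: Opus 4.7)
The plan is to compare this discrete sum with a continuous integral via the change of variables $y = \tim_{j,t}$. The key identity $\eeta_{t+1}\cdot\frac{(t+1)^\alpha}{1-\gamma} = 1$, combined with the standard integral bound $\tim_{j,t} \geq \frac{1-\gamma}{1-\alpha}[(t+1)^{1-\alpha}-j^{1-\alpha}]$ already used in the proof of Lemma~\ref{lem:AjT-bound}, yields the pointwise estimate $(t+1)^\alpha \leq \bigl(\tfrac{1-\alpha}{1-\gamma}\tim_{j,t} + j^{1-\alpha}\bigr)^{\alpha/(1-\alpha)}$. Multiplying each term in the sum by the unit quantity $\eeta_{t+1}(t+1)^\alpha/(1-\gamma)$ and recognizing $\eeta_{t+1}$ as the increment $\tim_{j,t+1}-\tim_{j,t}$ of the strictly increasing sequence $\{\tim_{j,t}\}_t$, one obtains a Riemann sum that can be bounded, up to an absolute constant $c_0$ accounting for discretization, by
\[
\frac{c_0}{1-\gamma}\int_0^\infty y\,e^{-y}\Bigl(\tfrac{1-\alpha}{1-\gamma}\,y + j^{1-\alpha}\Bigr)^{\alpha/(1-\alpha)}\,dy.
\]

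Next I would apply $(a+b)^p \leq \max\{2^{p-1},1\}(a^p+b^p) \leq 2^{1/(1-\alpha)}(a^p+b^p)$ with $p = \alpha/(1-\alpha)$ to split the integral into two pieces. The first reduces to $\bigl(\tfrac{1-\alpha}{1-\gamma}\bigr)^{\alpha/(1-\alpha)}\Gamma\bigl(1+\tfrac{1}{1-\alpha}\bigr)$, which after using $\Gamma(1+x) = x\Gamma(x)$ together with the Stirling-type bound $(1-\alpha)^{\alpha/(1-\alpha)}\Gamma\bigl(\tfrac{1}{1-\alpha}\bigr) \leq \sqrt{2\pi e}/\sqrt{1-\alpha}$ (the same estimate invoked in step (d) of the proof of Lemma~\ref{lem:AjT-bound}) becomes of order $(1-\alpha)^{-3/2}(1-\gamma)^{-\alpha/(1-\alpha)}$. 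The second piece reduces to $j^{\alpha}\int_0^\infty y\,e^{-y}\,dy = j^{\alpha}$; since the first piece already dominates the constant contribution in the boundary case $j=1$, this term can be absorbed into $(j-1)^\alpha$ up to a universal factor. Combining and accounting for the prefactor $2^{1/(1-\alpha)}$ yields exactly the claimed bound.

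The main technical obstacle is making the Riemann-sum comparison rigorous, because the integrand $y\,e^{-y}(ay+b)^{\alpha/(1-\alpha)}$ is not monotone in $y$. I would resolve this by splitting $[0,\infty)$ at the unique $y^\star$ where the integrand attains its maximum (a finite value of order $1 + j^{1-\alpha}(1-\gamma)/(1-\alpha)$); on either side the integrand is monotone, so the usual discrete-vs-continuous comparison applies termwise, and the single peak contributes only an $O(1)$ additive error that is easily absorbed into the first piece of the bound. A cruder but slicker alternative is to use $y\,e^{-y}\leq C e^{-y/2}$ followed by a direct Gamma-integral estimate: this avoids identifying $y^\star$ explicitly and produces the same form of bound with a worse but still absolute constant $c$.
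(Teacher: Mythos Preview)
Your proposal is correct and follows essentially the same route as the paper: reduce to the integral $\frac{1}{1-\gamma}\int_0^\infty y\,e^{-y}\bigl(\tfrac{1-\alpha}{1-\gamma}y+b\bigr)^{\alpha/(1-\alpha)}dy$, split via $(a+b)^p\le 2^{1/(1-\alpha)}(a^p+b^p)$, and control the Gamma integral using the Stirling-type bound \eqref{eq:Gamma}.

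The only organizational difference is in how the sum is turned into this integral. You multiply each summand by $1=\eeta_{t+1}(t+1)^\alpha/(1-\gamma)$, bound $(t+1)^\alpha$ via the lower integral estimate on $\tim_{j,t}$, and read off a Riemann sum in the variable $y=\tim_{j,t}$ with step $\eeta_{t+1}$; this produces $b=j^{1-\alpha}$ and forces you to absorb the $j=1$ case into the first term afterwards. The paper instead uses the sandwich
\[
\tim_{j,t}\ \le\ \tfrac{1-\gamma}{1-\alpha}\bigl(t^{1-\alpha}-(j-1)^{1-\alpha}\bigr)\ \le\ \tim_{j-1,t-1},
\]
so that $\tim_{j,t}e^{-\tim_{j,t}}\le e\cdot M_t e^{-M_t}$ with both factors replaced by the \emph{same} closed-form $M_t$; it then compares $\sum_t M_t e^{-M_t}$ with $\int_{j-1}^\infty$ in the $t$-variable and changes variables afterwards, which yields $b=(j-1)^{1-\alpha}$ directly. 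Either bookkeeping works; the paper's shift trick is slightly slicker in that it lands on $(j-1)^\alpha$ without the extra absorption step, while your version makes the Riemann-sum structure more explicit and confronts the non-monotonicity head-on (the paper hides this in an unexplained factor of~$2$).
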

	
	Therefore, 
	\begin{align*}
	\frac{1}{T}\sum_{j=1}^T\|\A_j^T-\G^{-1}\|_\infty^2
	&\le \frac{2}{T}\sum_{j=1}^T\left[ \left\|\M_{T,j}^{(1)}\right\|_{\infty}^2 + \left\|\M_{T,j}^{(2)}\right\|_{\infty}^2  \right]\\
	&\le \frac{2c}{T}\sum_{j=1}^T\left[ 
	\frac{\alpha^2}{j^2} \frac{2^{\frac{2}{1-\alpha}}}{(1-\alpha)^3} \frac{1}{(1-\gamma)^{2+\frac{2}{1-\alpha}}} + \frac{\alpha^22^{\frac{2}{1-\alpha}}}{(1-\gamma)^4} \frac{1}{j^{2(1-\alpha)}}
	+ \frac{(1-\eeta_{T})^{2(T-j+1)}}{(1-\gamma)^2} 
	\right]\\
	&\le \frac{c\alpha^22^{2+\frac{2}{1-\alpha}}}{T} \left[ \frac{1}{(1-\alpha)^3} \frac{1}{(1-\gamma)^{2+\frac{2}{1-\alpha}}} 
	+ 
	\frac{1}{(1-\gamma)^4} \sum_{j=1}^T \frac{1}{j^{2(1-\alpha)}}\right]
	+\frac{1}{(1-\gamma)^2} \frac{1}{T \eeta_{T}}.
	\end{align*}
\end{proof}

\begin{proof}[Proof of Lemma~\ref{lem:m-integral}]
	Clearly we have
	\[
	\frac{1-\gamma}{1-\alpha} \left( (t+1)^{1-\alpha}- j^{1-\alpha}  \right)
	\le \tim_{j, t} = \sum_{i=j}^{t} \eeta_i
	\le 	\frac{1-\gamma}{1-\alpha} \left( t^{1-\alpha}- {(j-1)}^{1-\alpha}  \right).
	\]
	Then $ \tim_{j, t} \le 	\frac{1-\gamma}{1-\alpha} \left( t^{1-\alpha}- {(j-1)}^{1-\alpha}  \right) \le \tim_{j-1, t-1}$.
	Hence,
	\begin{align*}
	\sum_{t=j}^{T}  \tim_{j, t}  \exp\left(  - \tim_{j, t} \right)
	&=\sum_{t=j}^{T}  \tim_{j, t} \exp(-\tim_{j-1, t-1})\exp(\eeta_{j-1}-\eeta_{t})\\
	&= e\sum_{t=j}^{T} \frac{1-\gamma}{1-\alpha} \left( t^{1-\alpha}- {(j-1)}^{1-\alpha}  \right) \exp\left( -	\frac{1-\gamma}{1-\alpha} \left( t^{1-\alpha}- {(j-1)}^{1-\alpha}  \right)  \right)\\
	&\le 2e\int_{j-1}^{\infty}\frac{1-\gamma}{1-\alpha} \left( t^{1-\alpha}- {(j-1)}^{1-\alpha}  \right) \exp\left( -	\frac{1-\gamma}{1-\alpha} \left( t^{1-\alpha}- {(j-1)}^{1-\alpha}  \right)  \right) dt\\
	&\overset{(a)}{=}\frac{2e}{1-\gamma}\int_0^{\infty}y\exp\left( -	y \right) \left( \frac{1-\alpha}{1-\gamma}y + (j-1)^{1-\alpha} \right)^{\frac{\alpha}{1-\alpha}}dt\\
	&\overset{(b)}{\le}\frac{e\max\{ 2^{\frac{\alpha}{1-\alpha}}, 2 \}}{1-\gamma}\int_0^{\infty}y\exp\left( -	y \right) \left[\left( \frac{1-\alpha}{1-\gamma}y\right)^{\frac{\alpha}{1-\alpha}} +  (j-1)^{\alpha}\right] dt\\
	&\overset{(c)}{\le} \frac{e 2^{\frac{1}{1-\alpha}}}{1-\gamma}
	\left[ \left( \frac{1-\alpha}{1-\gamma}\right)^{\frac{\alpha}{1-\alpha}}  \Gamma\left(1+ \frac{1}{1-\alpha} \right) + (j-1)^{\alpha} \right]\\
	&\overset{(d)}{\le}
	\frac{\sqrt{2\pi} e^{\frac{3}{2}}2^{\frac{1}{1-\alpha}}}{(1-\alpha)^{\frac{3}{2}}} \frac{1}{(1-\gamma)^{\frac{1}{1-\alpha}}} + \frac{e 2^{\frac{1}{1-\alpha}}}{1-\gamma} (j-1)^{\alpha},
	\end{align*}
	where $(a)$ uses the change of variable $y=\frac{1-\gamma}{1-\alpha} \left( t^{1-\alpha}- {(j-1)}^{1-\alpha}  \right)$, 
	$(b)$ uses $(a+b)^p\le\max\{2^{p-1},1\}(a^p+b^p)$ for any $p >0$,
	$(c)$ uses $\max\left\{2^{\frac{\alpha}{1-\alpha}}, 2 \right\} \le 2^{\frac{1}{1-\alpha}}$,
	$(d)$ uses $\Gamma\left(1 + \frac{1}{1-\alpha}\right) = \frac{1}{1-\alpha}\Gamma\left(\frac{1}{1-\alpha}\right) $ and $(1-\alpha)^{\frac{\alpha}{1-\alpha}}\Gamma\left(\frac{1}{1-\alpha}\right) \le \frac{\sqrt{2\pi e}}{\sqrt{1-\alpha}}$ from~\eqref{eq:Gamma}.
\end{proof}

\subsubsection{Proof of Lemma~\ref{lem:T4}}
\label{proof:T4}
\begin{proof}[Proof of Lemma~\ref{lem:T4}]
	By Lemma~\ref{lem:gap} and Lemma~\ref{lem:AjT-bound}, it follows that
	\begin{align*}
	\EB \|\TM_5\|_{\sup}
	&=\EB\sup_{r \in [0, 1]}\|\TM_5(r)\|_{\infty} 
	\le \frac{\gamma}{\sqrt{T}}  \EB \sup_{r \in [0, 1]} \sum_{j=1}^{\Tr}\left\|\A_j^{\Tr}( \PP^{\pi_{j-1}} -\PP^{\pi^*}) \DDelta_{j-1}\right\|_{\infty}\\
	&\le \frac{\gamma}{\sqrt{T}}  \EB \sum_{j=1}^T\sup_{r \in [0, 1]}
	\left\|\A_j^{\Tr}\right\|_{\infty} 
	\left\| (\PP^{\pi_{j-1}} -\PP^{\pi^*})\DDelta_{j-1}\right\|_{\infty} \\
	&\le \gamma LC_0 \cdot\frac{1}{\sqrt{T}}  \EB \sum_{j=1}^T
	\left\| \DDelta_{j-1}\right\|_{\infty}^2.
	\end{align*}
	Here we use $\sup_{r \in [0, 1]}
	\left\|\A_j^{\Tr}\right\|_{\infty}  \le C_0$ due to Lemma~\ref{lem:general-step-size}.
\end{proof}

\section{UNIFORM NEGLIGIBILITY OF NOISE RECURSION}
\label{appen:uniform}

\begin{defn}[Hurwitz matrix]
	We say $-\sG \in \RB^{d \times d}$ is a Hurwitz (or stable) matrix if $\mathrm{Re}\lambda_i(\sG) > 0$ for $i \in [d]$.
	Here $\lambda_i(\cdot)$ denotes the $i$-th eigenvalue.
\end{defn}

\begin{lem}[A generalization of Lemma B.7 in~\citep{li2021statistical}]
	\label{lem:ignore-error}
	Let $\{\varepsi_t\}_{t \ge 0}$ be a  martingale difference sequence adapting to the filtration $\FM_t$.
	Define an auxiliary sequence $\{ \y_t \}_{t\ge0}$ as following: $\y_0 = \0$ and for $t\ge 0$,
	\begin{equation}
		\label{eq:y-recursion}
		\y_{t+1} =(\sI - \eta_t\sG) \y_t + \eta_t\seps_t.
	\end{equation}
	It is easy to verify that
	\begin{equation}
		\label{eq:y}
		\y_{t+1} = \sum_{j=0}^t \left(\prod\limits_{i=j+1}^{t}\left( \sI - \eta_i\sG \right)\right) \eta_j  \seps_j.
	\end{equation}
	Let $\{\eta_t\}_{t \ge 0}$ satisfy Assumption~\ref{asmp:lr}.
	If $-\sG \in \RB^{d \times d}$ is Hurwitz, and $\sup_{t \ge 0}\EB \|\varepsi_t\|^4 < \infty$, then we have that
	\[
	\frac{1}{\sqrt{T}} \sup_{0 \le t \le T} \frac{\|\y_{t+1}\|}{\eta_{t+1}} \overset{p.}{
		\to} 0.
	\]
\end{lem}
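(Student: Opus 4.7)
The approach parallels Lemma B.7 of \citet{li2021statistical}, with the positive semidefiniteness of $\sG$ exploited there replaced by a Lyapunov-function construction tailored to the more general Hurwitz case. The starting point is the classical fact that $-\sG$ being Hurwitz guarantees a positive definite $\sP$ solving $\sG^\top\sP + \sP\sG = \sI$. Endowing $\RB^d$ with the norm $\|x\|_\sP^2 = x^\top\sP x$, a direct expansion gives
\[
(\sI - \eta\sG)^\top\sP(\sI - \eta\sG) = \sP - \eta \sI + \eta^2 \sG^\top\sP\sG \preceq (1 - c\eta)\sP
\]
for some $c>0$ and all $\eta$ sufficiently small. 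Since $\eta_t \downarrow 0$, this contraction holds for every iterate past some deterministic index, making $\|\cdot\|_\sP$ the natural Lyapunov norm for the recursion.

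Combining this contraction with the martingale-difference property $\EB[\seps_t\mid \FM_{t-1}]=\0$, I will derive the second-moment recursion
\[
\EB\|\y_{t+1}\|_\sP^2 \le (1 - c\eta_t)\EB\|\y_t\|_\sP^2 + \eta_t^2 \sup_s\EB\|\seps_s\|_\sP^2,
\]
which unrolls to $\EB\|\y_t\|^2 = O(\eta_t)$ by a standard Gronwall-type estimate using $\eta_{t-1}/\eta_t \to 1$ from Assumption~\ref{asmp:lr}. A parallel computation at the fourth power, relying on $\sup_t\EB\|\seps_t\|^4 < \infty$, yields $\EB\|\y_t\|^4 = O(\eta_t^2)$. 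These bounds already deliver $\|\y_{t+1}\|/(\eta_{t+1}\sqrt{T}) = O_\PB(1/\sqrt{\eta_{t+1}T}) = o_\PB(1)$ pointwise in $t$, since $t\eta_t \uparrow \infty$ by Assumption~\ref{asmp:lr}.

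The main obstacle is converting this pointwise decay into a uniform bound over $0 \le t \le T$. My plan is a dyadic block decomposition: partition $\{0,\ldots,T\}$ into $K = O(\log T)$ blocks $B_k = \{T_k,\ldots,T_{k+1}\}$ on each of which $\eta_t$ varies by at most a constant factor. For $t \in B_k$ I will split
\[
\y_{t+1} = \Bigl(\prod_{i=T_k+1}^{t}(\sI-\eta_i\sG)\Bigr)\y_{T_k} + \sum_{j=T_k+1}^{t}\Bigl(\prod_{i=j+1}^{t}(\sI-\eta_i\sG)\Bigr)\eta_j\seps_j.
\]
The transient first term is bounded uniformly over $t\in B_k$ by a constant multiple of $\|\y_{T_k}\|$, whose moments are controlled by the previous step. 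The noise term is the subtle part: because the coefficient $\prod_{i=j+1}^t(\sI-\eta_i\sG)$ depends on $t$, it is not itself a martingale in $t$; however, uniform boundedness of this coefficient within a short block, combined with Abel summation against the genuine martingale $\sum_j \eta_j\seps_j$ and Doob's $L^2$-maximal inequality, will yield $\EB\sup_{t\in B_k}\|\text{noise part}\|^2 = O(\eta_{T_k}^2|B_k|)$. Dividing by $\eta_t^2T$ and invoking Markov's inequality at the fourth moment over the $O(\log T)$ blocks will deliver the claim; the fourth-moment hypothesis is exactly what is needed to absorb the union-bound $\log T$ factor without losing the decay.
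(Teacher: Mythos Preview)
Your Lyapunov setup and moment bounds are correct and equivalent to what the paper does: the paper uses a complex similarity $\sG=\sU\sD\sU^{-1}$ with $\sD+\sD^{\mathrm H}\succeq 2\mu\sI$ and works with $\|\sU^{-1}\y_t\|^2$, which is exactly your $\|\y_t\|_\sP^2$ for $\sP=(\sU^{-1})^{\mathrm H}\sU^{-1}$. The conclusion $\EB\|\y_t\|^4=O(\eta_t^2)$ is the paper's Lemma~\ref{lem:y-p-consistent}, so that part is fine.

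The gap is in the block decomposition. Dyadic-in-$\eta_t$ blocks are far too long: for $\eta_t=t^{-\alpha}$ the block containing $T_k$ has length $|B_k|\asymp T_k$, hence $\sum_{j\in B_k}\eta_j\asymp T_k^{1-\alpha}\to\infty$. Your Abel summation identity $N_t=M_t-\sum_{j}\eta_{j+1}\sG\bigl(\prod_{i=j+2}^t(\sI-\eta_i\sG)\bigr)M_j$ therefore controls $\sup_t\|N_t\|$ only up to a factor $1+C\sum_{j\in B_k}\eta_j$, so the claimed bound $\EB\sup_{t\in B_k}\|N_t\|^2=O(\eta_{T_k}^2|B_k|)$ does not follow. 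Even if it did, it would be insufficient: dividing by $\eta_{T_k}^2T$ and summing over blocks gives $\sum_k|B_k|/T=1$, not $o(1)$; upgrading to fourth moments yields $\sum_k|B_k|^2/T^2$, which is dominated by the last block and is again $O(1)$. The $\log T$ union bound you mention handles only the anchor points $T_k$, not the within-block fluctuation.

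The paper's remedy is to choose blocks not by the size of $\eta_t$ but by the \emph{integrated} step size: $t_{k+1}=\min\{n:\sum_{s=t_k}^{n-1}\eta_s\ge\lambda\}$ for a parameter $\lambda>0$. This forces $\sum_{j\in B_k}\eta_j\le 2\lambda$, so both the drift $\sum_s\eta_s\|\check\y_s\|$ and the martingale fluctuation (via Burkholder) within each block are $O(\lambda)$. The number of blocks is $K\le\lambda^{-1}\sum_t\eta_t\le C\lambda^{-1}T\eta_T$, and the fourth-moment bound at anchors absorbs this factor because $(T\eta_T)^{-1}\to 0$. One then takes $T\to\infty$ first and $\lambda\to 0$ second; this double-limit structure is essential and is what your dyadic scheme lacks.
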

\begin{proof}[Proof of Lemma~\ref{lem:ignore-error}]
	In the sequel, we denote $\check{\y}_t = \frac{\y_t}{\sqrt{\eta_{t-1}}}$.
	We will also use $a \precsim b$ to denote $a \le C b$ for unimportant positive constants $C$ with the specific value of $C$ changing according to the context.
	Then the update rule~\eqref{eq:y-recursion} can be rewritten as
	\begin{equation}\label{check_y_rec}
		\begin{aligned}
			\check{\y}_{t+1} &= \frac{\y_{t+1}}{\sqrt{\eta_t}} = \frac{1}{\sqrt{\eta_t}}(\y_t - \eta_t\sG\y_t + \eta_t\seps_t)\\
			&= \check{\y}_{t} + \left(\sqrt{\frac{\eta_{t-1}}{\eta_t}} - 1\right)\check{\y}_t - \sqrt{\eta_t\eta_{t-1}}\sG\check{\y}_t + \sqrt{\eta_t}\seps_t.
		\end{aligned}
	\end{equation}

	\paragraph{Step 1: Divide the time interval.}
	For a specific $\lambda > 0$, we divide the the time interval $[0, T]$ into several disjoint portions with the $t_k$ the $k$-th endpoint such that $\sum_{t_k}^{t_{k+1}-1}\eta_{s} \ge \lambda$.
	In particular, $\{t_k\}_{k \ge 0}$ is defined iteratively by $t_0=0$ and
	\begin{align*}
		t_{k+1} = \min\left\{ n : \sum\limits_{s = t_k}^{n-1}\eta_s \ge \lambda\right\}\wedge\{T\}.
	\end{align*}
	Clearly, $K$ is the number of portions and we have $0= t_0 < t_1 < \cdots < t_K  = T$.
	Since $\sum_{t=1}^\infty \eta_t = \infty$, we know that $K \to \infty$ as $T \to \infty$
	What's more, $K$ is upper bounded by $\frac{1}{\lambda}\sum\limits_{t=0}^{T} \eta_t$ due to
	\begin{equation}
		\label{eq:K-bound}
		\sum\limits_{t=0}^{T-1} \eta_t = \sum\limits_{k=0}^{K-1}\sum\limits_{s = t_k}^{t_{k+1}-1}\eta_s \ge \lambda K.
	\end{equation}
	The fact $\sup\limits_{t\le T}\frac{\|\y_t\|}{\eta_{t-1}} \le \sup\limits_{t\le T}\frac{\|\check{\y_t}\|}{\sqrt{\eta_{T}}}$ implies we have for any $\eps > 0$,
	\begin{equation}
		\label{eq:y-help-bound}
		\PB\left(\frac{1}{\sqrt{T}}\sup\limits_{t\ge T}\frac{\|\y_t\|}{\eta_{t-1}} > \eps\right)\le \PB\left(\sup\limits_{t\le T}\|\check{\y}_t\| > \eps \sqrt{T\eta_T}\right).
	\end{equation}

	\begin{lem}
		\label{lem:y-p-consistent}
		Let $\{\y_t\}_{t \ge 0}$ be defined in the way of~\eqref{eq:y-recursion}.
		If $-\sG \in \RB^{d \times d}$ is Hurwitz and $\sup_{t \ge 0}\EB\|\varepsi_t\|^{p} < \infty$ for $p\ge 4$, then the sequence $\{\y_t\}_{t \ge 0}$ is $(L^{4}, \sqrt{\eta_t})$-consistency, that is, there exists a universal constant $C_4>0$ such that $\EB\|\y_t\|^{4} \le C_4 \eta_t^{2}$ for all $t \ge 0$.
	\end{lem}
	The proof of Lemma~\ref{lem:y-p-consistent} is deferred in Section~\ref{proof:y-consistency}.
	Lemma~\ref{lem:y-p-consistent} implies that $\sup_{t \ge 0}\EB\|\check{\y}_t\|^4 \precsim 1$.
	Let $\BM := \left\{ \sup_{1 \le k \le K} \|\check{\y}_{t_k}\|\le \eps\sqrt{T\eta_T}\right\}$ be the event where all $\|\check{\y}_{t_k}\|$'s are smaller than $\eps\sqrt{T\eta_T}$ for $1 \le k \le K$.
	By the union bound and Markov inequality,
	\begin{align*}
		\PB(\BM^c) &\le \ssum{k}{1}{K}\PB\left(\|\check{\y}_{t_k}\|> \eps\sqrt{T \eta_T}\right)
		\le \ssum{k}{1}{K}\frac{\EB \|\check{\y}_{t_k}\|^4}{\eps^4 (T \eta_T)^{2}} 
		\le \sup\limits_{t \ge 0}\EB\|\check{\y}_t\|^4 \cdot \frac{K}{\eps^4 (T\eta_T)^2}
		\precsim \frac{\sum_{t=0}^T \eta_t}{\lambda\eps^4 (T\eta_T)^2} \to 0.
	\end{align*}
	Here the last inequality uses $\eqref{eq:K-bound}$ and the condition on $\{\eta_t\}_{t \ge 0}$ that $\frac{\sum_{t=0}^T \eta_t}{(T\eta_T)^2} \to 0$ due to $\frac{\sum_{t=0}^T \eta_t}{T\eta_T} \le C$ and $\eta_T T \to \infty$. 
	The above result implies for given $\lambda, \eps >0$, the event $\BM$ holds with probability approaching one.
	Hence, we focus our analysis on the event $\BM$.
	Conditioning on the event $\BM$, we split our target event into several disjoint events whose probability will be analyzed latter.
	\begin{align}
		\label{eq:y-bound-B}
		\PB\left(\sup\limits_{t\le T}\|\check{\y}_t\| > 3\eps\sqrt{T\eta_T}\right) &\le \PB(\BM^c) + \PB\left(\sup\limits_{t\le T}\|\check{\y}_t\|> \eps \sqrt{T \eta_T}; \BM\right)\nonumber \\ 
		&\le \PB(\BM^c) + \ssum{k}{0}{K-1}\PB\left(\sup\limits_{t\in [t_k, t_{k+1}-1]}\|\check{\y}_t\| > 3\eps \sqrt{T \eta_T};  \BM\right) \nonumber \\ 
		&\le \PB(\BM^c) + \ssum{k}{0}{K-1}\PB\left(\sup\limits_{t\in [t_k, t_{k+1}-1]}\|\check{\y}_t-\check{\y}_{t_k}\| > 2\eps \sqrt{T \eta_T};  \BM\right)\nonumber\\ 
		&\le \PB(\BM^c) + \ssum{k}{0}{K-1}\PB\left(\sup\limits_{t\in [t_k, t_{k+1}-1]}\|\check{\y}_t-\check{\y}_{t_k}\| > 2\eps \sqrt{T \eta_T}\right)\nonumber\\ 
		&:= \PB(\BM^c) + \ssum{k}{1}{K-1}\PM_k.
	\end{align}
	\paragraph{Step 2: Bound each $\PM_k$.}
	Leveraging \eqref{check_y_rec} recursively implies for given $r< t$,
	\begin{align*}
		\check{\y}_{t} - \check{\y}_{r} = \ssum{s}{r}{t-1}\left\{\left(\sqrt{\frac{\eta_{s-1}}{\eta_s}}-1\right)\check{\y}_s - \sqrt{\eta_{s-1}\eta_s}\sG\check{\y}_s + \sqrt{\eta_s}\seps_s\right\}.
	\end{align*}
	As a result,
	\begin{align}
		\PM_k  &= \PB\left(\sup\limits_{t\in [t_k,t_{k+1}-1]}\left\|\ssum{s}{t_k}{t-1}\left\{\left(\sqrt{\frac{\eta_{s-1}}{\eta_s}}-1\right)\check{\y}_s - \sqrt{\eta_{s-1}\eta_s}\sG\check{\y}_s + \sqrt{\eta_s}\seps_s\right\}\right\|> 2\eps\sqrt{T \eta_T}\right)\nonumber \\ 
		&\le \PB\left(\sup\limits_{t\in[t_k, t_{k+1}-1]}\left\|\ssum{s}{t_k}{t-1}\left\{\left(\sqrt{\frac{\eta_{s-1}}{\eta_s}}-1\right)\check{\y}_s - \sqrt{\eta_{s-1}\eta_s}\sG\check{\y}_s\right\}\right\|> \eps\sqrt{T \eta_T}\right) \nonumber\\ 
		&~~ +\PB\left(\sup\limits_{t\in[t_k,t_{k+1}-1]}\left\|\ssum{s}{t_k}{t-1}\sqrt{\eta_s}\seps_s\right\|> \eps\sqrt{T \eta_T}\right)\nonumber\\ 
		&=: \PM_k^{(1)} + \PM_k^{(2)}. \label{eq:P-bound-two}
	\end{align}
	In the following, we highlight the dependence on $T$ and $\lambda$ and use $\precsim$ to omit universal constants.
	
	We consider to bound $\PM_k^{(1)}$ first. 
	Since $\frac{\eta_t}{\eta_{t-1}}= 1-o(\eta_{t-1})$, we have $\sqrt{\frac{\eta_{t-1}}{\eta_t}}-1 = \frac{1}{\sqrt{1-o(\eta_{t-1})}} -1 = o(\eta_{t-1})$.
	Hence, there exists a universal positive $C>0$ such that
	$\left\|\left(\sqrt{\frac{\eta_{t-1}}{\eta_t}}-1\right)\check{\y}_t - \sqrt{\eta_{t-1}\eta_t}\sG\check{\y}_t\right\| \le C \eta_t \|\check{\y}_t\|$ for all $t \ge 0$.
	As a result,
	\begin{align}
		\label{bd_p_k1}
		\PM_k^{(1)} 
		&= \PB\left(\sup\limits_{t\in [t_k,t_{k+1}-1]}\ssum{s}{t_k}{t-1}\left\|\left(\sqrt{\frac{\eta_{s-1}}{\eta_s}}-1\right)\check{\y}_s - \sqrt{\eta_{s-1}\eta_s}\sG\check{\y}_s\right\| > \eps\sqrt{T \eta_T}\right) \nonumber \\ 
		&\le \PB\left(\ssum{s}{t_k}{t_{k+1}-1}C_0\eta_s\|\check{\y}_s\| > \eps\sqrt{T \eta_T}\right)
		\precsim  \frac{1}{\eps^2 T \eta_T} \cdot \EB\left(\ssum{s}{t_k}{t_{k+1}-1}\eta_s\|\check{\y}_s\|\right)^2 \nonumber\\ 
		&\le \frac{1}{\eps^2 T \eta_T}\left\{\left(\ssum{s}{t_k}{t_{k+1}-1}\eta_s\right)\left(\ssum{s}{t_k}{t_{k+1}-1}\eta_s \EB\|\check{\y}_s\|^2\right)\right\} \nonumber \\ 
		&\le \frac{\sup\limits_{t \ge 0}\EB\|\check{\y}_t\|^2}{\eps^2 T \eta_T}\left(\ssum{s}{t_k}{t_{k+1}-1}\eta_s\right)^2
		\precsim 
		\frac{1}{\eps^2 T \eta_T}\left(\ssum{s}{t_k}{t_{k+1}-1}\eta_s\right)^2.
	\end{align}
	Let $K_0 = \max\{m\ge0:\eta_m \ge \lambda\}$. 
	Since $\eta_t$ decreases in $t$ and converges to 0, we know $K_0$ also decreases in $\lambda$.
	If $t_{k}\le K_0$, we have $t_{k+1}=t_k+1$ and thus $\ssum{s}{t_k}{t_{k+1}-1}\eta_s = \eta_{t_k}\le \eta_0$; otherwise, $\ssum{s}{t_k}{t_{k+1}-1}\eta_s \le 2\lambda$ by definition. 
	Summing over $\PM_k^{(1)}$ from $0$ to $K-1$ and using \eqref{bd_p_k1} yield
	\begin{align*}
		\ssum{k}{0}{K-1}\PM_k^{(1)} &= \ssum{k}{0}{K_0-1}\PM_k^{(1)} + \ssum{k}{K_0}{K-1}\PM_k^{(1)}
		\precsim \ssum{k}{0}{K_0-1}\frac{\eta_0^2}{\eps^2 T\eta_T} + \ssum{k}{K_0}{K-1}\frac{4\lambda^2}{\eps^2 T\eta_T}\\ 
		&\le \frac{1}{\eps^2 T\eta_T}\left(K_0\eta_0^2 + 4K\lambda^2\right)
		\overset{\eqref{eq:K-bound}}{\precsim} \frac{1}{\eps^2 T\eta_T}\left(K_0 + \lambda\sum_{t=0}^T\eta_t\right)\\ 
		&\precsim \frac{K_0}{\eps^2 T\eta_T} + \frac{\lambda}{\eps^2}.
	\end{align*}
	The last inequality uses $\sum_{t=0}^T \eta_t \le C T\eta_T$ for all $T \ge 1$.
	For a given $\lambda$, letting $T\to \infty$ can make the first term go to zero. 
	Then letting $\lambda \to 0$ make the second term vanish too.
	Hence, we have 
	\[
	\lim_{\lambda \to 0}
	\lim_{T \to \infty }\ssum{k}{0}{K-1}\PM_k^{(1)} = 0.
	\]
	
	Next, we consider to bound $\PM_k^{(2)}$.
	To than end, we will use the Burkholder inequality which relates a martingale with its quadratic variation.
	\begin{lem}[Burkholder's inequality~\citep{burkholder1988sharp}]
		\label{lem:burkholder}
		Fix any $p \ge 2$.
		For a martingale difference $\{ \x_t \}_{t \in [T]}$ in a real (or complex) Hilbert space, each with finite $L^p$-norm, one has
		\[
		\EB\left\| \sum_{t=1}^T \x_t\right\|^p 
		\le B_p \EB \left(\sum_{t=1}^T \|\x_t\|^2 \right)^{\frac{p}{2}}
		\]
		where $B_p$ is a universal positive constant depending only on $p$.
	\end{lem}

	Hence,
	\begin{align*}
		\PM_k^{(2)} &= \PB\left(\sup\limits_{t\in [t_k, t_{k+1}-1]}\left\|\ssum{s}{t_k}{t-1}\sqrt{\eta_s}\seps_s\right\| > \eps\sqrt{T\eta_T}\right)\\
		&\le \frac{1}{\eps^4 (T\eta_T)^{2}}\EB \sup\limits_{t\in [t_k,t_{k+1}-1]}\left\|\ssum{s}{t_k}{t-1}\sqrt{\eta_s}\seps_s\right\|^4\\ 
		&\overset{(a)}{\precsim} \frac{1}{\eps^4 (T\eta_T)^{2}}
		\EB\left(\ssum{s}{t_k}{t_{k+1}-1}\eta_s\|\seps_s\|^2\right)^{2}\\
		&\overset{(b)}{\precsim} \frac{\left(\ssum{s}{t_k}{t_{k+1}-1}\eta_s\right)^{2}}{\eps^4 (T\eta_T)^{2}}\ssum{s}{t_k}{t_{k+1}-1}\frac{\eta_s}{\ssum{l}{t_k}{t_{k+1}-1}\eta_l}\EB\|\seps_s\|^4 \\
		&\overset{(c)}{\precsim} \frac{1}{\eps^4 (T\eta_T)^{2}}\left(\ssum{s}{t_k}{t_{k+1}-1}\eta_s\right)^{2}
	\end{align*}
	where $(a)$ uses Lemma~\ref{lem:burkholder}; $(b)$ uses Jensen's inequality; and $(c)$ uses $\sup\limits_{t \ge 0}\EB\|\seps_t\|^4 < \infty$.
	As before, we will discuss two cases depending on whether $\eta_t$ is larger than $\lambda$ or not.
	It is equivalent to whether $t_k$ is greater than $K_0$.
	Similar to the argument in bounding $\ssum{k}{0}{K-1}\PM_k^{(1)}$, we have
	\begin{align*}
		\ssum{k}{0}{K-1}\PM_k^{(2)} &= \ssum{k}{0}{K_0-1}\PM_k^{(2)} + \ssum{k}{K_0}{K-1}\PM_k^{(2)}\\ 
		&\precsim \frac{1}{\eps^4 (T\eta_T)^{2}} \left(  \ssum{k}{0}{K_0-1}\eta_0^{2} + \ssum{k}{K_0}{K-1}2^{p/2}\lambda^{2}\right)\\ 
		&\precsim \frac{1}{\eps^4 (T\eta_T)^{2} } \left(
		K_0 + K \lambda^{2}
		\right)\\
		&\precsim \frac{K_0}{\eps^4 (T\eta_T)^{2} } +
		\frac{\lambda}{\eps^4} \cdot
		\frac{\sum_{t=0}^T \eta_t}{ (T\eta_T)^{2} }\\
		&\precsim \frac{K_0}{\eps^4 (T\eta_T)^{2} } +
		\frac{\lambda}{\eps^4} \cdot
		\frac{C}{ T\eta_T }
	\end{align*}
	where the last inequality uses $\sum_{t=0}^T \eta_t \le C T\eta_T$ for all $T \ge 1$.
	From the last inequality, letting $T\to \infty$ makes these two terms converge to zero.
	Hence, we have 
	\[
	\lim_{\lambda \to 0}
	\lim_{T \to \infty }\ssum{k}{0}{K-1}\PM_k^{(2)} = 0.
	\]
	\paragraph{Step 3: Putting the pieces together.}
	Therefore, 
	\begin{align*}
		\lim\limits_{T\to \infty}\PB\left(\frac{1}{\sqrt{T}}\sup\limits_{t\ge T}\frac{\|\y_t\|}{\eta_{t-1}} > \eps\right)
		&\overset{\eqref{eq:y-help-bound}}{\le} \lim\limits_{T\to \infty}\PB\left(\sup\limits_{t\le T}\|\check{\y}_t\| > \eps \sqrt{T\eta_T}\right)\\
		&\overset{\eqref{eq:y-bound-B}}{\le}\lim\limits_{T\to \infty} \left(\PB(\BM^c) + \ssum{k}{1}{K-1}\PM_k  \right)=\lim\limits_{T\to \infty} \ssum{k}{1}{K-1}\PM_k   \\
		&\overset{\eqref{eq:P-bound-two}}{\le}\lim_{T\to \infty} \ssum{k}{1}{K-1}\left(\PM_k^{(1)} +\PM_k^{(2)}\right).
	\end{align*}
	Since the probability of the left-hand side has nothing to do with $\lambda$, letting $\lambda \to 0$ gives
	\[\lim\limits_{T\to \infty}\PB\left(\frac{1}{\sqrt{T}}\sup\limits_{t\ge T}\frac{\|\y_t\|}{\eta_{t-1}} > \eps\right)
	\le \lim_{\lambda \to 0} \lim_{T\to \infty}\ssum{k}{1}{K-1}\left(\PM_k^{(1)} +\PM_k^{(2)}\right) = 0.
	\]
\end{proof}

\subsection{Proof of Lemma~\ref{lem:y-p-consistent}}
\label{proof:y-consistency}
For the proof in the section, we will consider random variables (or matrices) in the complex field $\mathbb{C}$.
Hence, we will introduce new notations for them.
For a vector $\sv \in \mathbb{C}$ (or a matrix $\sU \in \mathbb{C}^{d \times d}$), we use $\sv^\mthH$ (or $\sU^\mthH$) to denote its Hermitian transpose or conjugate transpose.
For any two vectors $\sv, \su \in \mathbb{C}$, with a slight abuse of notation, we use $\langle\sv, \su\rangle = \sv^\mthH \su$ to denote the inner product in $\mathbb{C}$.
For simplicity, for a complex matrix $\sU \in \mathbb{C}^{d \times d}$, we use $\|\sU\|$ to denote the its operator norm introduced by the complex inner product $\langle \cdot, \cdot \rangle$.
When $\sU \in\RB^{d \times d}$, $\|\sU\|$ is reduced to the spectrum norm.
\begin{proof}[Proof of Lemma~\ref{lem:y-p-consistent}]
	By Lemma~\ref{lem:hurwitz}, $\sG = \sU \sD \sU^{-1}$ for two non-singular matrices $\sU, \sD \in \mathbb{C}^{d \times d}$ that satisfies $2\mu  \cdot \sI  \preceq \sD + \sD^\mthH$ with $\mu := \min_{i \in [d]}\lambda_i(\sG)$ for simplicity.
	\begin{lem}[Property of Hurwitz matrices, Lemma 1 in~\citep{mou2020linear}]
		\label{lem:hurwitz}
		If $-\sG \in \RB^{d \times d}$ be a Hurwitz matrix (i.e., $\mathrm{Re} \lambda_i(\sG) > 0$ for all $i \in [d]$), there exists a non-degenerate matrix $\sU \in \mathbb{C}^{d \times d}$
		such that $\sG = \sU \sD \sU^{-1}$ for some matrix $\sD \in \mathbb{C}^{d \times d}$ that satisfies
		\[
		2\min_{i \in [d]}\lambda_i(\sG) \cdot \sI  \preceq \sD + \sD^\mthH   
		\]
		where $\sD^\mthH$ denotes the conjugate transpose or Hermitian transpose.
	\end{lem}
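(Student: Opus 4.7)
The plan is to realize $\sG$ in upper-triangular form via the complex Schur decomposition and then shrink the off-diagonal entries by a similarity involving a diagonal rescaling. Write $\sG = \sW \T \sW^\mthH$ with $\sW$ unitary and $\T$ upper triangular whose diagonal entries are the eigenvalues $\lambda_1, \ldots, \lambda_d$ of $\sG$; set $\mu = \min_i \mathrm{Re}\,\lambda_i(\sG) > 0$, which is strictly positive since $-\sG$ is Hurwitz. On its own, $\T + \T^\mthH$ need not satisfy the target bound, because the strictly upper-triangular entries $\T_{ij}$ ($i<j$) contribute symmetrically to the Hermitian part and can dominate the diagonal.

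For a parameter $\delta > 0$ to be chosen, introduce the rescaling $\sS_\delta = \diag(1, \delta, \delta^2, \ldots, \delta^{d-1})$ and define $\sD := \sS_\delta^{-1} \T \sS_\delta$. A direct entry-wise calculation gives $\sD_{ij} = \delta^{j-i}\, \T_{ij}$, so $\sD$ inherits the diagonal $\lambda_1, \ldots, \lambda_d$ while every strictly upper-triangular entry is scaled by a factor of at most $\delta$. Taking $\sU := \sW \sS_\delta$, which is non-singular, then yields the desired similarity $\sG = \sU \sD \sU^{-1}$.

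It remains to control $\sD + \sD^\mthH$. Splitting $\sD = \Lambda + \N_\delta$ with $\Lambda = \diag(\lambda_1, \ldots, \lambda_d)$ and $\N_\delta$ strictly upper triangular, we have $\sD + \sD^\mthH = 2\,\mathrm{Re}(\Lambda) + (\N_\delta + \N_\delta^\mthH)$. The first summand is a real diagonal matrix with entries $2\,\mathrm{Re}\,\lambda_i \ge 2\mu$, hence $\succeq 2\mu \sI$, while the Hermitian perturbation $\N_\delta + \N_\delta^\mthH$ has operator norm bounded by $C\delta$ for a constant $C$ depending only on $\T$ and $d$ (via a crude Frobenius estimate on the $O(d^2)$ off-diagonal entries, each of magnitude at most $\delta\,\|\T\|_{\max}$). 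Choosing $\delta$ small then gives $\sD + \sD^\mthH \succeq (2\mu - C\delta)\sI$.

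The main obstacle is the gap between the nominal claim $2\mu \sI \preceq \sD + \sD^\mthH$ and the $O(\delta)$ slack introduced by the rescaling: exact equality at $2\mu$ is unattainable whenever $\sG$ has a nontrivial Jordan block at an eigenvalue on the line $\mathrm{Re}\,\lambda = \mu$, since then $2\,\mathrm{Re}(\Lambda)$ contributes a diagonal entry equal to exactly $2\mu$ and any nonzero superdiagonal destroys the inequality. The standard workaround, which is implicit in how the lemma is subsequently invoked, is to replace $\mu$ by any $\mu' \in (0,\mu)$ and choose $\delta$ so that $C\delta \le 2(\mu-\mu')$; the downstream arguments use only the positivity of the lower bound on the Hermitian part of $\sD$, so this adjustment simply rescales the universal constants in Appendix E without affecting any rate.
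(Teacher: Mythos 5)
Your construction is correct and is exactly the standard mechanism behind results of this type: complex Schur $\sG=\sW\T\sW^\mthH$, geometric rescaling $\sS_\delta=\diag(1,\delta,\dots,\delta^{d-1})$, $\sD=\sS_\delta^{-1}\T\sS_\delta$ with $\sD_{ij}=\delta^{j-i}\T_{ij}$, and $\sU=\sW\sS_\delta$. The paper does not reprove this lemma (it defers to \citet{mou2020linear}), so there is no in-paper proof to compare against, but the bookkeeping you give is right: the strictly upper-triangular part of $\sD$ has operator norm $O(\delta)$, so $\sD+\sD^\mthH\succeq(2\mu-C\delta)\sI$ with $\mu=\min_i\mathrm{Re}\,\lambda_i(\sG)$ (the paper's ``$\min_i\lambda_i(\sG)$'' is a notational slip for this, since the eigenvalues may be complex).

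More importantly, your diagnosis of the gap between the literal statement and what the construction delivers is correct, and it is not a flaw in your argument but in the lemma as transcribed. If $\sG$ has a nontrivial Jordan block at an eigenvalue $\lambda_0$ with $\mathrm{Re}\,\lambda_0=\mu$, then \emph{no} similarity $\sD$ can satisfy $\sD+\sD^\mthH\succeq 2\mu\sI$: that inequality would confine the numerical range of $\sD$ to $\{\mathrm{Re}\,z\ge\mu\}$, putting $\lambda_0$ on the boundary of the numerical range, and boundary eigenvalues are necessarily reducing (normal), which is incompatible with the Jordan structure that similarity preserves. So the achievable statement is precisely the one you prove: for every $\mu'\in(0,\mu)$ there exist non-singular $\sU$ and $\sD$ with $\sG=\sU\sD\sU^{-1}$ and $\sD+\sD^\mthH\succeq 2\mu'\sI$ (or, alternatively, keep $2\mu$ but add a diagonalizability hypothesis and take $\sD=\Lambda$). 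You are also right that this is harmless downstream: the only place the paper uses the bound is in the proof of Lemma D.2, where it derives $\|\sI-\eta_t\sD\|^2\le 1-2\eta_t\mu+\eta_t^2\|\sD\|^2$ and then the qualitative contraction $1-\mu\eta_t<1$ for small $\eta_t$, so replacing $\mu$ by any smaller positive $\mu'$ only rescales constants hidden in $\precsim$.
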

	
	Notice that 
	\begin{align*}
		\|\sU^{-1}\y_{t+1}\|^2 
		&= 	\|\sU^{-1}\left[(\sI - \eta_t\sG) \y_t + \eta_t\seps_t\right]\|^2\\
		&= \|\sU^{-1}(\sI - \eta_t\sG) \y_t \|^2 + \eta_t^2\|\sU^{-1}\seps_t\|^2 + 2\eta_t\mathrm{Re}  \langle\sU^{-1}(\sI - \eta_t\sG) \y_t , \sU^{-1}\seps_t \rangle\\
		&\le\|\sI - \eta_t\sD\|^2 \|\sU^{-1} \y_t \|^2+\eta_t^2\|\sU^{-1}\seps_t\|^2  + 2\eta_t \mathrm{Re} \langle(\sI - \eta_t\sD) \sU^{-1}\y_t , \sU^{-1}\seps_t \rangle.
	\end{align*}
	We then bound $\|\sI - \eta_t\sD\|$ as following.
	\begin{align*}
		\|\sI -\eta_t \sD\|^2 
		&=  \sup_{\sv \in \mathbb{C}^d, \|\sv\| =1} 
		\sv^\mthH (\sI - \eta_t \sD)^\mthH(\sI - \eta_t \sD) \sv\\
		&=\sup_{\sv \in \mathbb{C}^d, \|\sv\| =1}  \left(
		\|\sv\|^2 - \eta_t \sv^\mthH (\sD^\mthH + \sD) \sv + \eta_t^2 \sv^\mthH \sD^\mthH \sD \sv 
		\right)\\
		&\le 1 - 2\eta_t \mu + \eta_t^2 \|\sD\|^2.
	\end{align*}
	For simplicity, we define
	\[
	h_{t} =\frac{\|\sU^{-1}\y_{t}\|^2}{\eta_{t}}.
	\]
	Then we have
	\begin{align}
		\label{eq:h-recur}
		h_{t+1} =\frac{\|\sU^{-1}\y_{t+1}\|^2}{\eta_{t+1}}
		&\le \left(1 - 2\eta_t \mu + \eta_t^2 \|\sD\|^2\right) \frac{\eta_t}{\eta_{t+1}}
		h_t
		+ \frac{\eta_t^2}{\eta_{t+1}}\|\sU^{-1}\seps_t\|^2 \nonumber \\
		& \qquad	+\frac{2\eta_t }{\eta_{t+1}} \mathrm{Re} \langle(\sI - \eta_t\sD) \sU^{-1}\y_t , \sU^{-1}\seps_t \rangle. \nonumber\\
		&:= \left(1 - 2\eta_t \mu + \eta_t^2 \|\sD\|^2\right) \frac{\eta_t}{\eta_{t+1}}
		h_t + z_t
	\end{align}
	where for simplicity we denote
	\[
	z_t = 	\frac{\eta_t^2}{\eta_{t+1}}\|\sU^{-1}\seps_t\|^2 
	+\frac{2\eta_t }{\eta_{t+1}} \mathrm{Re} \langle(\sI - \eta_t\sD) \sU^{-1}\y_t , \sU^{-1}\seps_t \rangle.
	\]
	Taking the second-order moment on the both sides of~\eqref{eq:h-recur}, we obtain
	\begin{align*}
		\EB h_{t+1}^2
		&\le \left[\left(1 - 2\eta_t \mu + \eta_t^2 \|\sD\|^2\right) \frac{\eta_t}{\eta_{t+1}} \right]^2
		\EB h_t^2 + \EB|z_t|^2\\
		&\qquad+ 2 \left[\left(1 - 2\eta_t \mu + \eta_t^2 \|\sD\|^2\right) \frac{\eta_t}{\eta_{t+1}} \right] \EB h_t z_t.
	\end{align*}
	Due to $\eta_{t+1} = (1-o(\eta_t))\eta_t$ and $\eta_t = o(1)$, there exists $t_0 > 0$ so that for any $t \ge t_0$, $\eta_t \le 2 \eta_{t+1}$ and
	\[
	0<\frac{\eta_t}{\eta_{t+1}}(1 - 2\eta_t \mu + \eta_t^2 \|\sD\|^2)
	= (1+o(1))(1+o(\eta_t))^2 (1 - 2\eta_t \mu +o(\eta_t) )
	\le 1-\mu \eta_t < 1.
	\]
	By Jensen's inequality,
	\begin{align*}
		\EB|z_t|^2
		&\le 2 \left(4 \eta_t^2 \EB \|\sU^{-1}\eps_t\|^4  + 16\EB|\mathrm{Re} \langle(\sI - \eta_t\sD) \sU^{-1}\y_t , \sU^{-1}\seps_t \rangle|^2 \right)\\
		& \precsim \eta_t^2 \EB \|\sU^{-1}\eps_t\|^4 + \EB\|\sU^{-1}\y_t \|^2\cdot\|\sU^{-1}\seps_t\|^2\\
		& = \eta_t^2 \EB \|\sU^{-1}\eps_t\|^4 + \EB(\eta_t h_t)\cdot\|\sU^{-1}\seps_t\|^2\\
		&\precsim \eta_t^2  \EB \|\sU^{-1}\eps_t\|^4+ \eta_t\sqrt{\EB h_t^2\cdot\EB \|\sU^{-1}\eps_t\|^4 }.
	\end{align*}
	Since $\EB[\mathrm{Re} \langle(\sI - \eta_t\sD) \sU^{-1}\y_t , \sU^{-1}\seps_t \rangle|\FM_t]=0$, it follows that
	\[
	\EB h_t z_t = \EB h_t\frac{\eta_t^2}{\eta_{t+1}}\|\sU^{-1}\seps_t\|^2 
	\le 2\eta_t  \cdot \EB h_t\|\sU^{-1}\seps_t\|^2 
	\le 2\eta_t \left(\EB\|\sU^{-1}\seps_t\|^4 \right)^{\frac{1}{2}} \left( \EB h_t^{2}\right)^{\frac{1}{2}}
	\]
	where the last inequality follows from Hölder's inequality.
	Notice that $\sup_{t \ge 0}\EB\|\seps_t\|^4 \precsim 1$ by assumption.
	Putting the pieces together, we have that there exists some $c>0$ such that
	\[
	\EB h_{t+1}^2
	\le (1-\mu\eta_t)	\EB h_{t}^2 + 
	c\left( \eta_t \left( \EB h_t^{2}\right)^{\frac{1}{2}}
	+ \eta_t^2 \right).
	\]
	By induction, one can show that 
	\[
	\EB h_t^2 \le \frac{c+\sqrt{c^2+4c\mu \eta_0}}{2\mu} \precsim 1
	\]
	of which the right hand side is the solution of the quadratic equation $\mu x = c \left(\sqrt{x} + \eta_0 \right)$.
	Since $\sU$ is non-singular, $\EB h_t^2 \precsim 1$ is equivalent to $\EB\|\y_t\|^4 \eta_t^{-2} \precsim 1$.
\end{proof}

\section{A CONVERGENCE RESULT}
\label{proof:convergence}
Denote $\DDelta_t = \Q_t  - \Q^*$ as the error of the Q-function estimate $\Q_t$ in the $t$-th iteration.
In this section, we study both asymptotic and non-asymptotic convergence of $\frac{1}{T}\sum_{t=0}^T \EB \|\DDelta_t\|_{\infty}^2$.

\subsection{For General Step Sizes}
\label{proof:general}
We first show that $\frac{1}{T}\sum_{t=0}^T \EB \|\DDelta_t\|_{\infty}^2 = o\left(\frac{1}{\sqrt{T}}\right)$ when using the general step size in Assumption~\ref{asmp:lr}.

\begin{thm}
	\label{thm:general-Linfty-pw2}
	Under Assumption~\ref{asmp:reward} and using the general step size in Assumption~\ref{asmp:lr}, we have 
	\begin{equation}
		\label{eq:help1}
		\lim_{T \to \infty} \frac{1}{\sqrt{T}}\sum_{t=0}^T \EB \|\DDelta_t\|_{\infty}^2 = 0.
	\end{equation}
\end{thm}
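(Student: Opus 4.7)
The overall strategy is to establish the pointwise bound $\EB \|\DDelta_t\|_{\infty}^2 \le C\,\eta_t$ for all large $t$, then combine with Assumption~\ref{asmp:lr}(iii), which states $\frac{1}{\sqrt{T}}\sum_{t=0}^T \eta_t \to 0$, to obtain
$$\frac{1}{\sqrt{T}}\sum_{t=0}^T \EB\|\DDelta_t\|_{\infty}^2 \;\le\; \frac{C}{\sqrt{T}}\sum_{t=0}^T \eta_t \;\to\; 0.$$

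To get the pointwise bound, I would reuse the three-term decomposition $\|\DDelta_t\|_{\infty} \le a_t + b_t + \|\N_t\|_{\infty}$ already employed in the proof of Lemma~\ref{lem:moment}, where $a_t = (1-(1-\gamma)\eta_t)a_{t-1}$ is deterministic, $\N_t = (1-\eta_t)\N_{t-1} + \eta_t\Z_t$ accumulates the i.i.d.\ Bellman noise, and $b_t = (1-(1-\gamma)\eta_t)b_{t-1} + \eta_t\gamma\|\N_{t-1}\|_{\infty}$ couples the two. By Cauchy--Schwarz, $\EB\|\DDelta_t\|_{\infty}^2 \le 3(a_t^2 + \EB b_t^2 + \EB\|\N_t\|_{\infty}^2)$, so it suffices to bound each piece by $O(\eta_t)$.

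The first two pieces are handled by standard SA bookkeeping. Iterating the recursion for $a_t$ gives $a_t \le a_0 \exp(-(1-\gamma)\sum_{i=1}^t\eta_i)$, which is $o(\eta_t)$ since $\sum \eta_i = \infty$ by Assumption~\ref{asmp:lr}(i). For $\N_t$, Assumption~\ref{asmp:reward} yields $\sup_t \EB\|\Z_t\|_{\infty}^2 < \infty$, and independence across $t$ combined with the $\ell_{\infty}$--$\ell_2$ comparison gives $\EB\|\N_t\|_{\infty}^2 \le \EB\|\N_t\|_2^2 = \sum_{j=0}^t \bigl[\prod_{i=j+1}^t(1-\eta_i)\bigr]^2 \eta_j^2 \, \EB\|\Z_j\|_2^2$. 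The classical ``step-size lemma'' for SA---a consequence of $\eta_{t-1}/\eta_t \to 1$ guaranteed by Assumption~\ref{asmp:lr}(ii)---shows this sum is $O(\eta_t)$. For $b_t$, unrolling gives $b_t = \gamma\sum_{s=1}^t \tilde\rho_{s,t}\eta_s\|\N_{s-1}\|_{\infty}$ with $\tilde\rho_{s,t} = \prod_{i=s+1}^t(1-(1-\gamma)\eta_i)$ and $\sum_s \tilde\rho_{s,t}\eta_s \le 1/(1-\gamma)$. Applying Cauchy--Schwarz in the form $b_t^2 \le \frac{1}{1-\gamma}\sum_s \tilde\rho_{s,t}\eta_s \cdot \gamma^2\|\N_{s-1}\|_{\infty}^2$ and then the bound on $\EB\|\N_{s-1}\|_{\infty}^2$ reduces the problem to showing $\sum_s \tilde\rho_{s,t}\eta_s \eta_{s-1} = O(\eta_t)$, which is again the step-size lemma.

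The main technical obstacle is the step-size lemma itself: showing that $\sum_{j\le t}[\prod_{i>j}(1-c\eta_i)]^2\eta_j^2 \lesssim \eta_t$ uniformly in $t$ under the general step-size conditions of Assumption~\ref{asmp:lr}, rather than only for the polynomial schedule $\eta_t = t^{-\alpha}$. The key input is part (ii), $\eta_{t-1}/\eta_t = 1 + o(\eta_{t-1})$, which lets one compare consecutive contributions and sum a geometric-like series. Once this lemma is in hand, the rest of the argument is routine bookkeeping, and the required bound~\eqref{eq:help1} follows directly.
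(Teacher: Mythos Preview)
Your approach is correct but differs from the paper's. You aim for the pointwise bound $\EB\|\DDelta_t\|_\infty^2 = O(\eta_t)$ via the Wainwright sandwich $\|\DDelta_t\|_\infty \le a_t + b_t + \|\N_t\|_\infty$ together with a ``step-size lemma'' of the form $\sum_{j\le t}\bigl[\prod_{i>j}(1-c\eta_i)\bigr]\eta_j^2 = O(\eta_t)$, which indeed follows from Assumption~\ref{asmp:lr}(i)--(ii) by the induction you sketch; then Assumption~\ref{asmp:lr}(iii) finishes the job. The paper instead invokes a black-box finite-sample bound for nonlinear SA (Theorem~2.1 of \citet{chen2020finite}) to obtain $\EB\|\DDelta_t\|_\infty^2 \le c_1\eeta_{(0,t)} + C_1\sum_{s=1}^t\eeta_{(s,t)}\eta_{s-1}$, and rather than bounding this by $O(\eta_t)$ pointwise it sums over $t$, swaps the double sum, and appeals to a separate lemma showing $\sum_{l\ge t}\eeta_{(t,l)} \le c$ uniformly (this lemma is where the paper uses $t\eta_t\uparrow\infty$). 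Your route is more self-contained---it reuses only machinery already present in the paper (Lemma~\ref{lem:infty}) rather than an external result---and in fact delivers the sharper intermediate statement $\EB\|\DDelta_t\|_\infty^2 = O(\eta_t)$, which the paper never establishes under the general step size. The paper's route, on the other hand, avoids proving the step-size lemma and handles the $\log D$ factor more cleanly through the cited Lyapunov analysis. Both arguments ultimately reduce the claim to Assumption~\ref{asmp:lr}(iii).
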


\begin{proof}[Proof of Theorem~\ref{thm:general-Linfty-pw2}]
	We will make use of the convergence result in~\citep{chen2020finite}.
	
	\begin{thm}[Theorem 2.1 and Corollary 2.1.3 in~\citep{chen2020finite}]
		\label{thm:chen}
		Consider the algorithm $\x_{t+1} = \x_{t} + \eta_{t}( \HM(\x_t)-\x_t + \varepsi_t)$ and $\x^*$ is the solution of $\HM(\x) = \x$.
		Assume $(i)$ $\|\HM(\x)-\HM(\y)\|_{\infty} \le \gamma \|\x -\y\|_{\infty}$ for any $\x, \y \in \RB^D$; $(ii)$ $\EB[ \varepsi_t|\FM_t] = \0$ and $\EB[\|\varepsi_t\|_{\infty}^2| \FM_t ] \le A + B \|\x_t\|_{\infty}^2$ and $(iii)$ $\eta_t$ is positive and non-increasing.
		If $\eta_0 \le \frac{\alpha_2}{\alpha_3}$, it follows that
		\[
		\EB\|\x_{t+1}-\x^*\|_{\infty}^2
		\le \alpha_1 \|\x_0-\x^*\|_{\infty}^2 \prod_{j=0}^{t} (1-\alpha_2 \eta_j) + \alpha_4 (A+2B\|\x^*\|_{\infty}^2) \sum_{j=1}^t \eta_j^2 \prod_{i=j+1}^{t} (1-\alpha_2 \eta_i). 
		\]
		where
		\[
		\alpha_1 \le \frac{3}{2},  \alpha_2 \ge \frac{1-\gamma}{2},
		\alpha_3 \le \frac{32e(B+2)\log D}{1-\gamma},
		\alpha_4 \le \frac{16e\log D}{1-\gamma}.
		\]
	\end{thm}
	
	Recall the update rule is $ \Q_t 
	= (1-\eta_t) \Q_{t-1} + \eta_t(\rr_t + \gamma \PP_t \V_{t-1})
	= \Q_{t-1} + \eta_t (\rr + \gamma \PP\V_{t-1} - \Q_{t-1} + \varepsi_t)
	$ where $\varepsi_t=\rr_t-\rr + \gamma (\PP_t- \PP)\V_{t-1}$.
	Let $\FM_t = \sigma(\{ (\rr_\tau, \PP_\tau) \}_{0 \le \tau < t})$.
	Hence,  $\EB[ \varepsi_t|\FM_t] = \0$ and $\EB[\|\varepsi_t\|_{\infty}^2| \FM_t ] \le 2\EB \|\rr_t-\rr\|_{\infty}^2 + 2 \gamma^2 \EB\|\PP_t-\PP\|_{\infty}^2 \|\V_{t-1}\|_{\infty}^2:= A + B  \|\Q_{t-1}\|_{\infty}^2$ 
	where the last equation uses $A = 2\EB \|\rr_t-\rr\|_{\infty}^2, B = 2 \gamma^2 \EB\|\PP_t-\PP\|_{\infty}^2 $ and $ \|\V_{t-1}\|_{\infty} = \|\Q_{t-1}\|_{\infty}$.
	Then setting $\eeta_t = (1-\gamma) \eta_t$, by Theorem~\ref{thm:chen}, we have
	\begin{equation}
		\label{eq:help3}
			\EB\|\De_{t}\|_{\infty}^2
		\le 2\|\De_0\|_{\infty}^2 \prod_{j=1}^{t} (1- 0.5\eeta_j) + C_1 \sum_{j=1}^t \eta_j \cdot 0.5\eeta_j \prod_{i=j+1}^{t} (1-0.5 \eeta_i). 
	\end{equation}
	where
	\[
	C_1 = \frac{32e\log D}{(1-\gamma)^2} (A+2B\|\Q^*\|_{\infty}^2).
	\]
	To simplify the notation, we denote
	\begin{equation}
		\label{eq:eeta-tT0}
		\eeta_{(t, T)}
		= \left\{\begin{array}{ll}
			\prod_{j=1}^{T}\left(1-0.5\eeta_{j}\right), & \text { if } t=0 \\
			0.5\eeta_{t} \prod_{j=t+1}^{T}\left(1-0.5\eeta_{j}\right), & \text { if } 0<t<T \\
			0.5\eeta_{T}, & \text { if } t=T .
		\end{array}\right.
	\end{equation}
	It is clear that we have $\sum_{t=0}^T \eeta_{(t, T)} = 1$.
	Then it follows that
			\[
	\EB\|\De_{t}\|_{\infty}^2
\le 2\|\De_0\|_{\infty}^2  \eeta_{(0, T)}+ C_1 \sum_{j=1}^t \eta_j \cdot 	\eeta_{(j, t)}.
	\]
	Therefore, it follows that
	\begin{align*}
		\frac{1}{\sqrt{T}} \sum_{t=1}^T \EB\|\DDelta_t\|_{\infty}^2
		&\le \frac{1}{\sqrt{T}} \sum_{t=1}^T  \left[2 \eeta_{(0, t)} \|\DDelta_0\|_{\infty}^2
		+ C_1
		\sum_{s=1}^t \eeta_{(s, t)} \eta_{s-1}
\right]
=\frac{2\|\DDelta_0\|_{\infty}^2}{\sqrt{T}} \sum_{t=1}^T  \eeta_{(0, t)} 
+  \frac{C_1}{\sqrt{T}} \sum_{t=1}^T
\sum_{s=1}^t \eeta_{(s, t)} \eta_{s-1}.
	\end{align*}

	Recall that Assumption~\ref{asmp:lr} requires the step size satisfies
\begin{itemize}
	\item [(C1)] $0 \le \sup_t \eta_t \le 1, \eta_t \downarrow 0$ and $t \eta_t \uparrow \infty$ when $t \to \infty$;
	\item [(C2)] $\frac{\eta_{t-1} - \eta_{t}}{\eta_{t-1}} =o(\eta_{t-1})$ for all $t \ge 1$;
	\item [(C3)] $\frac{1}{\sqrt{T}} \sum_{t=0}^T \eta_{t} \to 0$ when $T \to \infty$.
\end{itemize}
Noticing $t \eta_t \uparrow \infty$ due to (C1), we must have $\sum_{t=1}^{T} \eeta_t - \frac{1}{4}\ln T \to + \infty$ and thus implies
\[
\sqrt{T} \eeta_{(0, T)} = 
\sqrt{T} \prod_{t=1}^T (1-0.5\eeta_t)
\le \exp\left( \frac{1}{2}\ln T-2\sum_{t=1}^T\eeta_{t}  \right) \to 0,
\]
which, together with the Stolz–Cesaro theorem, implies $\frac{1}{\sqrt{T}} \sum_{t=1}^T  \eeta_{(0, t)}^2\to 0.$

On the other hand, by Lemma~\ref{lem:bounded-eeta} and (C3), it follows that
\begin{align*}
	\frac{1}{\sqrt{T}} \sum_{t=1}^T   \sum_{s=1}^t \eeta_{(s, t)} \eta_{s-1}
	= \frac{1}{\sqrt{T}} \sum_{s=1}^T \eta_{s-1} \cdot \sum_{t = s}^T
	\eeta_{(s, t)}
	\le
	\frac{c}{\sqrt{T}} \sum_{t=1}^T \eta_{t-1}   \to 0.
\end{align*}
\begin{lem}
	\label{lem:bounded-eeta}
	There exists some $c > 0$ such that $ \sum_{l = t}^T
	\eeta_{(t, l)} \le c$ for any $T \ge t \ge 1$.
	Here $\{\eeta_{(t, l)}\}_{l \ge t \ge 0}$ is defined in~\eqref{eq:eeta-tT0} and $\{ \eeta_t\}_{t\ge0}$ satisfies Assumption~\ref{asmp:lr}.
\end{lem}
Putting all pieces together, we have established~\eqref{eq:help1}.
\end{proof}

\begin{proof}[Proof of Lemma~\ref{lem:bounded-eeta}]
	We define $\tim_{t, l} := \sum_{i=t}^{l} \eeta_i$.
	Due to $t \eeta_{t} \uparrow \infty$, we have $t\eeta_t \le i\eeta_i$ for all $i \ge t$ and thus
	\[
	\tim_{t, l} := \sum_{i=t}^{l} \eeta_i \ge t\eeta_t  \sum_{i=t}^l \frac{1}{i} \ge  t\eeta_t  \left( \ln\frac{l}{t} - \frac{1}{2t} \right)
	= -\frac{\eeta_{t}}{2} +  t\eeta_t\ln\frac{l}{t}.
	\]
	Since $t \eeta_{t} \uparrow \infty$, there exists some $t_0 > 0$ such that any $t \ge t_0$, we have $t\eeta_t \ge 2$.
	Therefore, we have for all $l \ge t \ge t_0$,
	\begin{equation}
		\label{eq:help5}
		\frac{1}{\eeta_{l}} \le \frac{l}{t \eeta_{t}} 
		\le \frac{1}{\eeta_{t}} \exp\left( \frac{\tim_{t, l} + \frac{\eeta_{t}}{2}}{t\eeta_t} \right)
		\le \frac{\sqrt{e}}{\eeta_{t}} \exp\left( \frac{\tim_{t, l}}{2}\right).
	\end{equation}
	In the following, we will discuss three cases.
	\begin{itemize}
		\item 	If $T \ge t \ge t_0$, by definition, it follows that
		\begin{align*}
			\sum_{l = t}^T
			\eeta_{(t, l)}
			&=\sum_{l = t}^T\eeta_{t} \prod_{j=t+1}^{l}\left(1-\eeta_{j}\right) 
			\le \frac{\eeta_{t} }{1-\eeta_{t} }\sum_{l = t}^T\exp\left( -\tim_{t, l} \right)\\
			& \overset{(a)}{\le}\frac{\eeta_{t} }{1-\eeta_{t} }  \sum_{l = t}^T 
			\eeta_{l} \cdot \frac{\sqrt{e}}{\eeta_{t}} \exp\left( -\frac{\tim_{t, l}}{2}\right)\\
			&  \overset{(b)}{\le}\frac{\sqrt{e}}{\gamma}  \sum_{l = t}^T  \eeta_{l}  \exp\left( -\frac{\tim_{t, l}}{2}\right)   \overset{(c)}{\le} \frac{2\sqrt{e}}{\gamma},
		\end{align*}
		where $(a)$ follows from~\eqref{eq:help5}; $(b)$ uses $ 1- \eeta_{t} \ge 1-\eeta_{0} = \gamma$; and $(c)$ uses $ \sum_{l = t}^T  \eeta_{l}  \exp\left( -\frac{\tim_{t, l}}{2}\right)  \le \int_0^{\infty} \exp(-x/2)dx = 2$ due to $\tim_{t, l} \uparrow \infty$ as $l \to \infty$.
		\item 	If $T \ge t_0 \ge t$, by definition, $\eeta_{(t, l)} = \eeta_{(t, t_0)} \eeta_{(t_0, l)}/\eeta_{t_0} \le C_2 \eeta_{(t_0, l)} $
		where $C_2 = \sup_{0\le t\le t_0} \eeta_{(t, t_0)}/\eeta_{t_0}$.
		Then we have $\sum_{l = t}^T
		\eeta_{(t, l)} = \sum_{l = t}^{t_0}
		\eeta_{(t, l)} + \sum_{l = t_0}^T
		\eeta_{(t, l)} 
		\le t_0 + C_2\sum_{l = t_0}^T \eeta_{(t_0, l)}
		\le t_0 + C_2\frac{2\sqrt{e}}{\gamma}$.
		\item 	If $t_0 \ge T \ge t$, we have $\sum_{l = t}^T
		\eeta_{(t, l)} \le t_0$.
	\end{itemize}
	Putting the three cases together, we can set $c = t_0 + 2\max\{C_0, 1\}\sqrt{e}/\gamma$ which ensures that $ \sum_{l = t}^T
	\eeta_{(t, l)} \le c$ for any $T \ge t \ge 1$.
\end{proof}

\subsection{For Two Specific Step Sizes}
To obtain an $\log D$ dependence (which implies the rewards are distributed either sub-gaussian or sub-exponential), we use a almost-surely bounded rewards assumption as follows.

\begin{asmp}
	\label{asmp:lr-strong}
	We assume $0 \le R(s, a) \le 1$ for all $(s, a) \in \SM \times \AM$.
\end{asmp}

\begin{thm}
	\label{thm:ave_rate}
	Under Assumption~\ref{asmp:lr-strong}, there exist some positive constant $c > 0$ such that
	\begin{itemize}
		\item If $\eta_t = \frac{1}{1+(1-\gamma)t}$, it follows that
		\[
		\frac{1}{T} \sum_{t=0}^T \EB\|\DDelta_t\|_{\infty}^2
		\le c\left[  \frac{\|\DDelta_{0}\|_{\infty}^2}{(1-\gamma)^2} \frac{1}{T} + \frac{\ln(2eD)}{(1-\gamma)^5} \frac{
			\ln^2(eT)}{T}\right].
		\]
		\item If $\eta_t = t^{-\alpha}$ with $\alpha \in (0, 1)$ for $t\ge 1$ and $\eta_0 = 1$, it follows that
		\[
		\frac{1}{T} \sum_{t=0}^T \EB\|\DDelta_t\|_{\infty}^2
		\le c \left[
		\frac{ \Delta_0}{\sqrt{1-\alpha}(1-\gamma)^{\frac{1}{1-\alpha}}} \frac{1}{T} +   \frac{\ln(2eD)}{(1-\alpha)(1-\gamma)^4} \frac{1}{T^{\alpha}}\right],
		\]
		where
		\[
		\Delta_0 = 
		3 \|\DDelta_{0}\|_{\infty}^2 +  \frac{48\gamma^2\ln(2eD)}{(1-\gamma)^3}   \left(\frac{2\alpha}{1-\gamma}\right)^{\frac{1}{1-\alpha}}.
		\]
	\end{itemize}
\end{thm}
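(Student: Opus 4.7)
The plan is to invoke the pointwise bound
\[
\EB\|\De_t\|_\infty^2 \,\le\, 2\|\De_0\|_\infty^2\, P_t \,+\, C_1\sum_{j=1}^t \tfrac{1}{2}\eta_j\eeta_j\, p_{j,t}
\]
that was already derived in~\eqref{eq:help3} from Theorem~\ref{thm:chen}, where $P_t=\prod_{j=1}^t(1-\tfrac{1}{2}\eeta_j)$ and $p_{j,t}=\prod_{i=j+1}^t(1-\tfrac{1}{2}\eeta_i)$. Under Assumption~\ref{asmp:lr-strong} one has $\|\Q^*\|_\infty\le(1-\gamma)^{-1}$ and the noise moments $A,B=O(1)$, so the constant simplifies to $C_1=O(\log D/(1-\gamma)^4)$, which already matches the $(1-\gamma)^{-4}$ prefactor in the polynomial statement. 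Summing over $t\in[1,T]$ and swapping the order of summation in the noise part yields
\[
\tfrac{1}{T}\sum_{t=1}^T \EB\|\De_t\|_\infty^2 \,\le\, \tfrac{2\|\De_0\|_\infty^2}{T}\sum_{t=1}^T P_t \,+\, \tfrac{C_1}{T}\sum_{j=1}^T \tfrac{1}{2}\eta_j\eeta_j\, S_j, \qquad S_j:=\sum_{t=j}^T p_{j,t},
\]
so everything reduces to evaluating the two deterministic sums $\sum_t P_t$ and $\sum_j\eta_j\eeta_j S_j$ for each of the two step-size choices.

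For the polynomial step $\eta_t=t^{-\alpha}$, the rescaled step $\eeta_j=(1-\gamma)j^{-\alpha}$ makes $P_t\le\exp\bigl(-\tfrac{(1-\gamma)}{2(1-\alpha)}t^{1-\alpha}\bigr)$, i.e.\ super-polynomial decay. A change of variables $y=\tfrac{(1-\gamma)}{2(1-\alpha)}t^{1-\alpha}$ together with Stirling's bound on $\Gamma(1/(1-\alpha))$, exactly as in the proof of Lemma~\ref{lem:AjT-bound}, gives $\sum_{t\ge0}P_t=O\bigl((1-\gamma)^{-1/(1-\alpha)}/\sqrt{1-\alpha}\bigr)$, which produces the initialization term $\|\De_0\|_\infty^2/(\sqrt{1-\alpha}(1-\gamma)^{1/(1-\alpha)}T)$. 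The inner tail $S_j$ is handled by a Gamma-type estimate analogous to Lemma~\ref{lem:m-integral}, yielding $S_j\le O(j^\alpha/(1-\gamma))$, so $\eta_j\eeta_j S_j=O(j^{-\alpha})$ and $\sum_{j=1}^T \eta_j\eeta_j S_j=O(T^{1-\alpha}/(1-\alpha))$; multiplying by $C_1/T$ gives the claimed $\log D/((1-\alpha)(1-\gamma)^4 T^\alpha)$.

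For the linearly rescaled step $\eta_t=1/(1+(1-\gamma)t)$, the off-the-shelf Chen-style exponents only give $P_t,\,p_{j,t}/\sqrt{T}=O(t^{-1/2})$-decay and hence an averaged rate of merely $O(T^{-1/2})$ (in Chen et al.'s parameterization, the Lyapunov exponent for this step is $\alpha_2 h=1/2$, just below the $1/t$ threshold). To recover the sharper $\log^2(eT)/T$ rate, I replace Chen's bound by a direct iteration of the $\ell_\infty$-contraction $\|\De_t\|_\infty\le(1-\eeta_t)\|\De_{t-1}\|_\infty+\eta_t\|\seps_t\|_\infty$ for Q-learning, whose initialization weight telescopes \emph{exactly},
\[
\prod_{j=1}^t(1-\eeta_j)\;=\;\prod_{j=1}^t\tfrac{1+(1-\gamma)(j-1)}{1+(1-\gamma)j}\;=\;\tfrac{1}{1+(1-\gamma)t}.
\]
Squaring and averaging $\|\De_0\|_\infty^2/(1+(1-\gamma)t)^2$ against $\sum_{t\ge1}(1+(1-\gamma)t)^{-2}=O(1/(1-\gamma))$ delivers the $\|\De_0\|_\infty^2/((1-\gamma)^2 T)$ term. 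For the stochastic part, the same telescoping weights produce $\prod_{i=j+1}^t(1-\eeta_i)=(1+(1-\gamma)j)/(1+(1-\gamma)t)$, so the martingale sum $\sum_j \eta_j\prod_{i>j}(1-\eeta_i)\seps_j$ has quadratic variation $\sum_j \|\prod_{i>j}\|_\infty^2\eta_j^2\EB\|\seps_j\|_\infty^2=O(\log D/((1-\gamma)^2 t))$; a Freedman-type maximal inequality (analogous to Lemma~\ref{lem:freedman} invoked in the proof sketch of Theorem~\ref{thm:con-linear}) then gives the pointwise bound $\EB\|\De_t\|_\infty^2\le K\log(et)/t$ with $K=O(\log D/(1-\gamma)^5)$, and averaging $K\log(et)/t$ over $t$ gives the $\log^2(eT)/T$ noise term. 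The $\log(et)$ factor is exactly the price of the $\ell_\infty$-maximal inequality over $t\in[1,T]$.

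The main obstacle is the linearly rescaled case: the off-the-shelf Chen bound is simply too lossy (contraction exponent only $\tfrac12$), so one has to leverage the exact telescoping $\prod_j(1-\eeta_j)=(1+(1-\gamma)t)^{-1}$ peculiar to $\eta_t=1/(1+(1-\gamma)t)$ and combine it with a martingale maximal bound to pick up the $\log(et)/t$ rate. The delicate part is bookkeeping---making sure the telescoped weights cancel against $\eta_j$ cleanly so that the powers of $(1-\gamma)^{-1}$ in the final constant come out right, and handling the short burn-in $t=O(1/(1-\gamma))$ where one temporarily falls back on the deterministic bound $\|\De_t\|_\infty^2\le O((1-\gamma)^{-2})$. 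The polynomial case, in contrast, is essentially a direct Gamma-function integral estimate once the Chen bound is in hand.
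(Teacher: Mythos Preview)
For the polynomial step size your plan is sound and differs from the paper mainly in packaging: you average Chen's bound~\eqref{eq:help3} directly and swap sums, whereas the paper first proves a pointwise bound (Theorem~\ref{thm:Linfty-pw2}) via Wainwright's cone-contraction sandwich (Lemma~\ref{lem:infty}) and then averages. For polynomial steps the difference between Chen's $(1-\tfrac12\eeta_j)$ and Wainwright's $(1-\eeta_j)$ only affects constants, so either route works.

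For the linearly rescaled step there is a genuine gap. You correctly diagnose that Chen's exponent is too weak and that $\prod_j(1-\eeta_j)=(1+(1-\gamma)t)^{-1}$ telescopes exactly. But your fix---iterating the scalar contraction $\|\De_t\|_\infty\le(1-\eeta_t)\|\De_{t-1}\|_\infty+\eta_t\|\seps_t\|_\infty$---unrolls to $\|\De_t\|_\infty\le\text{init}+\sum_j w_j\|\seps_j\|_\infty$ with $w_j=(1+(1-\gamma)t)^{-1}$, a sum of \emph{nonnegative} norms, not the vector martingale $\sum_j w_j\seps_j$ you then invoke. There is no cancellation: $\sum_j w_j\asymp(1-\gamma)^{-1}$ and $\EB\|\seps_j\|_\infty$ is bounded away from zero, so the noise contribution is $\Omega(1)$, not $O(1/t)$, and Freedman is inapplicable. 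You cannot pass from the scalar norm recursion to a coordinate-wise martingale representation of $\De_t$, because the Q-learning update is nonlinear in $\De_t$ through the greedy term $\PP(\V_{t-1}-\V^*)$.

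The paper's device is precisely Wainwright's sandwich (Lemma~\ref{lem:infty}): $\|\De_t\|_\infty\le a_t+b_t+\|\N_t\|_\infty$, where $a_t,b_t$ are \emph{deterministic} scalars with the $(1-\eeta_t)$ contraction (so your telescoping applies to them), while $\N_t=(1-\eta_t)\N_{t-1}+\eta_t\Z_t$ is a genuine linear martingale driven by the i.i.d.\ Bellman noise $\Z_t$ at $\Q^*$ (not at $\Q_{t-1}$). A Hoeffding/Freedman bound on $\N_t$ gives $\EB\|\N_t\|_\infty^2=O(\eta_t\ln D/(1-\gamma)^2)$ (Lemma~\ref{lem:N}); propagating through the $b_t$ recursion (Lemma~\ref{lem:error}) yields the pointwise rate $O(\ln D\,\ln(et)/((1-\gamma)^5 t))$, and averaging over $t$ produces the $\ln^2(eT)/T$ factor. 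The key idea you are missing is this separation: the exact $(1-\eeta_t)$-telescoping lives on deterministic drift, while the martingale cancellation happens on a separate autoregressive process with i.i.d.\ increments anchored at $\Q^*$.
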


\subsection{Proof of Theorem~\ref{thm:ave_rate}}

Our proof is divided into three steps.
The first is a upper bound for $\|\DDelta_{t}\|_{\infty}$ provided by Lemma~\ref{lem:infty}: $\|\DDelta_{t}\|_{\infty} \le a_t + b_t + \|\N_t\|_{\infty}$, 
As a result, $\|\DDelta_{t}\|_{\infty}^2 \le 3(a_t^2 + b_t^2 + \|\N_t\|_{\infty}^2)$.
Lemma~\ref{lem:infty} follows from Theorem 1 in~\citep{wainwright2019stochastic} which views Q-learning as a cone-contractive operator and establishes a $\ell_{\infty}$-norm bound.

\begin{lem}[Theorem 1 in~\citep{wainwright2019stochastic}]
	\label{lem:infty}
	For any sequence of step sizes $\{\eta_t\}_{t \ge 0}$ in the interval $(0, 1)$, the iterates $\{ \DDelta_t \}_{t \ge 0}$ satisfies the sandwich relation
	\begin{equation}
		\label{eq:Delta-infty}
		-(a_t + b_t) \1 + \N_t   \le \DDelta_t  \le (a_t + b_t) \1 + \N_t
	\end{equation}
	where $\{a_t \}_{t \ge 0}, \{b_t\}_{t\ge0}$ are non-negative scalars and $\{\N_t\}_{t\ge0}$ are random vectors collecting noise terms from empirical Bellman operators.
	The three sequences are defined in a recursive way: they are initialized as $a_0 =\|\DDelta_{0}\|_{\infty}, b_0 = 0$ and $\N_0 = \0$ and satisfy the following recursion:
	\begin{gather*}
		a_t = (1-\eta_t(1-\gamma))a_{t-1}\\
		b_t = (1-\eta_t(1-\gamma))b_{t-1} + \eta_t \gamma \|\N_{t-1}\|_{\infty}\\
		\N_t = (1-\eta_t) \N_{t-1} + \eta_t \Z_t,
	\end{gather*}
	where $\Z_t =  (\rr_t-\rr) + \gamma( \PP_t - \PP) \V^*$ is the empirical Bellman error at iteration $t$.
\end{lem}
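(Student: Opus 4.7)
The plan is to prove the sandwich inequality by induction on $t$, exploiting the entry-wise $\gamma$-contractive structure of the empirical Bellman operator $\widehat{\TM}_t$ under $\ell_\infty$ and cleanly separating the noise accumulation (into $\N_t$) from the residual error (into $a_t+b_t$). First, I would start from the fixed-point identity $\TM \Q^* = \Q^*$ and combine it with the update rule~\eqref{eq:Q-update} to obtain
\[
\DDelta_t = (1-\eta_t)\DDelta_{t-1} + \eta_t\bigl(\widehat{\TM}_t \Q_{t-1} - \TM \Q^*\bigr).
\]
Adding and subtracting $\widehat{\TM}_t \Q^*$ splits the one-step perturbation into a contraction term plus the Bellman noise $\Z_t$ from~\eqref{eq:Z}:
\[
\widehat{\TM}_t \Q_{t-1} - \TM \Q^* = \bigl(\widehat{\TM}_t \Q_{t-1} - \widehat{\TM}_t \Q^*\bigr) + \Z_t.
\]

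Second, I would verify the coordinate-wise contraction. Since $(\widehat{\TM}_t \Q)(s,a) = r_t(s,a) + \gamma \max_{a'}\Q(s_t,a')$, the elementary inequality $|\max_{a'} x(a') - \max_{a'} y(a')| \le \|x-y\|_\infty$, applied entry by entry, gives
\[
-\gamma \|\DDelta_{t-1}\|_\infty\, \1 \;\le\; \widehat{\TM}_t \Q_{t-1} - \widehat{\TM}_t \Q^* \;\le\; \gamma\|\DDelta_{t-1}\|_\infty\, \1.
\]
This is the only place the specific structure of $\widehat{\TM}_t$ enters, and it is precisely the cone-contractive property alluded to in the statement.

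Third, I would introduce $\N_t$ via the linear recursion $\N_t = (1-\eta_t)\N_{t-1} + \eta_t \Z_t$ with $\N_0 = \0$, so that it absorbs every noise contribution verbatim. Writing $\R_t := \DDelta_t - \N_t$, subtraction from the recursion for $\DDelta_t$ yields
\[
\R_t = (1-\eta_t)\R_{t-1} + \eta_t\bigl(\widehat{\TM}_t \Q_{t-1} - \widehat{\TM}_t \Q^*\bigr),
\]
and combining the contraction bound with the triangle inequality $\|\DDelta_{t-1}\|_\infty \le \|\R_{t-1}\|_\infty + \|\N_{t-1}\|_\infty$ produces the scalar recursion
\[
\|\R_t\|_\infty \le \bigl(1-\eta_t(1-\gamma)\bigr)\|\R_{t-1}\|_\infty + \eta_t\gamma\|\N_{t-1}\|_\infty.
\]

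Finally, linearity of this last recursion lets me split $\|\R_t\|_\infty \le a_t + b_t$ by introducing two sub-recursions: $a_t$ carries the homogeneous initial-condition part (with $a_0 = \|\DDelta_0\|_\infty$ and no forcing), while $b_t$ carries the inhomogeneous noise-propagation part (with $b_0 = 0$ and driving term $\eta_t\gamma\|\N_{t-1}\|_\infty$). Both stay non-negative since $\eta_t\in(0,1)$ makes $1-\eta_t(1-\gamma)\in(0,1)$. This closes the induction, and combining with $-\|\R_t\|_\infty\1 \le \R_t \le \|\R_t\|_\infty\1$ delivers the entry-wise sandwich $-(a_t+b_t)\1 + \N_t \le \DDelta_t \le (a_t+b_t)\1 + \N_t$. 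I do not anticipate a serious obstacle here: the only subtlety is to preserve the entry-wise vector inequality throughout rather than collapsing too early to a scalar norm bound, which is handled by keeping the $\pm\gamma\|\DDelta_{t-1}\|_\infty\1$ form of the contraction above.
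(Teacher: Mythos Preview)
Your argument is correct. The paper does not supply its own proof of this lemma: it is quoted verbatim as Theorem~1 of \citet{wainwright2019stochastic}, with only the remark that the result ``views Q-learning as a cone-contractive operator and establishes a $\ell_\infty$-norm bound.'' Your induction---splitting $\DDelta_t = \R_t + \N_t$, using the entry-wise bound $-\gamma\|\DDelta_{t-1}\|_\infty\1 \le \widehat{\TM}_t\Q_{t-1}-\widehat{\TM}_t\Q^* \le \gamma\|\DDelta_{t-1}\|_\infty\1$, and then decomposing the resulting scalar recursion for $\|\R_t\|_\infty$ linearly into the homogeneous part $a_t$ and the forced part $b_t$---is precisely the cone-contractive argument that the cited reference uses, so there is no alternative approach in the paper to compare against.
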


The second step is to bound $\EB\|\N_T\|_{\infty}^2$ which is an autoregressive process of independent Bellman noise terms.
One can prove the result following a similar argument of  Lemma 2 in~\citep{wainwright2019stochastic}.

\begin{lem}
	\label{lem:N}
	Under Assumption~\ref{asmp:lr-strong} and assuming $(1-\eta_t)\eta_{t-1} \le \eta_t$ for any $t \ge 1$, we have
	\[
	\EB \|\N_t\|_{\infty}^2 \le \frac{2\eta_{t}\ln(2eD) }{(1-\gamma)^{2}}.
	\]
\end{lem}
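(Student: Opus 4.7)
The plan is to unroll the autoregressive recursion so that $\N_t$ appears as an explicit linear combination of the independent, uniformly bounded Bellman noise vectors $\Z_1,\ldots,\Z_t$, and then apply a coordinatewise sub-Gaussian maximum inequality. First I would write
\[
\N_t \;=\; \sum_{j=1}^t w_j^{(t)}\,\Z_j, \qquad w_j^{(t)} := \eta_j \prod_{i=j+1}^{t}(1-\eta_i),
\]
with the usual empty-product convention. Assumption~\ref{asmp:lr-strong} forces $\|\V^*\|_\infty \le (1-\gamma)^{-1}$, and so for every $(s,a)$ the definition of $\Z_j$ gives the almost-sure bound $|Z_j(s,a)| \le 1 + \gamma\|\V^*\|_\infty \le (1-\gamma)^{-1}$. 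Under the synchronous generative model the vectors $\Z_1,\ldots,\Z_t$ are independent and mean-zero, so Hoeffding's lemma makes each $Z_j(s,a)$ sub-Gaussian with variance proxy $(1-\gamma)^{-2}$, which in turn makes $N_t(s,a) = \sum_{j=1}^t w_j^{(t)} Z_j(s,a)$ sub-Gaussian with variance proxy $\sigma_t^2 := (1-\gamma)^{-2}\, s_t$, where $s_t := \sum_{j=1}^t (w_j^{(t)})^2$.

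The crux is then to establish $s_t \le \eta_t$, which I would prove by induction on $t$ starting from $s_0 = 0$. The weights obey the one-step recursion $s_t = (1-\eta_t)^2 s_{t-1} + \eta_t^2$; assuming $s_{t-1} \le \eta_{t-1}$, the hypothesis $(1-\eta_t)\eta_{t-1} \le \eta_t$ yields
\[
s_t \;\le\; (1-\eta_t)^2\,\eta_{t-1} + \eta_t^2 \;\le\; (1-\eta_t)\,\eta_t + \eta_t^2 \;=\; \eta_t,
\]
closing the induction and giving $\sigma_t^2 \le \eta_t/(1-\gamma)^2$.

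To finish, I would invoke the standard Chernoff-plus-union-bound estimate: integrating the sub-Gaussian tail $\PB(|N_t(s,a)| \ge u) \le 2\exp(-u^2/(2\sigma_t^2))$ across the $D$ coordinates and truncating the integral at $u^2 = 2\sigma_t^2 \ln(2D)$ gives $\EB\max_{(s,a)} N_t(s,a)^2 \le 2\sigma_t^2 \ln(2eD)$. Substituting the bound on $\sigma_t^2$ produces the claimed inequality $\EB\|\N_t\|_\infty^2 \le 2\eta_t \ln(2eD)/(1-\gamma)^2$.

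The only delicate step is the weight bound $s_t \le \eta_t$, which depends critically on the hypothesis $(1-\eta_t)\eta_{t-1}\le\eta_t$; this is easy to verify for the two step sizes used in Theorem~\ref{thm:ave_rate} (it holds with equality when $\eta_t = 1/(1+(1-\gamma)t)$, and follows from elementary calculus for $\eta_t = t^{-\alpha}$). Everything else is routine concentration, so the argument should go through cleanly.
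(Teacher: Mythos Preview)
Your proposal is correct and follows essentially the same route as the paper, which does not spell out a proof but simply points to Lemma~2 of \citet{wainwright2019stochastic}. That lemma proceeds exactly as you describe: expand $\N_t$ as a weighted sum of the independent Bellman noises, use the step-size condition to show the squared-weight sum satisfies $s_t \le \eta_t$ by the one-line induction you wrote, and finish with the sub-Gaussian maximum bound over the $D$ coordinates.
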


The final step is to establish the dependence of $\EB\|\DDelta_T\|_{\infty}^2$ on $\{\eta_t\}_{t\ge0}$.
\citet{wainwright2019stochastic} finds it is crucial to set $\eta_t$ to be proportional to $1/(1-\gamma)$ to ensure the sample complexity has polynomial dependence on $1/(1-\gamma)$.
We then set $\eeta_t = (1-\gamma)\eta_t$ as the rescaled step size.
We first redefine 
\begin{equation}
	\label{eq:eeta-tT}
	\eeta_{(t, T)}
	= \left\{\begin{array}{ll}
		\prod_{j=1}^{T}\left(1-\eeta_{j}\right), & \text { if } t=0 \\
		\eeta_{t} \prod_{j=t+1}^{T}\left(1-\eeta_{j}\right), & \text { if } 0<t<T \\
		\eeta_{T}, & \text { if } t=T .
	\end{array}\right.
\end{equation}
It is clear that we have $\sum_{t=0}^T \eeta_{(t, T)} = 1$.

\begin{lem}
	\label{lem:error}
	Under Assumption~\ref{asmp:reward}, if $(1-\eta_t)\eta_{t-1} \le \eta_t$ for any $t \ge 1$, then we have
	\begin{align}
		\label{eq:help8}
		\EB\|\DDelta_T\|_{\infty}^2
		\le 3 \eeta_{(0, T)}^2 \|\DDelta_0\|_{\infty}^2
		+ \frac{6\gamma^2\ln(2eD)}{(1-\gamma)^4} 
		\sum_{t=1}^T \eeta_{(t, T)}\eta_{t-1}
		+ \frac{6\ln(2eD) }{(1-\gamma)^{2}}\eta_{T},
	\end{align}
	where $\{\eeta_{(t, T)}\}_{T \ge t \ge0}$  defined in~\eqref{eq:eeta-tT} and $\{\N_t\}_{t \ge 0}$ is defined in Lemma~\ref{lem:infty}.
\end{lem}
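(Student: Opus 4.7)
The plan is to combine the deterministic sandwich decomposition of Lemma~\ref{lem:infty} with the second-moment bound of Lemma~\ref{lem:N}, and track the three error sources separately. Starting from the sandwich inequality $|\DDelta_T(s,a)| \le a_T + b_T + \|\N_T\|_\infty$ (which holds coordinate-wise), the elementary inequality $(x+y+z)^2 \le 3(x^2+y^2+z^2)$ gives
\[
\EB\|\DDelta_T\|_\infty^2 \;\le\; 3\bigl(a_T^2 + \EB b_T^2 + \EB\|\N_T\|_\infty^2\bigr),
\]
so it suffices to produce an upper bound on each of the three pieces matching the corresponding term on the right-hand side of~\eqref{eq:help8}.

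For the first term, unrolling the deterministic recursion $a_t = (1-\eeta_t)a_{t-1}$ (using $\eta_t(1-\gamma) = \eeta_t$) yields $a_T = \prod_{t=1}^T(1-\eeta_t)\cdot\|\DDelta_0\|_\infty = \eeta_{(0,T)}\|\DDelta_0\|_\infty$ by the definition~\eqref{eq:eeta-tT}, which gives exactly $3\eeta_{(0,T)}^2\|\DDelta_0\|_\infty^2$. For the third term, a direct application of Lemma~\ref{lem:N} (whose hypothesis $(1-\eta_t)\eta_{t-1}\le\eta_t$ is exactly the assumption of the lemma) gives $3\EB\|\N_T\|_\infty^2 \le 6\eta_T\ln(2eD)/(1-\gamma)^2$.

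The middle term requires slightly more work. Unrolling the recursion for $b_t$ and factoring out $\frac{1}{1-\gamma}$ (since $\eta_t = \eeta_t/(1-\gamma)$) produces the representation
\[
b_T \;=\; \frac{\gamma}{1-\gamma}\sum_{t=1}^T \eeta_{(t,T)}\,\|\N_{t-1}\|_\infty,
\]
which expresses $b_T$ as a weighted average of past noise norms up to the factor $\gamma/(1-\gamma)$ (noting that $\sum_{t=1}^T\eeta_{(t,T)} \le \sum_{t=0}^T\eeta_{(t,T)} = 1$). I would then apply Cauchy--Schwarz (equivalently Jensen's inequality with the subprobability weights $\eeta_{(t,T)}$) to obtain $b_T^2 \le \frac{\gamma^2}{(1-\gamma)^2}\sum_{t=1}^T \eeta_{(t,T)}\|\N_{t-1}\|_\infty^2$; taking expectations and invoking Lemma~\ref{lem:N} once more (to bound $\EB\|\N_{t-1}\|_\infty^2 \le 2\eta_{t-1}\ln(2eD)/(1-\gamma)^2$) produces the bound $3\EB b_T^2 \le 6\gamma^2\ln(2eD)/(1-\gamma)^4 \cdot \sum_{t=1}^T\eeta_{(t,T)}\eta_{t-1}$, completing the argument.

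No step looks particularly subtle: the algebra is entirely linear because $a_t$ and $b_t$ evolve via affine deterministic recursions driven by the (already controlled) noise sequence $\N_t$. The only tiny care needed is to ensure $\sum_{t=1}^T \eeta_{(t,T)} \le 1$ when invoking Jensen, and to keep track of the factor $1-\gamma$ arising from converting between $\eta_t$ and $\eeta_t$. Everything else follows directly from Lemmas~\ref{lem:infty} and~\ref{lem:N}.
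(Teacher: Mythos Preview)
Your proposal is correct and follows essentially the same approach as the paper's own proof: the same three-way split via $(x+y+z)^2\le 3(x^2+y^2+z^2)$, the same unrolling of the $a_T$ and $b_T$ recursions into the $\eeta_{(t,T)}$ weights, the same Jensen step using $\sum_{t=1}^T\eeta_{(t,T)}\le 1$, and the same two invocations of Lemma~\ref{lem:N}. There is nothing to add.
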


\begin{proof}[Proof of Lemma~\ref{lem:error}]
		By the recursion of $\{a_t\}_{t\ge0}$ and $\{b_t\}_{t\ge0}$ in Lemma~\ref{lem:infty}, it follows that
	\begin{gather*}
		a_T = \prod_{t=1}^T (1-\eeta_t) \|\DDelta_0\|_{\infty} 
		= \eeta_{(0, T)}\|\DDelta_0\|_{\infty} \\
		b_T = \gamma\sum_{t=1}^T \prod_{j={t+1}}^{T} (1-\eeta_j) \eta_t \|\N_{t-1}\|_{\infty} = \frac{\gamma}{1-\gamma}\sum_{t=1}^T \eeta_{(t, T)}\|\N_{t-1}\|_{\infty}.
	\end{gather*}
	Hence, $a_T^2 = \eeta_{(0, T)}^2 \|\DDelta_0\|_{\infty}^2$ and 
	\begin{align*}
		\EB b_T^2 
		&= \frac{\gamma^2}{(1-\gamma)^2} \EB\left(\sum_{t=1}^T \eeta_{(t, T)} \|\N_{t-1}\|_{\infty}\right)^2
		\overset{(a)}{\le} \frac{\gamma^2}{(1-\gamma)^2} \sum_{t=1}^T \eeta_{(t, T)} \EB \|\N_{t-1}\|_{\infty}^2 
	\end{align*}
	where $(a)$ uses $\sum_{t=1}^T\eeta_{(t, T)}  = 1 -\eeta_{(0, T)}\le 1$ and Jensen's inequality.
	
	Therefore,
	\begin{align}
		\EB\|\DDelta_T\|_{\infty}^2
		&\le 3(a_T^2 + \EB b_T^2 + \EB \|\N_T\|_{\infty}^2) \nonumber \\
		&\le 3 \eeta_{(0, T)}^2 \|\DDelta_0\|_{\infty}^2
		+ \frac{3\gamma^2}{(1-\gamma)^2} 
		\sum_{t=1}^T \eeta_{(t, T)}\EB \|\N_{t-1}\|_{\infty}^2 
		+ 3\EB \|\N_T\|_{\infty}^2.
		\label{eq:help4}
	\end{align}
	Given the condition $(1-\eta_t)\eta_{t-1} \le \eta_t$, we can apply Lemma~\ref{lem:N} which implies
	\[
	\EB \|\N_t\|_{\infty}^2 \le 
	\frac{2\eta_{t}\ln(2eD) }{(1-\gamma)^{2}}.
	\]
	Plugging these bounds into~\eqref{eq:help4} yields~\eqref{eq:help8}.
\end{proof}
With these lemmas, we are ready to prove the following theorem.
\begin{thm}
	\label{thm:Linfty-pw2}
	Under Assumption~\ref{asmp:reward}, we have the following bounds for $\EB\|\DDelta_T\|_{\infty}^2$. 
	Here $c > 0$ is a universal positive constant and might be overwritten (and thus different) in different statements.
	The specific value of different $c$'s can be found in our proof.
	\begin{itemize}
		\item If $\eta_t = \frac{1}{1+(1-\gamma)t}$, it follows that for all $T \ge 1$,
		\[
		\EB\|\DDelta_T\|_{\infty}^2
		\le  \frac{12\|\DDelta_{0}\|_{\infty}^2}{(1-\gamma)^2} \frac{1}{(1+T)^2}
		+  \frac{12\gamma^2\ln(2eD) }{(1-\gamma)^5} \frac{\ln(eT)}{T}.
		\]
		\item If $\eta_t = t^{-\alpha}$ with $\alpha \in (0, 1)$ for $t\ge 1$ and $\eta_0 = 1$, it follows that for all $T \ge 1$,
		\[ \EB\|\DDelta_T\|_{\infty}^2 \le \Delta_0  \exp\left(  - \frac{1-\gamma}{1-\alpha}\left( (1+T)^{1-\alpha} -1 \right)\right) +  \frac{114\ln(2eD)}{(1-\gamma)^4} \frac{1}{T^{\alpha}} \],
		where
		\[
		\Delta_0 = 
		3 \|\DDelta_{0}\|_{\infty}^2 +  \frac{48\gamma^2\ln(6D)}{(1-\gamma)^3}   \left(\frac{2\alpha}{1-\gamma}\right)^{\frac{1}{1-\alpha}}.
		\]
	\end{itemize}
\end{thm}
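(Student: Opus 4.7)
}
The plan is to reduce everything to Lemma~\ref{lem:error}, which gives
\[
\EB\|\DDelta_T\|_{\infty}^2 \le 3\,\eeta_{(0,T)}^2\|\DDelta_0\|_\infty^2 + \frac{6\gamma^2\ln(2eD)}{(1-\gamma)^4}\sum_{t=1}^T \eeta_{(t,T)}\eta_{t-1} + \frac{6\ln(2eD)}{(1-\gamma)^2}\eta_T,
\]
provided the monotonicity condition $(1-\eta_t)\eta_{t-1}\le \eta_t$ holds. So the task reduces to two things: first, verify this condition for each step-size choice, and second, give explicit upper bounds on the three quantities $\eeta_{(0,T)}^2$, $\sum_{t=1}^T\eeta_{(t,T)}\eta_{t-1}$, and $\eta_T$.

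For the condition itself, I would rewrite it as $\eta_{t-1}/\eta_t \le 1+\eta_{t-1}$. In the linear case $\eta_t=1/(1+(1-\gamma)t)$ this reduces to the trivial inequality $(1-\gamma)t \le \gamma + (1-\gamma)t$. In the polynomial case $\eta_t=t^{-\alpha}$ I would use the subadditivity inequality $(1-x)^\alpha \ge 1-x^\alpha$ valid for $x\in(0,1)$, $\alpha\in(0,1)$, applied to $x=1/t$; this gives $(1-t^{-\alpha})(t-1)^{-\alpha}\le t^{-\alpha}$ directly.

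For the linear step size, the weights $\eeta_{(t,T)}$ telescope exactly: $\prod_{j=t+1}^T(1-\eeta_j) = (1+(1-\gamma)t)/(1+(1-\gamma)T)$, so $\eeta_{(0,T)}=1/(1+(1-\gamma)T)$ and $\eeta_{(t,T)}=(1-\gamma)/(1+(1-\gamma)T)$ for $t\ge 1$. Plugging into the sum,
\[
\sum_{t=1}^T \eeta_{(t,T)}\eta_{t-1} = \frac{1-\gamma}{1+(1-\gamma)T}\sum_{s=0}^{T-1}\frac{1}{1+(1-\gamma)s} \le \frac{\ln(eT)}{1+(1-\gamma)T},
\]
after bounding the inner sum by an integral. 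Combining the three terms with $\eta_T\le 1/((1-\gamma)(1+T))$ and $\eeta_{(0,T)}^2\le 1/((1-\gamma)(1+T))^2$ yields exactly the stated linear-step-size bound.

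The polynomial case will be the main obstacle, since $\eeta_{(t,T)}$ has no closed form. I would use the standard bound $\prod_{j=t+1}^T(1-\eeta_j)\le \exp(-\sum_{j=t+1}^T\eeta_j)$ together with the integral estimate $\sum_{j=t+1}^T j^{-\alpha}\ge ((T+1)^{1-\alpha}-(t+1)^{1-\alpha})/(1-\alpha)$ to get
\[
\eeta_{(t,T)} \le (1-\gamma) t^{-\alpha}\exp\Bigl(-\tfrac{1-\gamma}{1-\alpha}\bigl((T+1)^{1-\alpha}-(t+1)^{1-\alpha}\bigr)\Bigr).
\]
For $\eeta_{(0,T)}$ this immediately produces the exponential prefactor that appears in the stated bound. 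To bound $\sum_{t=1}^T \eeta_{(t,T)}\eta_{t-1}$, I would split the sum at a cutoff time $t^\star\asymp (2\alpha/(1-\gamma))^{1/(1-\alpha)}$, chosen precisely so that $\eta_{t-1}$ is comparable to $\eta_t$ for $t\ge t^\star$. On the tail $t\ge t^\star$, the factor $\eta_{t-1}$ is at most $2 t^{-\alpha}$, so a change of variable $y=\frac{1-\gamma}{1-\alpha}((T+1)^{1-\alpha}-(t+1)^{1-\alpha})$ reduces the sum to $\int_0^\infty e^{-y}\,dy$ times a $\eta_T/(1-\gamma)$-sized prefactor, giving the $T^{-\alpha}/(1-\gamma)^4$ contribution to the bound. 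On the head $t<t^\star$, the exponential factor $\exp(-\tfrac{1-\gamma}{1-\alpha}((T+1)^{1-\alpha}-(t^\star+1)^{1-\alpha}))$ controls everything and gets absorbed into the $\Delta_0\exp(\cdots)$ term of the conclusion; the $(2\alpha/(1-\gamma))^{1/(1-\alpha)}$ factor in the definition of $\Delta_0$ comes precisely from the choice of $t^\star$. Putting the three bounds back into Lemma~\ref{lem:error} and collecting constants finishes the argument; the delicate part is getting the cutoff clean enough so that the constants match the form stated.
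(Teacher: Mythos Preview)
Your reduction to Lemma~\ref{lem:error}, the verification of the monotonicity hypothesis $(1-\eta_t)\eta_{t-1}\le\eta_t$, and the linear-step-size case all match the paper (your exact telescoping of $\eeta_{(t,T)}$ is in fact cleaner than the paper's $\exp$-based bound on $\eeta_{(0,T)}$).

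The polynomial tail argument has a gap. First a minor point: your stated reason for the cutoff is wrong, since $\eta_{t-1}\le 2\eta_t$ holds for \emph{every} $t\ge 1$ (because $(t/(t-1))^\alpha\le 2$); the real role of $t^\star=(2\alpha/(1-\gamma))^{1/(1-\alpha)}$ is that it minimizes $f(t):=\exp\bigl(\tfrac{1-\gamma}{1-\alpha}t^{1-\alpha}\bigr)/t^{2\alpha}$, which is what lets the paper bound the head by $t^\star f(1)$. The substantive issue is that your change of variable does not deliver the claimed $\eta_T/(1-\gamma)$ prefactor: after substituting $y$ the integrand becomes $t(y)^{-\alpha}e^{-y}$, and $t(y)^{-\alpha}$ runs from $(T+1)^{-\alpha}$ at $y=0$ all the way up to $(t^\star)^{-\alpha}$ (a $T$-independent constant) at $y=y^\star$, so it is not uniformly of size $\eta_T$. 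The paper isolates this step as Lemma~\ref{lem:help1} and handles it by integration by parts: writing $I^\star=\int_{t^\star}^{T+1}f(t)\,dt$, integrating against $d\bigl(\exp(\tfrac{1-\gamma}{1-\alpha}t^{1-\alpha})\bigr)$, and using $t^{-(1-\alpha)}\le (t^\star)^{-(1-\alpha)}$ on $[t^\star,T+1]$ yields the self-referential bound $I^\star\le \tfrac{1}{1-\gamma}(T+1)^{-\alpha}\exp(\cdot)+\tfrac{1}{2}I^\star$, hence $I^\star\le\tfrac{2}{1-\gamma}(T+1)^{-\alpha}\exp(\cdot)$. Your direct approach can be rescued by further splitting the $y$-integral at $y\asymp\tfrac{1-\gamma}{2(1-\alpha)}(T+1)^{1-\alpha}$, but this takes more care than your sketch indicates.
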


\begin{proof}[Proof of Theorem~\ref{thm:Linfty-pw2}]
	We discuss the two cases separately.

	\paragraph{(I) Linearly rescaled step size.} If we use a linear rescaled step size, i.e., $\eta_t =  \frac{1}{1+ (1-\gamma)t}$ (equivalently $\eeta_t = \frac{1-\gamma}{1+ (1-\gamma)t}$), then we have (i) $1- \eta_t  \le 1- \eeta_t = \frac{1+(1-\gamma)(t-1)}{1+(1-\gamma)t} = {\eeta_t}/{\eeta_{t-1}} = {\eta_t}/{\eta_{t-1}}$ for $t \ge 1$ and (ii) $\eeta_{(t, T)} \le \eeta_T$.
	It implies Lemma~\ref{lem:error} is applicable.
	Notice that $\sum_{t=1}^{T} \eeta_{t-1}
	\le 1 + \sum_{t=1}^{T-1} \frac{1}{t}
	\le 1 + \ln (T-1) \le 
	\ln(eT)$ and $\ln \frac{(1-\gamma)(T+1)}{2}
	\le \ln \frac{1+(1-\gamma)(T+1)}{1+(1-\gamma)}
	= \int_1^{T+1}  \frac{1-\gamma}{1+ (1-\gamma)t}  dt
	\le \sum_{t=1}^T \frac{1-\gamma}{1+ (1-\gamma)t} 
	= \sum_{t=1}^{T} \eeta_t .$
	Hence, 
	\begin{gather*}
		\eeta_{(0, T)}^2 =	\prod_{t=1}^T (1-\eeta_t)^2  \le \exp\left(-2\sum_{t=1}^T\eeta_{t}\right)	 \le \frac{4}{(1-\gamma)^2} \frac{1}{(1+T)^2}\\
		\sum_{t=1}^T \eeta_{(t, T)}\eta_{t-1}
		=\frac{1}{1-\gamma}\sum_{t=1}^T \eeta_{(t, T)}\eeta_{t-1}
		\le \frac{\eeta_T}{1-\gamma}\sum_{t=1}^T \eeta_{t-1} \le \frac{\eeta_T\ln(eT)}{1-\gamma}.
	\end{gather*}
	Finally, plugging these inequalities into~\eqref{eq:help8}, we have
	\begin{align}
		\EB\|\DDelta_T\|_{\infty}^2
		&\le \frac{12\|\DDelta_{0}\|_{\infty}^2}{(1-\gamma)^2} \frac{1}{(1+T)^2}
		+  \frac{12\gamma^2\ln(2eD) }{(1-\gamma)^5} \frac{\ln(eT)}{T}.
		\label{eq:linear-last-step}
	\end{align}
	
	\paragraph{(II) Polynomial step size.} If we choose a polynomial step size, i.e., $\eta_t = t^{-\alpha}$ with $\alpha \in (0, 1)$ for $t\ge 1$ and $\eta_0 = 1$, then we again have $1-\eta_t = 1 - \frac{1}{t^\alpha} \le  \left(\frac{t-1}{t}\right)^{\alpha} = {\eta_t}/{\eta_{t-1}}$ for $t \ge 1$, which implies Lemma~\ref{lem:N} is applicable.
	Note that 
	\begin{equation}
		\label{eq:power-bound}
		\frac{(T+1)^{1-\alpha}- (t+1)^{1-\alpha}}{1-\alpha}
		=
		\int_{t+1}^{T+1}  j^{-\alpha}  dj  \le 
		\sum_{j=t+1}^{T} j^{-\alpha} 
		\le \int_t^T  j^{-\alpha}  dj = \frac{T^{1-\alpha}- t^{1-\alpha}}{1-\alpha},
	\end{equation}
	which implies that $\sum_{t=1}^{T} \eta_t \ge \sum_{t=1}^{T} t^{-\alpha} \ge \frac{1}{1-\alpha}\left((T+1)^{1-\alpha}-1\right) $ and $(T+1)^{1-\alpha} \le 1 + T^{1-\alpha}$.
	Hence, 
	\[
	\eeta_{(0, T)}^2=
	\prod_{t=1}^T (1-\eeta_t)^2 
	\le \exp\left(  -  2(1-\gamma)\sum_{t=1}^T \eta_{t} \right) \le \exp\left(  - 2\frac{1-\gamma}{1-\alpha}\left( (1+T)^{1-\alpha} -1 \right)\right).
	\]
	Additionally, using $\eta_{t-1} \le 2 \eta_{t}$ for all $t \ge 1$ and~\eqref{eq:power-bound}, we have,
	\begin{align*}
		\frac{\eeta_{(t, T)}}{1-\gamma} \eta_{t-1} =
		\prod_{j={t+1}}^{T} (1-\eeta_j) \eta_t  \eta_{t-1} 
		&\le 8 \prod_{j={t+1}}^{T} (1-\eeta_j) \eta_{t+1}^2
		\le 8\exp\left( - \sum_{j=t+1}^T \eeta_{j} \right) \eta_{t+1}^2\\
		&\le 8 \exp\left( - \frac{1-\gamma}{1-\alpha}  (1+T)^{1-\alpha}  \right)  \frac{\exp\left(  \frac{1-\gamma}{1-\alpha}  (t+1)^{1-\alpha} \right)}{(t+1)^{2\alpha}},
	\end{align*}
	which implies
	\begin{align*}
		\frac{1}{1-\gamma} 	\sum_{t=1}^T\eeta_{(t, T)}  \eta_{t-1} 
		&\le 	\frac{1}{1-\gamma} 	\sum_{t=1}^{T-1}\eeta_{(t, T)}  \eta_{t-1}  + 	\eta_T\eta_{T-1} 
		\le\frac{1}{1-\gamma} 	\sum_{t=1}^{T-1}\eeta_{(t, T)}  \eta_{t-1}  + 	\eta_T^2\\
		&\le 8 \sum_{t=2}^{T} \exp\left( - \frac{1-\gamma}{1-\alpha}  (1+T)^{1-\alpha}  \right)  \frac{\exp\left(  \frac{1-\gamma}{1-\alpha}  t^{1-\alpha} \right)}{t^{2\alpha}} + \frac{2}{T^{2\alpha}}.
	\end{align*}
	At the the end of this subsection, we will prove that
	\begin{lem}
		\label{lem:help1}
		For any $\alpha \in (0, 1)$ and $\beta > 0$, it follows that
		\begin{equation}
			\sum_{t=1}^T \frac{\exp\left(  \frac{1-\gamma}{1-\alpha}t^{1-\alpha} \right)}{t^{\beta}}
			\le \left(\frac{\beta}{1-\gamma}\right)^{\frac{1}{1-\alpha}} \exp\left(\frac{1-\gamma}{1-\alpha} \right)
			+  \frac{\beta}{(1-\gamma)\alpha} \frac{\exp\left( \frac{1-\gamma}{1-\alpha} (1+T)^{1-\alpha} \right)}{ (1+T)^{\beta-\alpha}}.
		\end{equation}
	\end{lem}
	By setting $\beta = 2 \alpha$, we have
	\[
	\sum_{t=1}^T \frac{\exp\left(  \frac{1-\gamma}{1-\alpha}t^{1-\alpha} \right)}{t^{2\alpha}}
	\le \left(\frac{2\alpha}{1-\gamma}\right)^{\frac{1}{1-\alpha}} \exp\left(\frac{1-\gamma}{1-\alpha} \right)
	+  \frac{2}{1-\gamma} \frac{\exp\left( \frac{1-\gamma}{1-\alpha} (1+T)^{1-\alpha} \right)}{ (1+T)^\alpha}.
	\]
	Therefore,
	\begin{align*}
		\frac{1}{1-\gamma} 	\sum_{t=1}^T\eeta_{(t, T)}  \eta_{t-1} 
		&\le 8 \left(\frac{2\alpha}{1-\gamma}\right)^{\frac{1}{1-\alpha}} \exp\left( - \frac{1-\gamma}{1-\alpha} \left( (1+T)^{1-\alpha} -1\right) \right)  + \frac{16}{1-\gamma} \frac{1}{(1+T)^\alpha} + \frac{2}{T^{2\alpha}}.
	\end{align*}
	Putting together the pieces, we can safely conclude that
	\begin{align*}
		\EB\|\DDelta_T\|_{\infty}^2
		&\le 3 \|\DDelta_{0}\|_{\infty}^2 \exp\left(  - 2\frac{1-\gamma}{1-\alpha}\left( (T+1)^{1-\alpha} -1 \right)\right) +
		\frac{6\ln(2eD)}{(1-\gamma)^2} \frac{1}{T^{\alpha}}
		+ \frac{96\gamma^2\ln(2eD)}{(1-\gamma)^4}  \frac{1}{ (1+T)^\alpha} \\
		&  + 
		\frac{12\gamma^2\ln(2eD)}{(1-\gamma)^3} \frac{1}{T^{2\alpha}} 
		+	 \frac{48\gamma^2\ln(2eD)}{(1-\gamma)^3} 
		\exp\left( - \frac{1-\gamma}{1-\alpha}  \left((1+T)^{1-\alpha} -1 \right) \right) 
		\left(\frac{2\alpha}{1-\gamma}\right)^{\frac{1}{1-\alpha}} \\
		&\le \Delta_0  \exp\left(  - \frac{1-\gamma}{1-\alpha}\left( (1+T)^{1-\alpha} -1 \right)\right) +  \frac{114\ln(2eD)}{(1-\gamma)^4} \frac{1}{T^{\alpha}},
	\end{align*}
	where
	\[
	\Delta_0 = 
	3 \|\DDelta_{0}\|_{\infty}^2 +  \frac{48\gamma^2\ln(6D)}{(1-\gamma)^3}   \left(\frac{2\alpha}{1-\gamma}\right)^{\frac{1}{1-\alpha}}.
	\]

\end{proof}

\begin{proof}[Proof of Lemma~\ref{lem:help1}]
	We do this via a similar argument of Lemma 4 in~\citep{wainwright2019stochastic}.
	Let $f(t) = \frac{\exp\left(  \frac{1-\gamma}{1-\alpha}t^{1-\alpha} \right)}{t^{\beta}}$.
	By taking derivatives, we find that $f(t)$ is decreasing in $t$ on the interval $[0, t^*]$ and increasing for $[t^*, \infty)$, where $t^* = \left(\frac{\beta}{1-\gamma}\right)^{\frac{1}{1-\alpha}}$.
	Hence,
	\begin{align*}
		\sum_{t=1}^T f(t) \le
		\left\{\begin{array}{ll}
			T f(1)  & \text { if } T \le \lfloor t^* \rfloor, \\
			\lfloor t^* \rfloor f(1) + \int_{ t^* }^{T+1} f(t) dt & \text { if } T > \lfloor t^* \rfloor.
		\end{array}\right.
	\end{align*}
	Using integrating by parts, it follows that
	\begin{align*}
		I^* := \int_{ t^* }^{T+1} f(t) dt
		&= \frac{\exp\left( \frac{1-\gamma}{1-\alpha} t^{1-\alpha} \right)}{(1-\gamma) t^{\beta-\alpha}} \bigg|_{t^*}^{T+1} + \frac{\beta -\alpha}{1-\gamma} \int_{t^*}^{T+1} \frac{\exp\left(  \frac{1-\gamma}{1-\alpha}t^{1-\alpha} \right)}{t^{1+\beta -\alpha}} dt \\
		&\le  \frac{\exp\left( \frac{1-\gamma}{1-\alpha} (1+T)^{1-\alpha} \right)}{(1-\gamma) (1+T)^{\beta -\alpha}} + \frac{\beta -\alpha}{1-\gamma}\int_{t^*}^{T+1} \frac{f(t)}{t^{1-\alpha}} dt \\
		&\le  \frac{\exp\left( \frac{1-\gamma}{1-\alpha} (1+T)^{1-\alpha} \right)}{(1-\gamma) (1+T)^{\beta -\alpha}} + \frac{\beta -\alpha}{1-\gamma} \frac{1}{(t^*)^{1-\alpha}}\int_{t^*}^{T+1} {f(t)} dt\\
		&=\frac{\exp\left( \frac{1-\gamma}{1-\alpha} (1+T)^{1-\alpha} \right)}{(1-\gamma) (1+T)^{\beta-\alpha}} + \frac{\beta -\alpha}{\beta} I^*,
	\end{align*}
	where the last equality uses definition of $t^*$ and $I^*$.
	Hence, we have
	\[
	I^* = \int_{ t^* }^{T+1} f(t) dt \le \frac{\beta}{(1-\gamma)\alpha} \frac{\exp\left( \frac{1-\gamma}{1-\alpha} (1+T)^{1-\alpha} \right)}{ (1+T)^{\beta -\alpha}}.
	\]
	Putting together the pieces, we have shown that if $ T > \lfloor t^* \rfloor$,
	\[
	\sum_{t=1}^T f(t) 
	\le t^* f(1) + I^*
	=  \left(\frac{\beta}{1-\gamma}\right)^{\frac{1}{1-\alpha}} \exp\left(\frac{1-\gamma}{1-\alpha} \right)
	+  \frac{\beta}{(1-\gamma)\alpha} \frac{\exp\left( \frac{1-\gamma}{1-\alpha} (1+T)^{1-\alpha} \right)}{ (1+T)^{\beta-\alpha}}.
	\]
	If  $T \le \lfloor t^* \rfloor$, then
	\[
	\sum_{t=1}^T f(t) 
	\le \lfloor t^* \rfloor f (1) \le t^* f(1) =  \left(\frac{\beta}{1-\gamma}\right)^{\frac{1}{1-\alpha}} \exp\left(\frac{1-\gamma}{1-\alpha} \right).
	\]
	Thus we have proved the inequality is true for any choice of $T$.
\end{proof}

Based on Theorem~\ref{thm:Linfty-pw2}, we now can prove Theorem~\ref{thm:ave_rate} by averaging the individual error bounds.
\begin{proof}[Proof of Theorem~\ref{thm:ave_rate}]
	The result directly follows from Theorem~\ref{thm:Linfty-pw2}.
	\begin{itemize}
		\item	For the first item, we already have $\EB\|\DDelta_T\|_{\infty}^2
		\le  \frac{12\|\DDelta_{0}\|_{\infty}^2}{(1-\gamma)^2} \frac{1}{(1+T)^2}
		+  \frac{12\gamma^2\ln(2eD) }{(1-\gamma)^5} \frac{\ln(eT)}{T}$.
		Using $\sum_{t=1}^{\infty}t^{-2} = \frac{\pi^2}{6}$ and $\sum_{t=1}^T t^{-1} \le 1 + \ln T = \ln(eT)$, we have for some universal constant $c>0$,
		\begin{align*}
			\frac{1}{T} \sum_{t=0}^T \EB\|\DDelta_t\|_{\infty}^2
			&\le  \frac{1}{T} \|\DDelta_0\|_{\infty}^2 +  \frac{1}{T}\sum_{t=1}^T\left[  \frac{12\|\DDelta_{0}\|_{\infty}^2}{(1-\gamma)^2} \frac{1}{(1+t)^2}
			+  \frac{12\gamma^2\ln(2eD) }{(1-\gamma)^5} \frac{\ln(eT)}{T}
			\right]\\
			&= c\left[  \frac{\|\DDelta_{0}\|_{\infty}^2}{(1-\gamma)^2} \frac{1}{T} + \frac{\ln(2eD)}{(1-\gamma)^5} \frac{
				\ln^2(eT)}{T}\right].
		\end{align*}
		\item 
		For the second item, we have $\EB\|\DDelta_T\|_{\infty}^2 \le \Delta_0  \exp\left(  - \frac{1-\gamma}{1-\alpha}\left( (1+T)^{1-\alpha} -1 \right)\right) +  \frac{114\ln(2eD)}{(1-\gamma)^4} \frac{1}{T^{\alpha}}$ with $\Delta_0 = 
		3 \|\DDelta_{0}\|_{\infty}^2 +  \frac{48\gamma^2\ln(2eD)}{(1-\gamma)^3}   \left(\frac{2\alpha}{1-\gamma}\right)^{\frac{1}{1-\alpha}}$.
		Notice that
		\begin{align*}
			\sum_{t=2}^{\infty} \exp\left(  - \frac{1-\gamma}{1-\alpha}\left( t^{1-\alpha} -1 \right)\right) 
			&\le \int_1^{\infty} \exp\left(  - \frac{1-\gamma}{1-\alpha}\left( t^{1-\alpha} -1 \right)\right) dt\\
			&\overset{(a)}{=} \frac{\exp\left( \frac{1-\gamma}{1-\alpha}\right)}{1-\gamma} \int_0^{\infty} e^{-x}  \left(\frac{1-\alpha}{1-\gamma} x \right)^{\frac{\alpha}{1-\alpha}} dx\\
			&\overset{(b)}{=} \frac{\exp\left( \frac{1-\gamma}{1-\alpha}\right)(1-\alpha)^{\frac{\alpha}{1-\alpha}}\Gamma(\frac{1}{1-\alpha})}{(1-\gamma)^{\frac{1}{1-\alpha}}}  \\
			&\overset{(c)}{\le} \frac{\sqrt{2\pi e}}{\sqrt{1-\alpha}}\frac{1}{(1-\gamma)^{\frac{1}{1-\alpha}}} 
		\end{align*}
		and $\sum_{t=1}^T t^{-\alpha} \le \int_0^T t^{-\alpha} dt = \frac{T^{1-\alpha}}{1-\alpha}$.
		Here $(a)$ uses the change of variable $x = \frac{1-\gamma}{1-\alpha} t^{1-\alpha}$ and $(b)$ uses the definition of gamma function $\Gamma(z) = \int_0^{\infty}e^{-x}x^{z-1}dx$.
		Finally $(c)$ follows from a numeral inequality about gamma function.
		Since $\Gamma(1+x) < \sqrt{2\pi}\left(\frac{x+1/2}{e}\right)^{x+1/2}$ for any $x > 0$ (see Theorem 1.5 of~\citep{batir2008inequalities}), then
		\[
		\Gamma\left(\frac{1}{1-\alpha}\right) \le \sqrt{2\pi} \left( \frac{1+\alpha}{2(1-\alpha)} \right)^{\frac{1+\alpha}{2(1-\alpha)}}\exp\left(-\frac{1+\alpha}{2(1-\alpha)}\right),
		\]
		which implies that
		\begin{equation}
			\label{eq:Gamma}
			\exp\left( \frac{1-\gamma}{1-\alpha}\right)(1-\alpha)^{\frac{\alpha}{1-\alpha}}\Gamma\left(\frac{1}{1-\alpha}\right) \le \frac{\sqrt{2\pi e}}{\sqrt{1-\alpha}}.
		\end{equation}
		Therefore,
		\begin{align*}
			\frac{1}{T} \sum_{t=0}^T \EB\|\DDelta_t\|_{\infty}^2
			&\le  \frac{1}{T} \|\DDelta_0\|_{\infty}^2 +  \frac{1}{T}\sum_{t=1}^T\left[  
			\Delta_0  \exp\left(  - \frac{1-\gamma}{1-\alpha}\left( (1+t)^{1-\alpha} -1 \right)\right) +  \frac{114\ln(2eD)}{(1-\gamma)^4} \frac{1}{t^{\alpha}}
			\right]\\
			&\le	c \left[
			\frac{ \Delta_0}{\sqrt{1-\alpha}(1-\gamma)^{\frac{1}{1-\alpha}}} \frac{1}{T} +   \frac{\ln(2eD)}{(1-\alpha)(1-\gamma)^4} \frac{1}{T^{\alpha}}\right].
		\end{align*}

	\end{itemize}
\end{proof}

\section{PROOF OF THEOREM~\ref{thm:con-linear}}
\label{proof:con-linear}
In the section, we provide the proof for our finite-sample analysis of averaged Q-learning in the $\ell_{\infty}$-norm.
Our main idea is similar to Appendix~\ref{proof:fclt}.
The average Q-learning estimator $\bar{\Q}_T$ has the error
\begin{equation}
\label{eq:Q-estimator}
\BDelta_T 
:= \frac{1}{T}  \sum_{t=1}^T \DDelta_t 
= \frac{1}{T}  \sum_{t=1}^T (\Q_t-\Q^*).
\end{equation}
Using two auxiliary sequences $\{ \DDelta_t^1\}_{t\ge0}$ and $\{ \DDelta_t^2\}_{t\ge0}$ defined in Lemma~\ref{lem:sandwitch}, we similarly define 
\[
\BDelta_T^1 := \frac{1}{T}  \sum_{t=1}^T \DDelta_t^1
\ \text{and} \
\BDelta_T^2 := \frac{1}{T}  \sum_{t=1}^T \DDelta_t^2.
\]
Because $\DDelta_t^2 \le \DDelta_t \le \DDelta_t^1$ coordinate-wise, it is valid that 
\begin{equation}
\label{eq:sandwitch-nonasy}
\BDelta_T^2  \le \BDelta_T \le  \BDelta_T^1.
\end{equation}
As a result, $\EB\|\BDelta_T \|_{\infty} \le \EB \max\{ \|\BDelta_T^1 \|_{\infty} , \|\BDelta_T^2 \|_{\infty}  \}$.
Hence, bounding $\|\BDelta_T \|_{\infty}$ in expectation is reduced to bound the maximum between $\|\BDelta_T^1 \|_{\infty}$ and $\|\BDelta_T^2 \|_{\infty}$.
Given $\BDelta_T^1$ and $\BDelta_T^2$ are defined in a similar way (see Lemma~\ref{lem:sandwitch}), they share a similar error decomposition.

\subsection{Error Decomposition}
\label{proof:non-linear-error}
Setting $r=1$ in~\eqref{eq:Delta1-begin-fclt}, we obtain
\[
\BDelta_T^1 = \frac{1}{T}  \sum_{t=1}^T \DDelta_t^1 = \frac{1}{\eta_0T}  (\A_0^{T}-\eta_0 \I) \DDelta_{0}
+\frac{1}{T}  \sum_{j=1}^{T}\A_j^{T}\left(\Z_j+\gamma \D_{j-1}^1\right).
\]
Similar to~\eqref{eq:Delta1-decom-fclt}, we decompose $\BDelta_T^1$ into five separate terms
\begin{align}
\label{eq:Delta1-decom}
\BDelta_T^1
&=\frac{1}{\eta_0T}   (\A_0^T-\eta_0 \I)  \DDelta_{0}
+\frac{1}{T}  \sum_{j=1}^T\G^{-1} \Z_j  + \frac{1}{T}  \sum_{j=1}^T  (\A_j^T - \G^{-1})\Z_j \nonumber \\
&\quad \quad 
+  \frac{\gamma}{T}  \sum_{j=1}^T\A_j^T( \PP_j - \PP) (\V_{j-1} - \V^*)
+ \frac{\gamma}{T}  \sum_{j=1}^T\A_j^T( \PP^{\pi_{j-1}} -\PP^{\pi^*}) \DDelta_{j-1}
\nonumber  \\
&:= \TM_0  + \TM_1 + \TM_2 + \TM_3 + \TM_4.
\end{align} 
Here one should distinguish $\TM_i$ with $\Bpsi_i$, the former a random variable and the latter a random function.
Comparing~\eqref{eq:D1} and~\eqref{eq:D2}, we find that $\D_{j-1}^1 = \D_{j-1}^2+( \PP^{\pi_{j-1}} -\PP^{\pi^*}) \DDelta_{j-1}$.
Repeating the same argument to $\BDelta_T^2$, we obtain
\begin{align}
\label{eq:Delta2-decom}
 \BDelta_T^2
 &= \frac{1}{T}  \sum_{t=1}^T \DDelta_t^2  = \frac{1}{\eta_0T}  (\A_0^{T}-\eta_0 \I) \DDelta_{0}
 +\frac{1}{T}  \sum_{j=1}^{T}\A_j^{T}\left(\Z_j+\gamma \D_{j-1}^2\right) \nonumber\\
&=\frac{1}{\eta_0T}  (\A_0^T -\eta_0\I)\DDelta_{0}
+\frac{1}{T}  \sum_{j=1}^T\G^{-1} \Z_j  + \frac{1}{T}  \sum_{j=1}^T  (\A_j^T - \G^{-1})\Z_j \nonumber \\
&\quad \quad 
+  \frac{\gamma}{T}  \sum_{j=1}^T\A_j^T( \PP_j - \PP) (\V_{j-1} - \V^*)  \nonumber
\\
&= \TM_0  + \TM_1 + \TM_2 + \TM_3.
\end{align} 
Here $\{\TM_i\}_{i=0}^3$ are exactly the same as in~\eqref{eq:Delta1-decom}.
Putting the pieces together, we have
\begin{equation}
\label{eq:non-asym-help0}
\EB\|\BDelta_T \|_{\infty} 
\le \EB \max\{ \|\BDelta_T^1 \|_{\infty} , \|\BDelta_T^2 \|_{\infty} \}\le  \sum_{i=0}^4\EB \|\TM_i\|_{\infty} .
\end{equation}

 \subsection{Bounding the Separate Terms}
\label{proof:five-term}
\paragraph{For $\|\TM_0\|_{\infty}$.}
 Recall that $C_0 = \sup_{T\ge j \ge 0} \|\A_j^T\|_{\infty}$. 
 Since $\eta_0 = 1 \le C_0$, it is obvious that 
\begin{align}
\label{eq:T0}
\|\TM_0\|_{\infty}
= \frac{1}{\eta_0T}  \| (\A_0^T-\eta_0 \I) \DDelta_{0}\|_{\infty} \le \frac{1}{\eta_0T} (\| \A_0^T\|_{\infty} + \eta_0)\|\DDelta_{0}\|_{\infty} \le  \frac{2C_0}{1-\gamma}\frac{1}{T}.
\end{align}

\paragraph{For $\|\TM_1\|_{\infty}$.}
We apply~\eqref{eq:expe-freedman} in Lemma~\ref{lem:freedman} to bound $\TM_1:=\frac{1}{T}  \sum_{j=1}^T\G^{-1} \Z_j $.
Indeed, by setting $\B_j \equiv \I, \X_j = \frac{1}{T}\G^{-1}\Z_j$, we have $B=1, X=\frac{1}{(1-\gamma)^2T}$ and $\|\W_T\|_{\infty} \le \frac{ \| \diag(\VQ)\|_{\infty}}{T}$ defined therein.
Hence,
\begin{equation}
\label{eq:T1}
\EB \|\TM_1\|_{\infty} \le  
6  \sqrt{\|\diag(\VQ)\|_{\infty}} \sqrt{\frac{\ln(2D)}{T}} + \frac{4\ln(6D)}{3(1-\gamma)^2T}.
\end{equation}

\paragraph{For $\|\TM_2\|_{\infty}$.}
We also apply~\eqref{eq:expe-freedman} in Lemma~\ref{lem:freedman} to analyze $\TM_{2} := \frac{1}{T}  \sum_{j=1}^T  (\A_j^T - \G^{-1})\Z_j$.
Indeed, by setting $\B_j =\A_j^T - \G^{-1}, \X_j = \frac{1}{T}\Z_j$, we have $B=2C_0, X=\frac{1}{(1-\gamma)T}$ and $\|\W_T\|_{\infty} \le\frac{1}{T^2} \sum_{j=1}^T \| \A_j^T - \G^{-1}\|_{\infty}^2\|\Var(\Z)\|_{\infty}$ defined therein.
Hence,
\begin{equation}
\label{eq:T2}
\EB \|\TM_2\|_{\infty} \le  
6\sqrt{\|\Var(\Z)\|_{\infty}}\sqrt{\frac{\ln(2D)}{T}}\sqrt{ \frac{1}{T} \sum_{j=1}^T \| \A_j^T - \G^{-1}\|_{\infty}^2 }+ \frac{8C_0\ln(6D)}{3(1-\gamma)T}.
\end{equation}


\paragraph{For $\|\TM_3\|_{\infty}$.}
We apply~\eqref{eq:general} in Lemma~\ref{lem:freedman} to analyze $\TM_{3} := \frac{\gamma}{T}  \sum_{j=1}^T\A_j^T( \PP_j - \PP) (\V_{j-1} - \V^*)$.
Because $\TM_{3}$ is more complex than $\TM_{1}$ and $\TM_{2}$, we defer the detailed proof in Appendix~\ref{proof:T3}.

\begin{lem}
	\label{lem:T3}
	\begin{equation}
	\label{eq:T3}
		\EB\|\TM_3\|_{\infty}
	\le     {4\gamma C_0\sqrt{ \frac{\ln(2DT^2)}{T}}}\cdot
	\sqrt{ \frac{1}{T}\sum_{j=1}^T\EB\left\| \DDelta_{j-1}\right\|_{\infty}^2 }
	+  \frac{32\gamma C_0 \ln(3DT^2)}{3(1-\gamma)T}.
	\end{equation}
	where $C_0$ is the uniform bound given in Lemma~\ref{lem:AjT-bound} and $D=|\SM \times \AM|$. 
\end{lem}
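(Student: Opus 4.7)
}
My plan is to view $\TM_3 = \sum_{j=1}^T \B_j \X_j$ as a weighted martingale sum and apply the generalized Freedman inequality (part~\eqref{eq:general} of Lemma~\ref{lem:freedman}), with deterministic, $\FM_{j-1}$-measurable coefficient $\B_j := \gamma \A_j^T$ and martingale-difference vector $\X_j := \frac{1}{T}(\PP_j - \PP)(\V_{j-1} - \V^*)$. The martingale property holds because $\V_{j-1}$ is $\FM_{j-1}$-measurable while $\PP_j$ is drawn independently of $\FM_{j-1}$ with $\EB\PP_j = \PP$, so $\EB[\X_j \mid \FM_{j-1}] = \0$.

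Next I would gather the three quantities Freedman requires. First, $\|\B_j\|_\infty \le \gamma C_0$ directly from Lemma~\ref{lem:AjT-bound}. Second, the deterministic sup-bound $\|\X_j\|_\infty \le \frac{4}{(1-\gamma)T}$, obtained from $\|\PP_j - \PP\|_\infty \le 2$ together with the a priori bound $\|\V_{j-1} - \V^*\|_\infty \le 2/(1-\gamma)$ afforded by the assumption $0\le R(s,a)\le 1$. Third, a sharp bound on the conditional variance: since each row $(\PP_j)_{s,a}$ is a one-hot indicator of $s_j(s,a)\sim P(\cdot|s,a)$, the $(s,a)$-coordinate of $\X_j$ equals $\frac{1}{T}\bigl[V_{j-1}(s_j(s,a))-V^*(s_j(s,a))\bigr]$, which gives $\Var(\X_j(s,a)\mid\FM_{j-1}) \le \frac{1}{T^2}\|\DDelta_{j-1}\|_\infty^2$; in matrix form,
\[
\bigl\|\diag\bigl(\EB[\X_j\X_j^\top \mid \FM_{j-1}]\bigr)\bigr\|_\infty \;\le\; \frac{\|\DDelta_{j-1}\|_\infty^2}{T^2}.
\]
Substituting these into the weighted Freedman bound produces a random ``variance proxy'' $\|\W_T\|_\infty \le (\gamma C_0)^2 T^{-2}\sum_{j=1}^T \|\DDelta_{j-1}\|_\infty^2$ and a deterministic ``sup-norm contribution'' $BX = 4\gamma C_0/[(1-\gamma)T]$, exactly the two ingredients visible in~\eqref{eq:T3}.

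To pass from the high-probability Freedman bound to the expectation bound, I would apply the inequality at confidence level $\delta \asymp 1/T^2$, which explains the $\ln(DT^2)$ factor, and then take expectations. The variance term, which is random, is handled by Jensen's inequality $\EB\sqrt{\|\W_T\|_\infty} \le \sqrt{\EB\|\W_T\|_\infty}$, giving the dominant term $4\gamma C_0 \sqrt{\ln(2DT^2)/T}\cdot\sqrt{T^{-1}\sum_j \EB\|\DDelta_{j-1}\|_\infty^2}$. The leftover tail mass of order $1/T^2$ is absorbed into the $BX\ln$-term since $\|\TM_3\|_\infty$ itself is bounded by $O(\gamma C_0/(1-\gamma))$ almost surely.

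The main obstacle is the random conditional-variance proxy $\|\W_T\|_\infty$: the cleanest multi-dimensional Freedman bounds assume a deterministic variance envelope, so one must either invoke a version that handles random envelopes (via a stopping-time / truncation argument, peeling over a geometric grid of levels for $\sum_j \|\DDelta_{j-1}\|_\infty^2$) or reconcile the high-probability statement with the desired in-expectation form. Calibrating the truncation grid so that the overhead is only $\ln(DT^2)$, while still yielding the square-root factor $\sqrt{T^{-1}\sum_j \EB\|\DDelta_{j-1}\|_\infty^2}$, is the delicate step; all other manipulations are routine norm bookkeeping.
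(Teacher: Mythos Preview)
Your proposal is correct and matches the paper's proof almost exactly: the paper also applies~\eqref{eq:general} of Lemma~\ref{lem:freedman} with $\B_j = \A_j^T$, $\X_j = \frac{\gamma}{T}(\PP_j-\PP)(\V_{j-1}-\V^*)$, and the same bounds $B = C_0$, $X = \frac{4\gamma}{(1-\gamma)T}$, and $\|\W_T\|_\infty \le \frac{4\gamma^2 C_0^2}{T^2}\sum_j\|\V_{j-1}-\V^*\|_\infty^2$ (your conditional-variance bound is in fact tighter by a constant). The obstacle you flag---handling a random variance envelope via peeling over geometric levels with $\delta\asymp 1/T^2$---is precisely what is already done inside the proof of~\eqref{eq:general}, so once you invoke that bound there is nothing delicate left to do.
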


\paragraph{For $\|\TM_4\|_{\infty}$.}
We have already analyzed $\TM_{4} := \frac{\gamma}{T}  \sum_{j=1}^T\A_j^T( \PP^{\pi_{j-1}} -\PP^{\pi^*}) \DDelta_{j-1}$ in Lemma~\ref{lem:T4}.
It follows that
\begin{equation}
\label{eq:T4}
\EB\|\TM_4\|_{\infty}  = \frac{1}{\sqrt{T}}\EB\|\Bpsi_{5}(1)\|_{\infty}  \le \frac{1}{\sqrt{T}}\EB\|\Bpsi_{5}\|_{\sup}  \le    \gamma LC_0\cdot \frac{1}{T}   \sum_{j=1}^T\EB
\left\| \DDelta_{j-1}\right\|_{\infty}^2.
\end{equation}

\begin{rem}
	\label{rmk:T}
Under Assumption~\ref{asmp:reward}~\ref{asmp:gap} and~\ref{asmp:lr}, we assert that $\sqrt{T}\EB\|\TM_i \|= o(1)$ for $i=0, 2, 3, 4$.
It is handy to verify $\sqrt{T}\|\TM_0\|=o(1)$.
Lemma~\ref{lem:general-step-size} implies $\frac{1}{T}\sum_{j=1}^T \| \A_j^T - \G^{-1}\|_{\infty}^2 = o(1)$, by which we conclude $\sqrt{T}\EB\|\TM_2\|=o(1)$.
Theorem~\ref{thm:general-Linfty-pw2} shows $\frac{1}{\sqrt{T}} \sum_{t=0}^T \EB\|\DDelta_t\|_{\infty}^2 \to 0$ when we use the general step size.
We then know that both $\sqrt{T}\EB\|\TM_3\|$ and $\sqrt{T}\EB\|\TM_4\|$ converge to zero when $T$ goes to infinity.
\end{rem}

\subsection{Specific Rates for Two Step Sizes}
\paragraph{(I) Linearly rescaled step size.} 
If we use a linear rescaled step size, i.e., $\eta_t =  \frac{1}{1+ (1-\gamma)t}$ (equivalently $\eeta_t = \frac{1-\gamma}{1+ (1-\gamma)t}$), then Lemma~\ref{lem:AjT-bound} and Lemma~\ref{lem:G-poly} give
\[
C_0 = \frac{2}{1-\gamma}\ln(1+(1-\gamma)T) = \OM\left(\frac{\ln T}{1-\gamma}\right)
\ \text{and} \
\frac{1}{T}\sum_{j=1}^T \| \A_j^T- \G^{-1}\|_{\infty}^2 \le \frac{25}{(1-\gamma)^2}.
\]
Hiding constant factors in $c$, Theorem~\ref{thm:ave_rate} gives
\[
\frac{1}{T} \sum_{t=0}^T \EB\|\DDelta_t\|_{\infty}^2
\le c\left[  \frac{\|\DDelta_{0}\|_{\infty}^2}{(1-\gamma)^2} \frac{1}{T} + \frac{\ln(2eD)}{(1-\gamma)^5} \frac{
	\ln^2(eT)}{T}\right].
\]
Hence, combining these bounds with~\eqref{eq:T0},~\eqref{eq:T1},~\eqref{eq:T2},~\eqref{eq:T3}, and~\eqref{eq:T4}, we have
\begin{align*}
\EB\|\BDelta_T\|_{\infty}
= 
\OM&\left(
\frac{\ln T}{(1-\gamma)^2T} +
\sqrt{\frac{\|\Var(\Z)\|_{\infty}}{(1-\gamma)^2}}\sqrt{\frac{\ln D}{T}}  + \frac{\ln D}{(1-\gamma)^2}  \frac{ \ln T}{T} \right.\\
&\qquad +\frac{\gamma\ln T \sqrt{\ln(DT)}}{(1-\gamma)^3}
\left(\frac{1}{T}+ \sqrt{\frac{\ln D}{1-\gamma}} \frac{\ln T}{T} \right) +
 \frac{\gamma\ln(DT) }{(1-\gamma)^2} \frac{\ln T 
	}{T} 
 \\
&\qquad + \left. + \frac{\gamma L \ln T}{1-\gamma} \left(
\frac{1}{(1-\gamma)^4} \frac{1}{T} + \frac{\ln D}{(1-\gamma)^5} \frac{
	\ln^2T}{T}
\right)
\right)\\
= \OM &\left(
\sqrt{\frac{\|\Var(\Z)\|_{\infty}}{(1-\gamma)^2}}\sqrt{\frac{\ln D}{T}} \right)    + \TOM\left( \frac{L}{(1-\gamma)^6} \frac{1}{T} 
\right),
\end{align*}
where $\widetilde{\OM}(\cdot)$ hides polynomial dependence on logarithmic terms namely $\ln D$ and $\ln T$.
Here we use $\|\diag(\VQ)\|_{\infty} \le \frac{\|\Var(\Z)\|_{\infty}}{(1-\gamma)^2}$ to simplify the final inequality.

\paragraph{(II) Polynomial step size.} 
If we choose a polynomial step size, i.e., $\eta_t = t^{-\alpha}$ with $\alpha \in (0.5, 1)$ for $t\ge 1$ and $\eta_0 = 1$, then hiding constant factors in $c$, Lemma~\ref{lem:AjT-bound} and Lemma~\ref{lem:G-poly} give
\begin{gather*}
C_0 =\OM\left( \frac{1}{(1-\gamma)^{\frac{1}{1-\alpha}}}\right)\\
\sqrt{\frac{1}{T}\sum_{j=1}^T \| \A_j^T- \G^{-1}\|_{\infty}^2}
= \OM\left(\frac{1}{(1-\gamma)^{1+\frac{1}{1-\alpha}}} \frac{1}{\sqrt{T}} 
+  \frac{1}{(1-\gamma)^2}  \frac{1}{T^{1-\alpha}} 
+\frac{1}{(1-\gamma)^{\frac{3}{2}}} \frac{1}{T^{\frac{1-\alpha}{2}}}\right),
\end{gather*}
where $\OM(\cdot)$ hides constant factors on $\alpha$.
Theorem~\ref{thm:ave_rate} gives
\[
\frac{1}{T} \sum_{t=0}^T \EB\|\DDelta_t\|_{\infty}^2
\le \OM \left(
\frac{ \ln D}{(1-\gamma)^{3+\frac{1}{1-\alpha}}} \frac{1}{T} + \frac{\ln D}{(1-\gamma)^4} \frac{1}{T^{\alpha}}
\right).
\]
Hence, combining these bounds with~\eqref{eq:T0},~\eqref{eq:T1},~\eqref{eq:T2},~\eqref{eq:T3}, and~\eqref{eq:T4}, we have
\begin{align*}
\EB\|\BDelta_T\|_{\infty}
= 
\OM&\left(
\frac{1}{(1-\gamma)^{1+\frac{1}{1-\alpha}}T} +
\sqrt{\|\diag(\VQ)\|_{\infty}}\sqrt{\frac{\ln D}{T}}
+ \frac{\ln(D)}{(1-\gamma)^2T} \right. \\
& \qquad
+\sqrt{\frac{\ln D}{(1-\gamma)^2T}}
\left(\frac{1}{(1-\gamma)^{1+\frac{1}{1-\alpha}}} \frac{1}{\sqrt{T}} 
+  \frac{1}{(1-\gamma)^2}  \frac{1}{T^{1-\alpha}} 
+\frac{1}{(1-\gamma)^{\frac{3}{2}}} \frac{1}{T^{\frac{1-\alpha}{2}}} \right)  \\
& \qquad + 
\frac{\gamma}{(1-\gamma)^{\frac{1}{1-\alpha}}} \sqrt{\frac{\ln(DT)}{T}} \left(  
\frac{ \sqrt{\ln D}}{(1-\gamma)^{1.5+\frac{1}{2(1-\alpha)}}} \frac{1}{\sqrt{T}} +   \frac{\sqrt{\ln D}}{(1-\gamma)^2} \frac{1}{T^{\frac{\alpha}{2}}} \right)\\
&\qquad +  \left. \frac{\gamma}{(1-\gamma)^{1+\frac{1}{1-\alpha}}} \frac{
	\ln DT }{T} +
\frac{\gamma L}{(1-\gamma)^{\frac{1}{1-\alpha}}} \left(
\frac{ \ln D}{(1-\gamma)^{3+\frac{1}{1-\alpha}}} \frac{1}{T} +   \frac{\ln D}{(1-\gamma)^4} \frac{1}{T^{\alpha}}
\right)
\right)\\
= \OM &\left(
 \sqrt{\|\diag(\VQ)\|_{\infty}}\sqrt{\frac{\ln D}{T}} + 
\frac{\sqrt{\ln D}}{(1-\gamma)^{3}}\frac{1}{T^{1-\frac{\alpha}{2}}} \right)
+ \widetilde{\OM}  \left(
\frac{L}{(1-\gamma)^{3+\frac{2}{1-\alpha}}} \frac{1}{T}
+ \frac{\gamma L}{(1-\gamma)^{4+\frac{1}{1-\alpha
}}} \frac{1}{T^\alpha}
\right),
\end{align*}
where $\widetilde{\OM}(\cdot)$ hides polynomial dependence on logarithmic terms, namely $\ln D $ and $\ln T$.
Here we use $\|\Var(\Z)\|_{\infty} \le \frac{1}{(1-\gamma)^2}$, $
T^{-\frac{1+\alpha}{2}}\le T^{-\alpha}$ to simplify the final inequality.

\subsection{A Useful Inequality}
The following is a useful inequality which will be used frequently in the subsequent proof.
\begin{lem}
	\label{lem:V}
	For any matrices $\A, \V$ with a compatible order, we have
	\begin{equation}
	\label{eq:infty-max-infty}
	\|\diag(\A \V \A^\top)\|_{\infty} \le  \|\V\|_{\max} \|\A\|_{\infty}^2,
	\end{equation}
	where $\|\V\|_{\max} = \max_{i, k} |\V(i, k)|$.
\end{lem}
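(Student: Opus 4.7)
The plan is to compute the $i$-th diagonal entry of $\A\V\A^\top$ directly, take absolute values, and factor the double sum into a product of row sums of $|\A|$. Specifically, I would first write
\[
[\A\V\A^\top]_{ii} = \sum_{j,k} \A_{ij}\,\V_{jk}\,\A_{ik},
\]
which is the standard expansion of a matrix product evaluated at position $(i,i)$ (no diagonality assumption on $\V$ is needed).

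Next, I would apply the triangle inequality entrywise and pull out the uniform bound $|\V_{jk}| \le \|\V\|_{\max}$:
\[
\bigl|[\A\V\A^\top]_{ii}\bigr|
\;\le\; \sum_{j,k} |\A_{ij}|\,|\V_{jk}|\,|\A_{ik}|
\;\le\; \|\V\|_{\max}\,\Bigl(\sum_{j} |\A_{ij}|\Bigr)\Bigl(\sum_{k} |\A_{ik}|\Bigr).
\]
The final step uses the definition $\|\A\|_{\infty} = \max_{i} \sum_{j} |\A_{ij}|$ to bound each inner sum by $\|\A\|_{\infty}$, giving $|[\A\V\A^\top]_{ii}| \le \|\V\|_{\max}\|\A\|_{\infty}^2$. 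Taking the maximum over $i$ yields $\|\diag(\A\V\A^\top)\|_{\infty} \le \|\V\|_{\max}\|\A\|_{\infty}^2$.

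There is no real obstacle here; the lemma is essentially a bookkeeping exercise about how the max-norm of $\V$ and the operator $\ell_\infty$-norm of $\A$ interact through a quadratic form. The only thing worth noting is that the bound is sharp precisely when the row of $\A$ attaining $\|\A\|_{\infty}$ aligns its sign pattern with the row/column of $\V$ attaining $\|\V\|_{\max}$, which confirms the natural scaling and explains why no additional factor of the dimension $D$ appears.
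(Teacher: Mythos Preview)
Your proposal is correct and essentially identical to the paper's proof: both expand the $(i,i)$ entry as a double sum, apply the triangle inequality, pull out $\|\V\|_{\max}$, factor the remaining double sum into $\bigl(\sum_j |\A_{ij}|\bigr)^2$, and bound by $\|\A\|_{\infty}^2$.
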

\begin{proof}[Proof of Lemma~\ref{lem:V}]
	For any diagonal entry $i$, it follows that
	\begin{align*}
	|(\A \V \A^\top)(i, i)|
	&= \bigg|\sum_{l}(\A \V)(i, l) \A(i, l)\bigg|
	= \bigg|\sum_{l}\sum_{k}\A(i, k) \V(k, l) \A(i, l)\bigg| \\
	&\le \sum_{l}\sum_{k}|\A(i, k)| \cdot | \V(k, l)| \cdot |\A(i, l)| \\
	&\le \|\V\|_{\max} \sum_{k}|\A(i, k)| \cdot\sum_{l}  |\A(i, l)| \\
	& \le \|\V\|_{\max} \|\A\|_{\infty}^2.
	\end{align*}
\end{proof}

\subsection{Proof of Lemma~\ref{lem:T3}}
\label{proof:T3}
\begin{proof}[Proof of Lemma~\ref{lem:T3}]
	Recall that $\TM_3 =  \frac{\gamma}{T}  \sum_{j=1}^T\A_j^T( \PP_j - \PP) (\V_{j-1} - \V^*)$ and $\FM_j$ is the $\sigma$-field generated by all randomness before (and including) iteration $j$.
	We will apply Lemma~\ref{lem:freedman} to prove our lemma.
	Using the notation defined therein, we set $\X_j = \frac{\gamma}{T} ( \PP_j - \PP) (\V_{j-1} - \V^*)$ and $\B_j =\A_j^T$.
	Clearly, $\{\X_j\}_{j \ge 0}$ is a martingale difference sequence since $\EB[\X_j|\FM_{j-1}] = \frac{\gamma}{T} \EB[ \PP_j - \PP|\FM_{j-1}] (\V_{j-1} - \V^*) = \0$.
	As a result,  $X = \frac{4\gamma}{T(1-\gamma)}, B = C_0, D = |\SM \times \AM|$ and $\U_j = \Var[\X_j|\FM_{j-1}]$.\footnote{To distinguish $\Var[\X_j|\FM_{j-1}]$ and the value function $\V_j$, we use $\U_j$ to denote the conditional variance.}
	
	Recall that $\W_T = \diag(\sum_{j=1}^T \B_j \U_j \B_j^\top)$.
	To upper bound $\EB\|\W_T \|_{\infty}$, we aim to find a upper bound for $\|\W_T\|_{\infty}$.
	We first note that
	\begin{align*}
	\|\W_T\|_{\infty} 
	= \left\|\diag\left(\sum_{j=1}^T \B_j \U_j \B_j^\top\right)\right\|_{\infty} 
	\le\sum_{j=1}^T   \left\|\diag\left(\B_j \U_j \B_j^\top\right)\right\|_{\infty}  \le \sum_{j=1}^T   \left\|\B_j \right\|_{\infty}^2 \|\U_j \|_{\max}.
	\end{align*}
	Here the last inequality uses~\eqref{eq:infty-max-infty}.
	To bound $\|\U_j \|_{\max}$, 
	we find that for any $i \neq k$, $\U_{j}(i, k) =\EB[ \e_i^\top \X_j \X_j^\top\e_k |\FM_{j-1}]  = 0$ due to each coordinate of $\X_j$ are independent conditioning on $\FM_{j-1}$.
	Hence,
	\begin{align*}
	\|\U_j \|_{\max} 
	&= \max_{i,k} |\U_{j}(i, k)| 
	= \max_{i} |\U_{j}(i, i)| 
	= \left\| \EB[ \diag( \X_j \X_j^\top)|\FM_{j-1}] \right\|_{\infty} \\
	&\le \EB\left[  \left\| \diag( \X_j \X_j^\top)\right\|_{\infty} \bigg|\FM_{j-1}\right] 
	\overset{(a)}{\le} \EB[  \left\| \X_j \right\|_{\infty}^2 |\FM_{j-1}] \\
	&= \frac{\gamma^2}{T^2}   \EB[  \left\| ( \PP_j - \PP) (\V_{j-1} - \V^*)\right\|_{\infty}^2 |\FM_{j-1}] \\
	&\le \frac{\gamma^2}{T^2} \|\V_{j-1} - \V^*\|_{\infty}^2  
	\EB \| \PP_j - \PP\|_{\infty}^2 \overset{(b)}{\le} \frac{4\gamma^2}{T^2} \|\V_{j-1} - \V^*\|_{\infty}^2,
	\end{align*}
	where $(a)$ again uses~\eqref{eq:infty-max-infty} and $(b)$ uses $ \| \PP_j - \PP\|_{\infty} \le  \| \PP_j\|_{\infty} + \| \PP\|_{\infty} =2$.
	
	Putting the pieces together, we have
	\begin{align*}
	\|\W_T\|_{\infty}  
	\le \frac{4\gamma^2}{T} \sum_{j=1}^T   \left\|\B_j \right\|_{\infty}^2 \|\V_{j-1} - \V^*\|_{\infty}^2  
	\le \frac{4\gamma^2C_0^2 }{T^2} \sum_{j=1}^T  \|\V_{j-1} - \V^*\|_{\infty}^2,
	\end{align*}
	where we use $\sup_{j} \|\B_j\|_{\infty}  \le B = \frac{C_0}{1-\gamma}$.
	The rest follows from~\eqref{eq:general} in Lemma~\ref{lem:freedman} by plugging the corresponding $B, X, D$ and $\sigma^2$ and the inequality $\|\V_{j-1} - \V^*\|_{\infty} \le \|\Q_{j-1} - \Q^*\|_{\infty} = \|\DDelta_{j-1}\|_{\infty}$.
\end{proof}

\section{PROOF OF THE INFORMATION-THEORETIC LOWER BOUND}
\label{proof:infolb}

\subsection{Proof of Theorem~\ref{thm:infolb}}

The semiparametric model $\PM_{\theta} \in \PM_P \times \PM_R$ described in Section~\ref{sec:infolb} is described through an infinite-dimensional parameter $\theta = (\PP, R)$, which is partitioned into a finite-dimensional parameter $\PP \in \RB^{D \times S}$ and an infinite-dimensional parameter $R$.
The reason why $R$ is infinite dimensional is because we don't specify the probability model of each $R(s, a)$, which is equivalent to considering the class of all p.d.f.'s on the interval $[0, 1]$, which is infinite dimensional.
The parameter of interest is a smooth function of $\theta$, denoted by $\beta(\theta) = \Q^* \in \RB^D$.
To compute the semiparametric Cramer-Rao lower bound (see Definition 4.7 of~\citep{vermeulen2011semiparametric}), we need to compute
\begin{equation}
\label{eq:bound}
\sup_{\PM_\gamma \subset \PM} \GammaSym(\gamma_0) \I(\gamma_0)^{-1}\GammaSym^\top(\gamma_0),
\end{equation}
where $\PM_\gamma$ is any parametric submodel containing the truth, i.e., $\PM_{\gamma_0} = \PM_{\theta}$.
Hence, under one kind of parameterization, the true model $\PM_\theta$ can be recovered by setting $\gamma=\gamma_0$ in the parametric submodel $\PM_{\gamma}$.
Here, $\GammaSym(\gamma_0) = \frac{\partial \Q^*}{\partial \gamma}|_{\gamma = \gamma_0}$ is the score and $\I(\gamma_0)$ is the corresponding Fisher information matrix.
Let $\gamma_0(R)$ (resp. $\gamma_0(\PP)$) be the finite-dimensional part of $\gamma_0$ that relates with $R$ (resp. $\PP$).
Due to the (variational) independence between $\PP$ and $R$, $\gamma_0(\PP)$ doesn't intersect with $\gamma_0(R)$.
Hence,~\eqref{eq:bound} can be divided into two parts
\begin{align*}
&\sup_{\PM_\gamma(\PP) \subset \PM_P}
\GammaSym(\gamma_0(\PP)) \I(\gamma_0(\PP))^{-1}\GammaSym^\top(\gamma_0(\PP)) + \sup_{\PM_\gamma(R) \subset \PM_R}
\GammaSym(\gamma_0(R)) \I(\gamma_0(R))^{-1}\GammaSym^\top(\gamma_0(R))\\
\overset{(*)}{=}&\GammaSym(\PP) \I(\PP)^{-1}\GammaSym^\top(\PP)
+ \sup_{\PM_\gamma(R) \subset \PM_R}
\GammaSym(\gamma_0(R)) \I(\gamma_0(R))^{-1}\GammaSym^\top(\gamma_0(R)),
\end{align*}
where $\PM_\gamma(R)$ (resp. $\PM_\gamma(\PP)$) denotes the parametric submodel depending only on $R$ (resp. $\PP$).
The equality $(*)$ follows because in the case the parametric model $\PM_P$ is the full model and the parametric Cramer-Rao lower bound is not affected by any one-to-one reparameterization.
Here, $\GammaSym(\PP) = \frac{\partial \Q^*}{\partial \PP}$ and $\I(\PP)$ is the (constrained) information matrix.

In the following, we will first handle the parametric part (i.e., the transition kernel $P$) by computing the (constrained) information matrix and then cope with the nonparametric part (i.e., the random reward $R$) by using semiparametric tools.
Combining the two parts together, we find that the semiparametric efficiency bound is
\begin{align*}
&\frac{1}{T}\cdot(\I-\gamma\PP^{\pi^*})^{-1}\Var(\gamma\PP_j \V^*)(\I-\gamma\PP^{\pi^*})^{-\top}
+
\frac{1}{T}\cdot(\I-\gamma\PP^{\pi^*})^{-1}\Var(\rr_j)(\I-\gamma\PP^{\pi^*})^{-\top}\\
=&\frac{1}{T}\cdot (\I-\gamma\PP^{\pi^*})^{-1}\Var(\Z_j)(\I-\gamma\PP^{\pi^*})^{-\top},
\end{align*}
using the notation $\Z_j = \rr_j  + \gamma\PP_j \V^*$ and the independence of $\rr_j$ and $\PP_j$.

\subsubsection{Parametric Part}
We first investigate the Cramer-Rao lower bound for estimating $\Q^*$ using samples from $\{\PP_t\}_{t \in [T]} $ whose distribution is determined by  $\PP\in\PM$ with $\PM$ defined in~\eqref{eq:PM}.
Note that $\PP \in \PM$ is linearly constrained, i.e.,
\[
\h(\PP)=0,
\]
where $\h: \RB^{D \times S} \to \RB^D$ with its $(\tilde{s},\tilde{a})$-th coordinate of $\h$ given by
\begin{align}
\label{eq:h}
h_{\tilde{s},\tilde{a}}(\PP)=\sum_{s,a,s'}P(s'|s,a)\mathsf{1}_{  \{ (s,a)=(\tilde{s},\tilde{a}) \}  } -1.
\end{align}
Hence, we encounter the Cramer-Rao lower bound for constrained parameters.
Let $\C_T(\PP)$ is the inverse Fisher information matrix using $T$ i.i.d.\ samples under the constraint $\h(\PP)=0$. 
Hence, $ \C_T(\PP) = \frac{\C_1(\PP)}{T}$ and the constrained Cramer-Rao lower bound~\citep{moore2010theory} is
\begin{align}
\label{eq:help2}
 \GammaSym(\PP) \I(\PP)^{-1}\GammaSym^\top(\PP)=
    \left(\frac{\partial \Q^*}{\partial \PP}\right)^\top\C_T(\PP)\frac{\partial \Q^*}{\partial \PP}=\frac{1}{T}\cdot\left(\frac{\partial \Q^*}{\partial \PP}\right)^\top\C_1(\PP)\frac{\partial \Q^*}{\partial \PP},
\end{align}
where $\frac{\partial \Q^*}{\partial \PP}$ is the partial derivatives computed ignoring the linear constraint $\h(\PP) = 0$.

To give a precise formulation of the bound~\eqref{eq:help2}, we first compute $\frac{\partial \Q^*}{\partial \PP}$. 
\begin{lem}
	\label{lem:Q_diff}
	Under Assumption~\ref{asmp:gap}, $\Q^*$ is differentiable w.r.t.\ $\PP$ with the partial derivatives given by
	\begin{align*}
	\frac{\partial Q^*(s,a)}{\partial P(s'|\tilde{s},\tilde{a})}=\gamma V^*(s')\cdot(\I-\gamma\PP^{\pi^*})^{-1}((s,a),(\tilde{s},\tilde{a})).
	\end{align*}
\end{lem}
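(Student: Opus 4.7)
The plan is to exploit local constancy of the optimal policy $\pi^*$ implied by Assumption~\ref{asmp:gap} and then reduce the calculation to differentiating the explicit formula $\Q^* = (\I - \gamma \PP^{\pi^*})^{-1}\rr$ on a neighborhood of the true transition kernel.

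First, I would argue that Assumption~\ref{asmp:gap} forces a strict optimality gap $\gap := \min_{s}\min_{a\neq \pi^*(s)}(V^*(s) - Q^*(s,a)) > 0$. Were $\gap = 0$, take $(s_0,a_0)$ with $a_0 \neq \pi^*(s_0)$ and $Q^*(s_0,a_0) = V^*(s_0)$ and perturb $\Q_\epsilon = \Q^* + \epsilon\,\e_{(s_0,a_0)}$; then $\pi_{\Q_\epsilon}(s_0) = a_0 \neq \pi^*(s_0)$, so (restricted to states that communicate with $s_0$, WLOG) $\|(\PP^{\pi_{\Q_\epsilon}} - \PP^{\pi^*})(\Q_\epsilon - \Q^*)\|_\infty$ is linear in $\epsilon$ while the right-hand side $L\|\Q_\epsilon - \Q^*\|_\infty^2 = L\epsilon^2$ is quadratic, contradicting the Lipschitz bound for small $\epsilon$. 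Hence $\gap > 0$, and $\pi^*$ is the unique optimal policy, as in Lemma~\ref{lem:gap}.

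Second, I would use continuity to freeze $\pi^*$ locally in $\PP$. Because the Bellman fixed point is an $\OM((1-\gamma)^{-1})$-Lipschitz function of $\PP$ in $\|\cdot\|_\infty$ (via the $\gamma$-contraction of the Bellman operator together with $\|\V^*\|_\infty \le (1-\gamma)^{-1}$), there is a neighborhood $\NM$ of the true transition kernel on which $\|\Q^*(\PP') - \Q^*(\PP)\|_\infty < \gap/4$, and on $\NM$ the policy $\pi^*$ remains the unique greedy policy by the very argument of Lemma~\ref{lem:gap}. Consequently, on $\NM$,
\[
\Q^*(\PP') = (\I - \gamma\, \PP'\boldsymbol{\Pi}^{\pi^*})^{-1}\rr,
\]
which is rational (hence real-analytic) in the entries of $\PP'$. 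Differentiating the scalar Bellman equation $Q^*(s,a) = r(s,a) + \gamma\sum_{s''}P(s''|s,a)\, Q^*(s'',\pi^*(s''))$ with respect to $P(s'|\tilde{s},\tilde{a})$ and setting $X(s,a) := \partial Q^*(s,a)/\partial P(s'|\tilde{s},\tilde{a})$ gives
\[
X(s,a) = \gamma V^*(s')\,\mathsf{1}_{\{(s,a)=(\tilde{s},\tilde{a})\}} + \gamma\sum_{s''}P(s''|s,a)\, X(s'',\pi^*(s'')),
\]
i.e., $\x = \gamma V^*(s')\, \e_{(\tilde{s},\tilde{a})} + \gamma\PP^{\pi^*}\x$ in vector form. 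Solving this linear system yields $\x = \gamma V^*(s')\, (\I - \gamma \PP^{\pi^*})^{-1}\e_{(\tilde{s},\tilde{a})}$, and reading off the $(s,a)$-coordinate reproduces the claimed formula.

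The main obstacle is the second step: rigorously verifying that $\pi^*$ is locally frozen under perturbations of $\PP$, which is what legitimizes treating $\PP^{\pi^*} = \PP\boldsymbol{\Pi}^{\pi^*}$ as linear in $\PP$ and differentiating the closed-form expression entrywise. Once this stability is in hand, the remainder is a routine application of the chain rule on the Bellman equation, and the invertibility of $\I - \gamma \PP^{\pi^*}$ (with operator norm at most $(1-\gamma)^{-1}$) makes the linear system solvable in closed form.
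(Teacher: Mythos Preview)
Your approach is essentially the same as the paper's: both differentiate the Bellman equation $Q^*(s,a) = r(s,a) + \gamma \sum_{s''} P(s''|s,a)\, V^*(s'')$ via the chain rule, use uniqueness of $\pi^*$ (from Assumption~\ref{asmp:gap}) to pass the derivative through the max as $\partial V^*(s'')/\partial P = \partial Q^*(s'',\pi^*(s''))/\partial P$, and then solve the linear system $(\I - \gamma\PP^{\pi^*})\x = \gamma V^*(s')\,\e_{(\tilde{s},\tilde{a})}$. Your additional step of explicitly arguing that $\pi^*$ is locally constant in $\PP$ (via the positive gap and the Lipschitz dependence of $\Q^*$ on $\PP$) is a welcome piece of rigor that the paper's proof leaves implicit.
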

We then compute $\C_1(\PP)$ via the following lemma.

\begin{lem}
	\label{lem:ccrb}
	The $(s, a)$-th row of the random matrix $\PP_t$ is given by $P_t(s'|s,a)=\mathsf{1}_{ \{ s_t(s, a)=s'\}  }$ where $s_t(s, a)$ is the generated next-state from $(s, a)$ at iteration $t$ with probability given as the $(s,a)$-th row of $\PP$.
	Hence $\PP = \EB \PP_t$ and $\PP$ belongs to the following parametric space 
	\[
	\PM=\left\{\PP \in \RB^{D \times S}: \hspace{2pt} P(s'|s,a) \ge 0  \hspace{2pt}\text{for all $(s,a,s')$}  \ \text{and} \ \h(\PP)=\0 \right\},
	\]
	with $\h$ defined in~\eqref{eq:h}.
	The constrained inverse Fisher information matrix $\C_1(\PP)$ is
	\begin{align*}
	\C_1(\PP)=\diag\left(\left\{\diag(P(\cdot|s,a))-P(\cdot|s,a)P(\cdot|s,a)^\top)\right\}_{(s,a)}\right).
	\end{align*}
\end{lem}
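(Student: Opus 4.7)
The plan is to exploit the product-independent structure of $\PP_t$ across state-action pairs and then apply the constrained Cramer-Rao bound formula block by block. Since, conditionally on $\PP$, the rows of $\PP_t$ are mutually independent (each row being a one-hot vector whose distribution only depends on the corresponding row $P(\cdot|s,a)$), the log-likelihood of one observation decomposes as
\[
\log p(\PP_t;\PP) \;=\; \sum_{(s,a)}\sum_{s'} \mathsf{1}_{\{s_t(s,a)=s'\}}\log P(s'|s,a).
\]
This additive structure, together with the fact that the constraint $\h(\PP)=\0$ also decomposes row-by-row, reduces the problem to computing, for a single $(s,a)$, the constrained inverse Fisher information of one categorical draw on $\SM$.

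First, I would compute the unconstrained Fisher information. Differentiating the single-row log-likelihood gives the score
\[
\frac{\partial \log p}{\partial P(s'|s,a)} \;=\; \frac{\mathsf{1}_{\{s_t(s,a)=s'\}}}{P(s'|s,a)},
\]
so the per-block Fisher information matrix is
\[
\I_{(s,a)}(s',s'') \;=\; \EB\!\left[\frac{\mathsf{1}_{\{s_t(s,a)=s'\}}\mathsf{1}_{\{s_t(s,a)=s''\}}}{P(s'|s,a)P(s''|s,a)}\right] \;=\; \frac{\delta_{s',s''}}{P(s'|s,a)},
\]
i.e., $\I_{(s,a)}=\diag\!\bigl(1/P(\cdot|s,a)\bigr)$; stacking blocks yields $\I(\PP)=\diag\!\bigl(\{\I_{(s,a)}\}_{(s,a)}\bigr)$, with inverse $\diag\!\bigl(\{\diag(P(\cdot|s,a))\}_{(s,a)}\bigr)$.

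Next, I would apply the standard constrained CRB formula \citep{moore2010theory}, $\C_1(\PP) = \I^{-1} - \I^{-1}\H^\top(\H\I^{-1}\H^\top)^{-1}\H\I^{-1}$, where $\H=\partial \h/\partial \PP$ is, from the definition of $\h$ in~\eqref{eq:h}, block diagonal with each block equal to $\1^\top\in\RB^{1\times S}$. This block-diagonal alignment of $\I$ and $\H$ is the key structural observation: the correction term also decomposes across $(s,a)$, so it suffices to compute, per block, $\diag(P) - \diag(P)\1\bigl(\1^\top\diag(P)\1\bigr)^{-1}\1^\top\diag(P)$. Since $\1^\top\diag(P)\1 = \sum_{s'}P(s'|s,a)=1$ and $\diag(P)\1=P(\cdot|s,a)$, this reduces cleanly to $\diag(P(\cdot|s,a)) - P(\cdot|s,a)P(\cdot|s,a)^\top$, which is exactly the covariance of a one-hot draw from $P(\cdot|s,a)$, as expected.

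The main subtlety will be handling boundary cases where some $P(s'|s,a)=0$ so that the unconstrained $\I$ is singular: one can either restrict to the relative interior of the simplex (the only case where the CRB is meaningful) or derive the block $\diag(P)-PP^\top$ directly as $\mathrm{Cov}(\PP_t\e_{s'})$ and invoke the CRB in its M-estimator/projected form on the tangent space $\{v:\1^\top v=0\}$ to obtain the same answer; either way, minor care with constraint qualification is needed but the final closed form is unchanged.
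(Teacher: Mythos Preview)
Your approach is correct in outline and arguably cleaner than the paper's. The paper works with the null-space form $\C_1(\PP)=\U(\U^\top\I_1\U)^{-1}\U^\top$ from \citet{moore2010theory}, then introduces an auxiliary matrix $\X$ to cancel the cross-block terms of $\I_1$ before recognizing each block as the classical multinomial CRLB. Your Schur-complement route $\C_1=\I^{-1}-\I^{-1}\H^\top(\H\I^{-1}\H^\top)^{-1}\H\I^{-1}$, applied blockwise with $\H_{(s,a)}=\1^\top$ and $\I_{(s,a)}=\diag(1/P(\cdot|s,a))$, lands on $\diag(P)-PP^\top$ directly without ever constructing $\U$ or $\X$.

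There is, however, one slip you should close. The score you wrote, $\partial_{P(s'|s,a)}\log f=\mathsf{1}_{\{s_t(s,a)=s'\}}/P(s'|s,a)$, has expectation $1$, not $0$. Hence independence across $(s,a)$ does \emph{not} zero out the off-block entries of the outer-product Fisher $\EB[SS^\top]$: for $(s,a)\neq(\tilde s,\tilde a)$ they are all equal to $1$, exactly as the paper computes. So the sentence ``stacking blocks yields $\I(\PP)=\diag(\{\I_{(s,a)}\})$'' is false for $\EB[SS^\top]$ as written. Two easy repairs: (i) use the Hessian-based Fisher $-\EB[\partial^2\log f]$, which \emph{is} block diagonal with blocks $\diag(1/P(\cdot|s,a))$ because $\log f$ is separable in the rows; or (ii) observe that the all-ones off-block part of $\EB[SS^\top]$ can be written as $\H^\top(\1_{D}\1_{D}^\top-\I_D)\H$ and is therefore annihilated by the constraint projection, so replacing $\EB[SS^\top]$ by your block-diagonal matrix leaves the constrained CRLB unchanged. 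The paper's auxiliary-matrix trick is precisely fix (ii) made explicit; once you add either justification, your argument goes through and is more economical than the paper's.
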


By Lemma~\ref{lem:Q_diff} and \ref{lem:ccrb}, we have
\begin{align*}
    \left(\frac{\partial \Q^*}{\partial \PP}\right)^\top\C_1(\PP)\frac{\partial \Q^*}{\partial \PP}((s,a),(\bar{s},\bar{a}))=&\sum_{(\tilde{s},\tilde{a})}\gamma^2 \G^{-1}((s,a),(\tilde{s},\tilde{a}))\G^{-1}((\bar{s},\bar{a}),(\tilde{s},\tilde{a}))\\
    &\cdot\left(\sum_{\tilde{s}'}V^*(\tilde{s}')^2P(\tilde{s}'|\tilde{s},\tilde{a})-(\sum_{\tilde{s}'}V^*(\tilde{s}')P(\tilde{s}'|\tilde{s},\tilde{a}))^2\right).
\end{align*}
The Cramer-Rao lower bound is thus equal to
\begin{align*}
    \left(\frac{\partial \Q^*}{\partial \PP}\right)^\top\C_T(\PP)\frac{\partial \Q^*}{\partial \PP}=T\cdot(\I-\gamma\PP^{\pi^*})^{-1}\Var(\gamma\PP_j \V^*)(\I-\gamma\PP^{\pi^*})^{-\top}.
\end{align*}
At the end of this part, we provide the deferred proof for Lemma~\ref{lem:Q_diff} and~\ref{lem:ccrb}.
\begin{proof}[Proof of Lemma~\ref{lem:Q_diff}]
    Notice that $\Q^*=\R+\gamma\PP\V^*$.
    Then by the chain rule, we have
    \begin{align*}
        &\frac{\partial Q^*(s,a)}{\partial P(s'|s,a)}=\gamma V^*(s')+\gamma\sum_{s_1}P(s_1|s,a)\frac{\partial V^*(s_1)}{\partial P(s'|s,a)},\\
        &\frac{\partial Q^*(s,a)}{\partial P(s'|\tilde{s},\tilde{a})}=\gamma\sum_{s_1}P(s_1|\tilde{s},\tilde{a})\frac{\partial V^*(s_1)}{\partial P(s'|\tilde{s},\tilde{a})}
        \ \text{for any} \ (s,a) \neq (\tilde{s},\tilde{a}).
    \end{align*}
    Assumption~\ref{asmp:gap} implies the optimal policy $\pi^*$ is unique.
    Hence, using $V^*(s_1)=\max_a Q^*(s_1,a) = Q^*(s_1, \pi^*(s_1))$, we have
        \begin{align*}
    \frac{\partial V^*(s_1)}{\partial P(s'|s,a)}=\frac{\partial Q^*(s_1,\pi^*(s_1))}{\partial P(s'|s,a)}.
    \end{align*}
    Notice that $\PP^*( (s, a), (\tilde{s},\tilde{a}) ) = P(\tilde{s}|s, a)\mathsf{1}_{ \{ \tilde{a} = \pi^*( \tilde{s}) \} }$.
    Putting all the pieces together and solving $\{  \frac{\partial Q^*(s,a)}{\partial P(s'|\tilde{s},\tilde{a})} \}_{s, a, s',\tilde{s},\tilde{a} }$ from the linear system, we have
    \begin{align*}
        \frac{\partial Q^*(s,a)}{\partial P(s'|\tilde{s},\tilde{a})}=\gamma V^*(s')\cdot(\I-\gamma\PP^*)^{-1}((s,a),(\tilde{s},\tilde{a})).
    \end{align*}
\end{proof}

\begin{proof}[Proof of Lemma~\ref{lem:ccrb}]
    We write our the log-likelihood of sample $\PP_t$ as
    \begin{align*}
        \log f_{\PP}(\PP_t)=\sum_{s,a,s'}\mathsf{1}_{ \{s_t(s, a)=s'\}  }\log P(s'|s,a),
    \end{align*}
    which implies $ \frac{\partial}{\partial\PP}\log f_{\PP}(\PP_t) \in \RB^{S^2A}$ with the $(s, a, s')$-th entry given by
    \begin{equation}
    \label{eq:log-prob-P}
        \frac{\partial \log f_{\PP}(\PP_t)}{\partial P(s'|s, a)} = \frac{\mathsf{1}_{ \{s_t(s, a)=s'\}  }}{P(s'|s, a)}.
    \end{equation}
    By definition of the Fisher information matrix, we have
    \begin{align*}
       \I_1(\PP)=\EB\left\{\frac{\partial}{\partial\PP}\log f_{\PP}(\PP_t)\left[\frac{\partial}{\partial\PP}\log f_{\PP}(\PP_t)\right]^\top\right\} \in \RB^{S^2A \times S^2A},
    \end{align*}
    which implies
    \begin{align*}
    \I_1(\PP)((s,a,s'), (\tilde{s},\tilde{a},\tilde{s}'))
    =  
    \left\{ 
    \begin{array}{ll}
\frac{\mathsf{1}_{ \{ s'=\tilde{s}' \}  }}{P(s'|s,a)} & \text{if} \ (s,a)=(\tilde{s},\tilde{a}), \\
1 & \text{if} \ (s,a)\neq(\tilde{s},\tilde{a}). \\
    \end{array}
     \right.
    \end{align*}
    By definition of $\h(\PP)$, we rearrange $\h(\PP)$ into an $S^2A \times SA$ matrix given by
    \begin{align*}
        \H(\PP)((s,a,s'),(\tilde{s},\tilde{a})):=\frac{\partial h_{\tilde{s},\tilde{a}}(\PP)}{\partial P(s'|s,a)}=\mathsf{1}_{\{ (\tilde{s},\tilde{a})=(s,a) \} }.
    \end{align*}
    Let $\U(\PP)\in\RB^{S^2A\times(S^2A-SA)}$ be the orthogonal matrix whose column space is the orthogonal complement of the column space of $\H(\PP)$, which stands for $\H(\PP)^\top \U(\PP)=\0$ and $\U(\PP)^\top\U(\PP)=\I$. 
    Using results in~\citep{moore2010theory}, the constrained CRLB is
    \begin{align*}
        \C_1(\PP)=\U(\PP)\left(\U(\PP)^\top\I_1(\PP)\U(\PP)\right)^{-1}\U(\PP)^\top.
    \end{align*}
    We define an auxiliary matrix $\X\in\RB^{SA\times S^2A}$ satisfying
    \begin{align*}
        \X((s,a),(\tilde{s},\tilde{a},\tilde{s}'))=-\frac{1}{2}\cdot\mathsf{1}_{  \{ (s,a)\not=(\tilde{s},\tilde{a}) \}   } .
    \end{align*}
    By $\H(\PP)^\top \U(\PP)=\0$, we have
    \begin{align*}
        \C_1(\PP)&=\U(\PP)\left(\U(\PP)^\top(\H(\PP)\X+\I_1(\PP)+\X^\top\U(\PP)^\top)\U(\PP)\right)^{-1}\U(\PP)^\top\\
        &:=\U(\PP)\left(\U(\PP)^\top\D(\PP)\U(\PP)\right)^{-1}\U(\PP)^\top,
    \end{align*}
    where $\D(\PP)((s,a,s'),(s,a,s'))=1/P(s'|s,a)$ and takes value $0$ elsewhere. 
    Now we reformulate $\D(\PP)$ as a block diagonal matrix  $\D(\PP)=\diag(\{\D_{(s,a)}\}_{(s,a)}):=\diag(\{1/P(\cdot|s,a)\}_{(s,a)})$ where $\D_{(s, a)}$ is a diagonal matrix with $\D_{(s, a)}(s', s') = 1/P(s'|s,a)$.
    Similarly, we have $\H(\PP)=\diag(\{\1_{S}\}_{(s,a)})$, where $\1_{S}$ is an all-1 vector with dimension $S$, and $\U(\PP)=\diag(\{\U_{(s,a)}\}_{(s,a)})$, where $\U_{(s,a)}\in\RB^{S \times S-1}$ satisfying $\U_{(s,a)}^\top\1_{S}=\0$. 
    In this way, $\C_1(\PP)$ has a equivalent block diagonal formulation
    \begin{align*}
        \C_1(\PP)=\diag\left(\left\{\U_{(s,a)}\left(\U_{(s,a)}^\top\D_{(s,a)}\U_{(s,a)}\right)^{-1}\U_{(s,a)}^\top\right\}_{(s,a)}\right).
    \end{align*}
    For each block $(s,a)$ of $\C_1(\PP)$, the submatrix is exactly the constrained Cramer-Rao bound of a multinomial distribution $\PP_{s,a} =\{P(\cdot|s,a)\}$, which is equal to $\diag(\PP_{s,a})-\PP_{s,a}\PP_{s,a}^\top$.
    Therefore,
    \begin{align*}
        \C_1(\PP)=\diag\left(\left\{\diag(P(\cdot|s,a))-P(\cdot|s,a)P(\cdot|s,a)^\top)\right\}_{(s,a)}\right).
    \end{align*}
\end{proof}

\subsubsection{Nonparametric Part}
Next, we move on discussing the efficiency on rewards. 
Unlike $\PP_t$ that is generated according to a parametric model, the generating mechanism of $\rr_t$ can be arbitrary. 
In other words, a finite dimensional parametric space is not enough to cover the possible distributions of $\rr_t$. 
Thus, semiparametric theory is needed here. 
Fortunately, our interest parameter $\Q^*=(\I-\gamma\PP^{\pi^*})^{-1}\rr$ is linear in $\rr:=\EB\rr_t$, implying only the expectation of $\rr_t$ matters.
In semiparametric theory~\citep{van2000asymptotic,tsiatis2006semiparametric}, the efficienct influence function for mean estimation is exatly the random variable minus its expectation. 
Lemma~\ref{lem:r_eif} shows it is still true in our case.

\begin{lem}
    \label{lem:r_eif}
    Let Assumption~\ref{asmp:gap} hold.
    Given a random sample $\rr_t$, the most efficient influence function for estimating $\Q^*(s,a)$ for any $(s, a)$ is
    \begin{align*}
        \phi(s,a)=(\I-\gamma\PP^{\pi^*})^{-1}(\rr_t-\rr) (s,a),
    \end{align*}
    where $\rr=\EB\rr_t$.
    Hence, the semiparametric efficiency bound of estimating $\Q^*$ with $\{\rr_t\}_{t\in [T]}$ is
    \begin{align*}
    \sup_{\PM_\gamma(R) \subset \PM_R}
\GammaSym(\gamma_0(R)) \I(\gamma_0(R))^{-1}\GammaSym^\top(\gamma_0(R)) =
    \frac{1}{T}\cdot (\I-\gamma\PP^{\pi^*})^{-1}\Var(\rr_t)(\I-\gamma\PP^{\pi^*})^{-1}.
    \end{align*}
\end{lem}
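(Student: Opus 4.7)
The plan is to reduce Lemma~\ref{lem:r_eif} to a standard semiparametric efficiency calculation for a linear functional of a mean parameter in a fully nonparametric model. The key observation is that, once $\PP$ is held fixed and only $R$ varies, $\Q^*$ depends linearly on $\rr = \EB \rr_t$ in a neighborhood of the truth: Assumption~\ref{asmp:gap} combined with Lemma~\ref{lem:gap} yields a positive optimality gap $\gap > 0$, so any sufficiently small perturbation $\rr \mapsto \rr + \eta \dot{\rr}$ leaves the greedy policy $\pi^*$ unchanged, and hence $\Q^*(\rr + \eta \dot{\rr}) = (\I - \gamma \PP^{\pi^*})^{-1}(\rr + \eta \dot{\rr})$ on that neighborhood.

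Next, I would introduce a regular parametric submodel $\{\LM_\eta(\rr)\}$ of distributions of $\rr_t$ with $\LM_0(\rr) = \LM(\rr)$ and score $s(\rr_t) := \partial_\eta \log p_\eta(\rr_t)|_{\eta=0}$, satisfying $\EB[s(\rr_t)] = \0$ and $\EB\|s(\rr_t)\|^2 < \infty$. Because the nonparametric model $\PM_R$ places no restriction on the distribution other than specifying its mean, the tangent space at $\LM(\rr)$ is the whole of $L^2_0(\LM(\rr)) := \{ g : \EB g(\rr_t) = 0, \EB \|g(\rr_t)\|^2 < \infty\}$. The pathwise derivative of $\beta(\eta) := \Q^*(\EB_\eta \rr_t)$ along this submodel is
\begin{align*}
\frac{\partial \beta(\eta)}{\partial \eta}\bigg|_{\eta = 0}
= (\I - \gamma \PP^{\pi^*})^{-1} \, \frac{\partial \EB_\eta \rr_t}{\partial \eta}\bigg|_{\eta = 0}
= (\I - \gamma \PP^{\pi^*})^{-1} \EB[\rr_t\, s(\rr_t)^\top]
= \EB[\phi(\rr_t)\, s(\rr_t)^\top],
\end{align*}
where $\phi(\rr_t) := (\I - \gamma \PP^{\pi^*})^{-1}(\rr_t - \rr)$ and the last equality uses $\EB[s(\rr_t)] = \0$.

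Then I would invoke the Riesz representation argument from semiparametric theory (Theorem 4.3 in~\citep{tsiatis2006semiparametric}): since $\phi$ lies in the tangent space and represents the pathwise derivative with respect to every regular submodel, it is the (necessarily unique) efficient influence function. The semiparametric efficiency bound of any RAL estimator constructed from $\{\rr_t\}_{t \in [T]}$ is therefore
\begin{align*}
\frac{1}{T}\, \EB[\phi(\rr_t)\phi(\rr_t)^\top]
= \frac{1}{T}\, (\I - \gamma \PP^{\pi^*})^{-1} \Var(\rr_t) (\I - \gamma \PP^{\pi^*})^{-\top},
\end{align*}
as claimed. The main obstacle is verifying the local linearization of $\Q^*$ in $\rr$ rigorously: although Assumption~\ref{asmp:gap} preserves $\pi^*$ under small perturbations, one must argue that the quadratic-mean differentiable paths $\LM_\eta(\rr)$ used to define pathwise differentiability induce perturbations of the mean $\EB_\eta \rr_t$ that are $O(\eta)$ in norm, so that for all $\eta$ sufficiently small the greedy policy and hence the map $\rr \mapsto \Q^*(\rr)$ is genuinely affine. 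Once this is in place, the rest of the argument reduces to the textbook calculation for the semiparametric mean model.
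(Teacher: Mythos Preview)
Your proposal is correct and follows essentially the same route as the paper: compute the pathwise derivative of $\Q^*$ along a one-dimensional parametric submodel of the reward distribution, express it as $\EB[\phi(\rr_t)\, s(\rr_t)]$ with the candidate $\phi = (\I-\gamma\PP^{\pi^*})^{-1}(\rr_t-\rr)$, and then invoke the standard semiparametric result (the paper cites Theorem~2.2 of Newey (1990), you cite Tsiatis) that in a fully nonparametric model the tangent space is all of $L^2_0$, so this $\phi$ is the efficient influence function. The only cosmetic difference is that the paper works one coordinate $(s,a)$ at a time and derives $\partial Q^*/\partial\varepsilon$ by writing out and solving the Bellman recursion, whereas you argue directly that $\pi^*$ is locally constant (via the optimality gap) so that $\Q^* = (\I-\gamma\PP^{\pi^*})^{-1}\rr$ is affine in a neighborhood and the derivative is immediate; your version is slightly more streamlined but not a different argument.
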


\begin{proof}[Proof of Lemma~\ref{lem:r_eif}]
    As $r_t(s,a)$ are independent with different $(s',a')$ pairs, we can only consider randomness of one pair $(s,a)$.
    
    Firstly, we consider a submodel family $\PM_{R_{\varepsilon}}$ of $\PM_R$ that is parameterized by $\varepsilon$ such that when $\varepsilon =0$, we recover the distribution of $R(s, a)$.
    That is $\PM_{R_{\varepsilon}}=\{R_\varepsilon: \varepsilon\in[-\delta,\delta] \ \text{and} \ R(s, a) = R_\varepsilon(s, a)|_{\varepsilon=0} \} $.
    This can be achieved by manipulating density functions of each $R(s, a)$.
    It is clear that $\PM_{R_{\varepsilon}}$ is a parametric family on rewards and we can make use of results in parametric statistics for our purpose.
    By definition, we have for $(s,a)$,
    \begin{align*}
        \frac{\partial Q^*(s,a)}{\partial\varepsilon}\bigg|_{\varepsilon=0}&=\frac{\partial}{\partial\varepsilon}\left(\EB r_t(s,a)+\gamma\sum_{s'}P(s'|s,a)Q^*(s',\pi^*(s'))\right)\bigg|_{\varepsilon=0}\\
        &=\frac{\partial\EB r_t(s,a)}{\partial\varepsilon}\bigg|_{\varepsilon=0}+\gamma\sum_{s'}P(s'|s,a)\frac{\partial Q^*(s',\pi^*(s'))}{\partial \varepsilon}\bigg|_{\varepsilon=0}.
    \end{align*}
    For any $(\tilde{s},\tilde{a})\not=(s,a)$, we have
    \begin{align*}
        \frac{\partial Q^*(\tilde{s},\tilde{a})}{\partial\varepsilon}\bigg|_{\varepsilon=0}=\gamma\sum_{s'}P(s'|\tilde{s},\tilde{a})\frac{\partial Q^*(s',\pi^*(s'))}{\partial \varepsilon}\bigg|_{\varepsilon=0}.
    \end{align*}
    Recursively expanding the above terms like what we have done in Lemma~\ref{lem:Q_diff}, we have
    \begin{align*}
        \frac{\partial Q^*(\tilde{s},\tilde{a})}{\partial \varepsilon}\bigg|_{\varepsilon=0}=\frac{\partial\EB r_t(s,a)}{\partial\varepsilon}\bigg|_{\varepsilon=0}\cdot(\I-\gamma\PP^{\pi^*})^{-1}((\tilde{s},\tilde{a}),(s,a)).
    \end{align*}
    Let $F_\varepsilon$ denote the cumulative distribution function of $R_\varepsilon(s, a)$.
    Then we have
    \begin{align*}
        \frac{\partial\EB r_t(s,a)}{\partial\varepsilon}\bigg|_{\varepsilon=0}&=\int r_t(s,a)\frac{\partial}{\partial\varepsilon}dF_\varepsilon\bigg|_{\varepsilon=0}  \\
        &=\int(r_t(s,a)-r(s,a))\frac{\partial}{\partial\varepsilon}\log dF_\varepsilon \bigg|_{\varepsilon=0}dF_0,
    \end{align*}
    where $r(s,a)=\EB r_t(s,a)$ and $\frac{\partial}{\partial\varepsilon}\log dF_\varepsilon$ is the score function. 
    Therefore,
    \begin{equation}
    \label{eq:partial-var}
    \frac{\partial Q^*(\tilde{s},\tilde{a})}{\partial \varepsilon}\bigg|_{\varepsilon=0}=
    \int  \phi(\tilde{s},\tilde{a})  \frac{\partial}{\partial\varepsilon}\log dF_\varepsilon \bigg|_{\varepsilon=0}dF_0,
    \end{equation}
    where    
    \begin{align*}
    \phi(\tilde{s},\tilde{a})=(r_t-r)(s,a)\cdot(\I-\gamma\PP^{\pi^*})^{-1}((\tilde{s},\tilde{a}),(s,a)).
    \end{align*}
    Since the parametric submodel family $\mathcal{R}_\varepsilon$ is arbitrary, we conclude that the efficient influence function of $Q^*(\tilde{s},\tilde{a})$ is $\phi(\tilde{s},\tilde{a})$ by Theorem~2.2 in~\citep{newey1990semiparametric}.
    Finally, as $r_t(s,a)$ is independent with each other $r_t(s',a')$'s, our final result is obtained by summing the above equation over all $(s,a)$.
\end{proof}

\subsection{Proof of Theorem~\ref{thm:RAL}}
\begin{proof}[Proof of Theorem~\ref{thm:RAL}]
	Recall that $\BDelta_T = \frac{1}{T}\sum_{t=1}^T (\Q_T - \Q^*)$.
Combining~\eqref{eq:sandwitch-nonasy},~\eqref{eq:Delta1-decom} and~\eqref{eq:Delta2-decom}, we have
\[
\sqrt{T}(\TM_0 + \TM_1 + \TM_2 + \TM_3)
\le  \BDelta_T^1 \le \sqrt{T} \BDelta_T \le \sqrt{T} \BDelta_T^2
\le \sqrt{T} (\TM_0 + \TM_1 + \TM_2 + \TM_3 + \TM_4),
\]
where the inequality holds coordinate-wise.
In Appendix~\ref{proof:five-term}, we have analyze $\EB\|\TM_i\|_{\infty}$ with explicit upper bounds.
It is easy to verify that $\sqrt{T}\EB\|\TM_i \|= o(1)$ for $i=0, 2, 3, 4$ (see Remark~\ref{rmk:T}).
Hence,
\[
\BDelta_T=  \sqrt{T}\TM_1 + o_{\PB}(1) = \frac{1}{\sqrt{T}} \sum_{t=1}^T (\I - \gamma \PP^{\pi^*})^{-1} \Z_t + o_{\PB}(1)  :=  \frac{1}{\sqrt{T}} \sum_{t=1}^T \Bphi(\rr_t, \PP_t) + o_{\PB}(1),
\]
where $\Z_t = (\rr_t-\rr) + \gamma (\PP_t - \PP) \V^*$ is the Bellman noise at iteration $t$. 
This implies $\bar{\Q}_T$ is asymptotically linear with the influence function $\Bphi(\rr_t, \PP_t):= (\I - \gamma \PP^*)^{-1} \Z_t $.

The remaining issue is to prove regularity.
By definition, a RAL estimator is regular for a semiparametric model $\PM = \PM_P\times \PM_R$ if it is a RAL estimator for every parametric submodel $\PM_{\gamma} = \PM_P \times \PM_{R_{\varepsilon}} \subset \PM$ where $\gamma = (\PP, \varepsilon)$ is the finite-dimensional parameter controlling $\PM_{\gamma}$.
In a parametric submodel $\PM_P \times \PM_{R_{\varepsilon}}$, by Theorem~2.2 in~\citep{newey1990semiparametric}, for the asymptotically linear estimator $\bar{\Q}_T$ of $\Q^*$ which has the influence function
\[
\Bphi(\rr_t, \PP_t) =(\I-\gamma \PP^{\pi^*})^{-1} \left[ (\rr_t-\rr) + \gamma(\PP_t - \PP) \V^*\right],
\]
its regularity is equivalent to the equality
\begin{equation}
\label{eq:reg}
\EB \Bphi(\rr_t, \PP_t) S_{\gamma}^\top(\gamma_0) = \frac{\partial \Q^*}{\partial \gamma} \bigg|_{\gamma = \gamma_0},
\end{equation}
where $S_{\gamma}(\cdot)$ is the score function, $\gamma = (\PP', \varepsilon) \in \PM_P\times [-\delta, \delta]$  is the finite-dimensional parameter and $\gamma_0 = (\PP, 0)$ is the true underlying parameter.
Since $\PP$ and $\varepsilon$ are variationally independent, $S_{\gamma}(\gamma_0) = (S_{\PP}(\gamma_0), S_{\varepsilon}(\gamma_0))$.

\paragraph{For the transition kernel $\PP$.}
Since our parametric space $\PM_P$ has a linear constraint, it is not easy to compute the constrained score function.
Hence, for $\PP = \{ P(s'|s, a) \}_{s,a,s'}$, we regard $\{ P(s'|s, a) \}_{s,a,s'\neq s_0}$ as free parameters where $s_0 \in \SM$ is any fixed state and use it as our new parameter.
For a fixed $(s, a)$, once $P(s'|s, a)$ is determined for all $s'\neq s_0$, one can recover $P(s_0|s, a)$ by $P(s_0|s, a) = 1 -\sum_{s' \neq s_0} P(s'|s, a)$.
In this way, each $\{ P(s'|s, a) \}_{s'\neq s_0}$ lies in a open set.
We still denote the set collecting all feasible $\{ P(s'|s, a) \}_{s,a,s'\neq s_0}$ as $\PM$, but readers should remember that current $\PP = \{ P(s'|s, a) \}_{s,a,s'\neq s_0} \in \RB^{SA \times (S-1)}$.
From~\eqref{eq:log-prob-P} and under our new notation of $\PP$, $S_{\PP}(\gamma_0) \in \RB^{SA(S-1)}$ with entries given by 
\[
S_{\PP}(\gamma_0)(s, a, s')= \frac{\mathsf{1}_{ \{s_t(s, a)=s'\}  }}{P(s'|s, a)} - \frac{\mathsf{1}_{ \{s_t(s, a)=s_0\}  }}{P(s_0|s, a)}
\ \text{for any} \ s' \neq s_0.
\]
By Lemma~\ref{lem:Q_diff} and the chain rule, it follows that $ \frac{\partial \Q^*}{\partial \PP} \in\RB^{SA \times SA(S-1)}$ and its $(\tilde{s}, \tilde{a}, s')$-th column is
\begin{equation}
\label{eq:parti-P-col}
\gamma (\I - \gamma \PP^{\pi^*})^{-1} (\cdot, (\tilde{s}, \tilde{a})) \left[ V^*(s') - V^*(s_0) \right].
\end{equation}
Since $(\I - \gamma \PP^{\pi^*})^{-1}$ has a full rank (i.e., $SA$), it is easy to see that $ \frac{\partial \Q^*}{\partial \PP}$ also has rank $SA$ by varying $(\tilde{s}, \tilde{a})$ and fixing $s', s_0$ in~\eqref{eq:parti-P-col}.
On the other hand, the $(\tilde{s}, \tilde{a}, s')$-th column of $\EB \Bphi(\rr_t, \PP_t)  S_{\PP}(\theta_0)^\top$ is
\begin{align*}
(\EB \Bphi(\rr_t, \PP_t) S_{\PP}(\gamma_0)^\top)(\cdot, (\tilde{s}, \tilde{a}, s'))
&=\EB \Bphi(\rr_t, \PP_t) \left[ \frac{\mathsf{1}_{ \{s_t(s, a)=s'\}  }}{P(s'|s, a)} - \frac{\mathsf{1}_{ \{s_t(s, a)=s_0\}  }}{P(s_0|s, a)} \right]
\\
&=\gamma (\I-\gamma \PP^{\pi^*})^{-1}\EB (\PP_t-\PP)\V^* \left[\frac{\mathsf{1}_{ \{s_t(s, a)=s'\}  }}{P(s'|s, a)} - \frac{\mathsf{1}_{ \{s_t(s, a)=s_0\}  }}{P(s_0|s, a)}\right]\\
&= \gamma (\I - \gamma \PP^{\pi^*})^{-1} (\cdot, (\tilde{s}, \tilde{a})) \left[ V^*(s') - V^*(s_0) \right],
\end{align*}
where the last equality uses the following result.
By direct calculation, the $(s, a)$-th entry of $\EB (\PP_t-\PP)\V^* \left[\frac{\mathsf{1}_{ \{s_t(s, a)=s'\}  }}{P(s'|s, a)} - \frac{\mathsf{1}_{ \{s_t(s, a)=s_0\}  }}{P(s_0|s, a)}\right]$ is 0 for all $(s, a) \neq (\tilde{s}, \tilde{a})$ (due to independence) and the $(\tilde{s}, \tilde{a})$-th entry is $V^*(s') - V^*(s_0)$.
Indeed, the $(\tilde{s}, \tilde{a})$-th entry of the mentioned matrix is
\begin{align*}
&\EB \sum_{i \in \SM} (\mathsf{1}_{ \{ s_t(s, a)=i \} } - P(i|s, a) ) V^*(i) \left[\frac{\mathsf{1}_{ \{s_t(s, a)=s'\}  }}{P(s'|s, a)} - \frac{\mathsf{1}_{ \{s_t(s, a)=s_0\}  }}{P(s_0|s, a)}\right]\\
=&\left(V^*(s') - \sum_{i\neq s_0}P(i|s, a) V^*(i)\right) + \sum_{i\in \SM}P(i|s, a) V^*(i) = V^*(s')  -  V^*(s_0).
\end{align*}
Therefore, combining the results for all $(\tilde{s}, \tilde{a}, s') (s' \neq s_0)$, we have
\[
\EB \Bphi(\rr_t, \PP_t) S_{\PP}(\gamma_0)^\top = \frac{\partial \Q^*}{\partial \PP},
\] 
which implies~\eqref{eq:reg} holds for the $\PP$ part.

\paragraph{For the random reward $R$.}
Using the notation in the proof of Lemma~\ref{lem:r_eif}, $S_{\varepsilon}(\gamma_0) = \frac{\partial}{\partial\varepsilon}\log dF_\varepsilon|_{\varepsilon=0}$.
By~\eqref{eq:partial-var}, we have
\[
\frac{\partial \Q^*}{\partial \varepsilon} \bigg|_{\varepsilon=0}= 
\EB (\I-\gamma \PP^{\pi^*})^{-1} (\rr_t-\rr) S_{\varepsilon} (\gamma_0)
= \EB \Bphi(\rr_t, \PP_t) S_{\varepsilon} (\gamma_0)
\]
which implies~\eqref{eq:reg} holds for the $\varepsilon$ part.

$\PM_{R_{\varepsilon}}$ can be arbitrary, so~\eqref{eq:reg} holds for all parametric submodels.
This means $\bar{\Q}_T$ is regular for all parametric submodels and thus is regular for our semiparametric model.
\end{proof}

\section{A USEFUL CONCENTRATION INEQUALITY}
\label{proof:freedman}

We introduce a useful concentration inequality in this section. 
It captures the expectation and high probability concentration of a martingale difference sum in terms of $\|\cdot\|_{\infty}$.
It uses a similar idea of Theorem 4 in~\citet{li2021q} and is built on Freedman’s inequality~\citep{freedman1975tail} and the union bound.
\begin{lem}
	\label{lem:freedman}
	Assume $\{\X_j\} \subseteq \RB^d$ are martingale differences adapted to the filtration $\{\FM_j \}_{j \ge 0}$ with zero conditional mean $\EB[\X_j|\FM_{j-1}] = \0$ and finite conditional variance $\V_j = \EB[\X_j\X_j^\top|\FM_{j-1}]$.
	Moreover, assume $\{\X_j\}_{j \ge 0}$ is uniformly bounded, i.e., $\sup_{j} \|\X_j\|_{\infty} \le X$.
	For any sequence of deterministic matrices $\{ \B_j \}_{j \ge 0} \subseteq \RB^{D \times d}$ satisfying $\sup_{j }\|\B_j\|_{\infty} \le B$, we define the weighted sum as 
	\[
	\Y_T = \sum_{j=1}^T \B_j \X_j
	\]
	and let $\W_T = \diag(\sum_{j=1}^T \B_j \V_j (\B_j)^\top)$ be a diagonal matrix that collects conditional quadratic variations.
	Then, it follows that
	\begin{gather}
	\PB\left(\|\Y_T\|_{\infty} \ge \frac{2BX}{3} \ln \frac{2D}{\delta} + \sqrt{ 2\sigma^2\ln \frac{2D}{\delta} }  \ \text{and} \  \|\W_T\|_{\infty} \le \sigma^2\right) \le \delta  \label{eq:high-prob-freedman} \\
	\EB \|\Y_T\|_{\infty}1_{\{ \|\W_T\|_{\infty} \le \sigma^2 \}} \le6\sigma\sqrt{\ln (2D)}  + \frac{4BX}{3}\ln(6D) \label{eq:expe-freedman}  .
	\end{gather}
	Generally, we have
	\begin{equation}
	\label{eq:general}
		\EB \|\Y_T\|_{\infty} \le \frac{8BX}{3} \ln(3DT^2) + 2\sqrt{\EB\|\W_T\|_{\infty}} \sqrt{\ln (2DT^2)}.
	\end{equation}
\end{lem}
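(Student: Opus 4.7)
}

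The plan is to prove the three inequalities in order, using scalar Freedman as the basic building block and then extracting the expectation and the unconditional bound by standard tail-integration and peeling arguments.

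First, for the high-probability bound \eqref{eq:high-prob-freedman}, I would reduce to the scalar case coordinate-by-coordinate. For each $i\in[D]$, the $i$-th coordinate $\e_i^\top\Y_T=\sum_{j=1}^T \e_i^\top\B_j\X_j$ is a scalar martingale adapted to $\{\FM_j\}$ whose increments satisfy $|\e_i^\top\B_j\X_j|\le\|\B_j\|_\infty\|\X_j\|_\infty\le BX$ and whose conditional quadratic variation is exactly the $(i,i)$-entry of $\W_T$. Applying the classical scalar Freedman inequality~\citep{freedman1975tail} to $\pm \e_i^\top\Y_T$ and taking a union bound over the $2D$ events yields
\[
\PB\!\left(\|\Y_T\|_\infty\ge t,\ \|\W_T\|_\infty\le\sigma^2\right)\le 2D\exp\!\left(-\frac{t^2/2}{\sigma^2+BXt/3}\right).
\]
Inverting this tail in $t$ (using $\sqrt{a+b}\le\sqrt{a}+\sqrt{b}$ to separate the sub-Gaussian and Bernstein regimes) gives \eqref{eq:high-prob-freedman}.

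Second, for the expectation bound \eqref{eq:expe-freedman}, I would integrate the tail. Write $t(\delta):=\tfrac{2BX}{3}\ln(2D/\delta)+\sqrt{2\sigma^2\ln(2D/\delta)}$, so that \eqref{eq:high-prob-freedman} reads $\PB(\|\Y_T\|_\infty>t(\delta),A_\sigma)\le\delta$ where $A_\sigma=\{\|\W_T\|_\infty\le\sigma^2\}$. Using
\[
\EB\|\Y_T\|_\infty\,\1_{A_\sigma}=\int_0^\infty\PB(\|\Y_T\|_\infty>t,A_\sigma)\,dt
\]
and the change of variable $t=t(\delta)$, a short integration by parts reduces the problem to evaluating $\int_0^1 t(\delta)\,d\delta$. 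This splits into two elementary integrals, $\int_0^1\ln(2D/\delta)\,d\delta=\ln(2D)+1\le\ln(6D)$ and $\int_0^1\sqrt{\ln(2D/\delta)}\,d\delta\le\sqrt{\ln(2D)}+\tfrac{\sqrt\pi}{2}$ (using $\sqrt{a+b}\le\sqrt{a}+\sqrt{b}$ and the Gaussian-integral identity), from which the constants $6$ and $\tfrac{4}{3}$ in \eqref{eq:expe-freedman} follow after absorbing the $\sqrt{\pi}/2$ term into the sub-Gaussian part.

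Third, for the unconditional bound \eqref{eq:general}, I would use a peeling argument over the random variance $\|\W_T\|_\infty$. Note the deterministic bound $\|\W_T\|_\infty\le T B^2X^2$, so $\|\W_T\|_\infty$ lives in a geometric grid of only $O(\log T)$ levels. Define $\sigma_k^2=2^k$ for $k=0,1,\dots,K$ with $K=\lceil\log_2(TB^2X^2)\rceil$; apply \eqref{eq:high-prob-freedman} on each event $\{\|\W_T\|_\infty\in(\sigma_{k-1}^2,\sigma_k^2]\}$, and take a union bound over the $K=O(\log T)$ levels (which is the source of the $\ln(DT^2)$ factor). Combining the resulting high-probability bound with a Jensen-type step $\EB\sqrt{\|\W_T\|_\infty}\le\sqrt{\EB\|\W_T\|_\infty}$ and integrating the tail as in the previous step gives \eqref{eq:general}. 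The main obstacle I anticipate is the accounting in this peeling step---one must be careful that the logarithmic loss from the union over levels remains only $\ln(DT^2)$ rather than something larger, which forces the geometric grid to be chosen carefully and the trivial deterministic bound $\|\Y_T\|_\infty\le TBX$ to be used explicitly on the exceptional event where $\|\W_T\|_\infty$ exceeds $\sigma_K^2$.
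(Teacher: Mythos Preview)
Your proposal is correct and follows essentially the same approach as the paper: coordinate-wise scalar Freedman plus a union bound for \eqref{eq:high-prob-freedman}, tail integration for \eqref{eq:expe-freedman} (the paper simply cites Exercise~2.8(a) in \citep{wainwright2019high} for this step), and geometric peeling over the random variance $\|\W_T\|_\infty\in[0,TB^2X^2]$ for \eqref{eq:general}. The only cosmetic difference is that for \eqref{eq:general} the paper, after peeling into $K=\lceil\log_2 T\rceil$ levels, fixes the single value $\delta=1/T$ and uses the crude bound $\|\Y_T\|_\infty\le TBX$ on the residual event rather than integrating the tail a second time---exactly the device you already anticipated in your final sentence.
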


\begin{proof}[Proof of Lemma~\ref{lem:freedman}]
	Fixing any $i \in [D]$, we denote the $i$-th row of $\B_j$ as $\bb_j^\top$.
	For simplicity, we omit the dependence of $\bb_j$ on $i$.
	Then the $i$-th coordinate of $\Y_T$ is $\Y_T(i) = \sum_{j=1}^T \bb_j^\top \X_j$ and $\W_T(i, i) = \sum_{j=1}^T  \bb_j^\top \V_j \bb_j$.
	Clearly $\{\bb_j^\top \X_j\}$ is a scalar martingale difference with $\W_T(i, i) =\sum_{j=1}^T \EB[(\bb_j^\top \X_j)^2|\FM_{j-1}] $ the quadratic variation and $|\bb_j^\top \X_j| \le \|\bb_j\|_1 \|\X_j\|_{\infty} \le \|\B_j\|_{\infty}\|\X_j\|_{\infty} = B X$ the uniform upper bound.
	By Freedman’s inequality~\citep{freedman1975tail}, it follows that
	\[
	\PB(|\Y_T(i)| \ge \tau \ \text{and} \ \W_T(i,i) \le \sigma^2 )
	\le 2\exp\left(- \frac{\tau^2/2}{\sigma^2 + BX \tau/3} \right).
	\]
	Then by the union bound, we have 
	\begin{align}
	\label{eq:freedman-0}
	\PB(\|\Y_T\|_{\infty} \ge \tau  \ \text{and} \  \|\W_T\|_{\infty} \le \sigma^2)
	&= \PB\left(\max_{i \in [D]}|\Y_T(i)| \ge \tau
	\ \text{and} \ \max_{i \in [D]}|\W_T(i,i)| \le  \sigma^2\right) \nonumber \\
	&\le \sum_{i \in [D]} \PB\left(|\Y_T(i)| \ge \tau
	\ \text{and} \ \max_{i \in [D]}|\W_T(i,i)| \le  \sigma^2\right) \nonumber \\
	&\le \sum_{i \in [D]} \PB\left(|\Y_T(i)| \ge \tau
	\ \text{and} \ |\W_T(i,i)| \le  \sigma^2\right)  \nonumber \\
	&\le 2D\exp\left(- \frac{\tau^2/2}{\sigma^2 + BX \tau/3} \right).
	\end{align}
	Solving for $\tau$ such that the right-hand side of~\eqref{eq:freedman-0} is equal to $\delta$ gives
	\[
	\tau = \frac{BX}{3} \ln \frac{2D}{\delta} + \sqrt{\left(\frac{BX}{3} \ln \frac{2D}{\delta}\right)^2 + 2\sigma^2\ln \frac{2D}{\delta} }.
	\]
	Using $\sqrt{a+b} \le \sqrt{a} + \sqrt{b}$ gives an upper bound on $\tau$ and provides the high probability result.

	The tail bound of $\|\Y_T\|_{\infty}1_{\{ \|\W_T\|_{\infty} \le \sigma^2 \}}$ has already been derived in~\eqref{eq:freedman-0}.
	For the expectation result, we refer to the conclusion of Exercise 2.8 (a) in~\citep{wainwright2019high} which implies that
	\begin{align*}
	\EB \|\Y_T\|_{\infty}1_{\{ \|\W_T\|_{\infty} \le \sigma^2 \}}
	&\le 2\sigma(\sqrt{\pi} + \sqrt{\ln (2D)})  + \frac{4BX}{3}(1+\ln (2D)) \\
	&\le 6\sigma\sqrt{\ln (2D)}  + \frac{4BX}{3}\ln(6D), 
	\end{align*}
	where the last inequality uses $\sqrt{a} + \sqrt{b} \le \sqrt{2(a+b)}$.
	
	For the last result, we aim to bound $\EB\|\Y_T\|_{\infty}$ without the condition $\|\W_T\|_{\infty} \le \sigma^2$ for some positive number $\sigma$.
	We first assert that there exists a trivial upper bound for $\|\W_T\|_{\infty}$ which is $\|\W_T\|_{\infty} \le TB^2X^2$. 
	This is because 
	\begin{align*}
	\|\W_T\|_{\infty}
	= \left\| \diag\left(\sum_{j=1}^T \B_j \V_j (\B_j)^\top\right)\right\|_{\infty}\le\sum_{j=1}^T \left\| \diag\left( \B_j \V_j (\B_j)^\top\right)\right\|_{\infty}
	\overset{(a)}{\le} \|\V_j\|_{\max} \|\B_j\|_{\infty}^2 \overset{(b)}{\le} T B^2 X^2,
	\end{align*}
	where $(a)$ uses Lemma~\ref{lem:V} and $(b)$ is due to $\|\V_j\|_{\max}  \le X^2$ for all $j \in [T]$.
	However, if we set $\sigma^2 = T B^2 X^2$ in~\eqref{eq:expe-freedman}, the resulting expectation bound of $\EB\|\Y_T\|_{\infty}$ has a poor dependence on $T$.
	
	To refine the dependence, we adapt and modify the argument of Theorem 4 in~\cite{li2021q}.
	For any positive integer $K$, we define 
	\[
	\HM_K = \left\{ \|\Y_T\|_{\infty} \ge \frac{2BX}{3} \ln \frac{2DK}{\delta} + \sqrt{ 4\max\left\{ \|\W_T\|_{\infty},\frac{TB^2 X^2}{2^K} \right\}\ln \frac{2DK}{\delta} }   \right\}
	\]
	and claim that we have $\PB(\HM_K) \le \delta$.
	We observe that the event $\HM_K$ is contained within the union of the following $K$ events: $ \HM_K \subseteq \cup_{k \in [K]} \mathcal{B}_k$ where for $0 \le k < K$, $\mathcal{B}_k$ is defined to be
	\begin{gather*}
	\mathcal{B}_k
	=  \left\{ \|\Y_T\|_{\infty} \ge \frac{2BX}{3} \ln \frac{2DK}{\delta} + \sqrt{ 2\frac{T B^2X^2}{2^{k-1}}\ln \frac{2DT}{\delta} }
	\ \text{and} \   \frac{T B^2X^2}{2^{k}}  \le \|\W_T\|_{\infty} \le  \frac{T B^2X^2}{2^{k-1}}    \right\}\\
	\mathcal{B}_K
	=  \left\{ \|\Y_T\|_{\infty} \ge \frac{2BX}{3} \ln \frac{2DK}{\delta} + \sqrt{ 2\frac{T B^2X^2}{2^{K-1}}\ln \frac{2DT}{\delta} }
	\ \text{and} \   \|\W_T\|_{\infty} \le  \frac{T B^2X^2}{2^{K-1}}    \right\}.
	\end{gather*}
	Invoking~\eqref{eq:high-prob-freedman} with a proper $\sigma^2 = \frac{T B^2X^2}{2^{k-1}} $ and $\delta = \frac{\delta}{K}$, we have $\PB(\mathcal{B}_k) \le \frac{\delta}{K}$ for all $k \in [K]$.
	Taken this result together with the union bound gives $\PB(\HM_K) \le \sum_{k \in [K]}\PB(\mathcal{B}_k) \le \delta$.
	Then we have
	\begin{align*}
	\EB\|\Y_T\|_{\infty}
	&=	\EB\|\Y_T\|_{\infty}1_{\HM_K}  + \EB\|\Y_T\|_{\infty}1_{\HM_K^c}\\
	&\overset{(a)}{\le}   TBX \PB(\HM_K) + 
	\EB \left[\frac{2BX}{3} \ln \frac{2DK}{\delta} + \sqrt{ 4\max\left\{ \|\W_T\|_{\infty},\frac{TB^2 X^2}{2^K} \right\}\ln \frac{2DK}{\delta} } \right]\\
	&\overset{(b)}{\le} BX + \frac{2BX}{3} \ln(2DT^2) + 2\EB \sqrt{ \max\left\{ \|\W_T\|_{\infty},B^2X^2\right\}\ln (2DT^2)}\\
	&\overset{(c)}{\le} BX + \frac{8BX}{3} \ln(2DT^2) + 2\EB \sqrt{ \|\W_T\|_{\infty}\ln (2DT^2)}\\
	&\overset{(d)}{\le} \frac{8BX}{3} \ln(3DT^2) + 2\sqrt{\EB\|\W_T\|_{\infty}} \sqrt{\ln (2DT^2)},
	\end{align*}
	where $(a)$ uses $\|\Y_T\|_{\infty} \le TBX$, $(b)$ follows by setting $\delta = \frac{1}{T}$ and $K = \lceil \log_2 T \rceil \le T$, $(c)$ uses $\sqrt{a+b} \le \sqrt{a} + \sqrt{b}$, and $(d)$ follows from Jensen's inequality and $\exp(\frac{3}{8}) \le \frac{3}{2}$.
\end{proof}

\section{PROOF FOR ENTROPY REGULARIZED Q-LEARNING}
\label{proof:entropy}
In this section, we provide the counterpart results for Q-Learning with entropy.
Since the proof is almost similar to that of Q-Learning, we just provide a sketch for simplicity.
Recall that the matrix-form of the update rule is
\[
\TQ_t = (1-\eta_t) \TQ_{t-1} + \eta_t (\rr_t + \gamma \PP_t \LM_\lambda \TQ_{t-1}).
\]
It is easy to show $\LM_{\lambda}$ is a $1$-contraction with respect to $\|\cdot\|_{\infty}$.

\subsection{Convergence Under the General Step Sizes}
\begin{thm}
	\label{thm:general-entropy}
	Under Assumption~\ref{asmp:reward} and using the general step size in Assumption~\ref{asmp:lr}, we have 
	\begin{equation*}
		\lim_{T \to \infty} \frac{1}{\sqrt{T}}\sum_{t=0}^T \EB \|\TQ_t - \Q_{\lambda}^*\|_{\infty}^2 = 0
	\end{equation*}
where $\Q_{\lambda}^*$ is the unique fixed point of the regularized Bellman equation $\Q_\lambda^* = \rr + \gamma \PP  \LM_\lambda \Q_\lambda^*.$
\end{thm}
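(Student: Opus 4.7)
}
The plan is to mirror the proof of Theorem~\ref{thm:general-Linfty-pw2}, replacing the hard-max Bellman operator $\TM$ by its entropy-smoothed counterpart $\widetilde{\TM}(\Q) = \rr + \gamma \PP \LM_\lambda \Q$ and the empirical version $\widehat{\TM}_t$ by $\widetilde{\TM}_t$ from~\eqref{eq:em-rT}. First I would verify the two structural properties needed to invoke Theorem~\ref{thm:chen}: (i) $\widetilde{\TM}$ is a $\gamma$-contraction in $\ell_\infty$, so that $\Q_\lambda^*$ is its unique fixed point, and (ii) the noise $\widetilde{\varepsi}_t := \widetilde{\TM}_t(\widetilde{\Q}_{t-1}) - \widetilde{\TM}(\widetilde{\Q}_{t-1})$ is an $\FM_t$-martingale difference whose conditional second moment admits the bound $\EB[\|\widetilde{\varepsi}_t\|_\infty^2\mid\FM_{t-1}] \le A + B\|\widetilde{\Q}_{t-1}\|_\infty^2$ with universal constants $A,B$. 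For (i), it suffices to note that $\LM_\lambda$ is $1$-Lipschitz in $\ell_\infty$ (a standard log-sum-exp fact) and that $\PP$ is a row-stochastic matrix; hence $\|\widetilde{\TM}\Q - \widetilde{\TM}\Q'\|_\infty \le \gamma\|\Q - \Q'\|_\infty$. For (ii), the conditional mean vanishes by the synchronous generative model, and the conditional variance bound follows from Assumption~\ref{asmp:reward} together with $\|(\PP_t-\PP)\LM_\lambda\widetilde{\Q}_{t-1}\|_\infty \le 2\|\LM_\lambda\widetilde{\Q}_{t-1}\|_\infty$.

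With these two properties in hand, I would rewrite the update as $\widetilde{\Q}_t = \widetilde{\Q}_{t-1} + \eta_t\bigl(\widetilde{\TM}(\widetilde{\Q}_{t-1}) - \widetilde{\Q}_{t-1} + \widetilde{\varepsi}_t\bigr)$ and apply Theorem~\ref{thm:chen} directly. This yields the recursive $\ell_\infty$ error bound
\[
\EB\|\widetilde{\Q}_t - \Q_\lambda^*\|_\infty^2
\le 2\|\widetilde{\Q}_0 - \Q_\lambda^*\|_\infty^2 \prod_{j=1}^{t}(1 - \tfrac{1}{2}\eeta_j) + C \sum_{j=1}^{t}\eta_j \cdot \tfrac{1}{2}\eeta_j \prod_{i=j+1}^{t}(1 - \tfrac{1}{2}\eeta_i),
\]
for some constant $C = C(\lambda,\gamma,D)$ depending polynomially on $(1-\gamma)^{-1}$, $\log D$, and $\EB\|\widetilde{\varepsi}_t\|_\infty^2$.

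Next I would reuse verbatim the averaging argument from Appendix~\ref{proof:general}. Writing the bound in terms of the normalized weights $\eeta_{(j,t)}$ from~\eqref{eq:eeta-tT0}, summing over $t \in [T]$ and dividing by $\sqrt{T}$, we reduce the claim to showing that $T^{-1/2}\sum_{t=1}^{T} \eeta_{(0,t)} \to 0$ and $T^{-1/2}\sum_{t=1}^{T}\sum_{j=1}^{t}\eeta_{(j,t)}\eta_{j-1} \to 0$. Both facts were already established in the proof of Theorem~\ref{thm:general-Linfty-pw2}: the first uses $t\eta_t \uparrow \infty$ and the Stolz--Ces\`aro theorem, and the second follows from Lemma~\ref{lem:bounded-eeta} combined with Assumption~\ref{asmp:lr}(iii), $T^{-1/2}\sum_{t=1}^{T}\eta_{t} \to 0$. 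Crucially, neither argument uses any Lipschitz-type hypothesis like Assumption~\ref{asmp:gap}, since the contractivity of $\widetilde{\TM}$ is global.

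I do not expect any serious obstacle: the entropy regularization actually simplifies the analysis by eliminating the nonsmoothness of the hard max that previously forced us to introduce Assumption~\ref{asmp:gap}. The only point that requires mild care is the verification of the quadratic-in-state variance bound for $\widetilde{\varepsi}_t$, since $\LM_\lambda\widetilde{\Q}_{t-1}$ is a nonlinear function of $\widetilde{\Q}_{t-1}$; however, its $1$-Lipschitz and $0$-homogeneous-up-to-constants character makes $\|\LM_\lambda\widetilde{\Q}_{t-1}\|_\infty \le \|\widetilde{\Q}_{t-1}\|_\infty + \lambda\log|\AM|$, which is exactly the affine control required by Theorem~\ref{thm:chen}.
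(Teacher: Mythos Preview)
Your proposal is correct and follows essentially the same route as the paper: both rewrite the regularized update in the stochastic-approximation form $\widetilde{\Q}_t = \widetilde{\Q}_{t-1} + \eta_t(\widetilde{\TM}(\widetilde{\Q}_{t-1}) - \widetilde{\Q}_{t-1} + \widetilde{\varepsi}_t)$, verify the $\gamma$-contractivity of $\widetilde{\TM}$ via the $1$-Lipschitz property of $\LM_\lambda$ together with the martingale-difference / affine-variance conditions on $\widetilde{\varepsi}_t$, invoke Theorem~\ref{thm:chen} to obtain the same recursive bound~\eqref{eq:help3}, and then reuse verbatim the averaging argument of Theorem~\ref{thm:general-Linfty-pw2}. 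Your remark that $\|\LM_\lambda\widetilde{\Q}_{t-1}\|_\infty \le \|\widetilde{\Q}_{t-1}\|_\infty + \lambda\log|\AM|$ is in fact slightly more careful than the paper's one-line identification $A + B\|\widetilde{\Q}_{t-1}\|_\infty^2$, but the difference is cosmetic.
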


\begin{proof}[Proof of Theorem~\ref{thm:general-entropy}]
	Denote $\TDe_t = \TQ_t - \Q_{\lambda}^*$ for simplicity.
	We will show that $	\lim_{T \to \infty} \frac{1}{\sqrt{T}}\sum_{t=0}^T \EB \|\TDe_t\|_{\infty}^2 = 0$ for the sequence generated via~\eqref{eq:r-Q-update}.
	Similar to Theorem~\ref{thm:general-Linfty-pw2}, we notice that the update rule satisfies $ \TQ_t 
	= \TQ_{t-1} + \eta_t (\rr + \gamma \PP \LM_\lambda \TQ_{t-1} - \TQ_{t-1} + \varepsi_t)
	$ 
	where $\varepsi_t=\rr_t-\rr + \gamma (\PP_t- \PP) \LM_\lambda \TQ_{t-1} $.
	Hence, $\EB[ \varepsi_t|\FM_t] = \0$ and $\EB[\|\varepsi_t\|_{\infty}^2| \FM_t ] \le 2\EB \|\rr_t-\rr\|_{\infty}^2 + 2 \gamma^2 \EB\|\PP_t-\PP\|_{\infty}^2 \|\LM_\lambda \TQ_{t-1}\|_{\infty}^2:= A + B  \|\TQ_{t-1}\|_{\infty}^2$ with $A = 2\EB \|\rr_t-\rr\|_{\infty}^2, B = 2 \gamma^2 \EB\|\PP_t-\PP\|_{\infty}^2$. 
	By Theorem~\ref{thm:chen}, we arrive the same inequality as~\eqref{eq:help3}.
	Following the same analysis therein, we can show $	\lim_{T \to \infty} \frac{1}{\sqrt{T}}\sum_{t=0}^T \EB \|\TDe_t\|_{\infty}^2 = 0$ under the general step size in Assumption~\ref{asmp:lr}.
\end{proof}

\subsection{Establishment of FCLT in Proof of Theorem~\ref{thm:fclt-entropy}}

\begin{proof}[Proof of Theorem~\ref{thm:fclt-entropy}]
	Since the analysis is almost similar to that in Theorem~\ref{thm:fclt}, we just specify the differences.
	The three-step analysis in Section~\ref{sec:proof} still applies here except that we show only modify the first step.
	
	\paragraph{Similar error decomposition.}
	Let $\TDe_t = \TQ_t - \Q_{\lambda}^*$.
	By the regularized Bellman equation $\Q_\lambda^* = \rr + \gamma \PP  \LM_\lambda \Q_\lambda^*$, it follows that
	\begin{align*}
		\TDe_{t}
		&=(1-\eta_t) \TDe_{t-1}+ \eta_t \left[
		\rr_t + \gamma \PP_t \LM_\lambda \TQ_{t-1}-(\rr + \gamma \PP  \LM_\lambda \Q_\lambda^*)
		\right]\\
		&= (1-\eta_t) \TDe_{t-1}+ \eta_t\left[(\rr_t-\rr)+\gamma( \PP_t\LM_\lambda \TQ_{t-1}-\PP \LM_\lambda \Q_\lambda^*)\right] \\
		&= (1-\eta_t) \TDe_{t-1}+ \eta_t\left[ \TZ_t+\gamma\PP_t(  \LM_\lambda \TQ_{t-1}-\LM_\lambda \TQ_\lambda^*)\right]\\
			&= (1-\eta_t) \TDe_{t-1}+ \eta_t\left[ \TZ_t + \gamma \TZ_t'+\gamma\PP(  \LM_\lambda \TQ_{t-1}-\LM_\lambda \TQ_\lambda^*)\right],
	\end{align*}
	where we use $\TZ_t =  (\rr_t-\rr) + \gamma( \PP_t - \PP) \LM_\lambda \Q_\lambda^*$ is the regularized Bellman noise and $\TZ_t'=(\PP_t-\PP)(  \LM_\lambda \TQ_{t-1}-\LM_\lambda \TQ_\lambda^*)$ (which is still a martingale difference.)
	
To analyze $ \LM_\lambda \TQ_{t-1}-\LM_\lambda \TQ_\lambda^*$, we introduce an intermediate linear operator $\LM_\lambda^{\pi}$, which is defined by
\[
	(\LM_{\lambda}^{\pi}\Q)(s) :=  \EB_{a \sim \pi(\cdot|s)}\left[Q(s, a)  - \lambda \log \pi(a|s) \right],
\]
for a given policy $\pi$ and regularization coefficient $\lambda$.
As a result of notation, $(\LM_\lambda \Q)(\cdot) = \sup_{\pi \in \Pi} (\LM_{\lambda}^{\pi}\Q)(\cdot)$ for all $\Q \in \RB^D$.
We assume $\LM_\lambda \TQ_t = \LM_{\lambda}^{\widetilde{\pi}_t}\TQ_t$ and $\LM_\lambda \TQ_\lambda^* = \LM_{\lambda}^{\pi_\lambda^*}\TQ_\lambda^*$.
Hence, 
\[
\LM_\lambda \TQ_{t-1}-\LM_\lambda \TQ_\lambda^*
= \LM_{\lambda}^{\widetilde{\pi}_{t-1}}\TQ_{t-1}
- \LM_{\lambda}^{\pi_\lambda^*}\TQ_\lambda^*
=(\LM_{\lambda}^{\widetilde{\pi}_{t-1}} - \LM_{\lambda}^{\pi_\lambda^*})\TQ_{t-1} + \PP^{\pi_\lambda^*} \TDe_{t-1}
\]
where the last equation uses $ \LM_{\lambda}^{\pi_\lambda^*}\TQ_{t-1}-\LM_{\lambda}^{\pi_\lambda^*}\TQ_\lambda^* =  \PP^{\pi_\lambda^*} \TDe_{t-1}$ by definition.
Putting pieces together,
\[
\TDe_{t} =  \TA_t \De_{t-1} + \eta_t \left[  \TZ_t + \gamma \TZ_t' + \gamma \TZ_t'' \right]
\]
where $\TA_t = \I-\eta_t ( \sI - \gamma \PP^{\pi_\lambda^*})$, $\TZ_t'=(\PP_t-\PP)(  \LM_\lambda \TQ_{t-1}-\LM_\lambda \TQ_\lambda^*)$, and $\TZ_t'' = \PP(\LM_{\lambda}^{\widetilde{\pi}_{t-1}} - \LM_{\lambda}^{\pi_\lambda^*})\TQ_{t-1}$.
Recurring the last equality gives
\begin{equation*}
	\TDe_t
	= \prod_{j=1}^t \TA_{j} \TDe_{0}+  \sum_{j=1}^t \prod_{i=j+1}^t \TA_{i} \eta_j \left( \TZ_j + \gamma \TZ_t' + \gamma \TZ_t''\right).
\end{equation*}
Besides, using the general step size in Assumption~\ref{asmp:lr}, we can show $\frac{1}{\sqrt{T}} \sum_{t=1}^T \EB\|\TDe_t\|_{\infty}^2 \to 0$ (in Theorem~\ref{thm:general-entropy}).

\paragraph{Satisfied Lipschitz condition.}
In order to apply the second and third analysis in Section~\ref{sec:proof}, we only need to show that $\|\TZ_t''\|_{\infty} \le L \|\TDe_{t-1}\|_{\infty}^2$ for an appropriate $L > 0$.
Notice that $\LM_{\lambda}^{\pi_{\lambda}^*} \TQ_{t-1}  \le \LM_{\lambda}^{\widetilde{\pi}_{t-1}}\TQ_{t-1}$ and $\LM_{\lambda}^{\widetilde{\pi}_{t-1}}\TQ_\lambda^*  \le \LM_{\lambda}^{\pi_\lambda^*}\TQ_\lambda^*$ coordinately.
It implies that $\TZ_t'' = \PP(\LM_{\lambda}^{\widetilde{\pi}_{t-1}} - \LM_{\lambda}^{\pi_\lambda^*})\TQ_{t-1}$ satisfies
\[
 \0 \le \TZ_t'' \le \PP \left[(\LM_{\lambda}^{\widetilde{\pi}_{t-1}} - \LM_{\lambda}^{\pi_\lambda^*})\TQ_{t-1}-(\LM_{\lambda}^{\widetilde{\pi}_{t-1}} - \LM_{\lambda}^{\pi_\lambda^*})\TQ_\lambda^*\right]
 = (\PP^{\Tpi_{t-1}}-\PP^{\pi_{\lambda}^*})  \TDe_{t-1}.
\]
Hence, $\|\TZ_t''\|_{\infty} \le \|(\PP^{\Tpi_{t-1}}-\PP^{\pi_{\lambda}^*})  \TDe_{t-1}\|_{\infty} \le \|\PP^{\Tpi_{t-1}}-\PP^{\pi_{\lambda}^*}\|_{\infty} \|\TDe_{t-1}\|_{\infty} \le \|\Pi^{\Tpi_{t-1}}-\Pi^{\pi_{\lambda}^*}\|_{\infty} \|\TDe_{t-1}\|_{\infty}.
$
By definition of $\Pi^\pi$, we know that
\[
\|\PP^{\Tpi_{t-1}}-\PP^{\pi_{\lambda}^*}\|_{\infty}
\le \sup_{s \in \SM} \| \Tpi_{t-1}(\cdot|s) - \pi^*_{\lambda}(\cdot|s)\|_{\infty}.
\]
On the other hand, $\Tpi_{t-1}, \pi_\lambda$ has a closed form in terms of $\TQ_{t-1}$ and $\Q_\lambda^*$ respectively.
Actually, we have that $\Tpi_{t-1}(\cdot|s)  \propto \exp(\widetilde{Q}_{t-1}(s, \cdot)/\lambda)$ and $\pi_{\lambda}^*(\cdot|s)  \propto \exp(Q_\lambda^*(s, \cdot)/\lambda)$.
By the following lemma, we know that $\| \pi_{t-1}(\cdot|s) - \pi_{\lambda}(\cdot|s)\|_{\infty} \le \frac{1}{\lambda} \|Q_\lambda^*(s, \cdot)-\widetilde{Q}_{t-1}(s, \cdot)\|_{\infty}$.
As a result, we have $\|\TZ_t''\|_{\infty} \le   L \|\TDe_{t-1}\|_{\infty}^2$ with $L = \frac{1}{\lambda}$.
\begin{lem}
	\label{lem:softmax}
	For any vector $\v \in \RB^d$, let $\softmax:  \RB^d \to \RB^d$ be defined by $\softmax(\v)(i) = \exp(\v(i))/ \sum_{j \in [d]} \exp(\v(j))$.
	Then,
	$\|\softmax(\v_1)-\softmax(\v_2)\|_{\infty} \le \|\v_1 -\v_2\|_{\infty}$.
\end{lem}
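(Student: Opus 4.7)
The plan is to prove the Lipschitz bound via the fundamental theorem of calculus applied coordinatewise, after computing the Jacobian of $\softmax$ and bounding the $\ell_1$ norm of each of its rows.

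First I would compute the Jacobian. Writing $\pi_i = \softmax(\v)(i)$, a direct differentiation of the ratio $\exp(v_i)/\sum_k \exp(v_k)$ yields
\[
\frac{\partial \pi_i}{\partial v_j} = \pi_i(\delta_{ij} - \pi_j),
\]
so the $i$-th row of the Jacobian $J(\v) \in \RB^{d\times d}$ has entries $\pi_i(\delta_{ij}-\pi_j)$. Bounding the $\ell_1$ norm of this row, I compute
\[
\sum_{j=1}^d |\pi_i(\delta_{ij}-\pi_j)| = \pi_i(1-\pi_i) + \sum_{j \neq i}\pi_i \pi_j = 2\pi_i(1-\pi_i) \le \tfrac{1}{2},
\]
using $t(1-t) \le 1/4$ for $t \in [0,1]$. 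In particular, each row of $J(\v)$ has $\ell_1$ norm at most $1$ (in fact at most $1/2$), uniformly in $\v$.

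Finally I would apply the fundamental theorem of calculus along the segment $\v_t := t\v_1 + (1-t)\v_2$, $t \in [0,1]$. For each coordinate $i$,
\[
\softmax(\v_1)(i) - \softmax(\v_2)(i) = \int_0^1 \nabla \softmax(\v_t)(i)^\top (\v_1 - \v_2)\, dt,
\]
so by H\"older's inequality
\[
|\softmax(\v_1)(i) - \softmax(\v_2)(i)| \le \|\v_1 - \v_2\|_\infty \int_0^1 \|\nabla \softmax(\v_t)(i)\|_1\, dt \le \|\v_1 - \v_2\|_\infty.
\]
Taking the maximum over $i$ yields the claim. There is no substantive obstacle here: the only subtle point is computing the derivative of the softmax ratio correctly and recognizing that a uniform bound on the row $\ell_1$ norms of $J(\v)$ translates directly into the $\ell_\infty \to \ell_\infty$ operator bound, which is what the integrated form of the mean value theorem requires.
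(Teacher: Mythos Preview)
Your proof is correct and follows essentially the same approach as the paper: both bound the $\ell_\infty$ operator norm of the Jacobian of $\softmax$ (equivalently, the Hessian of the log-sum-exp function, since $\softmax = \nabla L$ with $L(\v) = \log\sum_j e^{v_j}$) and then integrate along the segment between $\v_1$ and $\v_2$. Your computation is more explicit and in fact yields the sharper constant $1/2$ via $2\pi_i(1-\pi_i) \le 1/2$, whereas the paper merely asserts the bound $1$.
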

\begin{proof}[Proof of Lemma~\ref{lem:softmax}]
	For any $\v$, it is easy to find that $\softmax(\v) = \frac{\partial L(\v)}{\partial \v}$ where $L(\v) = \log(\sum_{j \in [d]} \exp(\v(j)))$.
	It is easy to show that $\left\|\frac{\partial^2 L(\v)}{\partial^2 \v}\right\|_{\infty} \le 1$ for any $\v$.
	Hence, the result follows from Taylor's expansion.
\end{proof}

The rest proof is almost the same as that in Section~\ref{sec:proof}.
\end{proof}

\subsection{Non-asymptotic Bounds in Proof of Theorem~\ref{thm:fclt-entropy}}
The error decomposition in Appendix~\ref{proof:non-linear-error} still apply here.
Let $\De_t = \TQ_t - \Q_\lambda^*$ for simplicity.
Hence, it follows that
\[
\EB \| \bar{\De}_T - \Q_\lambda^*\|_{\infty}
\le \sum_{i=0}^4 \EB \|\TTM_i\|_{\infty}
\]
where
\begin{gather*}
    \TTM_0  = \frac{1}{\eta_0T}   (\A_0^T-\eta_0 \I)  \TDe_{0},
    \TTM_1 = \frac{1}{T}  \sum_{j=1}^T\G^{-1} \TZ_j,
    \TTM_2 = \frac{1}{T}  \sum_{j=1}^T  (\A_j^T - \G^{-1})\TZ_j\\
    \TTM_3 =  \frac{\gamma}{T}  \sum_{j=1}^T\A_j^T( \PP_j - \PP) (\LM_\lambda\Q_{j-1} - \LM_\lambda\Q_\lambda^*),
        \TTM_4 = \frac{\gamma}{T}  \sum_{j=1}^T\A_j^T( \PP^{\pi_{j-1}} -\PP^{\pi^*}) \DDelta_{j-1}.
\end{gather*}
Pay attention that the $\A_j^T$ used above depends on $\pi_\lambda^*$ rather than $\pi^*$ now.
As argued in last subsection, Assumption~\ref{asmp:gap} is satisfied here with $L = \frac{1}{\lambda}$.

The remaining thing are to repeat what we have done in Appendix~\ref{proof:five-term}, analyzing each term $\TTM_i$'s using non-asymptotic concentration inequalities.
There are some important aspects to notice.
First, for any $j$, $\|\TZ_j\|_{\infty} \le 2(1+\gamma \|\LM_\lambda \Q_\lambda^*\|_{\infty}) \le  2(1+\gamma \|\Q_\lambda^*\|_{\infty} + \gamma \lambda \mathrm{Entropy}(\pi_\lambda^*)) \le \frac{1+\lambda\log\frac{1}{|\AM|}}{1-\gamma} = \TOM(\frac{1}{1-\gamma})$
where we use $\mathrm{Entropy}(\pi_\lambda^*) \le \log \frac{1}{|\AM|}$ and $\|\Q^*-\Q_\lambda^*\|_{\infty} \le \frac{\lambda }{1-\gamma}\log\frac{1}{|\AM|}$ (which is proved in Theorem 5 of~\citep{yang2019regularized}).
Second, the properties of $\A_j^T$'s in Lemma~\ref{lem:AjT-bound} and~\ref{lem:G-poly} still hold with the same parameters.
Finally, we have a counterpart Theorem~\ref{thm:Linfty-pw2} due to Theorem 1 in~\citep{wainwright2019stochastic} also holds here.
The possible difference is that $\|\TZ_j\|_{\infty}$ is bounded $\frac{\lambda }{1-\gamma}\log\frac{1}{|\AM|}$ instead of $\frac{1}{1-\gamma}$, which is equivalent up to log factors.
Hence, up to log factors, Theorem~\ref{thm:Linfty-pw2} also holds for entropy regularized Q-Learning. 
Putting pieces together, we complete the proof.
\begin{figure}[t!]
	\centering
	\includegraphics[width=0.5\columnwidth]{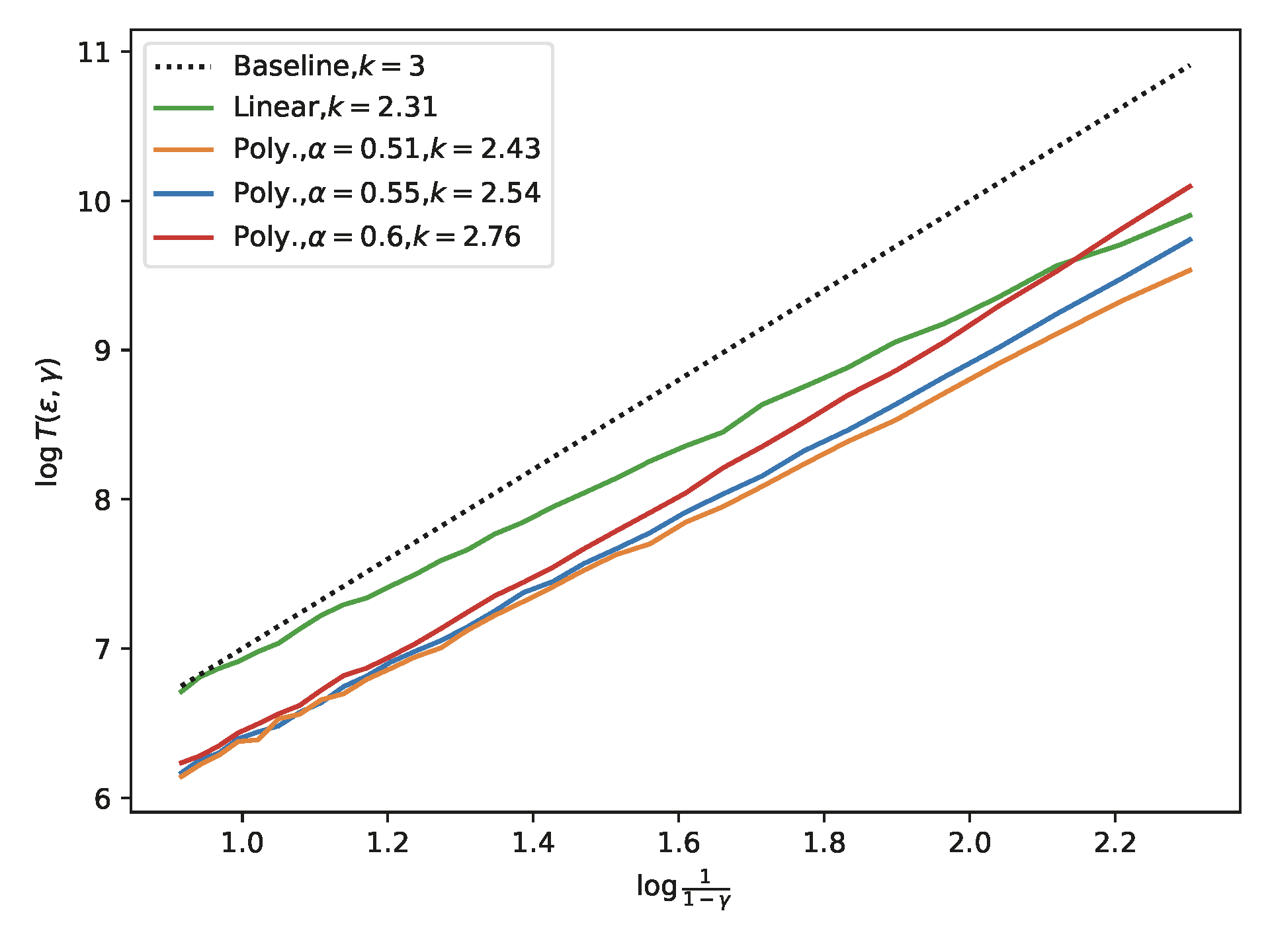} \hspace{-0.1in}
		\includegraphics[width=0.5\columnwidth]{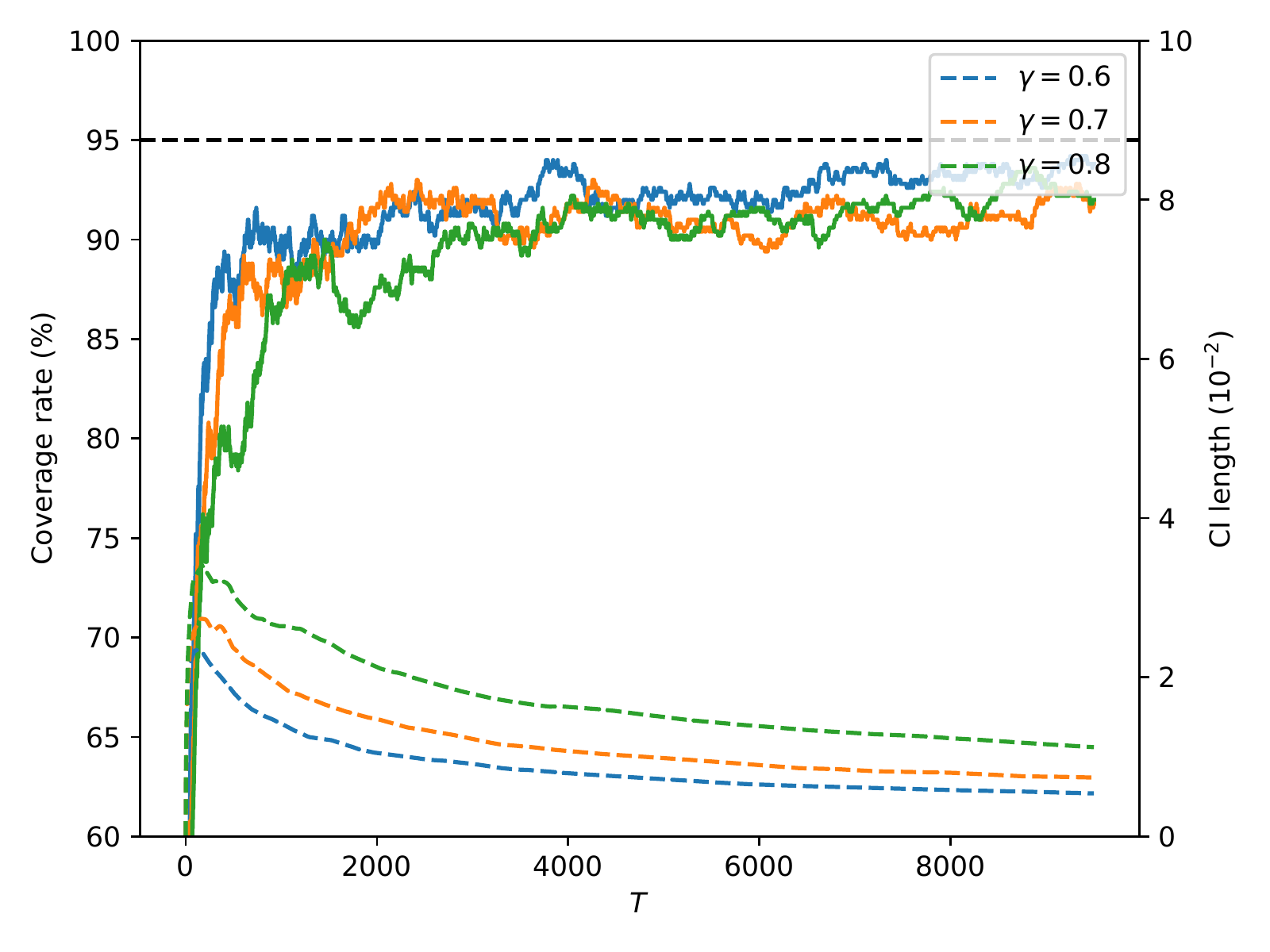}
	\caption{Left: log-log plots of the sample complexity $T(\varepsilon, \gamma)$ versus the discount complexity parameter $(1-\gamma)^{-1}$.
	Right: the coverage rate and the average length of the 95\% confidence interval for regularized Q-Learning.
	}
\label{fig:2}
\end{figure}

\section{DETAILS OF EXPERIMENTS}
\label{appen:exp}

\paragraph{The setup of MDP.}
According to Theorem~\ref{thm:con-linear}, for sufficiently small error $\varepsilon>0$, we expect the sample complexity $T(\varepsilon, \gamma)$ is always upper bounded by $\|\diag(\VQ)\|_{\infty}$ and $\frac{1}{(1-\gamma)^3}$ at a worst case. 
To ensure Assumption~\ref{asmp:gap}, we consider a random MDP.
In particular, for each $(s,a)$ pair, the random reward $R(s,a)\sim\UM(0,1)$ is the uniformly sampled from $(0,1)$ and the transition probability $P(s'|s,a)=u(s')/\sum_{s}u(s)$, where $u(s)\overset{i.i.d.}{\sim}\UM(0,1)$. 
The size of the MDP we choose is $|\SM|=4$, $|\AM|=3$.
We consider 30 different values of $\gamma$ equispaced between $0.6$ and $0.9$. 
For a given $\gamma$, we run Q-learning algorithm for $10^5$ steps (which already ensures convergence) and repeat the process independently for $10^3$ times. 
Finally, we average the $\ell_\infty$ error $\|\bar{\Q}_T-\Q^*\|_{\infty}$ of the $10^3$ independent trials as an approximation of $\EB\|\bar{\Q}_T-\Q^*\|_{\infty}$ and compute $T(\varepsilon, \gamma)$ by definition.
The polynomial step size $\eta_t=t^{-\alpha}$ uses $\alpha\in\{0.51, 0.55, 0.60\}$ and the resacled linear step size is $\eta_t=(1+(1-\gamma)t)^{-1}$.
In Figure~\ref{fig:exp}, we choose $\varepsilon=e^{-4}$ and plot the results on a log-log scale. 
We then plot the least-squares fits through these points and the slopes of these lines are also provided in the legend.

\paragraph{Confirming the theoretical predictions.}
In the body, we show the least-squares fits through the points $\{ (\log\|\diag(\VQ)\|_\infty, \log T(\varepsilon, \gamma)) \}_{\gamma \in \Gamma}$.
As a complementary, we also show the fits through $\{ (\log(1-\gamma)^{-1}, \log T(\varepsilon, \gamma)) \}_{\gamma \in \Gamma}$ in Figure~\ref{fig:2}.

\paragraph{Online inference experiments.}
We visualize the empirical coverage rate and confidence interval lengths of averaged Q-Learning in Figure~\ref{fig:coverage}.
We use the random scaling method (Algorithm 1 in~\citep{lee2021fast}) to compute the weighting matrix $\W_T \in \RB^{D \times D}$ where $\W_T = \int_0^1 \bar{\Bphi}_T(r) \bar{\Bphi}_T(r)^\top \mathrm{d} r$ and $\bar{\Bphi}_T(r)={\Bphi}_T(r)- r \cdot {\Bphi}_T(1)$.
We focus on the inference of the optimal value function on the first state $s_0$ and the first action $a_0$, i.e., $Q^*(s_0, a_0)$.
We use $10^4$ steps of value iteration to compute the optimal value function $\Q^*$.
From~\citep{lee2021fast,li2021statistical}, the asymptotic confidence interval is given by
\[
\left[ \bar{Q}_T(s_0, a_0) - 6.753 \sqrt{\frac{\W_T((s_0, a_0), (s_0, a_0))}{T}}, \bar{Q}_T(s_0, a_0) + 6.753 \sqrt{\frac{\W_T((s_0, a_0), (s_0, a_0))}{T}} \right].
\]
We set $T=10^4$ and discard the first $5\%$ samples as a warm-up.
This warm-up is quite important; otherwise $\W_T$ would change rapidly (as a result of fast convergence of $\Q_T$) and deteriorate the performance.
The performance is measured by two statistics: the coverage rate and the average length of the 95\% confidence interval.
We also provide similar results for regularized Q-Learning in Figure~\ref{fig:2}.

\end{appendix}

%

\end{document}